\documentclass{article}

\usepackage{natbib}

\usepackage{microtype}
\usepackage{graphicx}
\usepackage{subcaption}
\usepackage{booktabs} %

\usepackage{hyperref}

\PassOptionsToPackage{boxruled,lined,linesnumbered}{algorotihm2e}
\usepackage[accepted]{icml2026}

\usepackage{amsmath}
\usepackage{amssymb}
\usepackage{mathtools}
\usepackage{amsthm}

\usepackage[capitalize,noabbrev,nameinlink]{cleveref}

\crefname{appendix}{appendix}{appendices}
\Crefname{appendix}{Appendix}{Appendices}

\usepackage{etoolbox}
\pretocmd{\appendix}{%
  \crefalias{section}{appendix}%
  \crefalias{subsection}{appendix}%
  \crefalias{subsubsection}{appendix}%
}{}{}

\theoremstyle{plain}
\newtheorem{theorem}{Theorem}[section]

\usepackage{aliascnt}

\newaliascnt{lemma}{theorem}
\newtheorem{lemma}[lemma]{Lemma}
\aliascntresetthe{lemma}

\newaliascnt{proposition}{theorem}

\aliascntresetthe{proposition}

\newaliascnt{corollary}{theorem}

\aliascntresetthe{corollary}

\newaliascnt{remark}{theorem}
\newtheorem{remark}[remark]{Remark}
\aliascntresetthe{remark}

\newaliascnt{conjecture}{theorem}
\newtheorem{conjecture}[conjecture]{Conjecture}
\aliascntresetthe{conjecture}
\usepackage[textsize=tiny]{todonotes}

\usepackage[utf8]{inputenc}
\usepackage[T1]{fontenc}
\usepackage{hyperref}
\usepackage{subfiles}

\usepackage{textcomp}        %
\usepackage{cancel}          %
\usepackage[normalem]{ulem}  %
\usepackage{dsfont}          %

\usepackage{graphicx}

\usepackage{amsmath,amssymb,amsthm}
\usepackage{thmtools,thm-restate}
\usepackage{mathtools}

\theoremstyle{plain}
\newtheorem{manualtheoreminner}{Theorem}
\newenvironment{manualtheorem}[1]{%
  \renewcommand\themanualtheoreminner{#1}%
  \manualtheoreminner
}{\endmanualtheoreminner}
\theoremstyle{plain}
\newtheorem{manualpropositioninner}{Proposition}
\newenvironment{manualproposition}[1]{%
  \renewcommand\themanualpropositioninner{#1}%
  \manualpropositioninner
}{\endmanualpropositioninner}

\newcommand{\For}[2]{
  \FOR{#1}
  #2
  \ENDFOR
}

\usepackage{tcolorbox}

\newcounter{BoxCounter}
\newcounter{PreBoxCounter}
\newenvironment{Boxed*}[2][\small]
{
  \begin{figure*}[!t]
    \begin{tcolorbox}[title=\textbf{Box \arabic{PreBoxCounter}:} #2, colback=white, colbacktitle=white, coltitle=black, arc=0pt,outer arc=0pt, fontupper=#1]
    }{
    \end{tcolorbox}
  \end{figure*}
}

\newcommand{\nb}[3]{{\colorbox{#2}{\bfseries\sffamily\scriptsize\textcolor{white}{#1}}} {{\color{#2}{#3}}}}

\newcommand{\db}[1]{\nb{DB}{blue}{#1}}
\newcommand{\ncb}[1]{\nb{NCB}{teal}{#1}}

\newcommand{\half}{\frac{1}{2}}

\newcommand{\defeq}{\overset{\text{def}}{=}}

\newcommand{\grad}{\nabla}
\newcommand{\eps}{\varepsilon}
\newcommand{\sumtT}{\sum_{t=1}^T}

\usepackage{tikz}
\newcommand\encircle[1]{%
  \tikz[baseline=(X.base)]
  \node (X) [draw, shape=circle, inner sep=0] {\strut #1};}

\renewcommand*\hat\widehat
\renewcommand*\tilde\widetilde

\newcommand{\tpp}{{t+1}}
\newcommand{\tmm}{{t-1}}

\newcommand{\bbB}{\mathbb{B}}

\newcommand{\bbE}{\mathbb{E}}

\newcommand{\bbP}{\mathbb{P}}

\newcommand{\bbR}{\mathbb{R}}

\newcommand{\cA}{\mathcal{A}}

\newcommand{\cF}{\mathcal{F}}
\newcommand{\cG}{\mathcal{G}}

\newcommand{\cN}{\mathcal{N}}
\newcommand{\cO}{\mathcal{O}}

\newcommand{\cS}{\mathcal{S}}
\newcommand{\cT}{\mathcal{T}}
\newcommand{\cU}{\mathcal{U}}
\newcommand{\cV}{\mathcal{V}}
\newcommand{\cW}{\mathcal{W}}

\newcommand{\cZ}{\mathcal{Z}}

\newcommand{\ft}{\mathfrak{t}}

\newcommand{\zeros}{\mathbf{0}}

\newcommand{\argmin}{\operatorname{arg\,min}}

\newcommand{\maxOp}{\vee}
\newcommand{\minOp}{\wedge}
\newcommand{\Max}[1]{\max\Set{#1}}
\newcommand{\Min}[1]{\min\Set{#1}}

\newcommand{\norm}[1]{\|#1\|}

\newcommand{\abs}[1]{\left|#1\right|}

\newcommand{\EE}[1]{\bbE\left[ #1 \right]}

\newcommand{\Tr}[1]{\mathrm{Tr}\!\left(#1\right)}
\newcommand{\inner}[1]{\left\langle #1 \right\rangle}

\newcommand{\Log}[1]{\log\left(#1\right)}
\newcommand{\Exp}[1]{\exp\left(#1\right)}

\newcommand{\Set}[1]{\left\{#1\right\}}
\newcommand{\sbrac}[1]{\left[#1\right]}
\newcommand{\brac}[1]{\left(#1\right)}

\newcommand{\R}{\bbR}

\newcommand{\E}{\bbE}

\newcommand{\KL}{\text{KL}}
\newcommand{\ind}[1]{\mathbb{I}\!\left(#1\right)}
\newcommand{\wt}{\widetilde}

\usepackage{version}
\makeatletter
\newcommand{\SetVersionColor}[2]{%
  \colorlet{ver@clr@#1}{#2}%
  \expandafter\def\csname vercolor@#1\endcsname{ver@clr@#1}%
}
\newcommand{\VersionColorName}[1]{%
  \@ifundefined{vercolor@#1}{black}{\csname vercolor@#1\endcsname}%
}
\newcommand{\EnableVersion}[1]{%
  \includeversion{#1}%
  \expandafter\def\csname ver@#1\endcsname{1}%
  \expandafter\AtBeginEnvironment\expandafter{#1}{\begingroup\color{\VersionColorName{#1}}}%
  \expandafter\AtEndEnvironment\expandafter{#1}{\endgroup}%
}
\newcommand{\DisableVersion}[1]{\excludeversion{#1}\expandafter\let\csname ver@#1\endcsname\relax}
\newcommand{\IfVersionTF}[3]{\expandafter\ifx\csname ver@#1\endcsname\relax #3\else #2\fi}
\newcommand{\V}[2]{\IfVersionTF{#1}{\textcolor{\VersionColorName{#1}}{#2}}{}}
\makeatother

\newcommand{\NewVersion}[2]{\SetVersionColor{#1}{#2}\EnableVersion{#1}}

\usepackage{xparse}
\ExplSyntaxOn

\NewDocumentCommand{\DeclareSubSymbol}{mm}
  {
    \DeclareDocumentCommand #1 { g } %
      {
        \ensuremath{#2 \IfNoValueF{##1}{\sb{##1}}}
      }
  }

\NewDocumentCommand{\DeclareTimeShiftVariants}{mm}
  {
    \cs_set:cpn { \cs_to_str:N #1 \tl_to_str:n {#2} }      { #1{#2} }
    \cs_set:cpn { \cs_to_str:N #1 \tl_to_str:n {#2} pp }   { #1{#2+1} }
    \cs_set:cpn { \cs_to_str:N #1 \tl_to_str:n {#2} mm }   { #1{#2-1} }
  }

\NewDocumentCommand{\DeclareShiftedSymbol}{mmO{t}}
  {
    \DeclareSubSymbol{#1}{#2}
    \clist_map_inline:nn {#3} { \DeclareTimeShiftVariants{#1}{##1} }
  }
\ExplSyntaxOff

\DeclareShiftedSymbol{\w}{w}[t,s]
\DeclareShiftedSymbol{\wbar}{\overline{w}}[t,s]
\DeclareShiftedSymbol{\s}{s}[t,i]
\DeclareShiftedSymbol{\z}{z}[t,i,s]
\DeclareShiftedSymbol{\tg}{\tilde{\g}}[t,i,s]
\DeclareShiftedSymbol{\v}{v}[t,i]
\DeclareShiftedSymbol{\what}{\hat{w}}
\DeclareShiftedSymbol{\wtilde}{\tilde{w}}
\DeclareShiftedSymbol{\y}{y}[t,s]
\DeclareShiftedSymbol{\x}{x}[t,s]
\DeclareShiftedSymbol{\yhat}{\hat{y}}
\DeclareShiftedSymbol{\ghat}{\hat{g}}
\DeclareShiftedSymbol{\gtilde}{\tilde{g}}
\DeclareShiftedSymbol{\W}{W}
\DeclareShiftedSymbol{\g}{g}
\DeclareShiftedSymbol{\G}{G}
\DeclareShiftedSymbol{\cmp}{u}
\DeclareShiftedSymbol{\Cmp}{U}
\DeclareShiftedSymbol{\ellhat}{\hat{\ell}}
\DeclareShiftedSymbol{\elltilde}{\tilde{\ell}}

\newcommand{\ww}{\cW}

\renewcommand{\ft}{f_{t}}
\newcommand{\sgn}{\text{sgn}}
\newcommand{\cmpy}{\mathring{y}}

\newcommand{\Alg}{\nameref{alg:scrible}\;}

\NewVersion{ScratchWork}{blue}
\NewVersion{ScratchWorkII}{orange}

\icmltitlerunning{A Perturbation Approach for Unconstrained Linear Bandits}

\usepackage{algorithm}
\usepackage{algorithmic}

\begin{document}

\twocolumn[
  \icmltitle{A Perturbation Approach to Unconstrained Linear Bandits}

  \icmlsetsymbol{equal}{*}

  \begin{icmlauthorlist}
    \icmlauthor{Andrew Jacobsen}{equal,unimi,polimi}
    \icmlauthor{Dorian Baudry}{equal,inria}
    \icmlauthor{Shinji Ito}{riken}
    \icmlauthor{Nicol\`{o} Cesa-Bianchi}{unimi,polimi}
  \end{icmlauthorlist}

  \icmlaffiliation{unimi}{Universit\`{a} degli Studi di Milano}
  \icmlaffiliation{polimi}{Politecnico di Milano}
  \icmlaffiliation{inria}{Inria, Univ. Grenoble Alpes, Grenoble INP, CNRS, LIG, 38000 Grenoble, France}

  \icmlaffiliation{riken}{The University of Tokyo and RIKEN}

  \icmlcorrespondingauthor{Andrew Jacobsen}{\texttt{contact@andrew-jacobsen.com}}
  \icmlcorrespondingauthor{Dorian Baudry}{\texttt{dorian.baudry@inria.fr}}

  \icmlkeywords{Machine Learning, ICML}

  \vskip 0.3in
]

\printAffiliationsAndNotice{\icmlEqualContribution}

\begin{abstract}

We revisit the standard perturbation-based approach of \citet{abernethy2008competing} in the context of unconstrained Bandit Linear Optimization (uBLO). We show the surprising result that in the unconstrained setting, this approach effectively reduces Bandit Linear Optimization (BLO) to a standard Online Linear Optimization (OLO) problem. Our framework improves on prior work in several ways. First, we derive expected-regret guarantees when our perturbation scheme is combined with comparator-adaptive OLO algorithms, leading to new insights about the impact of different adversarial models on the resulting comparator-adaptive rates. We also extend our analysis to dynamic regret, obtaining the first guarantees with optimal $\sqrt{P_T}$ path-length dependencies without prior knowledge of $P_T$. We then develop the first high-probability guarantees for both static and dynamic regret in uBLO. Finally, we discuss lower bounds on the static regret, and prove the folklore $\Omega(\sqrt{dT})$ rate for adversarial linear bandits on the Euclidean ball, which is of independent interest.
\end{abstract}


\section{Introduction}

Online convex optimization (OCO) provides a general framework for sequential decision-making under uncertainty, in which a learner repeatedly selects an action from a set $\cW\subseteq\R^d$ and receives feedback generated by an adversarial environment \citep{shwartz12online,hazan2016introduction,orabona2019modern}. 
The standard measure of performance is \emph{regret}, which compares the learner's cumulative loss to that of some unknown benchmark strategy. The most general formulation is \emph{dynamic regret}, defined by 
\[
R_T(u_{1:T}) \;=\; \sum_{t=1}^T f_t(w_t)\;-\;\sum_{t=1}^T f_t(u_t),
\]
where $f_t:\cW\to\R$ denotes a $G$-Lipschitz convex loss function, each play $w_t\in\cW$ of the learner is based solely on its past observations, and $\cmp_{1:T}=(u_t)_{t\in[T]}$ is a \emph{comparator sequence} in $\cW$. The classical \emph{static} regret is recovered as the special case $u_1=\cdots=u_T$.

In this paper we focus on \emph{Bandit Linear Optimization} (BLO) \cite{flaxman2005online,kalai2005efficient,dani2007price,lattimore2020bandit,lattimore26bco}, where
$f_t$ is a \emph{linear} function and the learner observes only the scalar output $f_t(w_t)=\inner{\ell_t,w_t}$ for some vector $\ell_t\in\R^d$ with $\norm{\ell_t}\le G$, rather than the full gradient $\grad f_t(\wt)=\ell_t$. We propose a modular reduction that enables the use of an arbitrary OLO learner under bandit feedback by feeding it suitably perturbed loss estimates. We focus in particular on \emph{unconstrained} bandit linear optimization (uBLO), in which the action set is $\cW=\R^d$ \citep{van2020comparator,luo2022corralling,rumi2026parameter}. 
A central objective in uBLO is to obtain \emph{comparator-adaptive} guarantees, in which the regret against a static comparator $u$ scales with $\|u\|$, while simultaneously enforcing a \emph{risk-control} constraint of the form $R_T(0)\le \epsilon$, where $\epsilon>0$ is a fixed, user-specified parameter.

This formulation is closely connected to parameter-free online learning and coin-betting \citep{mcmahan2012noregret,orabona2016coin,cutkosky2018black}: controlling $R_T(0)$ %
can be interpreted as allowing the bettor to \emph{reinvest} a data-dependent fraction of an initial budget and its accumulated gains, rather than committing to a fixed betting scale \citep[Chapter~5.3]{orabona2019modern}. This viewpoint highlights how uBLO complements the more standard linear bandit setting with a bounded action set. In the bounded case, the learner effectively plays with a fixed ``unit budget'' each round (since $\|w_t\|$ is uniformly bounded), whereas in the unconstrained case the learner may increase its effective scale over time, but only if its total gains permit it. In applications where exploration must be performed under budget constraints, this built-in risk control can make the unconstrained model (perhaps counterintuitively) the more natural abstraction.

In this work we also broaden the standard notion of comparator-adaptivity by allowing the adversary to choose the comparator norm more strategically, e.g., at the end of the interaction, refining what ``adaptive'' guarantees entail.


\textbf{Notation.} We denote by $\cF_{t-1}$ the $\sigma$-field generated by the history up to the start of round $t$, and we define $\bbE_t[\cdot]=\bbE[\cdot|\cF_{t-1}]$. 
We say a sequence of random variables $(X_t)_{t=1}^T$ is \emph{adapted} to $(\cF_t)_{t=0}^T$ if $X_t$ is $\cF_t$ measurable for all $t$. 
For any $A,B\in \R$, we denote $A\wedge B =\min\{A,B\}$, $A\vee B=\max\{A,B\}$, and $(A)_+=A\vee 0$. Then, for (multivariate) functions $f$ and $g$, we use $f=\cO(g)$ (resp.\@ $f=\Omega(g)$) when there exists a constant $c>0$ s.t. $f\leq cg$ (resp. $\geq$). We also use $\widetilde \cO$ and $\widetilde \Omega$ respectively to further hide polylogarithmic terms (though we will occasionally still highlight $\log(\norm{\cmp})$ dependencies when relevant). Unless stated otherwise $\log$ is the natural logarithm and we denote $\log_{+}(x) = \log(x)\maxOp 0$. A matrix $A\in\R^{d\times d}$ is \emph{positive definite} if it is symmetric and satisfies $\inner{x,Ax}> 0$ for any $x\in \R^d$.


\subsection{Related Works} 

Bandit Linear Optimization (BLO), also known as \emph{adversarial linear bandits}, 
has a long history  \cite{mcmahan2004online,awerbuch2004adaptive,dani2006robbing,dani2007price,abernethy2008competing,bubeckCK12osmd}. In these works, the action set $\cW$ is typically \emph{constrained} to a bounded set, and minimax-optimal guarantees on the \emph{expected regret} are known to be of order $d\sqrt{T}$ or $\sqrt{dT}$ depending on the geometry of the decision set \citep{dani2007price, Shamir15}. More recently, these results have also been extended to nearly-matching \emph{high-probability} guarantees 
\cite{lee2020bias, zimmert2022return}.

Our work is most closely related to 
the works of \citet{van2020comparator,luo2022corralling,rumi2026parameter}, which investigate linear bandit problems with \emph{unconstrained} 
action sets (uBLO). \citet{van2020comparator} provided the first approach for this setting, using a variant of the scale/direction decomposition from the parameter-free online learning literature \citep{cutkosky2018black}. As remarked by \citet{luo2022corralling}, using their approach with a direction learner admitting $\tilde \cO(\sqrt{dT})$ regret on the unit ball \citep{bubeckCK12osmd}, one can obtain a $\tilde \cO(\norm{\cmp}\sqrt{(d\vee \log(\norm{\cmp}))T})$ static regret bound. %
Our work provides new insights into these results by highlighting a subtle dependency issue between the loss sequence and the comparator norm, addressed by our approach.

Later, \citet{rumi2026parameter} investigated \emph{dynamic} regret in the uBLO setting, and achieved the first guarantees in the (oblivious) adversarial setting that  adapt to the number of switches of the comparator sequence, 
$S_T=\sum_t\mathbb{I}\Set{\cmp_t\ne\cmp_\tmm}$, while guaranteeing a $\sqrt{S_T}$ dependence without prior knowledge of the comparator sequence. Besides this work, all existing works on dynamic regret under bandit feedback fail to obtain the optimal $\sqrt{S_T}$ dependencies without leveraging prior knowledge 
of the comparator sequence \citep{agarwal2017corralling,marinov2021pareto,luo2022corralling}, and in fact \citet{marinov2021pareto} show that $\sqrt{S_T}$ dependencies are impossible against an adaptive adversary in many constrained settings.~\footnote{In particular, their lower bound 
holds for finite policy classes.} %

Our work is also related to the recent line of work in online convex optimization on comparator-adaptive (sometimes called \textit{parameter-free}) methods. 
These are algorithms which, for any fixed $\epsilon>0$, achieve 
guarantees of the form 
\begin{align}
    R_T(\cmp) =\widetilde \cO\left(\epsilon +\norm{\cmp}\sqrt{T\Log{\tfrac{\norm{\cmp}\sqrt{T}}{\epsilon}+1}}\right),\label{eq:pf}
\end{align}
uniformly over all $\cmp\in\R^d$ simultaneously, matching the bound that gradient descent would obtain with oracle tuning (up to logarithmic factors) \citep{mcmahan2012noregret,mcmahan2014unconstrained,orabona2016coin,cutkosky2018black,orabona2021parameterfree,mhammedi2020lipschitz,zhang2022parameter}. The key feature of \Cref{eq:pf}
is that the bound is \emph{adaptive} to an arbitrary comparator norm, rather than 
the worst-case $D=\sup_{x,y\in\ww}\norm{x-y}$, 
making these methods crucial for unconstrained settings. Comparator-adaptive methods have also recently been extended to dynamic regret \cite{jacobsen2022parameter,zhang2023unconstrained,jacobsen2023unconstrained,jacobsen2024equivalence,jacobsen2025dynamic}, with guarantees
that adapt to the \emph{path-length} $P_T\coloneqq \sum_{t=2}^T\norm{\cmp_t-\cmp_\tmm}$ and \emph{effective diameter} $M=\max_{t}\norm{\cmp_t}$ in unbounded domains
\begin{align}
    R_T(\cmp_{1:T})\le \tilde O\brac{\sqrt{(M^2+MP_T)T}}~.\label{eq:pf-dynamic}
\end{align}
In this work we use a less common generalization of 
this bound due to \citet{jacobsen2022parameter}, which adapts to the path-length and \emph{each} of the individual comparator norms to achieve
\begin{align}\label{eq::indiv_norm_intro}
    R_T(\cmp_{1:T})\le 
    \tilde O\Big(\sqrt{(\norm{\cmp_T}+P_T){\textstyle\sumtT} \norm{\ell_t}^2\norm{\cmp_t}}\Big).
\end{align}
Our results are the first of this form in the bandit setting.

\NewVersion{Draft}{purple}
    \DisableVersion{Draft}
\begin{Draft}
\begin{itemize}
    \item I guess we should have a rather general introduction about OCO.
    \item Then a more precise thing about OLO, maybe already introducing the base algorithms we are going to use, the kind of guarantees they get in the static/dynamic regret settings. 
    \item Then switch to adversarial Linear Bandits. Maybe talk a bit about some standard stuff in the constrained case, and then move to uBLO. Expand about the fact that in uBLO the main benchmark is the scale/direction regret decomposition introduced in \cite{van2020comparator} (and follow-up works?).  
    \item Maybe a teasing of the post-hoc stuff with some references of e.g. saddle-point pbs in optim where the comparator is data-dependent in a way that needs to be carefully adressed.
\end{itemize}

Online convex optimization (OCO) is a central abstraction for sequential decision-making under adversarially chosen convex losses. The standard performance criterion is (static) regret against a fixed comparator, for which first-order methods achieve the canonical $\cO(\sqrt{T})$-type rates under Lipschitzness and boundedness assumptions. Foundational analyses include Online Gradient Descent and its variants (Zinkevich, 2003), and modern analyses emphasize the unifying role of mirror maps, Bregman geometry, optimistic and adaptive regularization \citep{shwartz12online,hazan2016introduction,orabona2019modern}.

An important specialization is online linear optimization (OLO), where losses $\ell_t$ are linear and the algorithmic core is exposed most transparently---see, e.g., \citep{kalai2005efficient}.
Classical algorithmic approaches include Follow-The-Regularized-Leader (FTRL) and Online Mirror Descent (OMD).
Beyond ``tuned'' methods, a substantial literature develops parameter-free and comparator-adaptive guarantees---i.e., regret bounds that adapt (up to logarithmic factors) to unknown comparator norms or gradient scales---via coin-betting reductions and related potential-based constructions \citep{streeterM12,cutkosky2018black,jacobsen2022parameter,zhang2022parameter}. Comparator adaptivity is essential in the unconstrained setting, where the natural decision set is $\R^d$ and appropriate variants of OMD achieve regret bounds of the form
\[
    R_T^{\mathrm{OLO}}(u)
=
    \widehat \cO\!\left(G + \|u\|\,\sqrt{\sumtT \norm{\ell_t}^2\,\log\!\bigl(\|u\|\sqrt{T}\bigr)}\right)
\]
uniformly over $u\in\R^d$, where $G = \max_t\norm{\ell_t}$ and the $\widehat \cO$ notation hides loglog factors \citep[Algorithm~1]{jacobsen2022parameter}.
These results also connect to dynamic or tracking notions of regret in nonstationary environments, where the benchmark is a sequence $\cmp_{1:T} = (u_1,\ldots,u_T)$ of comparators and the rate depends on the path-length $P_T = \sum_{t=2}^T \norm{u_t-u_{t-1}}$ or other variation measures \citep{hall2013dynamical,chiang2012online}.
The above adaptive regret guarantees nicely extend to dynamic regret in unconstrained domains. For example, for all $u_1,\ldots,u_T\in\R^d$, \citet[Algorithm~2]{jacobsen2022parameter} achieves
\begin{align*}
  R_{T}(\cmp_{1:T})
  &=
    \tilde O\brac{\sqrt{\left(\max_t\norm{u_t}+P_T\right)\sumtT \norm{\ell_t}^{2}\norm{\cmp_{t}}}}.
\end{align*}
Bandit feedback substantially changes the landscape by restricting observations to a scalar loss $\ell_t(w_t)$, thus precluding direct access to gradients. The broad setting of bandit convex optimization can be traced to one-point gradient estimation and smoothing techniques \citep{flaxman2005online}, with extensive follow-up works. In the linear case (bandit linear optimization), minimax optimal $\widetilde{\cO}(\sqrt{T})$ regret becomes achievable over bounded convex action sets \citep{dani2007price}. %
For constrained domains, \citet{koolen2010hedging} introduce the Online Stochastic Mirror Descent (OSMD) approach---see also \citep{bubeckCK12osmd}. A key technique emerging in this setting is SCRiBLe \citep{abernethy2008competing}, which leverages self-concordant barrier geometry to construct low-variance loss estimators and obtains optimal-order regret while remaining computationally efficient---see also the work by \citet{zimmert2022return} on adversarial linear bandits, approaching minimax rates with a refined high-probability analysis.
\db{I guess we should at least mention OSMD here too+are there other approaches (sota or historically important) that should be mentioned?} \ncb{Added some additional refs.}

The unconstrained bandit linear optimization (uBLO) regime has only more recently been studied systematically. A central idea is to separate the problem into direction learning on a bounded set (e.g., the unit ball) and scale learning (a one-dimensional parameter-free OLO problem), yielding a scale/direction regret decomposition. This viewpoint is developed explicitly by \citet{van2020comparator}, who provide comparator-adaptive linear bandit guarantees, offering a conceptual and technical bridge from parameter-free full-information methods to the bandit setting. Concretely, the decomposition highlights that the direction component can often reuse constrained-domain techniques (including SCRiBLe-style estimators), while the scale component inherits the requirement of robust, tuning-free control typical of unconstrained OLO.

Finally, it is worth flagging a methodological point that becomes increasingly important when seeking norm-dependent comparator guarantees. Namely, bounds that remain meaningful when the comparator's norm potentially depends on the entire sequence of the learner's random bets. In such situations, inserting a random comparator into expectation-based regret analyses can fail, because the comparator is no longer independent of the learner’s internal randomness. Related dependency issues are well-known in stochastic saddle-point and variational inequality optimization, where analyses often introduce auxiliary sequences (``ghost iterates'') or additional sampling/regularization to decouple stochastic gradients from the iterates they influence \citep{nemirovski2009robust,juditsky20115,mishchenko2020revisiting}, and where handling unbounded domains/gradients requires further stabilization \cite{neu2024dealing}. These techniques provide a useful conceptual reference for treating norm-dependent benchmarks in bandit settings, where action-dependent sampling creates strong statistical couplings.
\end{Draft}


\subsection{Contributions} 
In this paper we revisit the classic perturbation-based approach of 
\cite{abernethy2008competing}
in the setting of unconstrained linear bandits (Section~\ref{sec::framework}), under the name \Alg (\emph{Perturbation Approach for Bandit Linear Optimization}). 
We show that this approach produces loss estimators with strong properties, enabling us to develop several novel results in the uBLO setting.

\textbf{Adaptive comparators in expected-regret bounds.} We propose a novel algorithm with expected-regret guarantees for both static regret (Section~\ref{sec:static-pfmd}) and for dynamic regret (Section~\ref{sec:expected-dynamic}). A key novelty is that our bounds remain valid in an adversarial regime where the comparator may be \emph{data-adaptive}. This exposes an oblivious comparator assumption that is often left implicit in prior work, which we discuss in Section~\ref{sec:static-pfmd}. Notably, distinguishing between oblivious and data-adaptive comparator settings induces a $\sqrt d$ separation in the dimension dependence of our bounds, while preserving the same $\sqrt{T}$ scaling in the horizon. We show that this contrasts with direct adaptations of prior approaches, which can incur a worse dependence on $T$ in the adaptive comparator regime. We leave as an open question to determine if this gap is unavoidable.

\textbf{$\sqrt{P_T}$-adaptive dynamic regret.} By relying on the \Alg framework, we develop the first algorithm for uBLO with an expected dynamic-regret guarantee exhibiting the optimal $\sqrt{P_T}$ dependence without prior knowledge of $P_T$. This contrasts with \citet{rumi2026parameter}, who derive a comparable guarantee only for the weaker switching measure $S_T\coloneqq\sum_t\mathbb{I}\Set{\cmp_t\ne\cmp_\tmm}$.
Moreover, as detailed in Section~\ref{sec:expected-dynamic}, our analysis yields an even stronger bound, inspired by recent advances in OCO (see Eq.~\eqref{eq::indiv_norm_intro}).

\textbf{High-probability bounds.} 
In \Cref{sec:high-prob}, we develop the first high-probability bounds for static and dynamic regret in uBLO. Our static regret guarantee scales as 
    $R_T(\cmp)\le \tilde \cO(\norm{\cmp}\sqrt{dT\Log{1/\delta}})$, matching the 
    best known rates from the bounded domain setting \citep{lee2020bias,zimmert2022return}. 
    Our dynamic regret bound generalizes this result, and leads to
    $R_T(\cmp_{1:T})\le \tilde \cO(\sqrt{d(M^2+MP_T)T}+M\sqrt{dT\Log{T/\delta}})$, where $M=\max_t\norm{\cmp_t}$, again without prior knowledge of $P_T$ or $M$.

\textbf{Open discussion on lower bounds.}
In \Cref{sec::lb} we discuss the largely open problem of proving regret lower bounds for uBLO, with a focus on static regret. We establish intermediate results that motivate the conjecture that, when the comparator norm is non-adaptive, the minimax lower bound scales as
$\Omega\bigl(\|u\|\sqrt{T\,(d\vee \log\|u\|)}\bigr)$, and we briefly comment on the more challenging norm-adaptive regime.

As part of this investigation, we also provide a self-contained proof (Theorem~\ref{th::lb_unit_ball}) that the $\widetilde \cO(\sqrt{dT})$ static regret achievable on the unit Euclidean ball (see, e.g., \citet{bubeckCK12osmd}) is minimax-optimal. This complements existing characterizations of dimension-dependent minimax rates for adversarial linear bandits on bounded action sets \citep{dani2007price,Shamir15}.


\section{Perturbation-based approach}
\label{sec::framework}

In this section we introduce a simple reduction that turns any algorithm for \emph{Online Linear Optimization} (OLO) into an algorithm for \emph{Bandit Linear Optimization} (BLO), inspired by the SCRiBLe algorithm of \cite{abernethy2008competing,abernethy2012interior}. We will refer to this approach as \Alg$\!$, short for \emph{a Perturbed Approach to Bandit Linear Optimization}. The pseudo-code of \Alg$\!$
can be found in Algorithm~\ref{alg:scrible}.
On each round $t$, the algorithm operates in two steps. First, an OLO learner $\cA$ outputs a decision $w_t \in \cW$ based on past feedback. Then the algorithm applies a randomized perturbation to $w_t$, observes the bandit feedback, and constructs an unbiased estimator $\widetilde \ell_t$ of the loss $\ell_t$, which is then passed back to $\cA$.

This is a modest generalization of SCRiBLe, decoupling the OLO update from the perturbation mechanism. In the original SCRiBLe algorithm, both components are tied to a single self-concordant barrier $\psi$: the OLO step is implemented via FTRL with regularizer $\psi$, and the perturbation is scaled using the local geometry induced by $\nabla^2 \psi(w_t)$, which is replaced by a matrix $H_t^{\frac{1}{2}}$ in \Alg$\!$. As we show in Section~\ref{sec::uBLO}, this coupling is not strictly necessary in the unconstrained setting, and it can be advantageous to tune the OLO algorithm and the perturbation level separately.

\begin{algorithm}[tb]
  \caption{\texttt{PABLO}}
  \label{alg:scrible}
  \begin{algorithmic}
    \STATE {\bfseries Input:} OLO algorithm $\cA$
    \FOR{$t=1$ {\bfseries to} $T$}
      \STATE Get $\w_t \in \ww$ from $\cA$
      \STATE Choose a positive definite matrix $H_t \in \R^{d \times d}$ and let $v_1,\ldots,v_d$ be its eigenvectors
      \STATE Sample $s_t$ uniformly from $\cS = \{ \pm v_i :  i \in [d]\}$
      \STATE Play $\widetilde{\w}_t = \w_t + H_t^{-1/2} s_t$, observe $\langle \ell_t, \widetilde{\w}_t \rangle$
      \STATE Send loss estimate $\widetilde{\ell}_t = d\, H_t^{1/2} s_t \, \langle \widetilde{\w}_t, \ell_t \rangle$ to $\cA$
    \ENDFOR
  \end{algorithmic}
\end{algorithm}

\textbf{Properties.} We now introduce some general properties on the loss estimators produced by Algorithm~\ref{alg:scrible}, according to the matrix chosen for the perturbation. Notably, the estimator is unbiased and its norm $\norm{\widetilde \ell_t}^2$ admits an almost-sure upper bound, as well as a (potentially) sharper bound in expectation.
The almost-sure upper bound is the crucial property that allows us to apply 
sophisticated comparator-adaptive OLO algorithms as subroutines.
\begin{restatable}{proposition}{CoordinateSampling}\label{prop:coordinate-sampling}
  Let $H_t\in\R^{d\times d}$ be positive definite and let $v_{1},\ldots,v_{d}$ be an orthonormal basis of eigenvectors of $H_t$.
  Consider the set $\cS=\{\sigma v_i:\ \sigma\in\{-1,1\},\ i\in[d]\}$.
  Let $s_t$ be sampled uniformly at random from $\cS$, and define
  \begin{align*}
    \wtilde_t = w_t + H_t^{-\frac{1}{2}}s_t,\quad \text{and} \quad  \elltilde_t = d\,H_t^{\frac{1}{2}}s_t\,\langle \wtilde_t,\ell_t\rangle.
  \end{align*}
  Then $\E[\elltilde_t \mid \cF_{t-1}] = \ell_t$ and the following hold:
  \begin{align*}
    \E[\|\elltilde_t\|_2^2 \mid \cF_{t-1}]
      &= d\|\ell_t\|_2^2 + d\,\langle \ell_t,w_t\rangle^2\,\Tr{H_t}, \\
    \|\elltilde_t\|_2^2
      &\le d^2\|\ell_t\|_2^2\big(\sqrt{\lambda_t}\,\|w_t\|+1\big)^2,
  \end{align*}
where $\lambda_t$ is the eigenvalue of $H_t$ associated with the eigenvector $v_t$ sampled on round $t$.
\end{restatable}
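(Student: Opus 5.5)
Everything follows from writing $s_t=\sigma v_i$ with $\sigma$ a uniform sign and $i$ uniform on $[d]$, independent of each other and conditionally independent of $\cF_{t-1}$, while $w_t$, $H_t$ and $\ell_t$ are $\cF_{t-1}$-measurable. Since $v_i$ is an eigenvector, $H_t^{\pm 1/2}v_i=\lambda_i^{\pm1/2}v_i$, where $\lambda_i>0$ because $H_t$ is positive definite. Substituting $\wtilde_t=w_t+\sigma\lambda_i^{-1/2}v_i$ into the estimator and using $\sigma^2=1$ gives the explicit form
\begin{align*}
\elltilde_t \;=\; d\,\sigma\lambda_i^{1/2}\langle w_t,\ell_t\rangle\, v_i \;+\; d\,\langle v_i,\ell_t\rangle\, v_i .
\end{align*}
All three claims will be read off from this expression.

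\textbf{Unbiasedness.} Averaging over $\sigma$ first removes the first term, since it is linear in $\sigma$. The second term does not depend on $\sigma$, and averaging it over $i$ gives $\frac{1}{d}\sum_i d\,\langle v_i,\ell_t\rangle v_i=\ell_t$ because $(v_i)$ is an orthonormal basis. Hence $\E[\elltilde_t\mid\cF_{t-1}]=\ell_t$.

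\textbf{Second moment.} Both terms are multiples of the unit vector $v_i$, so
\begin{align*}
\|\elltilde_t\|^2 \;=\; d^2\big(\sigma\lambda_i^{1/2}\langle w_t,\ell_t\rangle+\langle v_i,\ell_t\rangle\big)^2 .
\end{align*}
Expanding the square, the cross term is again linear in $\sigma$ and vanishes in expectation. Averaging the remaining terms over $i$ gives $d\big(\langle w_t,\ell_t\rangle^2\sum_i\lambda_i+\sum_i\langle v_i,\ell_t\rangle^2\big)$. Using $\sum_i\lambda_i=\Tr{H_t}$ and Parseval, $\sum_i\langle v_i,\ell_t\rangle^2=\|\ell_t\|^2$, this equals $d\|\ell_t\|^2+d\langle\ell_t,w_t\rangle^2\Tr{H_t}$, as claimed.

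\textbf{Almost-sure bound.} From the same norm expression, the triangle inequality and Cauchy--Schwarz give $|\langle w_t,\ell_t\rangle|\le\|w_t\|\|\ell_t\|$ and $|\langle v_i,\ell_t\rangle|\le\|\ell_t\|$. Therefore
\begin{align*}
\|\elltilde_t\|^2 \;\le\; d^2\|\ell_t\|^2\big(\sqrt{\lambda_i}\,\|w_t\|+1\big)^2 ,
\end{align*}
where $\lambda_i$ is the eigenvalue of the sampled eigenvector; this is what the statement calls $\lambda_t$.

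The argument is routine once the explicit form of $\elltilde_t$ is written down. The only point needing care is to use that $v_i$ is an eigenvector of $H_t^{1/2}$ (not merely of $H_t$), so that both terms of $\elltilde_t$ are collinear with $v_i$; this is what makes the norm a single scalar square and lets the $\sigma$-linear cross term cancel exactly. I expect no real obstacle beyond this.
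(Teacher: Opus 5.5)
Your proof is correct. Writing $\widetilde{\ell}_t = d\,\sigma\lambda_i^{1/2}\langle w_t,\ell_t\rangle v_i + d\,\langle v_i,\ell_t\rangle v_i$ and averaging first over the sign and then over the index yields unbiasedness, the second-moment identity and the almost-sure bound, exactly as you describe.

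The paper states this proposition without writing out a proof; it treats it as a direct calculation, of the same kind as the $\mathbb{E}[s_t s_t^\top\mid\mathcal{F}_{t-1}]=I_d/d$ step in the proof of \Cref{prop:high-prob-redux}. Your argument is precisely that calculation. You also correctly identify the implicit assumption the statement requires: $\ell_t$, $w_t$, $H_t$ and the eigenbasis are $\mathcal{F}_{t-1}$-measurable, and $s_t$ is conditionally uniform given $\mathcal{F}_{t-1}$.
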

From this, we immediately get the following corollary, which will have important implications
for the techniques we employ throughout the rest of the paper.

\NewVersion{VerboseGradientBound}{black}
    \DisableVersion{VerboseGradientBound}
\begin{VerboseGradientBound}
The proposition has two immediate consequences. First, if $H_t$ is chosen so that its eigenvalues satisfy $\lambda_i \le 1/\|w_t\|^2$, then the estimator is uniformly bounded. Indeed,
\[
  \|\widetilde \ell_t\|_2^2
  \le d^2\|\ell_t\|_2^2\big(\sqrt{\lambda_t}\|w_t\|+1\big)^2
  \le 4d^2\|\ell_t\|_2^2
  \le 4d^2G^2,
\]
so the linearized losses $w\mapsto \langle \widetilde \ell_t, w\rangle$ are $2dG$-Lipschitz.
Second, the conditional second moment satisfies
\[
  \E\!\left[\|\widetilde \ell_t\|_2^2 \mid \cF_{t-1}\right]
  = d\|\ell_t\|_2^2 + d\,\langle \ell_t,w_t\rangle^2\,\Tr{H_t}.
\]
Moreover, strengthening the condition to $\lambda_i \le 1/(d\|w_t\|^2)$ for all $i$ and $t$, yields $\Tr{H_t}\le 1/\|w_t\|^2$ and hence
\[
  \E\!\left[\|\widetilde \ell_t\|_2^2 \mid \cF_{t-1}\right]
  \le d\|\ell_t\|_2^2 + d\,\|\ell_t\|_2^2
  = 2d\|\ell_t\|_2^2,
\]
which yields the sharper bound $\E[\|\widetilde \ell_t\|_2^2]\le 2d\|\ell_t\|_2^2$, which can (as we will see) translate into order optimal minimax regret bounds.
In the next sections, we will see that this distinction leads to different guarantees depending on whether the adversary can adapt the comparator norm to the trajectory, or must commit to it in advance ($\cF_0$-measurable).
\end{VerboseGradientBound}

\begin{restatable}{corollary}{SimpleCoordinateSampling}\label{cor:simple-coordinate-sampling}
Under the same assumptions as \Cref{prop:coordinate-sampling}, let
$\eps\in (0,1)$ and suppose that for all $t$ we set 
\begin{align}\label{eq::def_Ht}
    H_t\preceq\frac{1}{d(\norm{\wt}^2\vee \eps^{2})}I_d~.
\end{align}
Then the following hold almost-surely:
\begin{align*}
\norm{\elltilde_t}^2&\le 4d^2\norm{\ell_t}^2\quad\text{and}\quad
\E\big[\norm{\elltilde_t}^2|\cF_\tmm\big]\le 2d\norm{\ell_t}^2.
\end{align*}
\end{restatable}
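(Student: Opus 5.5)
The plan is to plug the eigenvalue constraint \eqref{eq::def_Ht} directly into the two bounds of \Cref{prop:coordinate-sampling}. Since $H_t\preceq \frac{1}{d(\norm{\w_t}^2\vee\eps^2)}I_d$ and $H_t$ is positive definite, every eigenvalue satisfies
\begin{align*}
0<\lambda_i\le \frac{1}{d(\norm{\w_t}^2\vee\eps^2)}.
\end{align*}
In particular, this holds for the eigenvalue $\lambda_t$ attached to the sampled direction $s_t$. This one observation drives both claims.

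For the almost-sure bound, I will show that $\sqrt{\lambda_t}\norm{\w_t}\le 1$. If $\w_t\neq 0$, then
\begin{align*}
\sqrt{\lambda_t}\norm{\w_t}\le \frac{\norm{\w_t}}{\sqrt{d}\,(\norm{\w_t}\vee\eps)}\le \frac{1}{\sqrt d}\le 1.
\end{align*}
If $\w_t=0$, the product is trivially $0$. The role of $\eps>0$ here is only to keep the right-hand side of \eqref{eq::def_Ht} finite, so that a positive definite $H_t$ exists. Substituting into the second bound of \Cref{prop:coordinate-sampling} gives
\begin{align*}
\norm{\elltilde_t}^2\le d^2\norm{\ell_t}^2(1+1)^2=4d^2\norm{\ell_t}^2.
\end{align*}
This holds surely for every realization of $s_t$, because the bound on $\lambda_t$ is uniform over all eigenvalues. (In fact the argument yields the sharper constant $(1+1/\sqrt d)^2$.)

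For the conditional second moment, I use the equality in \Cref{prop:coordinate-sampling} and bound its two terms. For the trace,
\begin{align*}
\Tr{H_t}=\sum_{i=1}^d\lambda_i\le \frac{1}{\norm{\w_t}^2\vee\eps^2}.
\end{align*}
For the inner product, Cauchy--Schwarz gives $\inner{\ell_t,\w_t}^2\le\norm{\ell_t}^2\norm{\w_t}^2$. Combining the two,
\begin{align*}
d\inner{\ell_t,\w_t}^2\Tr{H_t}\le d\norm{\ell_t}^2\,\frac{\norm{\w_t}^2}{\norm{\w_t}^2\vee\eps^2}\le d\norm{\ell_t}^2.
\end{align*}
Adding the first term $d\norm{\ell_t}^2$ then gives $\E[\norm{\elltilde_t}^2\mid\cF_{t-1}]\le 2d\norm{\ell_t}^2$.

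Both bounds hold almost surely because $H_t$ and $\w_t$ are $\cF_{t-1}$-measurable. There is no genuine obstacle in this argument. The only point needing care is the degenerate case $\w_t=0$, and the $\vee\,\eps^2$ in \eqref{eq::def_Ht} handles it.
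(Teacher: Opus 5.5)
Your proof is correct and follows the same route as the paper. It plugs the eigenvalue bound implied by the constraint on $H_t$ directly into \Cref{prop:coordinate-sampling}: $\sqrt{\lambda_t}\norm{w_t}\le 1$ gives the almost-sure bound, and $\Tr{H_t}\le 1/(\norm{w_t}^2\vee\eps^2)$ together with Cauchy--Schwarz on $\langle \ell_t,w_t\rangle^2$ gives the conditional second-moment bound.
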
 
The parameter $\eps>0$ will play a minor role in developing our
high-probability guarantees in \Cref{sec:high-prob}, though otherwise serves
only to prevent division by zero when $\wt=\zeros$ and can be set to any
positive value.

\Cref{cor:simple-coordinate-sampling} has two important implications for our purposes. First, since the loss estimates are bounded uniformly by $2d\norm{\ell_t}\le2dG$, we will be able 
to apply modern comparator-adaptive OLO algorithms, which require uniformly bounded gradient norms. Second, 
the conditional second-moment bound yields a sharper bound which can, as we will see, translate into order-optimal minimax
regret bounds. 
In the next sections, we see that this distinction leads to different guarantees depending on whether the adversary can adapt the comparator norm to the loss sequence, or must commit to it in advance. 

In what follows, the \textbf{oblivious setting} refers to 
  fully oblivious settings where \emph{both} the loss sequence and the comparator sequence are 
  $\cF_0$ measurable (\emph{e.g.}, determined before the start of the game). For ease of exposition, 
  we also assume in the oblivious setting that all $\cF_0$-measurable quantities are deterministic (equivalently, that $\cF_0$ is the trivial $\sigma$-algebra).

\subsection{Generic Expected Regret Analysis for BLO}

We now state a generic reduction showing how the expected-regret guarantees of \Alg follow from those of the underlying OLO routine $\cA$.  
\begin{restatable}{proposition}{ExpectedReduction}\label{prop:expected-reduction}
    Let $\cU$ be a class of sequences
    \footnote{Concretely, common classes of sequences are the static comparator sequences, sequences satisfying some path-length or diameter constraint, or the class of all sequences in $\R^d$.} %
    in $\R^d$
    and
    suppose that $\cA$ guarantees that for any 
    sequence $\g_{1:T}=(\g_t)_{t=1}^T$ in $\R^d$ and any 
    sequence $\cmp_{1:T} = (\cmp_t)_{t=1}^T\in\cU$,
    \begin{align*}
        R_T^\cA(\cmp_{1:T})
        &\le B_T^\cA(\cmp_{1:T}, \g_{1:T})
    \end{align*}
    for some function $B_T^\cA:(\R^d)^{2T}\to\R_{\ge 0}$.
    Then, for any sequence of losses $\ell_1,\ldots,\ell_T$
    and any comparator sequence $\cmp_{1:T}\in\cU$, 
    \Alg using $\cA$ for its OLO learner guarantees
    \begin{align*}
        \EE{R_T(\cmp_{1:T})}
        &\le 
        \EE{B_T^\cA\brac{\cmp_{1:T}, \elltilde_{1:T}}+ B_T^\cA\brac{\cmp_{1:T}, \delta_{1:T}}}
    \end{align*}
    where $\delta_t = \ell_t-\elltilde_t$ for all $t$, and $\widetilde \ell_t$ is defined in Algorithm~\ref{alg:scrible}.
\end{restatable}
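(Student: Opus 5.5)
The plan is to compare the learner's actual plays $\wtilde_t$ with the OLO iterates $\w_t$, and then to split the true losses into the estimated losses plus the estimation noise $\delta_t=\ell_t-\elltilde_t$. The estimated part is handled directly by the guarantee of $\cA$. For the noise part I would use a ``ghost'' OLO learner, which is what lets the comparator sequence be random and data-dependent.

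\textbf{Step 1 (remove the perturbation).} Write $R_T(\cmp_{1:T})=\sum_t\inner{\ell_t,\wtilde_t-\cmp_t}$ and $\wtilde_t=\w_t+H_t^{-1/2}s_t$. The quantities $\w_t$ and $H_t$ are $\cF_{t-1}$-measurable, and $s_t$ is symmetric with $\E_t[s_t]=0$. Since the loss $\ell_t$ is chosen before the perturbation $s_t$ is drawn, this gives $\E_t\inner{\ell_t,H_t^{-1/2}s_t}=0$. Hence $\EE{R_T(\cmp_{1:T})}=\EE{\sum_t\inner{\ell_t,\w_t-\cmp_t}}$.

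\textbf{Step 2 (split the losses).} Decompose
\[
\sum_t\inner{\ell_t,\w_t-\cmp_t}=\sum_t\inner{\elltilde_t,\w_t-\cmp_t}+\sum_t\inner{\delta_t,\w_t-\cmp_t}.
\]
The first sum is exactly the regret of $\cA$ on the loss sequence $\elltilde_{1:T}$. The assumed guarantee holds for every sequence $\g_{1:T}$ and every $\cmp_{1:T}\in\cU$, so it applies pathwise even when $\cmp_{1:T}$ is random, and the first sum is at most $B_T^\cA(\cmp_{1:T},\elltilde_{1:T})$. For the second sum, $\w_t$ is $\cF_{t-1}$-measurable and $\E_t[\delta_t]=0$ by \Cref{prop:coordinate-sampling}. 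Therefore $\EE{\sum_t\inner{\delta_t,\w_t}}=0$, and the only remaining term is $-\EE{\sum_t\inner{\delta_t,\cmp_t}}$.

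\textbf{Step 3 (the main obstacle: the comparator term).} If $\cmp_{1:T}$ were oblivious, this term would vanish. In general, however, $\cmp_t$ may depend on the whole trajectory, including $\delta_t$, so $\EE{\inner{\delta_t,\cmp_t}}=0$ cannot be assumed. To handle this I would introduce a purely analytical ghost instance $\cA'$ of the same OLO algorithm, fed the losses $\delta_1,\delta_2,\ldots$ and producing iterates $x_t$. Each $x_t$ depends only on $\delta_{1:t-1}$, assuming $\cA$ is deterministic or its internal randomness is included in $\cF_{t-1}$, so $x_t$ is $\cF_{t-1}$-measurable and $\EE{\inner{\delta_t,x_t}}=0$. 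Then
\[
-\EE{\sum_t\inner{\delta_t,\cmp_t}}=\EE{\sum_t\inner{\delta_t,x_t-\cmp_t}}\le\EE{B_T^\cA(\cmp_{1:T},\delta_{1:T})},
\]
where the inequality again applies the guarantee of $\cA$ pathwise, now to the sequence $\delta_{1:T}$. Combining the three steps gives the claimed bound.

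The only technical care needed is integrability, so that the zero-mean terms can be dropped by the tower rule. This holds because $\norm{\elltilde_t}$ and $\norm{\delta_t}$ are bounded almost surely (\Cref{prop:coordinate-sampling}), and $\w_t,x_t$ are finite functions of finitely many bounded inputs.
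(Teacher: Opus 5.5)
Your proposal is correct and follows the paper's proof: you drop the zero-mean perturbation term, bound $\sum_t\inner{\elltilde_t,w_t-u_t}$ pathwise by $B_T^{\cA}(u_{1:T},\elltilde_{1:T})$, and remove $\sum_t\inner{\delta_t,w_t}$ by the tower rule; you then handle the possibly data-dependent comparator with a virtual instance of $\cA$ run on $\delta_{1:T}$, whose $\cF_{t-1}$-measurable iterates absorb $-\sum_t\inner{\delta_t,u_t}$ into $B_T^{\cA}(u_{1:T},\delta_{1:T})$. You are more explicit than the paper about why the perturbation term vanishes, and your integrability remark is a little loose (a uniform almost-sure bound on $\elltilde_t$ comes from the choice of $H_t$ in \Cref{cor:simple-coordinate-sampling}, not from \Cref{prop:coordinate-sampling} alone), but the paper does not address integrability either.
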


The proof is deferred to \Cref{app:expected-reduction}, and is based on a standard \emph{ghost-iterate} trick (see, e.g., \cite{nemirovski2009robust,neu2024dealing}). More generally, the same reduction applies to any bandit algorithm that plays a randomized action whose conditional expectation equals the iterate produced by an OLO routine. Our perturbation scheme is one concrete instantiation of this principle, and has the particular advantage of having loss estimates that are both unbiased and bounded uniformly (for appropriately chosen $H_t$ as in \Cref{cor:simple-coordinate-sampling}), enabling us to apply comparator-adaptive OLO algorithms which require bounded gradient norms.

\section{Novel expected regret bounds for uBLO}\label{sec::uBLO}

In this section we present several applications of Proposition~\ref{prop:expected-reduction}, leading to new algorithms and regret guarantees for the uBLO setting within the \Alg framework. A key feature of uBLO is that the action domain is unbounded, so the perturbation matrices $(H_t)_{t\geq 1}$ are not constrained by feasibility considerations, so we are free to set $H_t$ according to the conditions of \Cref{cor:simple-coordinate-sampling}. In the remainder of the paper, we therefore adopt the simple isotropic choice in \Cref{eq::def_Ht},
and focus on how different regret guarantees arise from different choices of the underlying OLO routine.

\subsection{Static Regret via Parameter-free OLO}\label{sec:static-pfmd}

We first instantiate \Alg for uBLO by choosing, as an OLO subroutine, the \emph{parameter-free mirror descent} (PFMD) algorithm of \citet[Section~3]{jacobsen2022parameter}. This choice is motivated by its strong \emph{static} comparator-adaptive guarantee on the unconstrained domain: for any $\epsilon>0$ and any sequence $(g_t)_{t\geq 1}$ with $\norm{\gt}\le G$ for all $t$, %
its regret satisfies
\(
R_T^{\cA}(u)
=
\widetilde O\!\Big(G\epsilon + \|u\|\,\sqrt{V_T\,\log_+\Big(\frac{\|u\|\sqrt{V_T}}{G\epsilon}}\Big)\Big),
\)
where $V_T=\sumtT \norm{\gt}^2_2$, 
uniformly over $u\in\R^d$.
To turn this into a guarantee for the bandit setting, we then apply Proposition~\ref{prop:expected-reduction} and obtain the following result.
\NewVersion{ExpectedV1}{blue}
\NewVersion{ExpectedV2}{orange}
\begin{restatable}{theorem}{ExpectedStatic}\label{thm:expected-static}
  For any $\cmp\in\R^d$, \Alg equipped with \citet[Algorithm 4]{jacobsen2022parameter} with parameter $\epsilon/d$ guarantees
  \begin{align*}
      \EE{R_T(\cmp)}\!&= 
      \tilde \cO\Bigg(\!G\epsilon  + \frac{d}{\kappa}\EE{\norm{\cmp}\sqrt{V_T\log_+\brac{\tfrac{d\norm{\cmp}\Lambda_T}{G\epsilon}}}}\Bigg),
  \end{align*}
  where $V_T=\sumtT \norm{\ell_t}^2$, $\Lambda_T=G\sqrt{T}\log^2(1 + T)$,  and $\kappa = \sqrt{d}$ in the oblivious setting
  and $\kappa=1$ otherwise.
\end{restatable}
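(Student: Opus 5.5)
The plan is to combine \Cref{prop:expected-reduction} (with $\cU$ the static comparators) and \Cref{cor:simple-coordinate-sampling}, treating the two terms $B_T^\cA(\cmp,\elltilde_{1:T})$ and $B_T^\cA(\cmp,\delta_{1:T})$ separately. With $H_t$ as in \Cref{eq::def_Ht}, the corollary gives $\norm{\elltilde_t}\le 2d\norm{\ell_t}\le 2dG$ and $\norm{\delta_t}\le \norm{\ell_t}+\norm{\elltilde_t}\le 3dG$ almost surely. So PFMD is run with gradient bound $G'=\cO(dG)$ and parameter $\epsilon/d$, which makes the additive constant $G'\epsilon/d=\cO(G\epsilon)$. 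The PFMD guarantee is uniform over $\cmp$ and holds for arbitrary (adaptive) gradient sequences. Hence almost surely
\[
B_T^\cA(\cmp,g_{1:T})=\tilde\cO\Big(G\epsilon+\norm{\cmp}\sqrt{\textstyle\sumtT\norm{g_t}^2\,\log_+\big(\tfrac{d\norm{\cmp}\sqrt{\sumtT\norm{g_t}^2}}{G\epsilon}\big)}\Big)
\]
for $g\in\{\elltilde,\delta\}$. The $\log^2(1+T)$ factor in $\Lambda_T$ is meant to absorb the $\log\log$ and horizon factors that appear in the precise PFMD bound of \citet[Algorithm 4]{jacobsen2022parameter}.

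Next I bound the variance terms $\sumtT\norm{g_t}^2$. Inside the logarithm I use the crude almost-sure bound $\sumtT\norm{g_t}^2\le 9d^2G^2T$. This turns the log argument into $\cO(d^2\norm{\cmp}\sqrt{T}G/(G\epsilon))$, which matches $d\norm{\cmp}\Lambda_T/(G\epsilon)$ up to constants inside the log; the extra $d$ is absorbed by $\tilde\cO$. Outside the logarithm the two settings split.

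\textbf{Adaptive case ($\kappa=1$).} Here $\cmp$ may depend on the randomness, so I cannot pull $\norm{\cmp}$ out of the expectation. I use the almost-sure bound $\norm{\elltilde_t}^2,\norm{\delta_t}^2\le 9d^2\norm{\ell_t}^2$, which gives $\sqrt{\sumtT\norm{g_t}^2}\le 3d\sqrt{V_T}$ pointwise. This yields $d\,\EE{\norm{\cmp}\sqrt{V_T\log_+(\cdot)}}$.

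\textbf{Oblivious case ($\kappa=\sqrt d$).} Now $\cmp$ and the $\ell_t$ are deterministic, so $\norm{\cmp}\sqrt{\log(\cdot)}$ is a constant once the log argument is bounded deterministically as above. Jensen then gives $\EE{\sqrt{\sumtT\norm{g_t}^2}}\le\sqrt{\sumtT\EE{\norm{g_t}^2}}$. The conditional bound $\E_t\norm{\elltilde_t}^2\le 2d\norm{\ell_t}^2$ then yields $\sqrt{2dV_T}$, and similarly $\E_t\norm{\delta_t}^2=\E_t\norm{\elltilde_t}^2-\norm{\ell_t}^2\le 2d\norm{\ell_t}^2$ by unbiasedness. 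The result is $\sqrt d\,\norm{\cmp}\sqrt{V_T\log(\cdot)}=\frac{d}{\sqrt d}(\cdots)$, as claimed.

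The main obstacle is the log factor and making sure the uniform-in-$\cmp$ PFMD bound is applied correctly. The bound must hold simultaneously for all $\cmp$ (and hence for a data-dependent $\cmp$) and almost surely, so that expectations can be taken afterwards. It is also important that the log term stays deterministic in the oblivious case so that Jensen applies to the square root alone. I would handle this by replacing every random quantity inside the logarithm with its almost-sure upper bound before taking expectations, using monotonicity of $x\mapsto x\sqrt{\log_+(cx)}$-type expressions in the variance argument.
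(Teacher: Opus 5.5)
Your proposal is correct and follows essentially the same route as the paper's proof: apply \Cref{prop:expected-reduction} with PFMD run at Lipschitz level $\cO(dG)$ and parameter $\epsilon/d$, bound the variance inside the logarithm deterministically, and then use the almost-sure bound $\norm{\elltilde_t}^2=\cO(d^2\norm{\ell_t}^2)$ in the adaptive case versus Jensen plus the conditional second-moment bound in the oblivious case. The paper conditions on $\cF_0$, so its argument also covers the partially-oblivious setting. One small inaccuracy is that you dropped PFMD's lower-order term $G'\norm{\cmp}\log(\cdot)$ (equivalently, the $G'^2$ that sits inside $V_T$); the paper's proof keeps it explicitly as $dG\norm{\cmp}\log(\cdot)$, which scales with $d$ rather than $\sqrt{d}$, and only suppresses it in the final statement.
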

\begin{remark}\label{rmk:partially-oblivious}
    \Cref{thm:expected-static} actually holds more generally for a \emph{partially-oblivious} setting in which the comparator is $\cF_0$ measurable but 
    $\ell_{1:T}$ may be adaptive. %
    In this setting, the guarantee scales as $\E\norm{\cmp}\sqrt{d\sum_t\EE{\norm{\ell_t}^2\big|\cF_0}}$, which recovers the 
    statement for the oblivious setting as a special case.
    We focus our discussion on the adaptive and oblivious settings in the main text for ease of presentation 
    but provide a more general statement of the result in \Cref{app:expected-static}.
\end{remark}
Proof of \Cref{thm:expected-static} can be found in \Cref{app:expected-static}. A key subtlety, and the reason \Cref{thm:expected-static} yields two distinct guarantees, is that using Jensen's inequality to obtain an upper bound of the form 
\[
\E\!\left[\|u\|\sqrt{\sum_{t=1}^T \|\widetilde \ell_t\|_2^2}\right]
\;\le\;
\|u\|\sqrt{\sum_{t=1}^T \EE{\|\widetilde \ell_t\|_2^2}}
\]
is only justified when the \emph{scale} $\|u\|$ is conditionally independent of the randomness generating $\widetilde \ell_t$ (and thus does not depend on the realized trajectory through the losses and actions). When such independence holds (e.g., $\|u\|$ is chosen obliviously at the start of the game), we can exploit the sharper in-expectation control of $\|\widetilde \ell_t\|_2^2$. Otherwise, for norm-adaptive comparators, we must rely on the more conservative almost-sure bound from \Cref{cor:simple-coordinate-sampling} to bound $\|\elltilde_t\|^2=\cO(d^2\|\ell_t\|^2)$.  

Note that this is generally not a concern in constrained settings: in the standard bounded domain setting, the worst-case comparator is typically on the convex hull of $\cW$, and its norm can be bounded by the diameter of $\cW$. 
However, in an unconstrained linear setting, no such finite worst-case comparator exists, and the goal becomes to ensure $R_T(\cmp)\le B_T(\cmp)$ 
\emph{for all $\cmp\in\R^d$ simultaneously}, where $B_T(\cmp)$ is some non-negative function. 
This makes the natural worst-case comparator 
have a data-dependent norm, e.g., 
$\norm{\cmp}\propto \Exp{\frac{\norm{\sumtT \ell_t}^2}{G^2T}}$ 
when choosing $B_T(\cmp)$ to match the minimax optimal bound for OLO
(see Appendix~\ref{app:fenchel} for details). Because of this, from a comparator-adaptive perspective, it is not natural to treat the comparator norm as independent of the losses or the learner's decisions without 
additional explicit assumptions.

Interestingly the above observation does not seem to be accounted for in prior works. Indeed, as far as we are aware all prior works in this setting are implicitly making the assumption that the comparator norm is 
oblivious rather than adaptive \citep{van2020comparator,luo2022corralling,rumi2026parameter}, and the stated guarantees can potentially be very different without this assumption, 
as detailed in the following discussion.

\textbf{Comparison with existing work.} %
We proved that \Alg combined with \textsc{PFMD} yields comparator-adaptive \emph{expected} regret bounds under two adversarial regimes depending on when the norm of the comparator is selected. It is instructive to compare these guarantees to what can be obtained from the scale/direction decomposition of \citet{van2020comparator}. Concretely, consider a decomposition in which the \emph{scale} is learned by Algorithm~1 of \citet{cutkosky2018black} and the \emph{direction} is learned by OSMD specialized to the unit Euclidean ball \citep[Section~5]{bubeckCK12osmd}. If the norm $\norm{u}$ is oblivious, this combination yields the bound
\begin{align} \label{eq::regret_scaleosmd}
R_T(u)
=
\cO\!\Bigg(
\|u\|\sqrt{T\Bigl(\log\!\bigl(\tfrac{\|u\|\sqrt{T}}{\epsilon}\bigr)\,\vee\, d\Bigr)}
\Bigg),
\end{align}
which can improve over Theorem~\ref{thm:expected-static} by up to a factor $\sqrt d$ when the log term matches the dimension.

However, this advantage hinges on applying the \emph{in-expectation} second-moment control for the direction estimator, and therefore does not extend to the norm-adaptive regime. In particular, the standard OSMD guarantee on the unit ball (namely, the $\cO(\sqrt{dT})$ term) does not directly translate when the comparator scale is allowed to be chosen adaptively and may be coupled with the realized trajectory. As we show in Appendix~\ref{app::posthoc_osmd}, obtaining regret bounds against such norm-adaptive adversaries requires re-tuning the algorithm, and the resulting rate degrades to $\widetilde \cO((dT)^{2/3})$. %
This highlights a key benefit of \Alg: it maintains $\sqrt{T}$-type regret guarantees in the horizon uniformly across both regimes, without requiring regime-dependent tuning.

\subsection{Dynamic Regret in Expectation}%
\label{sec:expected-dynamic}

To achieve dynamic regret guarantees, we now apply \Alg with a suitable OLO algorithm for unconstrained dynamic regret. The following 
result shows that the optimal $\sqrt{P_T}$ dependence can be obtained 
by leveraging the parameter-free dynamic regret algorithm 
of \citet{jacobsen2022parameter}. We provide a modest refinement of their result which removes $M=\max_t\norm{\cmp_t}$ completely from the main term in the bound, and
showcases a refined measure of comparator variability: the \emph{log-linear path-length} 
\begin{align*}
    P_T^\Phi=\sum_{t=2}^T\norm{\cmp_t-\cmp_\tmm}\Log{\norm{\cmp_t-\cmp_\tmm}T^3/\epsilon+1}
\end{align*}
which is an adaptive refinement of the $P_T\Log{MT^3/\epsilon+1}$ dependence reported by \citet{jacobsen2022parameter}. This result is of independent interest 
and is provided in \Cref{app:dynamic-base-olo}. Then applying \Alg with this dynamic regret algorithm leads to the 
expected dynamic regret guarantee for uBLO presented in \Cref{thm:expected-dynamic} below.

Proof of the following theorem can be found in \Cref{app:expected-dynamic}.
The closest result to ours is the recent work of \citet{rumi2026parameter}, which also achieves adaptivity to the 
comparator sequence without prior knowledge. However, their result scales with the \emph{switching number}
$S_T= \sum_{t=2}^T\mathbb{I}\Set{\cmp_t\ne\cmp_\tmm}$, 
which is closely related to $P_T$ but is a weaker measure of variation, failing to account for the potentially heterogeneous magnitudes of the increments $\norm{\cmp_t-\cmp_\tmm}$. 
Our result is therefore the first to achieve $\sqrt{P_T}$ adaptivity to the genuine path-length $P_T$.
Moreover, their result is restricted to the oblivious adversarial setting, whereas our results remain meaningful 
even against fully-adaptive comparator and loss sequences.

\par\noindent
\begin{minipage}{\columnwidth}
\begin{restatable}{theorem}{ExpectedDynamic}\label{thm:expected-dynamic}
  For any sequence $\cmp_{1:T}=(\cmp_t)_{t=1}^T$ in $\R^d$, \Alg
  equipped with \Cref{alg:dynamic-meta-olo} tuned with $\epsilon/d$ guarantees
  \vspace{-1em}
  \begin{align*}
    \EE{R_T(\cmp_{1:T})}
    &= 
    \cO\Bigg(\E\Bigg[\frac{d}{\kappa}\sqrt{(\Phi_T+P_T^\Phi)\sumtT\norm{\ell_t}^2\norm{\cmp_t}}\\
    &\qquad
    +dG(\epsilon+\max_t\norm{\cmp_t}+\Phi_T+P_T^\Phi)\Bigg]\Bigg).  
  \end{align*}
  where 
  $\kappa = \sqrt{d}$ 
  in the oblivious setting 
  and $\kappa=1$ otherwise,
  and we define 
  $\Phi_T=\norm{\cmp_T}\log\Big(\tfrac{\norm{\cmp_T}T}{\epsilon}+1\Big)$ and 
  $P_T^\Phi=\sum_{t=2}^T\norm{\cmp_t-\cmp_\tmm}\log\big(\tfrac{\norm{\cmp_t-\cmp_\tmm}T^3}{\epsilon}+1\big)$. 
\end{restatable}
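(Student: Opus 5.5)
The plan is to combine \Cref{prop:expected-reduction} with the full-information dynamic regret guarantee of \Cref{alg:dynamic-meta-olo} from \Cref{app:dynamic-base-olo}. For parameter $\epsilon'$ and any gradient sequence with $\norm{g_t}\le G'$, we take that guarantee to have the form
\begin{align*}
R_T^\cA(\cmp_{1:T}) = \cO\Big(\sqrt{(\Phi_T+P_T^\Phi)\textstyle\sum_t\norm{g_t}^2\norm{\cmp_t}} + G'(\epsilon'+\max_t\norm{\cmp_t}+\Phi_T+P_T^\Phi)\Big),
\end{align*}
with $\Phi_T,P_T^\Phi$ computed using $\epsilon'$. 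Call the right-hand side $B_T^\cA$. Since $H_t$ is chosen as in \Cref{eq::def_Ht}, \Cref{cor:simple-coordinate-sampling} gives $\norm{\elltilde_t}\le 2d\norm{\ell_t}\le 2dG$ and $\norm{\delta_t}\le\norm{\ell_t}+\norm{\elltilde_t}\le 3dG$ almost surely. So the OLO guarantee applies with $G'=3dG$ to both sequences fed into \Cref{prop:expected-reduction}. Choosing $\epsilon'=\epsilon/d$ changes $\Phi_T,P_T^\Phi$ only inside the logarithms, by a factor $d$, which is absorbed as written or into $\cO$. The additive terms then become $dG(\epsilon+\max_t\norm{\cmp_t}+\Phi_T+P_T^\Phi)$.

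The remaining work is the main term, which is $\E\big[\sqrt{(\Phi_T+P_T^\Phi)\sum_t\norm{g_t}^2\norm{\cmp_t}}\big]$ for $g_t\in\{\elltilde_t,\delta_t\}$. Since $\norm{\delta_t}^2\le 2\norm{\ell_t}^2+2\norm{\elltilde_t}^2$, it suffices to control $\sum_t\norm{\elltilde_t}^2\norm{\cmp_t}$.

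In the adaptive case ($\kappa=1$), we use the almost-sure bound $\norm{\elltilde_t}^2\le4d^2\norm{\ell_t}^2$ pointwise inside the square root. This gives $2d\sqrt{(\Phi_T+P_T^\Phi)\sum_t\norm{\ell_t}^2\norm{\cmp_t}}$ directly, with no independence needed.

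In the oblivious case ($\kappa=\sqrt d$), $\cmp_{1:T}$ and $\ell_{1:T}$ are deterministic, so $\Phi_T+P_T^\Phi$ is a constant and can be pulled out. We then apply Jensen to the concave square root and use the tower rule with the conditional bound $\E[\norm{\elltilde_t}^2\mid\cF_{t-1}]\le 2d\norm{\ell_t}^2$ from \Cref{cor:simple-coordinate-sampling}. This gives $\E\sum_t\norm{\elltilde_t}^2\norm{\cmp_t}\le 2d\sum_t\norm{\ell_t}^2\norm{\cmp_t}$, and hence a main term of order $\sqrt d\sqrt{(\Phi_T+P_T^\Phi)\sum_t\norm{\ell_t}^2\norm{\cmp_t}}=\frac{d}{\kappa}(\cdots)$.

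The main obstacle is the base OLO step: establishing that \Cref{alg:dynamic-meta-olo} really achieves the refined bound. That means the individual-norm weighting $\norm{\cmp_t}$ inside the square root, only $\Phi_T$ in place of $M$ in the leading term, and the log-linear path-length $P_T^\Phi$, all while $G'$ enters only additively. We assume this is the content of \Cref{app:dynamic-base-olo}, obtained by refining the analysis of \citet{jacobsen2022parameter}. Given that result, the reduction argument above is routine.
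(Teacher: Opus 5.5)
Your proposal is correct and follows essentially the same route as the paper's proof. You combine \Cref{prop:expected-reduction} with \Cref{thm:dynamic-base-olo-tuned} at Lipschitz constant $3dG$ for both $\elltilde_{1:T}$ and $\delta_{1:T}$, then use the almost-sure bound $\norm{\elltilde_t}^2\le 4d^2\norm{\ell_t}^2$ in the adaptive case and Jensen plus the tower rule with $\E[\norm{\elltilde_t}^2\mid\cF_{t-1}]\le 2d\norm{\ell_t}^2$ in the oblivious case.
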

\end{minipage}

\DisableVersion{ExpectedV2}
\begin{ExpectedV2}
\begin{restatable}{theorem}{ExpectedDynamic}\label{thm:expected-dynamic}
  For any sequence $\cmp_{1:T}=(\cmp_t)_{t=1}^T$ in $\R^d$, \Alg
  equipped with \Cref{alg:dynamic-meta-olo} tuned with $\epsilon/d$ guarantees
  \begin{align*}
    \EE{R_T(\cmp_{1:T})}
    &= 
    \tilde \cO\Bigg(\E\Bigg[\frac{d}{\kappa}\sqrt{(\norm{\cmp_T}+P_T)\cV_T}\\
    &\qquad\qquad
    +dG(\epsilon+\max_t\norm{\cmp_t}+P_T)\Bigg]\Bigg).  
  \end{align*}
  where $\cV_T=\sumtT \EE{\norm{\ell_t}^2\big|\cF_0}\norm{\cmp_t}$ and
  $\kappa = \sqrt{d}$ if the sequence $\cmp_{1:T}$ is $\cF_0$-measurable,
  while $\cV_T=\sumtT \norm{\ell_t}^2\norm{\cmp_t}$
  and $\kappa=1$ otherwise.
\end{restatable}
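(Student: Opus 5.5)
The plan is to combine \Cref{prop:expected-reduction} with the full-information dynamic regret guarantee of \Cref{alg:dynamic-meta-olo} from \Cref{app:dynamic-base-olo}. I take that guarantee to hold for any gradient sequence with $\norm{g_t}\le \bar G$ and any comparator sequence in $\R^d$, in the form
\begin{align*}
  R_T^\cA(\cmp_{1:T})\le B_T^\cA(\cmp_{1:T},g_{1:T}) = \cO\Big(\sqrt{(\Phi_T+P_T^\Phi)\textstyle\sumtT\norm{g_t}^2\norm{\cmp_t}} + \bar G(\epsilon'+\max_t\norm{\cmp_t}+\Phi_T+P_T^\Phi)\Big),
\end{align*}
with tuning parameter $\epsilon'$, and with the logarithms inside $\Phi_T,P_T^\Phi$ involving $\epsilon'$.

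First, \Cref{cor:simple-coordinate-sampling} with the isotropic $H_t$ of \Cref{eq::def_Ht} gives $\norm{\elltilde_t}\le 2d\norm{\ell_t}\le 2dG$ almost surely. It also gives $\norm{\delta_t}\le \norm{\ell_t}+\norm{\elltilde_t}\le 3dG$. So both sequences are admissible for the base learner with $\bar G=3dG$. I then choose $\epsilon'=\epsilon/d$, so that the additive term $\bar G\epsilon'$ becomes $\cO(G\epsilon)$, which the stated $dG\epsilon$ absorbs. The logs in $\Phi_T$ and $P_T^\Phi$ become $\log(d\,\cdot/\epsilon+1)$; I absorb the extra $\log d$ into the constant, or redefine the quantities accordingly.

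\Cref{prop:expected-reduction} then bounds $\E R_T$ by the expectation of the sum of the two $B_T^\cA$ terms. The additive parts give the $dG(\epsilon+\max_t\norm{\cmp_t}+\Phi_T+P_T^\Phi)$ term directly. For the main terms I use $\norm{\delta_t}^2\le 2\norm{\ell_t}^2+2\norm{\elltilde_t}^2$, which reduces both to $\sqrt{(\Phi_T+P_T^\Phi)\sum_t\norm{\cmp_t}(\norm{\ell_t}^2+\norm{\elltilde_t}^2)}$.

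The two regimes then split as follows.
\begin{itemize}
\item \textbf{Adaptive setting ($\kappa=1$).} Bound $\norm{\elltilde_t}^2\le 4d^2\norm{\ell_t}^2$ pointwise, which yields $\cO(d\sqrt{(\Phi_T+P_T^\Phi)\sum_t\norm{\ell_t}^2\norm{\cmp_t}})$ inside the expectation.
\item \textbf{Oblivious setting ($\kappa=\sqrt d$).} Here $\cmp_{1:T}$, and hence $\Phi_T$, $P_T^\Phi$ and each $\norm{\cmp_t}$, are deterministic. Jensen's inequality moves the expectation inside the square root, and the tower rule with $\E[\norm{\elltilde_t}^2\mid\cF_{t-1}]\le 2d\norm{\ell_t}^2$ gives $\E\sum_t\norm{\cmp_t}\norm{\elltilde_t}^2\le 2d\sum_t\norm{\cmp_t}\norm{\ell_t}^2$. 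This yields $\sqrt d$ in place of $d$, i.e.\ $d/\kappa$.
\end{itemize}

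The main obstacle is verifying that the base-learner bound from \Cref{app:dynamic-base-olo} holds deterministically for every realized sequence $g_{1:T}$ with a known uniform bound $\bar G$. In particular it must hold for the ghost sequence $\delta_{1:T}$, which is never actually fed to $\cA$. This is exactly what \Cref{prop:expected-reduction} requires, and it is why uniform boundedness, not just second-moment control, is essential. A secondary care point is that the weights $\norm{\cmp_t}$ must be deterministic for Jensen to apply; that is what restricts the $\sqrt d$ improvement to the oblivious case.
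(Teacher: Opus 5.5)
Your proposal is correct and follows the paper's route: run \Cref{alg:dynamic-meta-olo} with Lipschitz bound $3dG$ on both $\elltilde_{1:T}$ and the ghost sequence $\delta_{1:T}$, and plug its pathwise bound into \Cref{prop:expected-reduction}. In the adaptive case you then bound $\norm{\elltilde_t}^2\le 4d^2\norm{\ell_t}^2$ pointwise, and in the $\cF_0$-measurable case you use Jensen plus the tower rule with $\E[\norm{\elltilde_t}^2\mid\cF_{t-1}]\le 2d\norm{\ell_t}^2$. One small fix: the statement's $\cV_T$ allows adaptive losses, so apply Jensen conditionally on $\cF_0$ (where $\Phi_T$, $P_T^\Phi$ and each $\norm{\cmp_t}$ are fixed) to get $\E[\norm{\elltilde_t}^2\mid\cF_0]\le 2d\,\E[\norm{\ell_t}^2\mid\cF_0]$, instead of writing $\norm{\ell_t}^2$ on the right-hand side as if the losses were deterministic.
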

\end{ExpectedV2}

Besides achieving the optimal $\sqrt{P_T}$ dependence, we inherit another novelty from the algorithm of \cite{jacobsen2022parameter}:
the bound of \Cref{thm:expected-dynamic} is also adaptive to the \emph{individual} comparator norms, with a variance penalty 
scaling with $\sumtT \norm{\ell_t}^2\norm{\cmp_t}$. 
This leads to 
a property which is similar in spirit to a strongly-adaptive guarantee \citep{daniely2015strongly}, in the sense that 
if the comparator sequence is only active (non-zero) within a sub-interval $[a,b]$, then 
the regret automatically restricts to that same sub-interval,
$R_T(\cmp_{1:T})= \tilde \cO( \sqrt{P_{[a,b]}\abs{b-a}})$, where $P_{[a,b]}=\sum_{t=a+1}^b\norm{\cmp_t-\cmp_\tmm}$
is the path-length over the interval. 
While \citet{jacobsen2022parameter} show that one cannot obtain comparator-adaptive guarantees on all sub-intervals \emph{simultaneously} (thereby extending the impossibility result of \citet{daniely2015strongly} to unbounded domains), the per-comparator adaptivity in \Cref{thm:expected-dynamic} can be viewed as a natural, achievable analogue of strong adaptivity in the unbounded setting.

Finally, we again observe a $\sqrt{d}$ discrepancy between the upper bounds obtained against a norm-oblivious or norm-adaptive adversary, leading to similar insights as observed in the previous section. 
Likewise, our result more generally holds for the partially-oblivious setting wherein $\cmp_{1:T}$ is $\cF_0$ measurable but $\ell_{1:T}$ may be adaptive, in which case 
the $\norm{\ell_t}^2$ dependencies are replaced by the more general $\EE{\norm{\ell_t}^2|\cF_0}$, as discussed in \Cref{rmk:partially-oblivious}.

\section{High-probability Bounds}\label{sec:high-prob}

In this section 
we derive novel  
high-probability bounds for both static and dynamic regret, for new instances of \Alg$\!$. 
As in the previous setting, 
we begin with a general reduction to unconstrained OLO, and study the additional
penalties that emerge due to the loss estimates. We again use  $H_t$ from
Eq.~\eqref{eq::def_Ht} in all applications presented in this section, though in
this section we will choose $\eps^{2} \propto 1 / T$; this
will ensure that the perturbations $H_{t}^{-1/2}\st$ in \Cref{alg:scrible} have sufficiently nice concentration
properties in the following reduction.

\begin{restatable}{proposition}{HighProbRedux}\label{prop:high-prob-redux}
  Let $\cA$ be an OLO learner,  $\delta\in(0,1/3]$, and for all
  $t$ set $H_{t}$ as in \Cref{eq::def_Ht} for
  $\eps^{2} \propto 1/T$. Let
  $\cmp_{1:T}=(\cmp_t)_{t=1}^T$ be an arbitrary $(\cF_t)_{t=0}^T$-adapted sequence in $\R^d$.
    Then, \Alg guarantees that with 
    probability at least $1-3\delta$,
    \begin{align*}
    R_T(\cmp_{1:T})
    &\le
    \tilde \cO\Bigg(
    \tilde R_T^\cA(\cmp_{1:T})
    + G\sqrt{d\sumtT\norm{\wt}^2\Log{\tfrac{1}{\delta}}}\\
      &\qquad
    +G\sqrt{d\sumtT \norm{\cmp_t}^2\Log{\tfrac{1}{\delta}}}+dGP_T
    \Bigg)\;,
    \end{align*}
where $\tilde R_T^\cA(\cmp_{1:T})$ is the regret of $\cA$ against the 
losses $(\elltilde_t)_t$.
\end{restatable}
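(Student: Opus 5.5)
Decompose the realized regret into the regret of $\cA$ on the estimated losses plus two martingale-type error terms, and control each with Freedman-type concentration. Write $\wtilde_t=\wt+H_t^{-1/2}\st$. Then
\begin{align*}
R_T(\cmp_{1:T})
&=\sumtT\inner{\ell_t,\wtilde_t-\wt}
+\sumtT\inner{\elltilde_t,\wt-\cmp_t}
+\sumtT\inner{\ell_t-\elltilde_t,\wt-\cmp_t}\\
&=\underbrace{\sumtT\inner{\ell_t,H_t^{-1/2}\st}}_{(A)}
+\tilde R_T^\cA(\cmp_{1:T})
+\underbrace{\sumtT\inner{\delta_t,\wt}}_{(B)}
-\underbrace{\sumtT\inner{\delta_t,\cmp_t}}_{(C)},
\end{align*}
with $\delta_t=\ell_t-\elltilde_t$ as in \Cref{prop:expected-reduction}.

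\textbf{Term (A).} Conditionally on $\cF_{t-1}$, $\st$ is symmetric, so (A) is a martingale. With $H_t$ as in \Cref{eq::def_Ht}, each eigenvalue of $H_t^{-1/2}$ is at least $\sqrt d(\norm{\wt}\vee\eps)$. Equality holds in the isotropic choice, and in general only the upper bound on the increments matters, so $H_t^{-1/2}=\sqrt d(\norm{\wt}\vee\eps)I_d$. Each increment is then bounded by $\sqrt d G(\norm{\wt}\vee\eps)$, and its conditional variance is $\frac{d}{d}\norm{\ell_t}^2(\norm{\wt}\vee\eps)^2\le G^2(\norm{\wt}^2+\eps^2)$. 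A Freedman/self-normalized bound uniform over the unknown variance scale costs an extra $\log\log$ term absorbed in $\tilde\cO$. The range term $\sqrt d G\max_t\norm{\wt}\log(1/\delta)$ is at most the variance term times $\sqrt{d\log(1/\delta)}$, which is fine. Together these give $(A)\lesssim G\sqrt{d\sumtT\norm{\wt}^2\log(1/\delta)}$ plus an $\eps$-part of order $G\sqrt{dT\eps^2\log(1/\delta)}=\tilde\cO(G\sqrt d)$. This is where $\eps^2\propto1/T$ enters.

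\textbf{Term (B).} This is again a martingale difference sequence, and here the almost-sure control of \Cref{cor:simple-coordinate-sampling} is essential. We have $|\inner{\elltilde_t,\wt}|=d\,|\inner{H_t^{1/2}\st,\wt}|\,|\inner{\wtilde_t,\ell_t}|$, where the first factor is at most $\sqrt d\norm{\wt}/(\norm{\wt}\vee\eps)\le\sqrt d$ and the second is at most $G(\norm{\wt}+\sqrt d(\norm{\wt}\vee\eps))$. A direct computation of the second moment, using the coordinate structure of $\st$, gives a conditional variance $\lesssim dG^2(\norm{\wt}^2+\eps^2)$. Freedman with a peeling/union bound over the variance scale then yields the same $G\sqrt{d\sum\norm{\wt}^2\log(1/\delta)}$ term.

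\textbf{Term (C).} This is the main obstacle, because $\cmp_t$ is only $\cF_t$-adapted: it may depend on $\st$, so $\inner{\delta_t,\cmp_t}$ is not a martingale difference. The plan is to replace $\cmp_t$ by the predictable $\cmp_{t-1}$, writing $\inner{\delta_t,\cmp_t}=\inner{\delta_t,\cmp_{t-1}}+\inner{\delta_t,\cmp_t-\cmp_{t-1}}$ with $\cmp_0=\zeros$. The second piece is bounded almost surely by $\norm{\delta_t}\norm{\cmp_t-\cmp_{t-1}}\le(G+2dG)\norm{\cmp_t-\cmp_{t-1}}$ using \Cref{cor:simple-coordinate-sampling}. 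Summing gives $\cO(dGP_T)$ plus a $dG\norm{\cmp_1}$ term, which is absorbed after starting the path-length at $\cmp_0$ or bounding it by the $\sqrt{\sum\norm{\cmp_t}^2}$ term. The first piece is a genuine martingale with conditional variance at most $\norm{\cmp_{t-1}}^2\E_t\norm{\elltilde_t}^2\le2dG^2\norm{\cmp_{t-1}}^2$ and range $\cO(dG\norm{\cmp_{t-1}})$. Since the scale $\norm{\cmp_{t-1}}$ is unknown and random, the concentration step needs a time-uniform, scale-free Freedman inequality, for example via a mixture/peeling argument over geometric grids of $\sum\norm{\cmp_{t-1}}^2$ and $\max\norm{\cmp_{t-1}}$. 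This gives $G\sqrt{d\sum\norm{\cmp_t}^2\log(1/\delta)}$ up to logarithmic factors, with the range term also absorbed into $dGP_T$ and the $\tilde\cO$.

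Finally, a union bound over the three events, each of probability at least $1-\delta$, gives the $1-3\delta$ statement.
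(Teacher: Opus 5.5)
Your proposal is essentially the paper's proof: the same split into the perturbation martingale $\sumtT\inner{\ell_t,H_t^{-1/2}s_t}$, the martingale $\sumtT\inner{\ell_t-\widetilde{\ell}_t,w_t}$ and the comparator term, the same shift $u_t\to u_{t-1}$ (with $u_0=\zeros$, at cost $3dGP_T$) to make the comparator predictable, the time-uniform scale-free sub-exponential bound of \Cref{thm:wt-concentration-scalar} for each martingale under the isotropic equality choice of $H_t$ (the inequality in \eqref{eq::def_Ht} by itself only lower-bounds $H_t^{-1/2}$, so it cannot bound your increments), and a union bound. The one correction concerns the range terms: for $\sumtT\inner{\ell_t-\widetilde{\ell}_t,w_t}$ the range part is of order $dG\max_t\norm{w_t}\Log{1/\delta}$ and is \emph{not} dominated by $G\sqrt{d\sumtT\norm{w_t}^2\Log{1/\delta}}$ (for $T=1$, $w_1=e_1$, $\ell_1=-Ge_1$, $u_1=\zeros$ one gets $R_1-\widetilde{R}^{\cA}_1=G(d-1)$ with probability $1/(2d)$), so you must keep it explicitly, as the paper does inside $\Sigma_T(w_{1:T})$ in \Cref{app:high-prob-redux} (this is the term the second, $p_2=\Log{T+1}$, Huber penalty later cancels); likewise $dG\norm{u_1}$ and $dG\max_t\norm{u_t}\Log{1/\delta}$ are absorbed into $dGP_T$ only if the path length is counted from $u_0=\zeros$ (or they can be kept explicitly, as in $\Sigma_T(u_{1:T})$), since bounding them by the $\sqrt{\sumtT\norm{u_t}^2}$ term loses a factor $\sqrt d$.
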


\begin{remark}
Each of the results in this section in fact generalize to \emph{arbitrary} comparator 
sequences via the same ``ghost-iterate'' trick as in \Cref{sec:static-pfmd},
though the statement of the result above becomes a bit more involved to state.
We focus on the $(\cF_t)_{t=0}^T$-adapted case here for ease of exposition 
and discuss the extension to general comparator sequences in \Cref{app:high-prob-general}.
\end{remark}

Proof of the theorem can be found in \Cref{app:high-prob-redux}. It shows that uBLO can also be effectively
reduced to uOCO (with regard to high-probability bounds) at the expense of three
additional terms. The latter two comparator-dependent
terms are fairly benign and amount to a lower-order $\cO(GP_T)$ and a 
$\tilde \cO(M\sqrt{dT\Log{T/\delta}})$,
both of which are expected in this setting. 
However,
the term 
$G\sqrt{d\sumtT \norm{\wt}^2\Log{1/\delta}}$ is algorithm dependent and 
could be arbitrarily large in an unbounded domain. 
Thus, the main difficulty to achieve high-probability bounds for \Alg stems from controlling the stability of the iterates $\wt$ from the OLO learner $\cA$. 

Fortunately, 
a similar concern was recently addressed by \citet{zhang2022parameter} in 
the context of unconstrained stochastic optimization with heavy-tailed noise.
Their approach is based on 
adding an additional composite penalty $\varphi_t$ to the losses, which introduces
an extra term $\sumtT \varphi_t(\cmp)-\varphi_t(\wt)$ into the regret bound.
With this, the goal is to
choose $\varphi_t$ in such a
way that $-\sumtT \varphi_t(\wt)$ is large enough to cancel with the
$\norm{\wt}$-dependent terms above, while also ensuring that
$\sumtT \varphi_t(\cmp)$ is not too large.
\citet{zhang2022parameter} provide a Huber-like penalty which satisfies both of these conditions (see \Cref{lemma:dynamic-huber}).

The difficulty with the above approach is that by introducing the composite penalty,
we change the OLO learner's feedback: on round $t$, the learner's feedback becomes $\tgt=\gt+\grad\varphi_t(\wt)$
instead of just $\gt\in\partial\ft(\wt)$,
and the $\grad\varphi_t(\wt)$ dependence may itself
lead to $\norm{\wt}$-dependent penalties
in the bound. This issue can be fixed
using an optimistic update by setting hints $h_t=\grad\varphi_t(\wt)$, so that
the usual $\sumtT \norm{\tgt}^2$ penalties in the final bound become
$\sumtT \norm{\tgt-h_t}^2=\sumtT \norm{\gt}^2$, thus removing the problematic dependence
on $\grad\varphi_{t}(\wt)$.

Plugging this approach into our framework, and composing \Cref{prop:high-prob-redux} with \citet[Theorem 3]{zhang2022parameter}, we obtain the 
following high-probability guarantee.
Notably, the result matches the best-known 
results from the constrained setting \citep{zimmert2022return} up to poly-logarithmic terms. %
\begin{restatable}{theorem}{HighProbStatic}\label{thm:high-prob-static} Let \Alg be implemented with \citet[Algorithm 1]{zhang2022parameter}, and $\delta\in (0,1/3]$.     %
    Then for any $\cF_0$-measurable $\cmp\in\R^d$,
    with probability at least $1-3\delta$,
    \begin{align*}
 R_T(\cmp)\!\le\!\tilde \cO\Big(\!dG(\epsilon\!+\!\norm{\cmp})\Log{\tfrac{T}{\delta}}\!+\!G\norm{\cmp}\sqrt{dT\Log{\tfrac{T}{\delta}}}\Big)~.
    \end{align*}
\end{restatable}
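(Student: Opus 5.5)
Our plan is to compose \Cref{prop:high-prob-redux} (with a static comparator, so $P_T=0$ and $\sumtT\norm{\cmp_t}^2=T\norm{\cmp}^2$) with the guarantee of \citet[Theorem 3]{zhang2022parameter}. The second step, feeding that algorithm the \emph{composite} losses $\elltilde_t$ plus $\varphi_t$, is the real work. Concretely, \Cref{prop:high-prob-redux} gives, with probability at least $1-3\delta$,
\begin{align*}
R_T(\cmp)\le \tilde\cO\Big(\tilde R_T^\cA(\cmp)+G\sqrt{d\textstyle\sumtT\norm{\wt}^2\log(1/\delta)}+G\norm{\cmp}\sqrt{dT\log(1/\delta)}\Big).
\end{align*}
The last term is already of the target form. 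It therefore remains to show that
\[
\tilde R_T^\cA(\cmp)+G\sqrt{d\textstyle\sumtT\norm{\wt}^2\log(1/\delta)}
\]
is bounded by the target. Because the second summand depends on the learner, this must hold pathwise, on the same event, with no further probability cost.

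For this we run \citet[Algorithm 1]{zhang2022parameter} on the losses $\elltilde_t$ augmented with the Huber-like composite penalty $\varphi_t$ of \Cref{lemma:dynamic-huber}, using optimistic hints $h_t=\grad\varphi_t(\wt)$. Its deterministic regret guarantee, with scale parameter $\epsilon/d$ and gradient bound $2dG$ from \Cref{cor:simple-coordinate-sampling}, has the form
\begin{align*}
\tilde R_T^\cA(\cmp)\le \tilde\cO\Big(dG\epsilon+\norm{\cmp}\sqrt{\textstyle\sumtT\norm{\elltilde_t}^2\log(\cdot)}+dG\norm{\cmp}\log(\cdot)\Big)+\textstyle\sumtT\big(\varphi_t(\cmp)-\varphi_t(\wt)\big).
\end{align*}
The optimistic hints are what make the variance term involve only $\norm{\elltilde_t}^2$ rather than $\norm{\elltilde_t+\grad\varphi_t(\wt)}^2$. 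The penalty is tuned with scale $c\,G\sqrt{d\log(1/\delta)}$, which gives two properties. First, $-\sumtT\varphi_t(\wt)$ cancels $G\sqrt{d\sumtT\norm{\wt}^2\log(1/\delta)}$ up to an additive $\tilde\cO(dG\epsilon\log(T/\delta))$; this is the content of \Cref{lemma:dynamic-huber}. Second, $\sumtT\varphi_t(\cmp)\le\tilde\cO(G\norm{\cmp}\sqrt{dT\log(T/\delta)})$. The choice $\eps^2\propto1/T$ in $H_t$ is what keeps the $\norm{\wt}\approx0$ regime from producing large terms.

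The remaining issue is the variance term $\norm{\cmp}\sqrt{\sumtT\norm{\elltilde_t}^2}$. The almost-sure bound from \Cref{cor:simple-coordinate-sampling} only gives $d\norm{\cmp}G\sqrt T$, which is worse than the claimed $\sqrt{d}$ rate. Since $\cmp$ is $\cF_0$-measurable, we instead concentrate $\sumtT\norm{\elltilde_t}^2$ around its conditional mean. The conditional expectations are at most $2dG^2$ and the increments are at most $4d^2G^2$, so Freedman's inequality (at the cost of one more $\delta$, or absorbed into the existing events) gives
\[
\sumtT\norm{\elltilde_t}^2\le \cO\big(dG^2T+d^2G^2\log(1/\delta)\big).
\]
This yields $G\norm{\cmp}\sqrt{dT\log}+dG\norm{\cmp}\log(T/\delta)$, matching the statement. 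A union bound over the events then concludes.

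The main obstacle is the cancellation step. We need the Huber penalty's level and breakpoints to be chosen so that the $\norm{\wt}$-dependent concentration term from \Cref{prop:high-prob-redux} is dominated pathwise, and this requires checking that the constants and log factors in \Cref{lemma:dynamic-huber} line up with those in \Cref{prop:high-prob-redux}. If they do not line up directly, our first fallback is to inflate the penalty scale by a $\log T$ factor, which is absorbed by the $\tilde\cO$.
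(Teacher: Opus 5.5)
Your architecture is the right one and matches the paper's. The paper invokes \citet[Theorem 3]{zhang2022parameter} directly on $\elltilde_t$ with $b=2dG$ and $\sigma^2=2dG^2$, doubling $c_1,c_2$ to absorb the extra $\norm{\wt}$-dependent cost of the perturbation $\sumtT\inner{\ell_t,H_t^{-1/2}s_t}$; your unrolled version is how the paper argues for \Cref{thm:high-prob-dynamic}. But the cancellation step, as you specify it, does not go through. You work from the $\tilde\cO$ form of \Cref{prop:high-prob-redux} in the main text, which hides part of the learner-dependent cost. The precise form contains $2\Sigma_T(\w_{1:T})$, and besides $3G\sqrt{d\sumtT\norm{\wt}^2\log(\cdot)}$, $\Sigma_T$ includes a Bernstein-type term $24dG\max(\omega,\max_t\norm{\wt})\log(\cdot)$. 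It comes from almost-sure increment bounds such as $|\inner{\ell_t-\elltilde_t,\wt}|\le 3dG\norm{\wt}$. A single $p=2$ Huber penalty at scale $G\sqrt{d\log(1/\delta)}$ cannot absorb this at the claimed rate. Going through $\sqrt{\sumtT\norm{\wt}^2}\ge\max_t\norm{\wt}$ forces $c_1\gtrsim dG\log(\cdot)$, and then $\sumtT\varphi_t(\cmp)\approx dG\norm{\cmp}\sqrt{T}$, a $\sqrt d$ loss that no $\log T$ inflation repairs. You need the second component $r_t(\cdot;c_2,\alpha_2,p_2)$ with $p_2=\Log{T+1}$ and $c_2\asymp dG\log(T/\delta)$. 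Since $(\sumtT\norm{\wt}^{p_2})^{1/p_2}\ge\max_t\norm{\wt}$, \Cref{lemma:dynamic-huber} gives the cancellation. \Cref{lemma:comparator-varphi} bounds its comparator cost by $\tilde\cO(dG\max(\omega,\norm{\cmp})\log(T/\delta))$, which fits inside the target's $dG(\epsilon+\norm{\cmp})\log(T/\delta)$. Zhang--Cutkosky's Algorithm 1 already carries this component, so the repair is in your accounting, not in the algorithm.

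Second, the step you flag as the main obstacle needs an idea your plan does not supply. The log factors in $\Sigma_T(\w_{1:T})$ contain $\log\log$ of $\sqrt{\sumtT\norm{\wt}^2}/\omega$ and of $\max_t\norm{\wt}/\omega$. Constants $c_1,c_2$ fixed in advance can dominate them only if the iterates are controlled a priori, and they are:
\begin{itemize}
\item Convexity and $\varphi_t\ge 0=\varphi_t(\zeros)$ give $\inner{\grad\varphi_t(\wt),\wt}\ge\varphi_t(\wt)\ge 0$.
\item Hence $\sumtT\inner{\elltilde_t,\wt}\le\sumtT\inner{\elltilde_t+\grad\varphi_t(\wt),\wt}$. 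The right-hand side is the composite learner's regret against $\zeros$, so it is $\cO(\epsilon(b+H))$ on every bounded loss sequence, where $H=c_1p_1+c_2p_2$ bounds $\norm{\grad\varphi_t}$.
\item \Cref{lemma:exponential-growth} then yields $\norm{\wt}\le\epsilon' 2^{t-1}$, so those factors are $\cO(\log(T/\delta))$.
\end{itemize}
This is where the $\log(2^{T+1})$ inside $c_1,c_2$ in \Cref{thm:optimistic-heavy-tailed} comes from. Your $\log T$-inflation fallback is legitimate only once this is established.

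A minor misattribution: your Freedman bound on $\sumtT\norm{\elltilde_t}^2$ holds on an event not involving $\cmp$, so it does not use $\cF_0$-measurability. That assumption is needed only for the comparator noise term $\sumtT\inner{\elltilde_t-\ell_t,\cmp}$ inside \Cref{prop:high-prob-redux}. The extra $\delta$ is removed by rescaling $\delta$.
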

Interestingly, a similar strategy leveraging composite regularization and optimism can also be used to obtain 
high-probability dynamic regret bounds. 
In \Cref{app:optimistic-composite} (\Cref{thm:optimistic-dynamic-base}), we provide 
an algorithm $\cA_\eta$ that guarantees
that for any sequences $\cmp_{1:T}$ and $\elltilde_{1:T}$ in $\R^d$,
\begin{align}
   \sumtT\inner{\elltilde_t,\wt^\eta-\cmp_t}
&\le
   \tilde \cO\Big(\frac{\norm{\cmp_T}+P_T}{\eta}
   +\eta\sumtT \norm{\elltilde_t}^2\norm{\cmp_t}\nonumber\\
  &\qquad
   + \sumtT \varphi_t(\cmp_t)-\varphi_t(\wt^\eta)\Big),\label{eq:dynamic-partial}
\end{align}
for any convex and $H$-Lipschitz function $\varphi_t:\R^d\to\R_{\ge0}$ and $\eta$ satisfying $\eta(\norm{\elltilde_t}+H)\le 1$.
This bound matches the regret bound obtained in the expected dynamic regret setting when $\eta$ is optimally tuned, but additionally exhibits a term 
$\sumtT \varphi_t(\cmp_t)-\varphi_t(\wt^\eta)$. 
Note that achieving this bound requires developing a novel black-box optimistic reduction which obtains
adaptivity to the individual comparator norms $\norm{\cmp_t}$, which is not possible using the optimistic 
reductions in \citet{zhang2022parameter} or \citet{cutkosky2019combining}, so this result is of independent interest.
We then obtain the optimal trade-off in $\eta$ using a standard technique for combining comparator-adaptive guarantees: by running the algorithm in parallel over a grid of values of $\eta$ and playing $\w_t=\sum_{\eta}\w_t^\eta$, we obtain the tuned bound
\begin{align*}
  \sumtT \inner{\elltilde_t, \wt-\cmp_t}
  &\le \widetilde \cO\Bigg(d\sqrt{(\norm{u_T}+P_T)\sum_{t=1}^T \norm{\ell_t}^2\,\norm{u_t}}\\
  &\qquad
    +\sumtT \varphi_t(\cmp_t)-\sum_{\eta}\sumtT \varphi_t(\wt^\eta)\Bigg).
\end{align*}
Finally, we show that for the Huber-like composite penalty $\varphi_t$ defined by \citet{zhang2022parameter}, the aggregate-iterate $\norm{\wt}=\norm{\sum_\eta\wt^\eta}$ dependencies from \Cref{prop:high-prob-redux} are canceled out by the aggregate penalty $-\sum_\eta\sumtT \varphi_t(\wt^\eta)$, while also ensuring that $\sumtT \varphi_t(\cmp_t)= \tilde O\Big(\sqrt{d\sumtT \norm{\cmp_t}^2\Log{T/\delta}}\Big)$.
A detailed description of the algorithm, along with proof of its regret guarantee stated below, can be found in \Cref{app:high-prob-dynamic}.
\begin{restatable}{theorem}{HighProbDynamic}\label{thm:high-prob-dynamic}

Let $\delta\in(0,1/4]$. Then \Alg applied with \Cref{alg:sub-exponential-dynamic} and appropriately-chosen parameters (depending only on $G$, $\delta$, $T$, and $d$, 
given explicitly in \Cref{app:high-prob-dynamic})
guarantees that for any $(\cF_t)_{t=0}^T$-adapted sequence $\cmp_{1:T}$ in $\R^d$,
with probability at least $1-4\delta$ it holds that
\begin{align*}
        R_T(\cmp_{1:T})&\le
        \tilde \cO\Big(
        \sqrt{d(\Phi_T+P_T^\Phi)\big[d\cV_T\minOp \Omega_T\big] }\\
  &\qquad
        +G\sqrt{d{\textstyle\sumtT} \norm{\cmp_t}^2\Log{\tfrac{T}{\delta}}}\\
        &\qquad
     +dG(\epsilon + M+\Phi_T+P_T^\Phi)\Log{\tfrac{T}{\delta}}\Big)\;,
\end{align*}
where 
 $M=\max_t\norm{\cmp_t}$, $\cV_T=\sumtT \norm{\ell_t}^2\norm{\cmp_t}$, $\Omega_T=MG^2(T+d\Log{\tfrac{1}{\delta}})$,  
 and we denote $\Phi_T=\norm{\cmp_T}\log\big(\tfrac{\norm{\cmp_T}T}{\epsilon}+1\big)$ and $P_T^\Phi=\sum_{t=2}^T\norm{\cmp_t-\cmp_\tmm}\log\big(\tfrac{\norm{\cmp_t-\cmp_\tmm}T^3}{\epsilon}+1\big)$.
\end{restatable}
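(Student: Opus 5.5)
The plan is to compose \Cref{prop:high-prob-redux} with the tuned guarantee of the ensemble $\w_t=\sum_\eta \w_t^\eta$ built from the base learners $\cA_\eta$ of \Cref{thm:optimistic-dynamic-base}, and then use the Huber-like penalty of \citet{zhang2022parameter} to cancel the algorithm-dependent term. Step one is to invoke \Cref{prop:high-prob-redux} with $\eps^2\propto 1/T$. With probability at least $1-3\delta$, this bounds $R_T(\cmp_{1:T})$ by the estimated-loss regret $\tilde R_T^\cA(\cmp_{1:T})$, plus $G\sqrt{d\sumtT\norm{\wt}^2\log(1/\delta)}$, plus $G\sqrt{d\sumtT\norm{\cmp_t}^2\log(1/\delta)}$, plus $dGP_T$. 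The last two terms already appear in the target bound, since $P_T\le P_T^\Phi$ up to constants.

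Step two controls $\tilde R_T^\cA$. I will apply \Cref{eq:dynamic-partial} to each $\cA_\eta$ on a geometric grid $\eta\in[\eta_{\min},\eta_{\max}]$, with $\eta_{\max}\approx 1/(2dG+H)$ so that $\eta(\norm{\elltilde_t}+H)\le1$ holds almost surely. This uses the bound $\norm{\elltilde_t}\le 2dG$ from \Cref{cor:simple-coordinate-sampling}. The comparator is assigned to the single best $\eta^\star$, and the comparator $\zeros$ is given to every other learner. This costs only their regret at zero, which is $O(\epsilon/N)$ each when the learners are initialized with $\epsilon/N$.

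Choosing $\eta^\star$ near $\sqrt{(\Phi_T+P_T^\Phi)/\sumtT\norm{\elltilde_t}^2\norm{\cmp_t}}$ then gives $\sqrt{(\Phi_T+P_T^\Phi)\sumtT\norm{\elltilde_t}^2\norm{\cmp_t}}+dG(\Phi_T+P_T^\Phi)$. The log-factors in $\Phi_T,P_T^\Phi$ come from the refined base bound of \Cref{app:dynamic-base-olo}. The penalty contribution is $\sumtT\varphi_t(\cmp_t)-\sum_\eta\sumtT\varphi_t(\wt^\eta)$.

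I then bound $\sumtT\norm{\elltilde_t}^2\norm{\cmp_t}$ in two ways and take the minimum. The first uses the almost-sure bound, giving $4d^2\cV_T$ and hence the term $\sqrt{d\cdot d\cV_T}$. The second is $M\sumtT\norm{\elltilde_t}^2$. For this I use a Freedman/Bernstein bound on the martingale $\sum_t(\norm{\elltilde_t}^2-\E_t\norm{\elltilde_t}^2)$, which has increments bounded by $4d^2G^2$ and conditional mean at most $2dG^2$. This gives $\sumtT\norm{\elltilde_t}^2\lesssim dG^2T+d^2G^2\log(1/\delta)$ with probability $1-\delta$, i.e.\ $d\,\Omega_T/M$ after multiplying by $M$. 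This extra event accounts for the $1-4\delta$ probability.

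Step three is the main obstacle: showing the aggregate penalty cancels the iterate term, i.e.
\[
G\sqrt{d\textstyle\sumtT\norm{\textstyle\sum_\eta\wt^\eta}^2\log(1/\delta)}-\sum_\eta\sumtT\varphi_t(\wt^\eta)\le \tilde\cO\big(dG\epsilon\log(T/\delta)\big),
\]
while also $\sumtT\varphi_t(\cmp_t)\le\tilde\cO\big(G\sqrt{d\sumtT\norm{\cmp_t}^2\log(T/\delta)}+dGM\log(T/\delta)\big)$. I will use \Cref{lemma:dynamic-huber}, whose Huber penalty has scale $c\,G\sqrt{d\log(T/\delta)}$ and threshold given by the running $\sqrt{\sum_{s\le t}\norm{w_s}^2}$. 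Its Lipschitz constant $H=\tilde O(G\sqrt{d\log(T/\delta)})$ is what fixes $\eta_{\max}$. The cross terms are handled by the triangle inequality, $\sqrt{\sum_t\norm{\sum_\eta w_t^\eta}^2}\le\sum_\eta\sqrt{\sum_t\norm{w_t^\eta}^2}$. Each summand can then be cancelled against that learner's own penalty, so the lemma applies per $\eta$ and the losses sum over the $O(\log T)$ grid points. This is where the logarithmic factors hidden in $\tilde\cO$ come from.

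The delicate point is the threshold. It must be defined from each learner's own past iterates, and the optimistic hints $h_t=\grad\varphi_t(\wt^\eta)$ must be available before playing. I expect to have to verify this carefully against the construction in \Cref{alg:sub-exponential-dynamic}. Collecting all terms and applying a union bound over the events then yields the stated bound.
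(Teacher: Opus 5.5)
Your Steps 1 and 2 follow the paper's proof. The reduction, the ensemble with the comparator on a single $\eta^\star$, the two-way bound on $\sumtT\norm{\tilde\ell_t}^2\norm{u_t}$ via Freedman, and the union bound to $1-4\delta$ all match. Your Minkowski split across $\eta$ is slightly tighter than the paper's Cauchy--Schwarz.

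\textbf{Gap in Step 3: the Bernstein range term is never cancelled.} You cancel only the variance part of the iterate term. The main-text form of \Cref{prop:high-prob-redux} hides this, but the actual reduction contains $2\Sigma_T(w_{1:T})$. Besides $G\sqrt{d\sumtT\norm{w_t}^2\log(\cdot)}$, this includes a Bernstein range term $48dG\max(\omega,\max_t\norm{w_t})\log(\cdot)$. It comes from the almost-sure increment bound $|\langle\ell_t-\tilde\ell_t,w_t\rangle|\le 3dG\norm{w_t}$ and from the perturbation term. Any Freedman/Bernstein argument produces it.

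A single $p=2$ Huber penalty at scale $G\sqrt{d\log(T/\delta)}$ cannot cancel it, because $\max_t\norm{w_t}$ can be as large as $(\sumtT\norm{w_t}^2)^{1/2}$. You are then left with an algorithm-dependent term controlled only by something like $dG\epsilon 2^{T}$. Raising the $p=2$ scale to $dG\log(T/\delta)$ would cancel it. But then $\sumtT\varphi_t(u_t)\approx dG\sqrt{\sumtT\norm{u_t}^2}$, losing a $\sqrt d$ against the claimed $G\sqrt{d\sumtT\norm{u_t}^2\log(T/\delta)}$ and against the static $G\norm{u}\sqrt{dT}$ rate.

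The missing idea is the paper's second Huber component $r_t^\eta(\cdot;c_2,\alpha_2,p_2)$ with $p_2=\log(T+1)$ and $c_2\asymp dG\log(T^2/\delta)$:
\begin{itemize}
\item By \Cref{lemma:dynamic-huber}, its cumulative value at the iterates is at least $c_2\max(\alpha_2,\max_t\norm{w_t^\eta})-c_2\alpha_2$, which cancels the range term learner by learner.
\item By \Cref{lemma:comparator-varphi}, its comparator cost is $\tilde\cO(dGM\log(T/\delta))$. That is exactly the $dGM$ term you wrote in your comparator-side target, which your single-penalty construction does not actually generate.
\end{itemize}
It follows that $H=c_1p_1+c_2p_2=\tilde\cO(dG)$, not $\tilde\cO(G\sqrt{d\log(T/\delta)})$. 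This is harmless, since $L=2dG$ already sets $\eta_{\max}$ at that scale.

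\textbf{Smaller point: the logarithm inside $\Sigma_T(w_{1:T})$ is data-dependent.} It equals $\log\bigl(\tfrac{4}{\delta}[\log_{+}(\sqrt{\sumtT\norm{w_t}^2}/\omega)+2]^2\bigr)$ and so depends on the iterates, whereas $c_1,c_2$ must be fixed in advance. You set the scale to $\log(T/\delta)$ without justification. The paper obtains it from \Cref{lemma:exponential-growth}. Each $\cA_\eta$ has regret $O((L+H)\epsilon)$ against $\zeros$, because $\varphi_t^\eta(0)=0$ and $\varphi_t^\eta\ge 0$. Hence $\norm{w_t^\eta}\lesssim\epsilon 2^{t-1}$, and the inner $\log_{+}$ is $O(T+\log_{+}(\epsilon\sqrt{|\cS|}/\omega))$. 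This bound, not the grid size, is where the $T$ inside $\log(T/\delta)$ in the penalty scales comes from.
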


The full proof can be found in \Cref{app:high-prob-dynamic}.
To understand the bound, consider first taking $[d\cV_T\minOp\Omega_T]\le \Omega_T$.
In this case the bound reduces to 
\begin{align*}
    R_T&(\cmp_{1:T})
    \le 
    \tilde \cO\Big(dG(\epsilon+M+P_T)\Log{\tfrac{T}{\delta}}\\
    &\qquad
    +GM\sqrt{dT\Log{\tfrac{T}{\delta}}}
    +\sqrt{d(M^2+MP_T)T}\Big).
\end{align*}
Therefore, the bound captures the same worst-case $G\norm{\cmp}\sqrt{dT}$ bound as \Cref{thm:high-prob-static} in the 
static regret ($P_T=0$) setting, 
matching the best-known result from the constrained setting up to poly-logarithmic terms as a special case \citep{zimmert2022return}.
At the same time, if instead we bound $[d\cV_T\minOp \Omega_T]\le d\cV_T$, we obtain a per-comparator
adaptivity similar to the 
expected regret guarantee in \Cref{thm:expected-static}, with $R_T(\cmp_{1:T})$ bounded by
\begin{align*}
   &\tilde O\!\Bigg(
   \!G\sqrt{\!d\sumtT \norm{\cmp_t}^2\Log{\tfrac{T}{\delta}}}
    \!+\!d\sqrt{\!(\Phi_T\!+\!P_T^\Phi)\!\sumtT\norm{\ell_t}^2\norm{\cmp_t}}\\ %
   &\qquad
     +dG(\epsilon+M+P_T)\Log{\tfrac{T}{\delta}}
   \Bigg).
\end{align*}
Hence the bound retains the
strong-adaptivity-like property discussed in \Cref{sec:expected-dynamic}, in which the bound automatically restricts to a sub-interval $[a,b]$
when comparing against comparator sequences
which are only active on $[a,b]$, and also avoids $M=\max_t\norm{\cmp_t}$ in all but lower-order terms in the bound.


\section{Towards lower bounds for uBLO}
\label{sec::lb}

This section provides some insights on lower bounds for unconstrained adversarial linear bandits. We present a conjecture for the static-regret minimax rate, guided by known OCO lower bounds and by a self-contained proof of the folklore $\widetilde \Theta(\sqrt{dT})$ minimax bound on the unit Euclidean ball, which captures the intrinsic difficulty of identifying a favorable direction under mildly biased losses. We then discuss post-hoc comparator norm adaptivity, motivating lower-bound formulations that simultaneously control both the expected comparator norm and its worst-case magnitude.

We recall the scale/direction regret decomposition from \citet[Lemma~1]{van2020comparator}. Although none of our algorithms use this approach, it is central for the discussions in this section: at each step $t$ we decompose the learner's action $x_t$ as $x_t=v_t z_t$, where $v_t\in \R$ is a scalar (``scale'') and $z_t\in \bbB_d$ is a unit vector (``direction''). Then, 
it can be shown \citep{cutkosky2018black} that the full (unconstrained) regret $R_T(u)$ can be written as 
\begin{equation}
\label{eq::scale_direction}
R_T(u)=R_T^{\mathcal V}(\|u\|)+\|u\|\,R_T^{\mathcal Z}\!\left(\frac{u}{\|u\|}\right), \text{with}
\end{equation}
$R_T^{\mathcal V}(\|u\|)\coloneqq\sum_{t=1}^T \bigl(v_t-\|u\|\bigr)\,\langle z_t,\ell_t\rangle$ (\emph{scale regret}), and $R_T^{\mathcal Z}\!\left(\frac{u}{\|u\|}\right)
\coloneqq\sum_{t=1}^T \left\langle z_t-\frac{u}{\|u\|},\,\ell_t\right\rangle$
(\emph{direction regret}).

\paragraph{``Scale'' lower bound from uOLO.} We observe that one-dimensional online linear optimization (1D-OLO) is embedded in uBLO, if the adversary chooses to provide losses supported on a single coordinate. We may therefore invoke an existing 1D-OLO lower bound, stated below as a mild simplification of Thm.~7 in \cite{streeterM12}.
\begin{theorem}[\protect{\citet[Theorem 7]{streeterM12}}]\label{th::scale_lb}
For any uBLO algorithm $\cA$ satisfying $R_T(0)\leq \epsilon$, if $\norm{u}$ is $\cF_0$-measurable and satisfies $\norm{u}\leq \frac{\epsilon}{\sqrt{T}}10^\frac{T}{4}$, then there exists a sequence $\ell_1,\dots, \ell_T$ such that
\[R_T(u) \geq \frac{1}{3} \cdot \norm{u}\sqrt{T\Log{\tfrac{\norm{u}\sqrt{T}}{\epsilon}}} \;. \]
\end{theorem}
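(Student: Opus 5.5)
The plan is to reduce the statement to the one-dimensional unconstrained OLO lower bound of \citet[Theorem 7]{streeterM12}, using the observation made just before the statement: one-dimensional OLO embeds into uBLO. The proof would proceed in three steps.

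\textbf{Step 1 (restrict the adversary to one coordinate).} Let $e_1$ be the first basis vector and take losses $\ell_t = g_t e_1$ with $g_t\in[-G,G]$. In the lower-bound construction these will be $g_t\in\{\pm G\}$. Under this restriction the bandit feedback $\inner{\ell_t,\widetilde w_t}=g_t\,\widetilde w_{t,1}$ reveals $g_t$ exactly, unless $\widetilde w_{t,1}=0$. Revealing strictly more information can only help the learner. So any uBLO algorithm $\cA$ induces a (possibly randomized) one-dimensional full-information OLO algorithm that plays $x_t = \widetilde w_{t,1}$. Its regret against $\norm{u}$, taking $u=\norm{u}e_1$ without loss of generality since the adversary chooses the direction, equals $R_T(u)$ of $\cA$. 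The condition $R_T(0)\le\eps$ also transfers directly, because the loss of the zero comparator is zero in both problems.

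\textbf{Step 2 (apply the 1D bound).} Since $\norm{u}$ is $\cF_0$-measurable, we may treat it as a fixed constant $U$ satisfying $U\le \frac{\eps}{\sqrt T}10^{T/4}$. Streeter--McMahan's Theorem 7 states the following: any 1D algorithm with $R_T(0)\le\eps$ admits a sign sequence $g_{1:T}$ on which the regret against $U$ is at least $\frac13 U\sqrt{T\log(U\sqrt T/\eps)}$, provided $U$ is in this range. The displayed bound, with $G=1$ or scaled by $G$, is a mild simplification of theirs. Their construction rests on the observation that $\eps$-bounded loss at $0$ on all sequences forces small bets. It then uses a probabilistic argument over random signs, together with a normal or binomial tail estimate, to find a sequence with $|\sum g_t|\approx\sqrt{T\log(U\sqrt T/\eps)}$ on which the algorithm gains little. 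The range condition ensures that this required deviation is at most $T$.

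\textbf{Step 3 (handle randomization).} This is the main obstacle. The uBLO learner is randomized, while the 1D bound is usually stated for deterministic algorithms or for an oblivious adversary in expectation. I would use an oblivious sequence and expected regret, with $R_T(0)\le\eps$ read in expectation. Linearity makes $\E[x_t]$ a deterministic 1D strategy with the same expected regret, so the deterministic bound applies to $\E[x_t]$. One subtlety remains: the adversary's choice of sequence must not depend on the learner's realized randomness. This holds here because both the sequence and $\norm{u}$ are fixed in advance, so we obtain a sequence $\ell_{1:T}$ with $\E[R_T(u)]\ge\frac13\norm{u}\sqrt{T\log(\norm{u}\sqrt T/\eps)}$, as claimed.
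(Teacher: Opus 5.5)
Your proposal is correct and follows the same route as the paper, which justifies this statement only by observing that losses supported on a single coordinate embed one-dimensional unconstrained OLO into uBLO, and then invoking \citet[Theorem 7]{streeterM12}. Your explicit derandomization step, replacing the learner by the deterministic strategy $\E[x_t]$ (valid because both the loss sequence and $\norm{u}$ are fixed in advance), is a detail the paper leaves implicit.
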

Moreover, in the norm-adaptive case the same bound holds with $\|u\|$ replaced by $\E\|u\|$, by 
the same arguments as in \cite{streeterM12}, replacing the radius parameter (denoted $R$ therein) by $\E\|u\|$, since $\E\|u\|$ is $\cF_0$-measurable.%

\textbf{Lower bound on the direction regret.} Because it comes from a hard instance for \emph{scale learning}, the lower bound from the previous paragraph %
does not help explain about the $\sqrt{dT}$ component of the regret upper bounds obtained for the static regret (Thm.~\ref{thm:expected-static}, Eq.~\eqref{eq::regret_scaleosmd}). It is thus natural to assume that this $d$-dependency might come from the direction regret. To 
support this, we prove the folklore conjecture on the minimax regret for linear bandits constrained in the Euclidean ball, which is thus a result of independent interest. In the context of uBLO, it applies to the direction regret in the scale-direction regret decomposition (Eq.~\eqref{eq::scale_direction}). 

\begin{minipage}{\columnwidth}
\begin{restatable}[Lower bound on the unit ball]{theorem}{LBball}\label{th::lb_unit_ball}
Assume $T\geq 4d$. Then, for any algorithm $\cA$ playing actions $z_1,\dots, z_T$ in the unit Euclidean ball it holds that
\vspace{-1em}
\begin{enumerate}
    \item There exists a parameter $\theta\in \R^d$ and a sub-Gaussian \emph{distribution} $\bbP_\theta$, such that $\bbE_{\ell \sim \bbP_\theta}[\ell]=\theta$, $\E_{\ell \sim \bbP_\theta}[\norm{\ell}^2]\leq 1$, for which it holds that %
    \[R_T^\text{sto}(\cA, \theta)\!\coloneqq\! \bbE_{(\ell_t)_{t=1}^T \sim \bbP_\theta^T}\!\!\left[R_T^\cZ\left(\tfrac{\theta}{\norm{\theta}}\right)\right] \!\geq\! \frac{\sqrt{dT}}{64} \wedge \frac{T}{12d}~.\]
    \item There exists a sequence of losses $\ell_1,\dots, \ell_T$ satisfying $\norm{\ell_t}\leq 1$ for all $t\geq 1$, a comparator $u\in \bbB_d$,  and an absolute constant $C$ such that \[R_T^\cZ(u) \geq C\cdot \left\{\sqrt{dT} \wedge \frac{T}{d}\right\}\cdot \sqrt{\frac{d}{d\vee \log(T)}}\;.\] 
\end{enumerate}
\end{restatable}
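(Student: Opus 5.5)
Part 1 is a stochastic lower bound; Part 2 follows from it by a concentration argument.

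\textbf{Part 1: stochastic construction.} Use a hypercube family $\theta\in\{\pm\Delta/\sqrt d\}^d$ with $\Delta$ tuned later. Take the loss distribution $\ell=\theta+\xi$ with Gaussian noise $\xi\sim\cN(0,\sigma^2 I_d)$ and $\sigma^2\approx 1/(2d)$, so that $\E\norm{\ell}^2=\Delta^2+d\sigma^2\le1$ once $\Delta^2\le1/2$.

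\begin{itemize}
\item \emph{Regret decomposition.} Against $u=-\theta/\norm{\theta}$ (I would orient the comparator so that the regret is $\sum_t\langle z_t-u,\theta\rangle$), the pseudo-regret is
\[
\sum_t\bigl(\Delta+\langle z_t,\theta\rangle\bigr)=\tfrac{\Delta}{2}\sum_t\norm{z_t+\theta/\Delta}^2+\tfrac{\Delta}{2}\sum_t\bigl(1-\norm{z_t}^2\bigr)\ge \tfrac{\Delta}{2}\sum_t\norm{z_t-u}^2 .
\]
\item \emph{Per-coordinate reduction.} Since $u_i=-\sgn(\theta_i)/\sqrt d$, each coordinate $i$ on which $z_{t,i}$ has the "wrong" sign or is small contributes at least $\tfrac{\Delta}{2d}$ to $\norm{z_t-u}^2$. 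So it suffices to show that a constant fraction of coordinate-rounds are wrong, on average over a uniform prior on $\theta$.
\item \emph{Assouad step.} Flip coordinate $i$ to get $\theta^{(i)}$. The KL divergence between the two observation laws is $\sum_t\E[\langle z_t,\theta-\theta^{(i)}\rangle^2]/(2\sigma^2_{\text{eff}})$, where the effective observation noise is $\langle z_t,\xi_t\rangle$ with variance $\sigma^2\norm{z_t}^2$.
\end{itemize}

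The main obstacle is that this per-round information scales as $\E[z_{t,i}^2]\cdot(4\Delta^2/d)/\sigma^2\norm{z_t}^2$. Normalizing by $\norm{z_t}^2$ would kill the dimension gain. To avoid this, I would instead use noise of fixed scalar variance on the observation: for instance $\ell=\theta+\xi$ with $\xi$ chosen so that $\langle z,\xi\rangle$ has variance of order $\norm{z}^2/d\cdot d$. Concretely, I would take $\xi=\eta e_J\sqrt{\cdot}$-type random-coordinate noise, or simply accept $\norm{z_t}\le1$ and use $\sigma^2=1/(2d)$.

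With the Gaussian choice, summing the KL over $i$ gives total information $\lesssim T\Delta^2/(d\sigma^2)\cdot\max_t\norm{z_t}^2/\norm{z_t}^2$. That is too large unless the noise is set per-coordinate, so I would switch to Bernoulli-coordinate losses $\ell=\sqrt d\,\epsilon e_J$ plus mean shift. In that case the KL per coordinate is $\approx T\Delta^2/d\cdot$ (fraction of mass on $i$), and summing over $i$ gives $\lesssim T\Delta^2/d$.

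Choosing $\Delta\approx d/\sqrt T$ (capped at a constant, which produces the $T/(12d)$ branch) makes each coordinate test fail with constant probability. The regret is then $\gtrsim T\cdot\Delta\cdot\tfrac1d\cdot d\cdot c/d\approx\sqrt{dT}$. I would track constants only loosely to reach $1/64$ and $1/12$.

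\textbf{Part 2: deterministic sequence.} Start from $\E_{\theta}\E_{\ell\sim\bbP_\theta^T}[R^\cZ_T(u)]\ge c(\sqrt{dT}\wedge T/d)$ and derandomize. The losses must satisfy $\norm{\ell_t}\le1$, so I would truncate the sub-Gaussian losses, rescaling by $\sqrt{d\vee\log T}/\sqrt d$ so that all $T$ norms are bounded with high probability by a union bound. Rescaling the loss divides the regret by that same factor, which explains the extra $\sqrt{d/(d\vee\log T)}$. The realized regret against the fixed direction $u$ differs from the pseudo-regret by a martingale term with zero mean, so the expectation is unchanged. Since the expectation over random sequences exceeds the bound, some realized sequence achieves it, by the probabilistic method.
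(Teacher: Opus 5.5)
Your setup matches the paper's proof up through the KL formula: a hypercube of means, isotropic Gaussian noise of variance $1/(2d)$, the comparator $-\theta/\norm{\theta}$, the identity $\sum_t(\Delta+\inner{z_t,\theta})\ge\frac{\Delta}{2}\sum_t\norm{z_t-u}^2$, and effective observation variance $\sigma^2\norm{z_t}^2$. The gap is that you then abandon this construction because of a misjudgment.

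In your parametrization, flipping coordinate $i$ costs $\inner{z_t,\theta-\theta^{(i)}}^2/(2\sigma^2\norm{z_t}^2)=4\Delta^2 z_{ti}^2/\norm{z_t}^2$ per round. Since $\sum_i z_{ti}^2/\norm{z_t}^2=1$ (rounds with $z_t=0$ carry no information), the information summed over all $d$ coordinates is at most $4\Delta^2T$. Your own total-information estimate $T\Delta^2/(d\sigma^2)=2T\Delta^2$ is exactly this. For $\Delta\asymp\sqrt{d/T}$ it is $O(d)$, i.e.\ $O(1)$ per coordinate on average, so it is not ``too large.'' The normalization by $\norm{z_t}^2$ only lets the learner reallocate a fixed per-round budget among coordinates, as in the fixed-variance model, and costs nothing in $d$.

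Taking $\Delta=c\sqrt{d/T}$, i.e.\ a per-coordinate gap of order $1/\sqrt T$ (the paper's $\Delta=\frac{1}{8\sqrt T}$ in its per-coordinate parametrization), the Assouad sum gives regret $\gtrsim\Delta T\asymp\sqrt{dT}$. The paper makes this rigorous with stopping times $\tau_i$ capping $\sum_{t\le\tau_i}z_{ti}^2$ at $T/d$, so that $U_i\le 4T/d$ and Pinsker applies, plus a separate treatment of rounds with $\norm{z_t}^2\le 2/3$. Alternatively, summing the Pinsker penalties over $i$ with $\sum_i\sqrt{\KL_i}\le\sqrt{d\sum_i\KL_i}$ also closes the argument.

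The construction you switch to does not work. The losses $\ell=\sqrt d\,\epsilon e_J$ have $\norm{\ell}^2=d$, violating $\E\norm{\ell}^2\le1$ in Part~1 and $\norm{\ell_t}\le1$ in Part~2. Rescaling losses leaves the information unchanged, since the learner can undo the scale, but multiplies the regret by the scale. Bringing them to unit norm therefore costs exactly $\sqrt d$ and leaves only $\Omega(\sqrt T)$; this is the obstruction described in the remark following the paper's proof.

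Your bookkeeping also does not close:
\begin{itemize}
\item Your estimate $T\Delta^2/d=O(1)$ calls for $\Delta\asymp\sqrt{d/T}$, not $d/\sqrt T$.
\item The product $T\cdot\Delta\cdot\frac1d\cdot d\cdot\frac cd$ with $\Delta=d/\sqrt T$ equals $c\sqrt T$, not $\sqrt{dT}$.
\item Since $\sqrt{d/T}\le\frac12$ when $T\ge4d$, no cap ever binds, so capping cannot be the source of the $T/(12d)$ branch. In the paper that branch comes from rounds where the learner plays small-norm actions.
\end{itemize}

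For Part~2, your plan is the paper's: shrink the noise variance to order $1/(d\vee\log T)$ so truncation is unlikely, then derandomize. However, ``realized minus pseudo-regret is a mean-zero martingale'' does not transfer the Gaussian lower bound. Truncation shifts the mean of $\ell_t$ away from $\theta$ and changes the law of the observations. You need the three ingredients the paper supplies:
\begin{itemize}
\item a coupling with the untruncated environment on the event $\cG$ that nothing is truncated, via an auxiliary algorithm that skips updates on out-of-range feedback, so the Gaussian KL computation still applies;
\item the Laurent--Massart tail bound, giving $\bbP(\cG^c)\le\frac18$;
\item a bound $16T\,\E\bigl[\norm{\epsilon_1}^2\ind{\norm{\epsilon_1}^2\ge\frac14}\bigr]\le 5$ on the resulting bias.
\end{itemize}
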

\end{minipage}
\begin{proof}[Proof sketch]
We follow the outline of the proof of Theorem~24.2 in \cite{lattimore2020bandit}, based on the difficulty of distinguishing problem instances indexed by a parameter $\theta$ drawn from a small hypercube around the origin, $\theta\in\{\pm\Delta\}^d$ with $\Delta=\Theta(T^{-1/2})$. We consider losses generated as $\ell_t=\theta+\varepsilon_t$ with $\varepsilon_t\sim\cN(0,(2d)^{-1}I_d)$, and then compare pairs of environments $\theta$ and $\theta'$ that differ only in the sign of a single coordinate, and relate the resulting regret contributions via a Pinsker/KL argument up to a suitable stopping time. In our feedback model, %
the KL term involves the ratio $x_{ti}^2/\|x_t\|^2$, the Gaussian noise being inside the inner product, which we control by noting that $\|x_t\|$ shouldn't be bounded away from $1$ on many rounds, otherwise the learner incurs a linear regret $\frac{T}{12d}$. Otherwise, $\|x_t\|$ is typically close to $1$ and the KL analysis proceeds. The $1/d$ factor in the noise variance (which is $1$ in \cite{lattimore2020bandit}) is exactly what yields the $\sqrt{dT}$ scaling (rather than $d\sqrt{T}$), and a standard randomization argument over $\theta$ concludes that $\sup_{\theta} R_T^\text{sto}(\cA,\theta)\gtrsim \sqrt{dT}$.

The second lower bound follows from an analogous construction, with an added truncation to ensure losses lie in the unit ball. Truncation induces a bias term in the analysis, and alters the information structure. Thus, using a chi-squared concentration bound from \cite{LaurentMassart2000}, we calibrate the noise so that truncation occurs with negligible probability, keeping the model close enough to the Gaussian case for the argument to proceed. We show that this is sufficient to also make the bias become negligible, and the result follows by extending the randomization argument to an adversarial loss sequence.\qedhere

\end{proof}
The complete proof can be found in Appendix~\ref{app::lb_unit_ball}. The key takeaway of the construction is that the improved $\sqrt d$ dependence on the Euclidean ball (as opposed to the $d$ dependence in other geometries) is driven by a tighter noise-variance constraint, making a gap of order $\sqrt{1/T}$ as hard as a gap of order $\sqrt{d/T}$ in the model studied in \citet[Theorem~24.2]{lattimore2020bandit}.

This lower bound transfers directly to the \emph{direction} term $R_T^{\cZ}(u/\|u\|)$ in \eqref{eq::scale_direction}. When $\|u\|$ is $\cF_0$-measurable, the decomposition immediately yields a scale-up by $\|u\|$. Moreover, it also yields a lower bound by $\EE{\norm{u}}\EE{R_T^\cZ(u/\norm{u})}$ in the norm-adaptive setting, as the adversary
can correlate $\|u\|$ positively with the realized regret. However, we emphasize that this does not yield a lower bound on the full regret $R_T(u)$, as nothing prevents the scale regret $R_T^\mathcal V(\|u\|)$ from being large and negative.

\textbf{Conjecture on the lower bound for uBLO.} Based on the previous results, we conjecture that the regret bound presented in Equation~\eqref{eq::regret_scaleosmd}---obtained by combining a coin bettor with OSMD---is minimax optimal if the comparator norm is oblivious.

\begin{conjecture} 
If the comparator norm is oblivious, the minimax static regret guarantee for uBLO is
\begin{align*}
R_T(u)\;=\;\Theta\!\left(\|u\|\sqrt{T\bigl(d\vee \log\|u\|\bigr)}\right)\;.
\end{align*}
\end{conjecture}
\vspace{-1em}
We leave a formal proof as an open problem, and briefly explain why it does not follow from the results of this section. The two lower bounds above capture complementary difficulties that can be interpreted through a stochastic-adversary lens: when losses exhibit only a weak average bias of order $T^{-1/2}$ in some direction, the learner must both (i) control risk and refrain from scaling up too aggressively, and (ii) remain uncertain about the true direction, since the losses could plausibly be pure noise or biased elsewhere.

However, these statements do not directly combine. Theorem~\ref{th::lb_unit_ball} only ensures the existence of a loss sequence that forces $\Omega(\sqrt{dT})$ \emph{direction} regret, but it does not guarantee that the same sequence is simultaneously hard for \emph{scale} learning. For example, an algorithm may enforce $\Omega(\sqrt{dT})$ exploration on every sequence, yet still accumulate enough gains on some sequences to scale up. In fact, it could even use different scales during exploration and exploitation.

Thus, proving the conjecture seems to require constructing a single loss sequence that simultaneously forces $\Omega(\sqrt{dT})$ regret due to uniform exploration across all directions and prevents overly aggressive exploitation, in the sense of limiting how much the learner can scale up. Achieving this joint property appears non-trivial with standard randomization-hammer techniques, which underlie both lower bounds here.

\textbf{Norm adaptivity.}
Our preliminary results do not explain the $\sqrt d$ gap in the upper bounds between the oblivious and norm-adaptive comparator settings. Whether this gap is intrinsic remains open. More fundamentally, it is still unclear what assumptions on the adversary are appropriate for deriving lower bounds in the norm-adaptive setting.

To illustrate the difficulty, consider the idea of controlling only the expected comparator norm, say $\E\|u\|=M$. We argue that this assumption is too weak to yield lower bounds that are meaningfully comparable with our upper bounds.

Consider, for simplicity, a policy that enforces uniform exploration with
probability at least $\gamma$ at each round\footnote{The same idea can be
extended to broader policy classes, but this assumption keeps the example
transparent.}, and assume that the loss sequence and post-hoc comparator norms can be coupled with the realized learner's internal randomness. Let $E$ be the event that all rounds are exploratory, and write
$p\coloneqq\mathbb P(E)\geq \gamma^T$. Then, we can observe that the adversary can both enforce the learner's expected cumulative reward to be zero\footnote{e.g. starting with a constant unit loss at first, and switching to $0$ after the first non-exploratory round}, and guarantee that
\begin{align*}
\E\|u\|=M \quad \text{and} \quad \EE{R_T(u)}\geq \E\!\left[\frac{M}{p}\ind{E}\cdot T\right]=MT.
\end{align*}
This does not contradict comparator-dependent upper bounds such as
\eqref{eq::regret_scaleosmd}, since on the event \(E\) the realized comparator
norm is \(M/p\), so \(\log\|u\|\) may be of order \(T\). However, the construction is
 uninformative, as it obtains a linear lower bound only by coupling an
exponentially large comparator norm with an exponentially unlikely event. This
suggests that sharper restrictions on $\norm{u}$ are needed to characterize the difficulty of uBLO in
the norm-adaptive setting.


\section{Future Directions}

We leave open several directions for future work. 
As discussed in \Cref{sec::lb}, novel techniques seem necessary to prove complete lower bounds for unconstrained BLO, both against norm-oblivious and norm-adaptive adversaries. It also remains to understand whether a dimension-dependent gap between the two settings can be avoided.

Our dynamic-regret guarantees also raise a natural question: under what conditions can one obtain non-trivial dynamic regret bounds in bandit settings \emph{without} prior knowledge of, e.g., $P_T$? In many constrained problems, such guarantees are known to be impossible against adaptive adversaries (see for instance, \citet{marinov2021pareto}). %
The uBLO setting may be an extreme regime that evades these lower bounds by removing the domain constraints. An interesting direction is to identify more general assumptions under which $\sqrt{P_T}$-type dependencies remain achievable.

Finally, follow-up work could seek to extend our approach to the 
more general Bandit Convex Optimization setting, in which 
the losses are arbitrary convex functions.

\section*{Acknowledgements}
NCB and AJ acknowledge the financial support from the EU Horizon CL4-2022-HUMAN-02 research and innovation action under grant agreement 101120237, project ELIAS (European Lighthouse of AI for Sustainability). This work was initiated while DB was visiting Universit\`{a} degli Studi di Milano. The visit was supported by Inria Grenoble and UK Research and Innovation (UKRI) under the UK government’s Horizon Europe funding guarantee [grant number EP/Y028333/1]. 
SI was supported by JSPS KAKENHI Grant Number JP25K03184 and by JST PRESTO, Japan, Grant Number JPMJPR2511.

\section*{Impact Statement}
This paper presents work whose goal is to advance the field of Machine Learning.
There are many potential societal consequences of our work, none which we feel must be specifically highlighted here.

\bibliographystyle{icml2026-nourl}
\bibliography{refs}

\begin{thebibliography}{41}
\providecommand{\natexlab}[1]{#1}
\providecommand{\url}[1]{\texttt{#1}}
\expandafter\ifx\csname urlstyle\endcsname\relax
  \providecommand{\doi}[1]{doi: #1}\else
  \providecommand{\doi}{doi: \begingroup \urlstyle{rm}\Url}\fi

\bibitem[Abernethy et~al.(2008)Abernethy, Hazan, and
  Rakhlin]{abernethy2008competing}
Abernethy, J.~D., Hazan, E., and Rakhlin, A.
\newblock Competing in the dark: An efficient algorithm for bandit linear
  optimization.
\newblock In \emph{COLT}, pp.\  263--274. Citeseer, 2008.

\bibitem[Abernethy et~al.(2012)Abernethy, Hazan, and
  Rakhlin]{abernethy2012interior}
Abernethy, J.~D., Hazan, E., and Rakhlin, A.
\newblock Interior-point methods for full-information and bandit online
  learning.
\newblock \emph{IEEE Transactions on Information Theory}, 58\penalty0
  (7):\penalty0 4164--4175, 2012.

\bibitem[Agarwal et~al.(2017)Agarwal, Luo, Neyshabur, and
  Schapire]{agarwal2017corralling}
Agarwal, A., Luo, H., Neyshabur, B., and Schapire, R.~E.
\newblock Corralling a band of bandit algorithms.
\newblock In \emph{Conference on Learning Theory}, pp.\  12--38. PMLR, 2017.

\bibitem[Awerbuch \& Kleinberg(2004)Awerbuch and
  Kleinberg]{awerbuch2004adaptive}
Awerbuch, B. and Kleinberg, R.~D.
\newblock Adaptive routing with end-to-end feedback: distributed learning and
  geometric approaches.
\newblock In \emph{Proceedings of the Thirty-Sixth Annual ACM Symposium on
  Theory of Computing}, STOC '04, pp.\  45–53, New York, NY, USA, 2004.
  Association for Computing Machinery.
\newblock ISBN 1581138520.
\newblock \doi{10.1145/1007352.1007367}.

\bibitem[Bubeck et~al.(2012)Bubeck, Cesa{-}Bianchi, and Kakade]{bubeckCK12osmd}
Bubeck, S., Cesa{-}Bianchi, N., and Kakade, S.~M.
\newblock Towards minimax policies for online linear optimization with bandit
  feedback.
\newblock In Mannor, S., Srebro, N., and Williamson, R.~C. (eds.), \emph{{COLT}
  2012 - The 25th Annual Conference on Learning Theory, June 25-27, 2012,
  Edinburgh, Scotland}, volume~23 of \emph{{JMLR} Proceedings}, pp.\
  41.1--41.14. JMLR.org, 2012.

\bibitem[Cutkosky(2019)]{cutkosky2019combining}
Cutkosky, A.
\newblock Combining online learning guarantees.
\newblock In Beygelzimer, A. and Hsu, D. (eds.), \emph{Proceedings of the
  Thirty-Second Conference on Learning Theory}, volume~99 of \emph{Proceedings
  of Machine Learning Research}, pp.\  895--913, Phoenix, USA, 2019. PMLR.

\bibitem[Cutkosky \& Orabona(2018)Cutkosky and Orabona]{cutkosky2018black}
Cutkosky, A. and Orabona, F.
\newblock Black-box reductions for parameter-free online learning in banach
  spaces.
\newblock In Bubeck, S., Perchet, V., and Rigollet, P. (eds.),
  \emph{Proceedings of the 31st Conference On Learning Theory}, volume~75 of
  \emph{Proceedings of Machine Learning Research}, pp.\  1493--1529. PMLR,
  2018.

\bibitem[Dani \& Hayes(2006)Dani and Hayes]{dani2006robbing}
Dani, V. and Hayes, T.~P.
\newblock Robbing the bandit: less regret in online geometric optimization
  against an adaptive adversary.
\newblock In \emph{Proceedings of the Seventeenth Annual ACM-SIAM Symposium on
  Discrete Algorithm}, SODA '06, pp.\  937–943, USA, 2006. Society for
  Industrial and Applied Mathematics.
\newblock ISBN 0898716055.

\bibitem[Dani et~al.(2007)Dani, Kakade, and Hayes]{dani2007price}
Dani, V., Kakade, S.~M., and Hayes, T.
\newblock The price of bandit information for online optimization.
\newblock \emph{Advances in Neural Information Processing Systems}, 20, 2007.

\bibitem[Daniely et~al.(2015)Daniely, Gonen, and
  Shalev-Shwartz]{daniely2015strongly}
Daniely, A., Gonen, A., and Shalev-Shwartz, S.
\newblock Strongly adaptive online learning.
\newblock In \emph{International Conference on Machine Learning}, pp.\
  1405--1411. PMLR, 2015.

\bibitem[Flaxman et~al.(2005)Flaxman, Kalai, and McMahan]{flaxman2005online}
Flaxman, A.~D., Kalai, A.~T., and McMahan, H.~B.
\newblock Online convex optimization in the bandit setting: gradient descent
  without a gradient.
\newblock In \emph{Proceedings of the sixteenth annual ACM-SIAM symposium on
  Discrete algorithms}, pp.\  385--394, 2005.

\bibitem[Hazan et~al.(2016)]{hazan2016introduction}
Hazan, E. et~al.
\newblock Introduction to online convex optimization.
\newblock \emph{Foundations and Trends{\textregistered} in Optimization},
  2\penalty0 (3-4):\penalty0 157--325, 2016.

\bibitem[Jacobsen \& Cutkosky(2022)Jacobsen and
  Cutkosky]{jacobsen2022parameter}
Jacobsen, A. and Cutkosky, A.
\newblock Parameter-free mirror descent.
\newblock In Loh, P.-L. and Raginsky, M. (eds.), \emph{Proceedings of Thirty
  Fifth Conference on Learning Theory}, volume 178 of \emph{Proceedings of
  Machine Learning Research}, pp.\  4160--4211. PMLR, 2022.

\bibitem[Jacobsen \& Cutkosky(2023)Jacobsen and
  Cutkosky]{jacobsen2023unconstrained}
Jacobsen, A. and Cutkosky, A.
\newblock Unconstrained online learning with unbounded losses.
\newblock In Krause, A., Brunskill, E., Cho, K., Engelhardt, B., Sabato, S.,
  and Scarlett, J. (eds.), \emph{Proceedings of the 40th International
  Conference on Machine Learning}, volume 202 of \emph{Proceedings of Machine
  Learning Research}, pp.\  14590--14630. PMLR, 2023.

\bibitem[Jacobsen \& Orabona(2024)Jacobsen and
  Orabona]{jacobsen2024equivalence}
Jacobsen, A. and Orabona, F.
\newblock An equivalence between static and dynamic regret minimization.
\newblock In \emph{The Thirty-eighth Annual Conference on Neural Information
  Processing Systems}, 2024.

\bibitem[Jacobsen et~al.(2025)Jacobsen, Rudi, Orabona, and
  Cesa-Bianchi]{jacobsen2025dynamic}
Jacobsen, A., Rudi, A., Orabona, F., and Cesa-Bianchi, N.
\newblock Dynamic regret reduces to kernelized static regret.
\newblock In \emph{The Thirty-ninth Annual Conference on Neural Information
  Processing Systems}, 2025.

\bibitem[Kalai \& Vempala(2005)Kalai and Vempala]{kalai2005efficient}
Kalai, A. and Vempala, S.
\newblock Efficient algorithms for online decision problems.
\newblock \emph{Journal of Computer and System Sciences}, 71\penalty0
  (3):\penalty0 291--307, 2005.

\bibitem[Lattimore(2024)]{lattimore26bco}
Lattimore, T.
\newblock Bandit convex optimisation.
\newblock \emph{CoRR}, abs/2402.06535, 2024.
\newblock \doi{10.48550/ARXIV.2402.06535}.

\bibitem[Lattimore \& Szepesv{\'a}ri(2020)Lattimore and
  Szepesv{\'a}ri]{lattimore2020bandit}
Lattimore, T. and Szepesv{\'a}ri, C.
\newblock \emph{Bandit algorithms}.
\newblock Cambridge University Press, 2020.

\bibitem[Laurent \& Massart(2000)Laurent and Massart]{LaurentMassart2000}
Laurent, B. and Massart, P.
\newblock Adaptive estimation of a quadratic functional by model selection.
\newblock \emph{The Annals of Statistics}, 28\penalty0 (5):\penalty0
  1302--1338, 2000.

\bibitem[Lee et~al.(2020)Lee, Luo, Wei, and Zhang]{lee2020bias}
Lee, C.-W., Luo, H., Wei, C.-Y., and Zhang, M.
\newblock Bias no more: high-probability data-dependent regret bounds for
  adversarial bandits and mdps.
\newblock \emph{Advances in neural information processing systems},
  33:\penalty0 15522--15533, 2020.

\bibitem[Luo et~al.(2022)Luo, Zhang, Zhao, and Zhou]{luo2022corralling}
Luo, H., Zhang, M., Zhao, P., and Zhou, Z.-H.
\newblock Corralling a larger band of bandits: A case study on switching regret
  for linear bandits.
\newblock In Loh, P.-L. and Raginsky, M. (eds.), \emph{Proceedings of Thirty
  Fifth Conference on Learning Theory}, volume 178 of \emph{Proceedings of
  Machine Learning Research}, pp.\  3635--3684. PMLR, 02--05 Jul 2022.

\bibitem[Marinov \& Zimmert(2021)Marinov and Zimmert]{marinov2021pareto}
Marinov, T.~V. and Zimmert, J.
\newblock The pareto frontier of model selection for general contextual
  bandits.
\newblock In \emph{Advances in Neural Information Processing Systems}, 2021.

\bibitem[Mcmahan \& Streeter(2012)Mcmahan and Streeter]{mcmahan2012noregret}
Mcmahan, B. and Streeter, M.
\newblock No-regret algorithms for unconstrained online convex optimization.
\newblock In Pereira, F., Burges, C. J.~C., Bottou, L., and Weinberger, K.~Q.
  (eds.), \emph{Advances in Neural Information Processing Systems}, volume~25.
  Curran Associates, Inc., 2012.

\bibitem[McMahan \& Blum(2004)McMahan and Blum]{mcmahan2004online}
McMahan, H.~B. and Blum, A.
\newblock Online geometric optimization in the bandit setting against an
  adaptive adversary.
\newblock In \emph{International Conference on Computational Learning Theory},
  pp.\  109--123. Springer, 2004.

\bibitem[McMahan \& Orabona(2014)McMahan and Orabona]{mcmahan2014unconstrained}
McMahan, H.~B. and Orabona, F.
\newblock Unconstrained online linear learning in hilbert spaces: Minimax
  algorithms and normal approximations.
\newblock In Balcan, M.~F., Feldman, V., and Szepesvári, C. (eds.),
  \emph{Proceedings of The 27th Conference on Learning Theory}, volume~35 of
  \emph{Proceedings of Machine Learning Research}, pp.\  1020--1039, Barcelona,
  Spain, 13--15 Jun 2014. PMLR.

\bibitem[Mhammedi \& Koolen(2020)Mhammedi and Koolen]{mhammedi2020lipschitz}
Mhammedi, Z. and Koolen, W.~M.
\newblock Lipschitz and comparator-norm adaptivity in online learning.
\newblock In Abernethy, J. and Agarwal, S. (eds.), \emph{Proceedings of Thirty
  Third Conference on Learning Theory}, volume 125 of \emph{Proceedings of
  Machine Learning Research}, pp.\  2858--2887. PMLR, 09--12 Jul 2020.

\bibitem[Nemirovski et~al.(2009)Nemirovski, Juditsky, Lan, and
  Shapiro]{nemirovski2009robust}
Nemirovski, A., Juditsky, A., Lan, G., and Shapiro, A.
\newblock Robust stochastic approximation approach to stochastic programming.
\newblock \emph{SIAM Journal on optimization}, 19\penalty0 (4):\penalty0
  1574--1609, 2009.

\bibitem[Neu \& Okolo(2024)Neu and Okolo]{neu2024dealing}
Neu, G. and Okolo, N.
\newblock Dealing with unbounded gradients in stochastic saddle-point
  optimization.
\newblock In \emph{Proceedings of the 41st International Conference on Machine
  Learning}, pp.\  37508--37530, 2024.

\bibitem[Orabona(2019)]{orabona2019modern}
Orabona, F.
\newblock A modern introduction to online learning.
\newblock \emph{CoRR}, abs/1912.13213, 2019.

\bibitem[Orabona \& P\'{a}l(2016)Orabona and P\'{a}l]{orabona2016coin}
Orabona, F. and P\'{a}l, D.
\newblock Coin betting and parameter-free online learning.
\newblock In \emph{Proceedings of the 30th International Conference on Neural
  Information Processing Systems}, NIPS'16, pp.\  577–585, Red Hook, NY, USA,
  2016. Curran Associates Inc.

\bibitem[Orabona \& Pál(2021)Orabona and Pál]{orabona2021parameterfree}
Orabona, F. and Pál, D.
\newblock Parameter-free stochastic optimization of variationally coherent
  functions, 2021.

\bibitem[Rumi et~al.(2026)Rumi, Jacobsen, Cesa-Bianchi, and
  Vitale]{rumi2026parameter}
Rumi, A., Jacobsen, A., Cesa-Bianchi, N., and Vitale, F.
\newblock Parameter-free dynamic regret for unconstrained linear bandits.
\newblock In \emph{The 29th International Conference on Artificial Intelligence
  and Statistics}, 2026.

\bibitem[Shalev{-}Shwartz(2012)]{shwartz12online}
Shalev{-}Shwartz, S.
\newblock Online learning and online convex optimization.
\newblock \emph{Found. Trends Mach. Learn.}, 4\penalty0 (2):\penalty0 107--194,
  2012.
\newblock \doi{10.1561/2200000018}.

\bibitem[Shamir(2015)]{Shamir15}
Shamir, O.
\newblock On the complexity of bandit linear optimization.
\newblock In Gr{\"{u}}nwald, P., Hazan, E., and Kale, S. (eds.),
  \emph{Proceedings of The 28th Conference on Learning Theory, {COLT} 2015,
  Paris, France, July 3-6, 2015}, volume~40 of \emph{{JMLR} Workshop and
  Conference Proceedings}, pp.\  1523--1551. JMLR.org, 2015.

\bibitem[Streeter \& McMahan(2012)Streeter and McMahan]{streeterM12}
Streeter, M.~J. and McMahan, H.~B.
\newblock No-regret algorithms for unconstrained online convex optimization.
\newblock In \emph{Advances in Neural Information Processing Systems}, 2012.

\bibitem[van~der Hoeven et~al.(2020)van~der Hoeven, Cutkosky, and
  Luo]{van2020comparator}
van~der Hoeven, D., Cutkosky, A., and Luo, H.
\newblock Comparator-adaptive convex bandits.
\newblock In \emph{Advances in Neural Information Processing Systems},
  volume~33, 2020.

\bibitem[Zhang \& Cutkosky(2022)Zhang and Cutkosky]{zhang2022parameter}
Zhang, J. and Cutkosky, A.
\newblock Parameter-free regret in high probability with heavy tails.
\newblock In Koyejo, S., Mohamed, S., Agarwal, A., Belgrave, D., Cho, K., and
  Oh, A. (eds.), \emph{Advances in Neural Information Processing Systems 35:
  Annual Conference on Neural Information Processing Systems 2022, NeurIPS
  2022, New Orleans, LA, USA, November 28 - December 9, 2022}, 2022.

\bibitem[Zhang et~al.(2022)Zhang, Cutkosky, and Paschalidis]{zhang2022pde}
Zhang, Z., Cutkosky, A., and Paschalidis, I.~C.
\newblock Pde-based optimal strategy for unconstrained online learning.
\newblock In \emph{International Conference on Machine Learning, {ICML} 2022,
  17-23 July 2022, Baltimore, Maryland, {USA}}, 2022.

\bibitem[Zhang et~al.(2023)Zhang, Cutkosky, and
  Paschalidis]{zhang2023unconstrained}
Zhang, Z., Cutkosky, A., and Paschalidis, Y.
\newblock Unconstrained dynamic regret via sparse coding.
\newblock In Oh, A., Naumann, T., Globerson, A., Saenko, K., Hardt, M., and
  Levine, S. (eds.), \emph{Advances in Neural Information Processing Systems},
  volume~36, pp.\  74636--74670. Curran Associates, Inc., 2023.

\bibitem[Zimmert \& Lattimore(2022)Zimmert and Lattimore]{zimmert2022return}
Zimmert, J. and Lattimore, T.
\newblock Return of the bias: Almost minimax optimal high probability bounds
  for adversarial linear bandits.
\newblock In Loh, P.-L. and Raginsky, M. (eds.), \emph{Proceedings of Thirty
  Fifth Conference on Learning Theory}, volume 178 of \emph{Proceedings of
  Machine Learning Research}, pp.\  3285--3312. PMLR, 02--05 Jul 2022.

\end{thebibliography}

\newpage

\appendix
\onecolumn

\newpage


\section{General Reduction to OLO for Expected Regret}%
\label{app:expected-reduction}

In this section we detail the proof of our general reduction for expected regret guarantees. The statement is framed in terms of the regret guarantee which holds for a given class of comparator sequences $\cU$. For instance, Algorithms 
$\cA$ which only make static regret guarantees 
can also be applied in \Cref{prop:expected-reduction} by considering the 
class of sequences with $\cmp_1=\ldots=\cmp_T=\cmp$ for some $\cmp\in\R^d$; algorithms which make 
dynamic regret guarantees in a bounded domain
or under a budget constraint
can be captured by the class of sequences such
that $\norm{\cmp_t}\le D$ for all $t$ or sequences satisfying $\sum_t\norm{\cmp_t-\cmp_\tmm}\le \tau$ for some $\tau$; algorithms 
for unconstrained dynamic regret such as \citet[Algorithm 2]{jacobsen2022parameter} can be applied with $\cU$ being the class of all sequences in $\R^d$.
\ExpectedReduction*
\begin{proof}
    The proposition is just a statement of the standard "ghost-iterate" trick.
    We have that $\wt$ is $\cF_\tmm$-measurable and that $\EE{\ell_t-\elltilde_t|\cF_\tmm}=0$ via \Cref{prop:coordinate-sampling}, so
    \begin{align*}
       \EE{R_T(\cmp_{1:T})}
       &=
       \EE{\sumtT \inner{\ell_t, \wtilde_t-\cmp_t}}\\
       &=
       \EE{\sumtT \inner{\elltilde_t, \wt-\cmp_t}+\sumtT \inner{\ell_t-\elltilde_t,\wt-\cmp_t}}\\
       &\le 
       \EE{B_T^\cA(\cmp_{1:T},\elltilde_{1:T})+\sumtT -\inner{\ell_t-\elltilde_t,\cmp_t}},
    \end{align*}
    where the last line uses the regret guarantee of $\cA$ applied to losses $\w\mapsto\inner{\elltilde_t,\w}$. Now let $\what_t$ be the iterates of 
    a "virtual instance" of $\cA$ which is applied to losses 
    $\w\mapsto\inner{\ell_t-\elltilde_t, \w}$. Note that this virtual instance exists only in the analysis and doesn't need to be implemented, so there is no issue running the algorithm against the losses $\w\mapsto\inner{\ell_t-\elltilde_t,w}$, which would otherwise be unobservable to $\cA$. Then, since $\what_t$ is $\cF_\tmm$-measurable, we have via tower-rule that
    \begin{align*}
        \EE{R_T(\cmp_{1:T})}
        &\le 
        \EE{B_T^\cA(\cmp_{1:T},\elltilde_{1:T}) + \sumtT \inner{\ell_t-\elltilde_t, \pm\what_t-\cmp_t}}\\
        &\le
        \EE{B_T^\cA(\cmp_{1:T},\elltilde_{1:T}) + B_T^\cA\brac{\cmp_{1:T}, \brac{\ell_t-\elltilde_t}_{1:T}}-\sumtT \inner{\ell_t-\elltilde_t,\what_t }}\\
        &\le
        \EE{B_T^\cA(\cmp_{1:T},\elltilde_{1:T}) + B_T^\cA\brac{\cmp_{1:T}, \brac{\ell_t-\elltilde_t}_{1:T}}-\sumtT \inner{\EE{\ell_t-\elltilde_t|\cF_\tmm},\what_t }}\\
        &\le
        \EE{B_T^\cA(\cmp_{1:T},\elltilde_{1:T}) + B_T^\cA\brac{\cmp_{1:T}, \brac{\ell_t-\elltilde_t}_{1:T}}}
    \end{align*}
    
\end{proof}

\section{Proof of Theorem~\ref{thm:expected-static}}%
\label{app:expected-static}

As discussed in \Cref{rmk:partially-oblivious}, the proof of the following result would allow a slightly more general statement which covers a more general \emph{partially}-oblivious
setting, wherein the comparator norm $\norm{\cmp}$ is $\cF_0$ measurable but the loss sequence may be adaptive.  
\begin{manualtheorem}{\ref{thm:expected-static}}
  For any $\cmp\in\R^d$, \Alg equipped with \citet[Algorithm 4]{jacobsen2022parameter} with parameter $\epsilon/d$ guarantees
  \begin{align*}
      \EE{R_T(\cmp)}&= 
      \tilde \cO\brac{G\epsilon  +  \EE{d\norm{\cmp}\sqrt{\sumtT \norm{\ell_t}^2\log\brac{\tfrac{d\norm{\cmp}\Lambda_T}{G\epsilon}+1}}}},
  \end{align*}
  where $\Lambda_T=\sqrt{\sumtT \norm{\ell_t}^2}\log^2\brac{1 + \sumtT \norm{\ell_t}^2/G^2}$. Moreover, if $\norm{\cmp}$ is 
  $\cF_0$ measurable, then 
  \begin{align*}
      \EE{R_T(\cmp)}&= 
      \tilde \cO\brac{G\epsilon  + \EE{\norm{\cmp}\sqrt{d\sumtT\E\big[\norm{\ell_t}^2|\cF_0\big]\log\brac{\tfrac{d\norm{\cmp}\Lambda_T^+}{G\epsilon}+1}}}}.
  \end{align*}
  where $\Lambda_T^+=G\sqrt{T}\log^2(T+1)$.
\end{manualtheorem}
\begin{proof}
    Let $\cA$ be an instance of \citet[Algorithm 4]{jacobsen2022parameter}. Then via \citet[Theorem 1]{jacobsen2022parameter} we have
    that for any sequence of linear losses $(\gt)_t$ satisfying $\norm{\gt}\le G$ for all $t$ and any $\cmp\in\R^d$,
    \begin{align*}
        \sumtT \inner{\gt,\wt-\cmp}
        &=
      \cO\brac{\underbrace{\epsilon G + \norm{\cmp}\sbrac{\sqrt{V_T\Log{\frac{\norm{\cmp}\sqrt{V_T}\log^2(V_T/G^2)}{G \epsilon}+1}}}\maxOp G\Log{\frac{\norm{\cmp}\sqrt{V_T}\log^2(V_T/G^2)}{\epsilon}+1}}_{=:B_T(\cmp, \g_{1:T})}}
    \end{align*}
    where $V_T = G^2+\sumtT \norm{\gt}^2$.
    Hence, applying \Cref{prop:expected-reduction} and observing that 
    $\norm{\elltilde_t}\le 2d\norm{\ell_t}\le 2dG$ and $\norm{\ell_t-\elltilde_t}\le\norm{\ell_t}+\norm{\elltilde_t}\le (2d+1)\norm{\ell_t}\le (2d+1)G$ for all $t$,
    letting $V_T=G^2+\sumtT \norm{\ell_t}^2$, $\tilde V_T=\tilde G^2+\sumtT \norm{\elltilde_t}^2$ and 
    $\tilde G = 3dG$ we have
    \begin{align}
       \EE{R_T(\cmp)}
       &\le 
       \EE{B_T(\cmp, \elltilde_{1:T}) + B_T\brac{\cmp, (\ell-\elltilde)_{1:T}}}\nonumber\\
       &= 
       \cO\brac{\EE{\epsilon \tilde G + \norm{\cmp}\sbrac{\sqrt{\tilde V_T\Log{\frac{\norm{\cmp}\sqrt{\tilde V_T}\log^2(\tilde V_T/\tilde G^2)}{\tilde G \epsilon}+1}}}\maxOp \tilde G\Log{\frac{\norm{\cmp}\sqrt{\tilde V_T}\log^2(\tilde V_T/\tilde G^2)}{G\epsilon}+1}}}\nonumber\\
       &=
       \cO\brac{\EE{\epsilon dG + \norm{\cmp}\sqrt{\tilde V_T\Log{\frac{\norm{\cmp}\sqrt{V_T}\log^2(V_T/G^2)}{G \epsilon}+1}}+dG\norm{\cmp}\Log{\frac{\norm{\cmp}\sqrt{V_T}\log^2(V_T/G^2)}{G\epsilon}+1}}}\label{eq:expected-static:1},
    \end{align}
    where we've used the fact that $\sumtT \norm{\ell_t-\elltilde_t}^2=\cO(\sumtT\norm{\ell_t}^2+ \norm{\elltilde_t}^2)=\cO\brac{d^2\sumtT \norm{\ell_t}^2} = \cO(d^2V_T)$ via \Cref{cor:simple-coordinate-sampling}. 
    Now, if $\norm{\cmp}$ is $\cF_0$-measurable, we have via tower rule and Jensen's inequality that
    \begin{align*}
    \EE{R_T(\cmp)}
    &=
       \cO\Bigg(\E\Bigg[\epsilon dG + \norm{\cmp}\sqrt{\sumtT \EE{\norm{\elltilde_t}^2|\cF_0}\Log{\frac{\norm{\cmp}\Lambda_T^+}{G \epsilon}+1}}\\
       &\qquad\qquad
       +dG\norm{\cmp}\Log{\frac{\norm{\cmp}\Lambda_T^+}{G\epsilon}+1}\Bigg]\Bigg)\\
       &=
       \cO\Bigg(\E\Bigg[\epsilon dG + \norm{\cmp}\sqrt{d\sumtT \EE{\norm{\ell_t}^2|\cF_0}\Log{\frac{\norm{\cmp}\Lambda_T^+}{G\epsilon}+1}}\\
       &\qquad\qquad
       +dG\norm{\cmp}\Log{\frac{\norm{\cmp}\Lambda_T^+}{G\epsilon}+1}\Bigg]\Bigg),
    \end{align*}
    where we've bound $\sqrt{V_T}\log^2(1+V_T/G^2)\le G\sqrt{T}\log^2(1+T)\eqcolon\Lambda_T^+$ and used \Cref{cor:simple-coordinate-sampling} and tower rule to bound $\EE{\norm{\elltilde_t}^2|\cF_0}=\EE{\EE{\norm{\elltilde_t}^2|\cF_\tmm}|\cF_0} = \cO\brac{d\E\Big[\norm{\ell_t}^2|\cF_0\Big]}$. 
    Otherwise, for arbitrary $\cmp\in\R^d$ (possibly data-dependent), we can naively bound $\tilde V_T = \cO(d^2 V_T)$ to get
    \begin{align*}
       \EE{R_T(\cmp)}= \cO\brac{\EE{\epsilon dG + d\norm{\cmp}\sqrt{ V_T\Log{\frac{\norm{\cmp}\sqrt{V_T}\log^2(V_T/G^2)}{G \epsilon}+1}}+dG\norm{\cmp}\Log{\frac{\norm{\cmp}\sqrt{V_T}\log^2(V_T/G^2)}{G\epsilon}+1}}}.
    \end{align*}
    The bound in the theorem statement follows via change of variables $d\epsilon\mapsto\epsilon$.
\end{proof}

\section{Proof of Theorem~\ref{thm:expected-dynamic}}%
\label{app:expected-dynamic}

In this section we prove our result for expected dynamic regret. As in our expected static regret result, 
here we state a more general form of the theorem which allows for \emph{partially}-oblivious settings 
where the comparator sequence $\cmp_{1:T}$ is oblivious but the loss sequence $\ell_{1:T}$ may be adaptive, 
in which case the variance penalties become $\sumtT \EE{\norm{\ell_t}^2 |\cF_0}\norm{\cmp_t}$. Note that
this captures the fully-oblivious setting from the statement in the main text as a special case, since 
when the losses are also $\cF_0$ measurable we have $\EE{\norm{\ell_t}^2|\cF_0}=\norm{\ell_t}^2$ almost-surely. 
As discussed in the main text, the following result also shows a refined path-length adaptivity, 
scaling with the log-linear penalties $\sum_t\norm{\cmp_t-\cmp_\tmm}\Log{\norm{\cmp_t-\cmp_\tmm}T^3/\epsilon+1}$
instead of the worst-case $P_T\Log{\max_t\norm{\cmp_t}T^3/\epsilon+1}$ that would otherwise be obtained via direct
application of \citet[Theorem 4]{jacobsen2022parameter}. As a result, our bound exhibits a dependence on 
$\max_t\norm{\cmp_t}$ \emph{only in the lower-order term}, avoiding this worst-case factor  entirely in 
the main term of the bound.
\begin{manualtheorem}{\ref{thm:expected-dynamic}}
  For any sequence $\cmp_{1:T}=(\cmp_t)_{t=1}^T$ in $\R^d$, \Alg
  applied with \Cref{alg:dynamic-meta-olo} tuned with $\epsilon/d$ guarantees
  \begin{align*}
    \EE{R_T(\cmp_{1:T})}
    &= 
    \cO\brac{\EE{dG(\epsilon+\max_t\norm{\cmp_t}+\Phi_T+P_T^\Phi) + d\sqrt{(\Phi_T+P_T^\Phi)\sumtT \norm{\ell_t}^2\norm{\cmp_t}}}}
  \end{align*}
  where we denote $\Phi_T=\norm{\cmp_T}\Log{\tfrac{\norm{\cmp_T}T}{\epsilon}+1}$ and $P_T^\Phi=\sum_{t=2}^T\norm{\cmp_t-\cmp_\tmm}\Log{\tfrac{4\norm{\cmp_t-\cmp_\tmm}T^3}{\epsilon}+1}$.
  Moreover, if the comparator sequence $\cmp_{1:T}$ is $\cF_0$ measurable, then
  \begin{align*}
    \EE{R_T(\cmp_{1:T})}
    &= 
    \cO\brac{\EE{dG(\epsilon+\max_t\norm{\cmp_t}+\Phi_T+P_T^\Phi) + \sqrt{d(\Phi_T+P_T^\Phi)\sumtT \E\big[\norm{\ell_t}^2\big|\cF_0\big]\norm{\cmp_t}}}}
  \end{align*}
  
\end{manualtheorem}
\begin{proof}

    Given any sequence $f_1,\ldots,f_T$ of $G$-Lipschitz convex losses,  \Cref{alg:dynamic-meta-olo} guarantees (\Cref{thm:dynamic-base-olo-tuned}) that 
    \begin{align*}
        R_T^\cA(\cmp_{1:T})
        &=
        \sumtT f_t(\wt)-f_t(\cmp_t)
        \le 
        4G(\epsilon \abs{\cS}+M+\Phi_T+P_T^\Phi) + 2\sqrt{2(\Phi_T+P_T^\Phi)\sumtT \norm{\gt}^2\norm{\cmp_t}},
    \end{align*}
    where $\gt\in\partial f_t(\wt)$, $M=\max_t\norm{\cmp_t}$, $\cS=\Set{\eta_i=\tfrac{2^i}{GT}\minOp\tfrac{1}{G}:i=0,1,\ldots}$, and 
    we denote $\Phi_T=\Phi\brac{\norm{\cmp_T},\tfrac{T}{\epsilon}}$ and $P_T^\Phi=\sum_{t=2}^T\Phi\brac{\norm{\cmp_t-\cmp_\tmm},\tfrac{4T^3}{\epsilon}}$
    for $\Phi(x,\lambda)=x\Log{\lambda x+1}$.
    Hence,  applying this algorithm to the $3dG$ Lipschitz loss sequences $(\elltilde_t)_t$ and $(\ell_t-\elltilde_t)_t$ and plugging into \Cref{prop:expected-reduction}, we have
    \begin{align*}
        \EE{R_T(\cmp_{1:T})}
        &=
        \cO\Bigg(\E\Bigg[
        dG(\epsilon \abs{\cS}+M+\Phi_T+P_T^\Phi) + \sqrt{(\Phi_T+P_T^\Phi)\sumtT \norm{\elltilde_t}^2\norm{\cmp_t}} \\
        &\qquad + \sqrt{(\Phi_T+P_T^\Phi)\sumtT \norm{\elltilde_t-\ell_t}^2\norm{\cmp_t}}\Bigg]
        \Bigg).
    \end{align*}
    Now via \Cref{cor:simple-coordinate-sampling}, we have that $\norm{\elltilde_t}^2\le 4d^2\norm{\ell_t}^2$ and likewise, $\norm{\ell_t-\elltilde_t}^2\le 2\norm{\ell_t}^2+2\norm{\elltilde_t}^2=\cO(d^2\norm{\ell_t}^2)$,
    so we can always bound
    \begin{align*}
        \EE{R_T(\cmp_{1:T})}
        &=
        \cO\Bigg(\E\Bigg[
        dG(\epsilon \abs{\cS}+M+\Phi_T+P_T^\Phi) + d\sqrt{(\Phi_T+P_T^\Phi)\sumtT \norm{\ell_t}^2\norm{\cmp_t}}
        \Bigg]
        \Bigg).
    \end{align*}
    On the other hand, if the comparator sequence is $\cF_0$ measurable, we can apply Jensen's inequality and tower rule (twice) to get
    \begin{align*}
        \EE{R_T(\cmp_{1:T})}
        &\le 
        \cO\Bigg(\E\Bigg[dG(\epsilon \abs{\cS}+M+\Phi_T+P_T^\Phi) + \sqrt{(\Phi_T+P_T^\Phi)\EE{\sumtT \norm{\elltilde_t}^2\Big|\cF_0}\norm{\cmp_t}}\\
        &\qquad
        + \sqrt{(\Phi_T+P_T^\Phi)\sumtT \EE{\norm{\elltilde_t-\ell_t}^2\Big|\cF_0}\norm{\cmp_t}}\Bigg]\Bigg)\\
        &\le 
        \cO\brac{\EE{dG(\epsilon \abs{\cS}+M+\Phi_T+P_T^\Phi) + \sqrt{d(\Phi_T+P_T^\Phi)\sumtT \E\Big[\norm{\ell_t}^2\Big|\cF_0\Big]\norm{\cmp_t}}}},
    \end{align*}
    where we've applied \Cref{cor:simple-coordinate-sampling} to bound $\EE{\norm{\elltilde_t}^2|\cF_\tmm}=\cO(d\norm{\ell_t}^2)$ and $\EE{\norm{\elltilde_t-\ell_t}^2|\cF_\tmm}=\cO\big(\E\big[\norm{\ell_t}^2+\norm{\elltilde_t}^2\big]\big)=\cO(d\norm{\ell_t}^2)$.
    Combining the bounds for the two cases gives the stated result.

\end{proof}


\section{High-probability Guarantees}

\subsection{Reduction to OLO}%
\label{app:high-prob-redux}

In this section, we prove \Cref{prop:high-prob-redux}, which shows that with high-probability, the regret of \Alg
scales with the regret of the OLO algorithm $\cA$ deployed against the loss estimates 
$\elltilde_t$, plus some additional stability terms. The result is framed in terms of a 
$(\cF_t)_{t=0}^T$-adapted comparator sequence, though in \Cref{app:high-prob-general} we show that
the results generalize straightforwardly to the fully-adaptive case using the same ghost-iterate trick as \Cref{prop:expected-reduction}.

In the following theorem, we note that 
the lower-order $G\omega\sqrt{\Log{16/\delta}}$ can be replaced by a $\delta$-independent penalty of $G\epsilon$ by setting 
$\omega=\epsilon/\sqrt{\Log{16/\delta}}$, and the resulting trade-off can be 
bound in terms of $\Log{1/\delta}$ since the remaining $\omega$-dependent terms are doubly-logarithmic in $1/\omega$
and can be naively bound as
$\Log{\frac{4}{\delta}\log\tfrac{C}{\omega}}=\cO\brac{\Log{\tfrac{4\log (C/\epsilon)+2\Log{1/\delta}}{\delta}}}=\cO\brac{\Log{\tfrac{\Log{C/\epsilon}}{\delta}}}$
using $\Log{1/\delta}\le \frac{1}{\delta}$ for $\delta>0$. 
Hence, in the main text we drop the
lower-order dependence on $G\omega\sqrt{\Log{16/\delta}}$ but leave $\omega>0$ free here for generality.
\begin{minipage}{\columnwidth}
\begin{manualproposition}{\ref{prop:high-prob-redux}}
    Let $\delta\in(0,1/3]$, $\omega>0$, and $\eps^2 = \omega^2/T$.
    Let $\cmp_{1:T}=(\cmp_t)_{t=1}^T$ be an arbitrary $(\cF_t)_{t=0}^T$-adapted sequence in $\R^d$.
    Then \Alg with $H_t = \frac{1}{d[\norm{\wt}^2\maxOp \eps^{2}]}I_d$ guarantees that with
    probability at least $1-3\delta$,
    \begin{align*}
    R_T(\cmp_{1:T})
    &\le 
    \tilde R_T^\cA(\cmp_{1:T}) 
    + 2\Sigma_T(\w_{1:T})+ \Sigma_T(\cmp_{1:T}) + 3dGP_T+2G\omega\sqrt{2\Log{\tfrac{16}{\delta}}}
    \end{align*}
    where we define
    \begin{align*}
    \Sigma_T(\x_{1:T})
    &:=
    3G\sqrt{
    d\sum_{t=1}^T\norm{\xt}^2
    \log\!\left(
    \frac{4}{\delta}
    \sbrac{\log_+\brac{
    \frac{\sqrt{\sum_{t=1}^T \norm{\xt}^2}}{\omega}}+2
    }^2
    \right)}\\
    &\qquad+
    24dG\,\max\!\left(\omega,\max_{t\le T} \norm{\xt}\right)
    \Log{
    \frac{28}{\delta}
    \sbrac{\log_{+}\brac{
    \frac{\max_t\norm{\xt}}{\omega}}+2}^2}
    \end{align*}
\end{manualproposition}
\end{minipage}
\begin{proof}
    Recalling that \Cref{alg:scrible} plays 
    $\wtilde_t=\wt + H_t^{-\half}s_t$ with $s_t$ drawn uniformly from $\Set{\pm e_i: i\in[d]}$, we have
    \begin{align}
        R_T(\cmp_{1:T})
        &=
        \sumtT \inner{\ell_t, \wtilde_t-\cmp_t}\nonumber\\
        &=
        \sumtT \inner{\ell_t, \wt-\cmp_t} + \sumtT \inner{\ell_t, H_t^{-\half} s_t}\nonumber\\
        &=
        \underbrace{\sumtT \inner{\elltilde_t, \wt-\cmp_t}}_{\tilde R_T^\cA(\cmp_{1:T})}
        +\underbrace{\sumtT \inner{\ell_t-\elltilde_t,\wt}}_{\encircle{A}}
        +\underbrace{\sumtT\inner{\elltilde_t-\ell_t,\cmp_t}}_{\encircle{B}}
        + \underbrace{\sumtT \inner{\ell_t, H_t^{-\half} s_t}}_{\encircle{C}}
        \label{eq:high-prob-redux:initial}.
    \end{align}
    We proceed by bounding each of the 
    noise terms $\encircle{A}$, $\encircle{B}$, and $\encircle{C}$ with high probability.

    \textbf{Bounding $\encircle{A}$:} Let 
    $X_t=\inner{\ell_t-\elltilde_t,\wt}$ and observe 
    by \Cref{prop:coordinate-sampling},
    tower rule, and the fact that $\wt$ is $\cF_\tmm$ measurable we have
    \begin{align*}
        \EE{X_t|\cF_\tmm}=\EE{\inner{\ell_t-\elltilde_t,\wt}|\cF_\tmm}=0. 
    \end{align*}
    Moreover, again by \Cref{prop:coordinate-sampling}
    we have
    \begin{align*}
    \EE{X_t^2|\cF_\tmm} 
    &\le 
    \EE{\norm{\ell_t-\elltilde_t}^2\norm{\wt}^2|\cF_\tmm}\\
    &\le
    \norm{\wt}^2\brac{\EE{\norm{\elltilde_t}^2|\cF_\tmm}-\norm{\ell_t}^2} 
    \le
    2d\norm{\wt}^2\norm{\ell_t}^2\\
    &\le 2dG^2\norm{\wt}^2.
    \end{align*}
    and likewise,
    \begin{align*}
        \abs{X_t}
        &\le
        \norm{\ell_t-\elltilde_t}\norm{\wt}
        \le (\norm{\ell_t}+\norm{\elltilde_t})\norm{\wt}
        \le (1+2d)\norm{\ell_t}\norm{\wt}\le3dG\norm{\wt}
    \end{align*}
    almost-surely. Therefore,
    applying \Cref{thm:wt-concentration-scalar}
    with 
    $\sigma_t^2=2dG^2\norm{\wt}^2$ and $b_t=3dG\norm{\wt}$, we have that
    with probability at least $1-\delta$,
    \begin{align*}
    \sum_{t=1}^T \inner{\ell_t-\elltilde_t,\wt}&\le
    2G\sqrt{
    2d\sum_{t=1}^T\norm{\wt}^2
    \log\!\left(
    \frac{4}{\delta}
    \left[
    \log_+\brac{
    G\sqrt{2d\sum_{t=1}^T \norm{\wt}^2/(2\nu^2)}
    }+2
    \right]^2
    \right)}\\
    &\qquad+
    8\,\max\!\left(\nu,\max_{t\le T} 3dG\norm{\wt}\right)
    \Log{
    \frac{28}{\delta}
    \left[
    \log_{+}\brac{
    3dG\max_t\norm{\wt}/\nu}+2\right]^2}
    \intertext{
    and hence setting $\nu=3dG\omega$,
    }
    &\le
    3G\sqrt{
    d\sum_{t=1}^T\norm{\wt}^2
    \log\!\left(
    \frac{4}{\delta}
    \left[
    \log_+\brac{
    \frac{\sqrt{2d}G}{3\sqrt{2}dG \omega}\sqrt{\sum_{t=1}^T \norm{\wt}^2}
    }+2
    \right]^2
    \right)}\\
    &\qquad+
    24dG\,\max\!\left(\omega,\max_{t\le T} \norm{\wt}\right)
    \Log{
    \frac{28}{\delta}
    \left[
    \log_{+}\brac{
    \max_t\norm{\wt}/\omega}+2\right]^2}\\
    &\le
    3G\sqrt{
    d\sum_{t=1}^T\norm{\wt}^2
    \log\!\left(
    \frac{4}{\delta}
    \sbrac{\log_+\brac{
    \frac{\sqrt{\sum_{t=1}^T \norm{\wt}^2}}{\omega}}+2
    }^2
    \right)}\\
    &\qquad+
    24dG\,\max\!\left(\omega,\max_{t\le T} \norm{\wt}\right)
    \Log{
    \frac{28}{\delta}
    \sbrac{\log_{+}\brac{
    \frac{\max_t\norm{\wt}}{\omega}}+2}^2}\\
    &=:\Sigma_T(\w_{1:T})
    \end{align*}

    \textbf{Bounding $\encircle{B}$:} For $\cF_t$-measurable $\cmp_t$, we could have correlations between $\cmp_t$ and $\ell_t-\elltilde_t$, so 
    we first shift the comparator sequence by one index:
    \begin{align*}
        \sumtT \inner{\elltilde_t-\ell_t,\cmp_t}
        &=
        \sumtT \inner{\elltilde_t-\ell_t, \cmp_\tmm} + \sumtT \inner{\elltilde_t-\ell_t, \cmp_t-\cmp_\tmm} \\
        &\le 
        \sumtT \inner{\elltilde_t-\ell_t, \cmp_\tmm} + 3dG P_T
    \end{align*}
    where we've used \Cref{cor:simple-coordinate-sampling} to bound $\norm{\ell_t-\elltilde_t} \le G + \norm{\elltilde_t}\le G(1+2d)\le 3dG$, 
    and defined $\cmp_0=\zeros$.
    Now applying the same arguments as 
    $\encircle{A}$, the first summation can be bound with probability at least $1-\delta$ as
    \begin{align*}
        \sumtT \inner{\elltilde_t-\ell_t,\cmp_\tmm}
        &\le
        3G\sqrt{d\sumtT \norm{\cmp_\tmm}^2\Log{\frac{4}{\delta}\sbrac{\log_{+}\brac{\frac{\sqrt{\sumtT \norm{\cmp_\tmm}^2}}{2\omega}}+2}^2}}\nonumber\\
        &\qquad
          +16dG \Max{\omega,\max_{t\le T-1}\norm{\cmp_t}} \Log{\frac{28}{\delta}\sbrac{\log_{+}\brac{\frac{\max_{t\le T-1}\norm{\cmp_t}}{\omega}}+2}^2}\\
          &\le
        3G\sqrt{d\sumtT \norm{\cmp_t}^2\Log{\frac{4}{\delta}\sbrac{\log_{+}\brac{\frac{\sqrt{\sumtT \norm{\cmp_t}^2}}{2\omega}}+2}^2}}\nonumber\\
        &\qquad
          +24dG \Max{\omega,\max_{t\le T}\norm{\cmp_t}} \Log{\frac{28}{\delta}\sbrac{\log_{+}\brac{\frac{\max_{t\le T}\norm{\cmp_t}}{\omega}}+2}^2}\\
    &=
    \Sigma_T(\cmp_{1:T}),
    \end{align*}
    hence, with probability at least $1-\delta$,
    \begin{align*}
        \encircle{B}
        &\le 
        \Sigma_T(\cmp_{1:T}) + 3dGP_T.
    \end{align*}

    \textbf{Bounding $\encircle{C}$:} By definition we have
    $H_t^{-\half}=\sqrt{d}[\norm{\wt}\maxOp\eps] I_d$ and $X_t:=\inner{\ell_t, H_t^{-\half}s_t} = \sqrt{d}[\norm{\wt}\maxOp\eps]\inner{\ell_t, s_t}$. Hence, since
    $s_t$ is drawn uniform random from $\Set{\pm e_i:i\in[d]}$, we have
    $\EE{X_t|\cF_\tmm}= 0$ and
    \begin{align*}
     \EE{X_t^2|\cF_\tmm} &= \EE{d[\norm{\wt}^2\maxOp\eps^{2}] \ell_t^\top \st\st^\top\ell_t|\cF_\tmm}
     =
     d[\norm{\wt}^2\maxOp\eps^{2}]\norm{\ell_t}^2\frac{1}{d} = [\norm{\wt}^2\maxOp \eps^{2}]\norm{\ell_t}^2\le[\norm{\wt}^2\maxOp\eps^{2}]G^2
    \end{align*}
    and
    $\abs{X_t} \le \sqrt{d}[\norm{\wt}\maxOp\eps]\norm{\ell_t}\le \sqrt{d}G(\norm{\wt}\maxOp\eps)$
    almost surely. Thus, we can again apply \Cref{thm:wt-concentration-scalar}
    with $\sigma_t^2=G^2[\norm{\wt}^2\maxOp\eps^2]$
    and $b_t=\sqrt{d}[\norm{\wt}\maxOp\eps]$
    to get
    \begin{align*}
    \encircle{C}
    &\le 
    2G\sqrt{\sumtT(\norm{\wt}^2+\eps^2)\Log{\frac{4}{\delta}\sbrac{\log_+\brac{G\sqrt{\sumtT\frac{\norm{\wt}^2+\eps^2}{2\nu^2} }}+2}^2}}\\
    &\qquad
    +8\Max{\nu, \sqrt{d}G[\max_t\norm{\wt}\maxOp\eps]}
    \Log{\frac{28}{\delta}\sbrac{\log_+\brac{\frac{\sqrt{d}G[\max_t\norm{\wt}\maxOp\eps]}{\nu}}+2}^2}\\
    \intertext{and recalling $\eps^2=\omega^2/ T$,}
    &\le 
    2G\sqrt{\omega^2 + \sumtT\norm{\wt}^2\Log{\frac{4}{\delta}\sbrac{\log_+\brac{G\sqrt{\frac{\omega^2+\sumtT\norm{\wt}^2}{2\nu^2}}}+2}^2}}\\
    &\qquad
    +8\Max{\nu, \sqrt{d}G[\max_t\norm{\wt}\maxOp \tfrac{\omega}{\sqrt{T}}]}
    \Log{\frac{28}{\delta}\sbrac{\log_+\brac{\frac{\sqrt{d}G\big[\max_t\norm{\wt}\maxOp\tfrac{\omega}{\sqrt{T}}\big]}{\nu}}+2}^2}\\
    \intertext{and setting $\nu=\sqrt{d}G\omega$,}
    &\le
    2G\sqrt{\omega^2 + \sumtT\norm{\wt}^2\Log{\frac{4}{\delta}\sbrac{\log_+\brac{\sqrt{\frac{\omega^2+\sumtT\norm{\wt}^2}{2 d\omega^2}}}+2}^2}}\\
    &\qquad
    +8\Max{\sqrt{d}G\omega, \sqrt{d}G[\max_t\norm{\wt}\maxOp\tfrac{\omega}{\sqrt{T}}]}
    \Log{\frac{28}{\delta}\sbrac{\log_+\brac{\frac{\sqrt{d}G\big[\max_t\norm{\wt}\maxOp\tfrac{\omega}{\sqrt{T}}\big]}{\sqrt{d}G\omega}}+2}^2}\\
    &\le
    \underbrace{2G\sqrt{2\sbrac{\omega^2 \maxOp \sumtT\norm{\wt}^2}\Log{\frac{4}{\delta}\sbrac{\log_+\brac{\sqrt{\frac{\omega^2\maxOp \sumtT\norm{\wt}^2}{d\omega^2}}}+2}^2}}}_{\encircle{V}}\\
    &\qquad
    +8\sqrt{d}G\Max{\omega, [\max_t\norm{\wt}\maxOp\tfrac{\omega}{\sqrt{T}}]}
    \Log{\frac{28}{\delta}\sbrac{\log_+\brac{\frac{\sqrt{d}G\big[\max_t\norm{\wt}\maxOp\tfrac{\omega}{\sqrt{T}}\big]}{\sqrt{d}G\omega}}+2}^2}\\
    &\le
    \encircle{V}
    +8\sqrt{d}G\Max{\omega, \max_t\norm{\wt}}
    \Log{\frac{28}{\delta}\sbrac{\log_+\brac{\frac{\max_t\norm{\wt}}{\omega}}+2}^2},
    \end{align*}
    where the last line observes that if $\max_t\norm{\wt}\le \omega/\sqrt{T}$ then the $\log_+$ in the last term simplifies to $\log_+\brac{\tfrac{1}{\sqrt{T}}}=0$.
    Now consider two cases: first, 
    if $\sumtT \norm{\wt}^2\le \omega^2$,
    we also have $\max_t\norm{\wt}\le \omega$
    and we can bound 
    \begin{align*}
   \encircle{V} 
    &\le
    2G\omega\sqrt{2\Log{\frac{4}{\delta}\sbrac{\log_+\brac{\sqrt{\frac{1}{d}}}+2}^2}}\\
    &\le
    2G\omega\sqrt{2\Log{\frac{16}{\delta}}}
    \end{align*}
    and otherwise, when $\sumtT \norm{\wt}^2\ge \omega^2$, we also have $\tfrac{\omega}{\sqrt{T}}\le \sqrt{\sumtT \norm{\wt}^2/T}\le \max_t\norm{\wt}$
    and 
    \begin{align*}
        \encircle{V}
        &\le
    2G\sqrt{2\sumtT\norm{\wt}^2\Log{\frac{4}{\delta}\sbrac{\log_+\brac{\sqrt{\frac{\sumtT\norm{\wt}^2}{d\omega^2}}}+2}^2}}\\
    \end{align*}
    hence, combining these two cases, with probability at least $1-\delta$
    we have
    \begin{align*}
        \encircle{C}
        &\le 
        \encircle{V} 
    +8\sqrt{d}G\Max{\omega, \max_t\norm{\wt}}
    \Log{\frac{28}{\delta}\sbrac{\log_+\brac{\frac{\max_t\norm{\wt}}{\omega}}+2}^2}\\
    &\le
    2G\omega \sqrt{2\Log{\tfrac{16}{\delta}}}+
    2G\sqrt{2\sumtT\norm{\wt}^2\Log{\frac{4}{\delta}\sbrac{\log_+\brac{\sqrt{\frac{\sumtT\norm{\wt}^2}{d\omega^2}}}+2}^2}}\\
    &\qquad
    +8\sqrt{d}G\Max{\omega, \max_t\norm{\wt}}
    \Log{\frac{28}{\delta}\sbrac{\log_+\brac{\frac{\max_t\norm{\wt}}{\omega}}+2}^2}\\
    &\le 
    2G\omega \sqrt{2\Log{\tfrac{16}{\delta}}}+\Sigma_T(\w_{1:T})
    \end{align*}

    Thus, combining the bounds for $\encircle{A}$, $\encircle{B}$, and 
    $\encircle{C}$, we have with probability at least $1-3\delta$
    \begin{align*}
    R_T(\cmp_{1:T})
    &\le 
    \tilde R_T^\cA(\cmp_{1:T}) 
    + 2\Sigma_T(\w_{1:T})+ \Sigma_T(\cmp_{1:T}) + 3dGP_T+2G\omega\sqrt{2\Log{\tfrac{16}{\delta}}}
    \end{align*}
    
\end{proof}

\subsection{High-probability Static Regret Guarantees}%
\label{app:high-prob-static}

\begin{algorithm}
  \caption{Sub-exponential Noisy Gradients with Optimistic Online Learning}
  \label{alg:sub-exponential-optimistic}
  \begin{algorithmic}
  \STATE\textbf{Require: } $\EE{\gt}=\grad\ell_{t}(\wt)$, $\norm{\gt}\le b$,
  $\EE{\norm{\gt}^{2}|\wt}\le \sigma^{2}$ almost surely. Algorithms $\cA_{1}$ and
  $\cA_{2}$
  with domains $\R^{d}$ and $\R_{\ge0}$. Time horizon $T$, $0<\delta\le 1$.
  \STATE\textbf{Initialize: }Constants
  $\Set{c_{1},c_{2},p_{1},p_{2},\alpha_{1},\alpha_{2}}$, $H=c_{1}p_{1}+c_{2}p_{2}$
  \For{$t=1:T$}{
    \STATE Receive $\xt'$ from $\cA_{1}$, $\yt'$ from $\cA_{2}$
    \STATE Rescale $\xt=\xt'/(b+H)$, $\yt=\yt'/(H(b+H))$
    \STATE Solve for $\wt$: $\wt = \xt - \yt \grad\varphi_{t}(\wt)$
    \STATE Play $\wt$ and suffer loss $\ell_{t}(\wt)$
    \STATE Receive loss $\gt$ with $\EE{\gt}\in\partial\ell_{t}(\wt)$
    \STATE Compute
    $\varphi_{t}(\w)=r_{t}(w;c_{1},\alpha_{1},p_{1})+r_{t}(\w;c_{2},\alpha_{2},p_{2})$ and
    $\grad\varphi_{t}(\wt)$
    \STATE Send $(\gt +\grad\varphi_{t}(\wt))/(b+H)$ to $\cA_{1}$
    \STATE Send $-\inner{\gt+\grad\varphi_{t}(\wt),\grad\varphi_{t}(\wt)}/H(b+H)$ to $\cA_{2}$
  }
  \end{algorithmic}
\end{algorithm}

In this section we briefly review the approach developed 
by \citet{zhang2022parameter} for developing high-probability guarantees
in unconstrained settings. 

In unconstrained settings, the 
usual Martingale concentration arguments alone are not enough to control the bias terms
$\sumtT \inner{\ell_t-\elltilde_t,\wt}$, 
since they will lead to terms on the order of $
\tilde \cO(\sqrt{\sumtT \norm{\wt}^2})$, which could be arbitrarily large in an unbounded domain.
The main idea is of \citet{zhang2022parameter}
is to
add an additional composite penalty $\varphi_t$ to the update, which introduces
an extra term $\sumtT \varphi_t(\cmp)-\varphi_t(\wt)$ into the regret bound;
with this, the goal is to 
choose $\varphi_t$ in such a 
way that $-\sumtT \varphi_t(\wt)$ is large enough to cancel with the $\norm{\wt}$-dependent terms left over from the Martingale concentration argument, while also ensuring that 
$\sumtT \varphi_t(\cmp)$ is not too large. 
\citet{zhang2022parameter} provide a Huber-like penalty which satisfies both of these conditions (see \Cref{lemma:dynamic-huber}).  

The difficulty with the above approach is that by introducing the composite penalty, 
we change the OLO learner's feedback: on round $t$, the learner's feedback becomes $\tgt=\gt+\grad\varphi_t(\wt)$
instead of just $\gt\in\partial\ell_t(\wt)$, 
and the $\grad\varphi_t(\wt)$ dependence would itself
lead to $\norm{\wt}$-dependent penalties 
in the bound. This issue can be fixed
using an optimistic update by setting hints $h_t=\grad\varphi_t(\wt)$, so that
the usual $\sumtT \norm{\tgt}^2$ penalties in the final bound become 
$\sumtT \norm{\tgt-h_t}^2=\sumtT \norm{\gt}^2$. Note that setting $h_t$ this way requires solving an 
implicit equation for $\wt=\xt-\yt\grad\varphi_t(\wt)$.
The full pseudocode is provided in \Cref{alg:sub-exponential-optimistic}, 
and it makes the following guarantee.~\footnote{The theorem statement in
  \citet{zhang2022parameter} is given in terms of $\tilde \cO(\Log{1/\delta})$,
  which hides a $T$ dependency inside the logarithm, as can be observed from the
  parameter settings for $c_{1}$ and $c_{2}$ in
  \Cref{thm:optimistic-heavy-tailed}. We highlight this $T$ dependence
  in our re-statement to facilitate a more direct comparison to the
  high-probability bounds of \citet{zimmert2022return}.
}
\begin{restatable}{theorem}{OptimisticHeavyTailed}\label{thm:optimistic-heavy-tailed}
  \citep[Theorem 3]{zhang2022parameter} Suppose $\Set{\gt}$ are
  stochastic subgradients such that $\EE{\gt}\in\partial\ell_{t}(\wt)$,
  $\norm{\gt}\le b$, and $\EE{\norm{\gt}^{2}|\wt}\le \sigma$ almost surely for
  all $t$. Set the constants for $\varphi_{t}(\w)$ as
  \begin{align*}
    c_{1}&=2\sigma\sqrt{\Log{\frac{32}{\delta}\sbrac{\Log{2^{T+1}}+2}^{2}}},~c_{2}=32b\Log{\frac{224}{\delta}\sbrac{\Log{1+\frac{b}{\sigma}2^{T+2}}+2}^{2}}\\
    p_{1}&=2,~p_{2}=\Log{T},~\alpha_{1}=\epsilon/c_{1},~\alpha_{2}=\epsilon\sigma/(4b(b+H)),
  \end{align*}
  where $H=c_{1}p_{1}+c_{2}p_{2}$, and $\norm{\grad\varphi_{t}(\wt)}\le H$.
  Then with probability at least $1-\delta$, \Cref{alg:sub-exponential-optimistic}
  guarantees
  \begin{align*}
    R_{T}(\cmp)= \tilde \cO\sbrac{\epsilon \Log{T/\delta}+b\norm{\cmp}\Log{T/\delta}+\norm{\cmp}\sigma\sqrt{T\Log{T/\delta}}}
  \end{align*}
\end{restatable}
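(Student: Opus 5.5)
This is a restatement of \citet[Theorem 3]{zhang2022parameter}, so the plan is to reconstruct their argument rather than derive anything new.

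\textbf{Step 1: regret decomposition.} The composite-plus-optimism structure of \Cref{alg:sub-exponential-optimistic} lets us split the regret against the true expected gradients as
\begin{align*}
R_T(\cmp)\le \sumtT \inner{\tgt,\wt-\cmp}+\sumtT\big(\varphi_t(\cmp)-\varphi_t(\wt)\big)+\sumtT\inner{\EE{\gt}-\gt,\wt-\cmp},
\end{align*}
with $\tgt=\gt+\grad\varphi_t(\wt)$. This uses convexity of $\ell_t$ and $\varphi_t$.

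\textbf{Step 2: optimistic regret bound.} I bound the first sum by viewing $(\xt,\yt)\mapsto \xt-\yt h_t$ with hint $h_t=\grad\varphi_t(\wt)$. The implicit equation $\wt=\xt-\yt\grad\varphi_t(\wt)$ has a solution: it is a fixed point of a monotone/proximal-type map, since $\varphi_t$ is convex. With this structure,
\begin{align*}
\inner{\tgt,\wt-\cmp}=\inner{\tgt,\xt-\cmp}-\yt\inner{\tgt,h_t}.
\end{align*}
The second piece is exactly the 1D loss sent to $\cA_2$. Comparing $\cA_2$ against a suitable $y^\star\ge 0$ (chosen in terms of $\norm{\cmp}$) and applying the parameter-free guarantees of $\cA_1$ and $\cA_2$ on their rescaled, $1$-bounded losses leaves terms of order $\norm{\cmp}\sqrt{\sum\norm{\gt}^2\log}+\epsilon$. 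The $\norm{h_t}$ contributions cancel between the two learners, as in \citet{cutkosky2019combining}'s optimistic reduction.

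\textbf{Step 3: concentration of $\sum\norm{\gt}^2$.} I control $\sum_t\norm{\gt}^2$ by $\cO(\sigma^2T+b^2\log(1/\delta))$ via Freedman's inequality.

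\textbf{Step 4: the noise term (main obstacle).} The main difficulty is the term $\sumtT\inner{\EE{\gt}-\gt,\wt}$. A uniform-in-scale Freedman bound (the same tool as \Cref{thm:wt-concentration-scalar}) gives
\begin{align*}
c_1\sqrt{\sum\norm{\wt}^2}+c_2\max_t\norm{\wt}
\end{align*}
with high probability. I then use the properties of the Huber-like penalties $r_t(\cdot;c_i,\alpha_i,p_i)$ from \Cref{lemma:dynamic-huber}: $\sum_t r_t(\wt;c_1,\alpha_1,2)\ge c_1\sqrt{\sum\norm{\wt}^2}-c_1\alpha_1$, and the $p_2=\log T$ norm version dominates $c_2\max_t\norm{\wt}$ up to constants. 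Both inequalities should follow from the subgradient construction of $r_t$, which telescopes $\big(\sum_{s\le t}\norm{w_s}^p\big)^{1/p}$. Hence $-\sum\varphi_t(\wt)$ absorbs this noise, up to the additive $\epsilon$ terms fixed by the choices of $\alpha_1,\alpha_2$.

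\textbf{Step 5: comparator terms and conclusion.} The comparator side $\sum\inner{\EE{\gt}-\gt,-\cmp}$ is a fixed-direction martingale, so Freedman bounds it by $\norm{\cmp}(\sigma\sqrt{T\log}+b\log)$. Also $\sum_t\varphi_t(\cmp)\le c_1\norm{\cmp}\sqrt T+c_2\norm{\cmp}T^{1/\log T}=\tilde\cO\big(\norm{\cmp}(\sigma\sqrt{T\log}+b\log)\big)$. A union bound over the concentration events and substitution of $c_1,c_2$ then yield the stated rate.
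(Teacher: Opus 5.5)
Your architecture is the one the paper describes for this imported result: a Huber-type composite penalty, and the optimistic hint $h_t=\grad\varphi_t(w_t)$, so that $-2\inner{\tilde g_t,h_t}=\norm{g_t}^2-\norm{\tilde g_t}^2-\norm{h_t}^2$ removes the penalty gradients from the variance term. It also uses a Freedman bound on $\sum_t\norm{g_t}^2$, and cancels the iterate-dependent noise against $-\sum_t\varphi_t(w_t)$. The gap is in Step 4.

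Applying \Cref{thm:wt-concentration-scalar} to $X_t=\inner{\E[g_t]-g_t,w_t}$, with $\sigma_t\propto\sigma\norm{w_t}$ and $b_t=2b\norm{w_t}$, does not give $c_1\sqrt{\sum_t\norm{w_t}^2}+c_2\max_t\norm{w_t}$ with fixed $c_1,c_2$. The two terms come multiplied by factors like $\log\big(\tfrac{1}{\delta}[\log_+(\sigma\sqrt{\sum_t\norm{w_t}^2}/\nu)+2]^2\big)$ and $\log\big(\tfrac{1}{\delta}[\log_+(b\max_t\norm{w_t}/\nu)+2]^2\big)$. These are random and a priori unbounded. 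But $c_1,c_2$ must be set before the game, because they define $\varphi_t$, its Lipschitz constant $H$, and the rescalings by $b+H$.

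The missing ingredient is a deterministic bound on the iterates. Since $\grad\varphi_t(w_t)$ is a nonnegative multiple of $w_t$ and $y_t\ge 0$, $w_t$ is a radial shrinkage of $x_t$. Hence $\norm{w_t}\le\norm{x_t}=\norm{x_t'}/(b+H)\le \epsilon 2^{t-1}/(b+H)$, by \Cref{lemma:exponential-growth} applied to $\cA_1$, whose rescaled losses have norm at most $1$. With $\nu$ of order $\sigma\epsilon/(b+H)$, the inner logarithms become $\cO(T)$ and $\cO(T+\log(b/\sigma))$. This is exactly where $\log(2^{T+1})$ in $c_1$ and $\log(1+\tfrac{b}{\sigma}2^{T+2})$ in $c_2$ come from. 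It is also why the final rate carries $\log(T/\delta)$ rather than $\log(1/\delta)$. The paper makes the same move explicitly in its dynamic analogue (step $(a)$ of the chain ending in \Cref{eq:high-prob-dynamic:Sigma-bound}).

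Two smaller corrections. First, in Step 2 the existence of $w_t$ is not a proximal-map argument, because $\varphi_t$ is itself built from $w_t$ (through $\norm{w_t}$ and $\sum_{s\le t}\norm{w_s}^p$). Instead, write $\grad\varphi_t(w_t)=\kappa_t(\norm{w_t})\,w_t/\norm{w_t}$, where $\kappa_t\ge 0$ is continuous, nondecreasing, and satisfies $\kappa_t(0)=0$. Then $w_t=\lambda\,x_t/\norm{x_t}$, with $\lambda$ the unique root in $[0,\norm{x_t}]$ of $\lambda+y_t\kappa_t(\lambda)=\norm{x_t}$. The same observation gives the bound $\norm{w_t}\le\norm{x_t}$ used above.

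Second, in Step 5 the inequality $\sum_t\varphi_t(u)\le c_1\norm{u}\sqrt{T}+c_2\norm{u}T^{1/\log T}$ is false as stated. \Cref{lemma:dynamic-huber} (cf.\ \Cref{lemma:comparator-varphi}) carries an extra factor $p$ and a factor $\log(1+T\norm{u}^p/\alpha^p)^{(p-1)/p}$. The latter cannot be dropped: iterates growing geometrically up to $\norm{u}$ make $\sum_t r_t(u;c_1,\alpha_1,2)$ of order $c_1\norm{u}\sqrt{T\log(\norm{u}/\alpha_1)}$. These factors are only logarithmic in $\norm{u}c_1\sqrt{T}/\epsilon$ and $\norm{u}/\alpha_2$, so your $\tilde\cO$ conclusion survives once you use the corrected bounds.
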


Note that via \Cref{prop:coordinate-sampling}, we have that 
$\norm{\elltilde_t}^2\le 4d^2G^2$, and $\EE{\norm{\elltilde_t}^2|\wt}\le 2dG^2$. Hence, to achieve 
static regret guarantees we can immediately  apply the algorithm of
\cite{zhang2022parameter} to get the following guarantee. 
The only modification we need to make is 
that we should multiply the $c_1$ and $c_2$ in \Cref{thm:optimistic-heavy-tailed} by 2, to account for the fact that we have an extra $\norm{\wt}$-dependent concentration penalty coming from our perturbation $\inner{\ell_t,H_t^\half s_t}$.
\HighProbStatic*

\subsection{High-Probability Dynamic Regret Guarantees}%
\label{app:high-prob-dynamic}

\begin{algorithm}[h]
  \caption{Dynamic Algorithm for Heavy-tailed Noise}
  \label{alg:sub-exponential-dynamic}
  \begin{algorithmic}
  \STATE\textbf{Require: } $\norm{\elltilde_t}\le L$ for all $t$,
  $\E[\norm{\elltilde_t}^{2}|\cF_\tmm]\le \sigma^2$ almost surely 
  \STATE\textbf{Input: }$0<\delta\le 1/3$, constants $\Set{c_{1},c_{2},p_{1},p_{2},\alpha_{1},\alpha_{2}}$
  \STATE\textbf{Initialize: } 
  $H=c_{1}p_{1}+c_{2}p_{2}$, 
  step-sizes $\cS=\Set{\eta_i\le \frac{1}{L+H}, i=0,1,\ldots}$, algorithms
  $\cA_x^\eta$ and $\cA_y^\eta$ 
  implementing \Cref{alg:dynamic-base-olo} on $\R^d$ and $\R_{\ge 0}$ respectively for each $\eta\in\cS$
  \For{$t=1:T$}{
    \For{$\eta\in\cS$}{
        \STATE Get $\xt\in\R^d$ from $\cA_x^\eta$ and $\yt\in R_{\ge 0}$ from $\cA_y^\eta$ 
        \STATE Solve for $\wt^\eta = \xt - \yt \eta \grad\varphi_{t}^\eta(\wt^\eta)$ 
        \STATE \qquad where $\varphi_t^\eta(w) = r_{t}^\eta(w;c_{1},\alpha_{1},p_{1})+r_{t}^\eta(\w;c_{2},\alpha_{2},p_{2})$ for $r_t^\eta(\cdot;c,\alpha,p)$ in \Cref{eq:huber}, defined w.r.t. $(w_\tau^\eta)_{\tau\in[t]}$
    }
    \STATE Play $\wt=\sum_{\eta\in\cS}\wt^\eta$, observe $\elltilde_t\in\partial \ft(\wt)$
    \STATE Send $\elltilde_t +\grad\varphi^\eta_{t}(\wt^\eta)$ to $\cA_{x}^\eta$ for each $\eta\in\cS$
    \STATE Send $-\eta\inner{\elltilde_t+\grad\varphi_{t}^\eta(\wt^\eta),\grad\varphi^\eta_{t}(\wt^\eta)}$ to $\cA_{y}^\eta$ for each $\eta\in\cS$
  }
  \end{algorithmic}
\end{algorithm}

In this section we provide an algorithm, detailed in \Cref{alg:sub-exponential-dynamic}, which achieves the bound stated in \Cref{thm:high-prob-dynamic}.
The algorithm can be understood as a special case of a more general algorithm characterized in \Cref{thm:optimistic-dynamic-tuned}, which we develop separately in \Cref{app:optimistic-composite}. The approach is similar to the one described in the previous section: the algorithm adds additional 
composite penalties $\varphi_t$ to the update, which lead to additional factors of $\sumtT \varphi_t(\cmp_t)-\varphi_t(\wt)$ in the regret, 
which are used to cancel out the $\norm{\wt}$-dependent penalties from \Cref{prop:high-prob-redux}. In particular, 
\Cref{alg:sub-exponential-dynamic} is constructed by combining several instances of a base algorithm (\Cref{alg:optimistic-composite}), each applied with different learning rates $\eta$ and with 
the Huber-like penalties discussed in \Cref{app:huber}, which ensure a negative penalty of $\sumtT -\varphi_t(\wt)
=\tilde\cO\Big(-\big(\sumtT \norm{w_t}^p\big)^{1/p}\Big))$ appears in the bound at the expense of a problem-dependent penalty of 
$\sumtT \varphi_t(\cmp_t)=\tilde\cO\Big(\big(\sumtT \norm{\cmp_t}^p\big)^{1/p}\Big)$. 
Applying \Alg with \Cref{alg:sub-exponential-dynamic} as the base algorithm 
then leads to the following high-probability regret guarantee in the uBLO setting.

\begin{manualtheorem}{\ref{thm:high-prob-dynamic}}
Let $\cA$ be an instance of \Cref{alg:sub-exponential-dynamic} applied with 
$L=2dG$, $\sigma^2=4dG^2$, and hyperparameters
\begin{align*}
c_1&=6G\sqrt{d\abs{\cS}\Log{\frac{4}{\delta}\sbrac{T + \log_+\brac{\tfrac{4\epsilon\sqrt{\abs{\cS}}}{\omega}}}^2}}\\
    c_2&=48 dG\Log{\frac{28}{\delta}\sbrac{T+\log_+\brac{\tfrac{2\epsilon\sqrt{\abs{\cS}}}{\omega}}}^2}\\
     \alpha_1&=\epsilon,\qquad \alpha_2=\omega,\qquad p_1=2,\qquad p_2=\Log{T+1}.
\end{align*}
Let $\cmp_{1:T}=(\cmp_1,\ldots,\cmp_T)$ be an arbitrary $(\cF_t)_{t=0}^T$-adapted sequence in $\R^d$.
Then with probability at least $1-4\delta$,
\Alg applied with $\cA$ guarantees
   \begin{align*}
        R_T(\cmp_{1:T})
        &\le
       \min\Bigg\{8d\sqrt{(\Phi_T+P_T^\Phi)\sumtT \norm{\ell_t}^2\norm{\cmp_t}},~
       8G\sqrt{2M(\Phi_T+P_T^\Phi)}\sbrac{\sqrt{dT}+d\sqrt{\Log{1/\delta}}}\Bigg\}\\
    &\qquad
        +2G\sqrt{d\sumtT \norm{\cmp_t}^2\Log{\frac{4}{\delta}\sbrac{\log_{+}\brac{\frac{\sqrt{\sumtT \norm{\cmp_t}^2}}{\omega}}+2}^2}}\nonumber\\
    &\qquad
    +4c_1 \sqrt{\brac{\epsilon^2+\sumtT \norm{\cmp_t}^2}\Log{e+\frac{e\sumtT \norm{\cmp_t}^2}{\epsilon^2}}}\\
   &\qquad
    +3 c_2 \log^2(T+1)\Max{\omega, M}\sbrac{\log_+\brac{ \tfrac{3M}{\omega}}+3}\\
        &\qquad
            +24dG \Max{\omega,\max_{t\le T}\norm{\cmp_t}} \Log{\frac{28}{\delta}\sbrac{\log_{+}\brac{\frac{\max_{t\le T}\norm{\cmp_t}}{\omega}}+2}^2}\\
       &\qquad
       + 32(dG+H)(\epsilon\abs{\cS} + M+\Phi_T+P_T^\Phi)\\
        &\qquad
        +c_1\epsilon+c_2\omega
        + 3dGP_T+2G\omega\sqrt{2\Log{\tfrac{16}{\delta}}}\\
   \end{align*} 
   where $M=\max_t\norm{\cmp_t}$, $\Phi_T=\norm{\cmp_T}\Log{\tfrac{\norm{\cmp_T}T}{\epsilon}+1}$ and $P_T^\Phi=\sum_{t=2}^T\norm{\cmp_t-\cmp_\tmm}\Log{\tfrac{4\norm{\cmp_t-\cmp_\tmm}T^3}{\epsilon}+1}$.
\end{manualtheorem}
\begin{proof}
By \Cref{prop:high-prob-redux}, we have that 
with probability at least $1-3\delta$, 
    \begin{align}
    R_T(\cmp_{1:T})
    &\le
    \tilde R_T^\cA(\cmp_{1:T}) 
    + 2\Sigma_T(\w_{1:T})+ \Sigma_T(\cmp_{1:T}) + 3dGP_T+2G\omega\sqrt{2\Log{\tfrac{16}{\delta}}}\label{eq:sub-exponential-dynamic-1}
    \end{align}
    where $\tilde R_T^\cA(\cmp_{1:T})=\sumtT \inner{\elltilde_t,\wt-\cmp_t}$ and 
    \begin{align}
    \Sigma_T(\x_{1:T})
    &:=
    3G\sqrt{
    d\sum_{t=1}^T\norm{\xt}^2
    \log\!\left(
    \frac{4}{\delta}
    \sbrac{\log_+\brac{
    \frac{\sqrt{\sum_{t=1}^T \norm{\xt}^2}}{\omega}}+2
    }^2
    \right)}\nonumber\\
    &\qquad+
    24dG\,\max\!\left(\omega,\max_{t\le T} \norm{\xt}\right)
    \Log{
    \frac{28}{\delta}
    \sbrac{\log_{+}\brac{
    \frac{\max_t\norm{\xt}}{\omega}}+2}^2}~.\label{eq:high-prob-dynamic:Sigma}
    \end{align}
    To bound the regret $\tilde R_T^\cA(\cmp_{1:T})$, we note
    that \Cref{alg:sub-exponential-dynamic} is an instance of the algorithm characterized in \Cref{thm:optimistic-dynamic-tuned} applied with a specific composite penalties $\varphi_t$ and with Lipschitz constant $L=2dG$.
   In particular, for each $\eta\in\cS$, let $\cA_\eta$ denote an instance
   of \Cref{alg:optimistic-composite} applied with composite penalty 
   $\varphi_t^\eta(w)=r_t^\eta(w;c_1,\alpha_1,p_1)+r_t^\eta(w;c_2,\alpha_2,p_2)$ (for parameters 
    $\Set{c_1,c_2,\alpha_1,\alpha_2,p_1,p_2}$ to be determined) where 
    $r_t^\eta$ is defined by
    \begin{align*}
    r_{t}^\eta(\w;c,\alpha,p)
    &=
    \begin{cases}
        c\brac{p\norm{\w}-(p-1)\norm{\wt^\eta}}\frac{\norm{\wt^\eta}^{p-1}}{\brac{\alpha^{p}+\sum_{s=1}^{t}\norm{\ws^\eta}^{p}}^{1-1/p}}
            &\text{if }\norm{\w}>\norm{\wt^\eta}\\
        c\frac{\norm{\w}^{p}}{\brac{\alpha^{p}+\sum_{s=1}^{t}\norm{\wt^\eta}^{p}}^{1-1/p}}
            &\text{if }\norm{\w}\le \norm{\wt^\eta}
    \end{cases},
    \end{align*}
    and $\wt^\eta$ is the output of $\cA_\eta$ on round $t$. \Cref{alg:sub-exponential-dynamic}
    is constructed by combining the outputs of the $\cA_\eta$ by playing $\wt=\sum_{\eta\in\cS}\wt^\eta$
    on round $t$, and its regret guarantee is given by \Cref{thm:optimistic-dynamic-tuned}.
    Now
    observe that via \Cref{cor:simple-coordinate-sampling}, we have that $\norm{\elltilde_t}^2\le 4d^2\norm{\ell_t}^2$ almost-surely,
    so we can apply \Cref{thm:optimistic-dynamic-tuned} 
    with Lipschitz constant $L=2dG\ge \norm{\elltilde_t}$ to
    bound
    \begin{align*}
        \tilde R_T^\cA(\cmp_{1:T})
        &\le 
        16(2dG+H)\brac{\epsilon\abs{\cS}+M+\Phi(\norm{\cmp_T},\tfrac{T}{\epsilon})+P_T^\Phi\brac{\tfrac{4T^3}{\epsilon}}}+4\sqrt{\sbrac{\Phi\brac{\norm{\cmp_T},\tfrac{T}{\epsilon}}+P_T^\Phi\brac{\tfrac{4T^3}{\epsilon}}}\sumtT \norm{\gt}^2\norm{\cmp_t}}\\
    &\qquad
     +\max_{\eta^*\in\cS}\sum_{\eta\in\cS}\sumtT\varphi_t^\eta\Big(\cmp_t\mathbb{I}(\eta=\eta^*)\Big)-\varphi_t^{\eta}(\wt^\eta),
     \end{align*}
    where $M=\max_t\norm{\cmp_t}$, $\Phi(x,\lambda)=x\Log{\lambda x+1}$, and $P_T^\Phi(\lambda)=\sum_{t=2}^T\Phi(\norm{\cmp_t-\cmp_\tmm},\lambda)$.
     The terms in the last line can be bound by observing that for any $\eta\in\cS$ we have $\varphi_t^\eta(0)=0$ and that
     via \Cref{lemma:comparator-varphi} we have
     \begin{align*}
         \sumtT \varphi_t^\eta(\cmp_t)
         &\le 
    4c_1 \sqrt{\brac{\alpha_1^2+\sumtT \norm{\cmp_t}^2}\Log{e+\frac{e\sumtT \norm{\cmp_t}^2}{\alpha_1^2}}}\\
    &\qquad
    +3 c_2 \log^2(T+1)\Max{\alpha_2, M}\sbrac{\log_+\brac{3 M/\alpha_2}+3}.
     \end{align*}
     Plugging these in above yields
     \begin{align*}
     \tilde R_T^\cA(\cmp_{1:T})
     &\le
        4\sqrt{\sbrac{\Phi\brac{\norm{\cmp_T},\tfrac{T}{\epsilon}}+P_T^\Phi\brac{\tfrac{4T^3}{\epsilon}}}\sumtT \norm{\gt}^2\norm{\cmp_t}}\\
   &\qquad
    +4c_1 \sqrt{\brac{\alpha_1^2+\sumtT \norm{\cmp_t}^2}\Log{e+\frac{e\sumtT \norm{\cmp_t}^2}{\alpha_1^2}}}\\
    &\qquad
        +16(2dG+H)\brac{\epsilon\abs{\cS} + M+\Phi(\norm{\cmp_T},\tfrac{T}{\epsilon})+P_T^\Phi\brac{\tfrac{4T^3}{\epsilon}}}\\
        &\qquad
    +3 c_2 \log^2(T+1)\Max{\alpha_2, M}\sbrac{\log_+\brac{3 M/\alpha_2}+3}\\
    &\qquad
     -\sum_{\eta\in\cS}\sumtT\varphi_t(\wt^\eta),
    \end{align*}
    and plugging this back into the full regret bound in \Cref{eq:sub-exponential-dynamic-1} we have
    \begin{align*}
        R_T(\cmp_{1:T})
        &\le
        16(2dG+H)\brac{\epsilon \abs{\cS}+ M+\Phi_T+ P_T^\Phi}\\
        &\qquad
        + 4\sqrt{\sbrac{\Phi_T+P_T^\Phi}\sumtT \norm{\elltilde_t}^2\norm{\cmp_t}}\\ 
   &\qquad
    +4c_1 \sqrt{\brac{\alpha_1^2+\sumtT \norm{\cmp_t}^2}\Log{e+\frac{e\sumtT \norm{\cmp_t}^2}{\alpha_1^2}}}\\
    &\qquad+
    3 c_2 \log^2(T+1)\Max{\alpha_2, M}\sbrac{\log_+\brac{3 M/\alpha_2}+3}\\
    &\qquad
        + \Sigma_T(\cmp_{1:T}) + 3dGP_T+2\omega G\sqrt{2\Log{\tfrac{16}{\delta}}}\\
    &\qquad
     + 2\Sigma_T(\w_{1:T})-\sum_{\eta\in\cS}\sumtT\varphi_t^\eta(\wt^\eta).
    \end{align*}
    Hence, The main terms to control are the $\norm{\wt}$-dependent terms in the last line, $2\Sigma_{T}(\w_{1:T})-\sum_{\eta\in\cS}\sumtT \varphi_t^\eta(\wt^\eta)$, and 
    $4\sqrt{(\Phi_T+P_T^\Phi)\sumtT \norm{\elltilde_t}^2\norm{\cmp_t}}$ from the regret of $\cA$.
    We begin by bounding the latter term, and in particular we will bound it in two different ways.
    First, by 
    \Cref{cor:simple-coordinate-sampling}, we have that
    $\norm{\elltilde_t}^2\le 4d^2\norm{\ell_t}^2$ 
    almost-surely,
    so
    \begin{align*}
        4\sqrt{(\Phi_T+P_T^\Phi)\sumtT \norm{\elltilde_t}^2\norm{\cmp_t}} 
        &\le 
        8d\sqrt{(\Phi_T+P_T^\Phi)\sumtT \norm{\ell_t}^2\norm{\cmp_t}}. 
    \end{align*}
    Alternatively, we can first bound 
    $\sumtT \norm{\elltilde_t}^2\norm{\cmp_t}\le M\sumtT \norm{\elltilde_t}^2$ and then apply a concentration
    inequality to bound the summation with high-probability. In particular,
    by
    \citet[Lemma 24]{zhang2022parameter}, we have
    that with probability at least $1-\delta$
    \begin{align*}
        4\sqrt{(\Phi_T+P_T^\Phi)\sumtT \norm{\elltilde_t}^2\norm{\cmp_t}} 
        &\le
        4\sqrt{M(\Phi_T+P_T^\Phi)\sumtT \norm{\elltilde_t}^2}\\ 
        &\le
        4\sqrt{M(\Phi_T+P_T^\Phi)\brac{\frac{3}{2} (2dG^2)T + \frac{5}{3}(4d^2G^2)\Log{1/\delta}}}\\
        &\le
        4G\sqrt{M(\Phi_t+P_T^\Phi)\brac{3dT + \frac{20}{3}d^2\Log{1/\delta}}}\\
        &\le
        4G\sqrt{dM(\Phi_T+P_T^\Phi)\brac{3T+ 8d\Log{1/\delta}}}\\
        &\le
        8G\sqrt{2dM(\Phi_T+P_T^\Phi)\sbrac{T+ d\Log{1/\delta}}}.
    \end{align*}
    Hence, combining these two bounds, we have that with probability $1-\delta$, 
    \begin{align}
    4\sqrt{(\Phi_T+P_T^\Phi)\sumtT \norm{\elltilde_t}^2\norm{\cmp_t}}
    &\le
       \min\Bigg\{8d\sqrt{(\Phi_T+P_T^\Phi)\sumtT \norm{\ell_t}^2\norm{\cmp_t}},\nonumber\\
       &\qquad\qquad
       8G\sqrt{2dM(\Phi_T+P_T^\Phi)\sbrac{T+ d\Log{1/\delta}}}\Bigg\}\nonumber\\
       &\le 
       \min\Bigg\{8d\sqrt{(\Phi_T+P_T^\Phi)\sumtT \norm{\ell_t}^2\norm{\cmp_t}},\nonumber\\
       &\qquad\qquad
       8G\sqrt{2M(\Phi_T+P_T^\Phi)}\sbrac{\sqrt{dT}+d\sqrt{\Log{1/\delta}}}\Bigg\}\label{eq:ellt-sqr-bound}
    \end{align}

    Next, we simplify the 
    term $2\Sigma(\w_{1:T})$. 
    First, notice that by Cauchy-Schwarz inequality, it holds that 
    $\sum_{\eta\in\cS}\norm{\wt^\eta}\le \sqrt{\abs{\cS}\sum_{\eta\in\cS}\norm{\wt^\eta}^2}$, and so 
    \begin{align*}
    \Bigg\|\sum_{\eta\in\cS}\wt^\eta\Bigg\|^2
    &\le 
    \brac{\sum_{\eta\in\cS}\norm{\wt^\eta}}^2 \le \abs{\cS}\sum_{\eta\in\cS}\norm{\wt^\eta}^2,
    \end{align*}
    so using this and the fact that $\sqrt{a+b}\le \sqrt{a}+\sqrt{b}$, we can bound the term $2\Sigma_T(\w_{1:T})$ as
    
    \begin{align}
    2\Sigma_T(\w_{1:T})
    &\le
    \sum_{\eta\in\cS}6G\sqrt{
    d\abs{\cS}\sum_{t=1}^T\norm{\wt^\eta}^2
    \log\!\left(
    \frac{4}{\delta}
    \sbrac{\log_+\brac{
    \frac{\sqrt{\abs{\cS}\max_{\eta\in\cS}\sum_{t=1}^T \norm{\wt^\eta}^2}}{\omega}}+2
    }^2
    \right)}\nonumber\\
    &\qquad+
    \sum_{\eta\in\cS}48dG\,\max\!\left(\omega,\max_{t\le T} \norm{\wt^\eta}\right)
    \Log{
    \frac{28}{\delta}
    \sbrac{\log_{+}\brac{
    \frac{\max_{\eta\in\cS,t\le T}\sqrt{\abs{\cS}}\norm{\wt^\eta}}{\omega}}+2}^2}\nonumber\\
    &\overset{(a)}{\le}
    \sum_{\eta\in\cS}6G\sqrt{
    d\abs{\cS}\sum_{t=1}^T\norm{\wt^\eta}^2
    \log\!\left(
    \frac{4}{\delta}
    \sbrac{\log_+\brac{
    \frac{\epsilon\sqrt{\abs{\cS}\sum_{t=1}^T 2^{2(t-1)}}}{\omega}}+2
    }^2
    \right)}\nonumber\\
    &\qquad+
    \sum_{\eta\in\cS}48dG\,\max\!\left(\omega,\max_{t\le T} \norm{\wt^\eta}\right)
    \Log{
    \frac{28}{\delta}
    \sbrac{\log_{+}\brac{
    \frac{\epsilon\sqrt{\abs{\cS}} 2^{T-1}}{\omega}}+2}^2}\nonumber\\
    &\overset{(b)}{\le}
    \sum_{\eta\in\cS}6G\sqrt{
    d\abs{\cS}\sum_{t=1}^T\norm{\wt^\eta}^2
    \log\!\left(
    \frac{4}{\delta}
    \sbrac{\log_+\brac{
    \frac{\epsilon\sqrt{\abs{\cS}4^T}}{\omega}}+2
    }^2
    \right)}\nonumber\\
    &\qquad+
    \sum_{\eta\in\cS}48dG\,\max\!\left(\omega,\max_{t\le T} \norm{\wt^\eta}\right)
    \Log{
    \frac{28}{\delta}
    \sbrac{\log_{+}\brac{
    \frac{\epsilon\sqrt{\abs{\cS}} 2^{T-1}}{\omega}}+2}^2}\nonumber\\
    &\overset{(c)}{\le}
    \sum_{\eta\in\cS}6G\sqrt{
    d\abs{\cS}\sum_{t=1}^T\norm{\wt^\eta}^2
    \log\!\left(
    \frac{4}{\delta}
    \sbrac{T + \log_+\brac{
    \frac{4\epsilon\sqrt{\abs{\cS}}}{\omega}}
    }^2
    \right)}\nonumber\\
    &\qquad+
    \sum_{\eta\in\cS}48dG\,\max\!\left(\omega,\max_{t\le T} \norm{\wt^\eta}\right)
    \Log{
    \frac{28}{\delta}
    \sbrac{T+\log_+\brac{\tfrac{2\sqrt{\abs{\cS}}\epsilon}{\omega}}}^2}\eqcolon \sum_{\eta\in\cS}2\Sigma_T^\eta(\w_{1:T}^\eta)\label{eq:high-prob-dynamic:Sigma-bound}
    \end{align}
    where $(a)$ uses the fact that the base-algorithms $\cA_\eta$ satisfy a comparator-adaptive guarantee, and hence have $\norm{\wt^\eta}\le \epsilon 2^{t-1}$ for any $\wt^\eta$ via \Cref{lemma:exponential-growth}, and $(b)$ uses $\sum_{t=1}^T a^{t-1} = (a^T-1)/(a-1)\le a^T$ for $a\ge 2$ and $(c)$ uses $\log_+\brac{c 2^{T-2}}+2\le (T-2)\Log{e}+\log_+(c)+2=T+\log_+(c)$ for $c>0$.
    Therefore, we set each $\varphi_t^\eta(w)$ using two components, one to cancel each of these summations. 
    First, observe that with 
    $r_t^\eta(w;c_1,\alpha_1,2)$ 
    in \Cref{lemma:dynamic-huber} (defined w.r.t. $\wt^\eta$), we have
    \begin{align*}
        \sumtT r_t^\eta(\wt^\eta;c_1,\alpha_1,2)
        \ge c_1\sqrt{\alpha_1^2 + \sumtT\norm{\wt^\eta}^2}-c_1\alpha_1,
    \end{align*}
    and so if we set 
    $c_1=6G\sqrt{d\abs{\cS}\Log{\frac{4}{\delta}\sbrac{T + \log_+\brac{\frac{4\epsilon\sqrt{\abs{\cS}}}{\omega}}}^2}}$ we will cancel the first part
    of $2\Sigma_T^\eta(\w_{1:T}^\eta)$ for each $\eta$. 
    
    To cancel the second term of $2\Sigma_T^\eta(\w_{1:T}^\eta)$ in \Cref{eq:high-prob-dynamic:Sigma-bound}, 
    suppose we add a term 
    $r_t^\eta(\w;c_2,\alpha_2,p_2)$ with 
    $p_2=\Log{T+1}$. Then again using \Cref{lemma:dynamic-huber}, we have
    \begin{align*}
        \sumtT r_t^\eta(\wt^\eta; c_2,\alpha_2,p_2)
        &\ge
        c_2\brac{\alpha_2^{p_2}+\sumtT \norm{\wt^\eta}^{p_2}}^\frac{1}{p}-c_2\alpha_2\ge c_2\Max{\alpha_2,\max_t\norm{\wt^\eta}} -\alpha_2c_2
    \end{align*}    
    Therefore, setting 
    $c_2=48 dG\log\Big(\frac{28}{\delta}\Big[T+\log_+\brac{\tfrac{2\epsilon\sqrt{\abs{\cS}}}{\omega}}\Big]^2\Big)$ and $\alpha_2=\omega$, we cancel the remaining part of $2\Sigma_T^\eta(\w_{1:T})$ for each $\eta$.
    Finally, plugging this all back into 
    the regret bound and bounding $2\Sigma_T(\cmp_{1:T})$ as in \Cref{eq:high-prob-dynamic:Sigma-bound}, and choosing $\alpha_1=\epsilon$, 
    we have that with probability 
    at least $1-4\delta$
    \begin{align*}
        R_T(\cmp_{1:T})
        &\le
       \min\Bigg\{8d\sqrt{(\Phi_T+P_T^\Phi)\sumtT \norm{\ell_t}^2\norm{\cmp_t}},~
       8G\sqrt{2M(\Phi_T+P_T^\Phi)}\sbrac{\sqrt{dT}+d\sqrt{\Log{1/\delta}}}\Bigg\}\\
    &\qquad
        +2G\sqrt{d\sumtT \norm{\cmp_t}^2\Log{\frac{4}{\delta}\sbrac{\log_{+}\brac{\frac{\sqrt{\sumtT \norm{\cmp_t}^2}}{\omega}}+2}^2}}\nonumber\\
    &\qquad
    +4c_1 \sqrt{\brac{\epsilon^2+\sumtT \norm{\cmp_t}^2}\Log{e+\frac{e\sumtT \norm{\cmp_t}^2}{\epsilon^2}}}\\
   &\qquad
    +3 c_2 \log^2(T+1)\Max{\omega, M}\sbrac{\log_+\brac{ \tfrac{3M}{\omega}}+3}\\
        &\qquad
            +24dG \Max{\omega,\max_{t\le T}\norm{\cmp_t}} \Log{\frac{28}{\delta}\sbrac{\log_{+}\brac{\frac{\max_{t\le T}\norm{\cmp_t}}{\omega}}+2}^2}\\
       &\qquad
       + 32(dG+H)(\epsilon\abs{\cS} + M+\Phi_T+P_T^\Phi)\\
        &\qquad
        +c_1\epsilon+c_2\omega
        + 3dGP_T+2G\omega\sqrt{2\Log{\tfrac{16}{\delta}}}\\
    &\le 
        \tilde O\Bigg(
        \Min{d\sqrt{(\Phi_T+P_T^\Phi)\sumtT \norm{\ell_t}^2\norm{\cmp_t}},~ G\sqrt{M(\Phi_T+P_T^\Phi)}\sbrac{\sqrt{dT}+d\sqrt{\Log{1/\delta}}}}\\
        &\qquad
       +G\sqrt{d\sumtT \norm{\cmp_t}^2\Log{T/\delta}} +dG(\epsilon + M+\Phi_T+P_T^\Phi)\Log{T/\delta}
         \Bigg)
    \end{align*}
    where we've used that $H=c_1p_1+c_2p_2 = \tilde \cO(dG)$ after hiding poly-logarithmic factors.\qedhere
    
\end{proof}

\subsubsection{Extension to Arbitrary Comparator Sequences}%
\label{app:high-prob-general}

Our high-probability guarantees in the main text are framed in terms of $(\cF_t)_{t=0}^T$-adapted 
comparator sequences $(\cmp_t)_{t\in[T]}$. However, the same results can be obtained up to 
constant factors
for \emph{arbitrary} comparator sequences as well by using the same ghost-iterate trick used in \Cref{prop:expected-reduction}, 
though the statement becomes a bit more complicated to state because the ``ghost-iterates'' enter 
into the proposition statement. 
The following lemma provides a generalization of \Cref{prop:high-prob-redux} to arbitrary comparator sequences.
\begin{restatable}{proposition}{GeneralHighProbRedux}\label{prop:general-high-prob-redux}
  Let $\cA$ be an OLO learner and let $\wt\in\R^d$ denote its output on round $t$.
   Let $\hat\cA$ be a virtual instance of $\cA$, appearing only in the analysis, and let $\what_t\in\R^d$ denote its output 
   on round $t$.
    Let $\delta\in(0,1/3]$, $\omega>0$, and $\eps^2 = \omega^2/T$, and 
    let $\cmp_{1:T}=(\cmp_t)_{t=1}^T$ be an arbitrary sequence in $\R^d$.
    Then \Alg with $H_t = \frac{1}{d[\norm{\wt}^2\maxOp \eps^{2}]}I_d$ guarantees that with
    probability at least $1-3\delta$,
    \begin{align*}
    R_T(\cmp_{1:T})
    &\le 
    \underbrace{\sumtT \inner{\elltilde_t,\wt-\cmp_t}}_{\tilde R_T^\cA(\cmp_{1:T})}
    + 2\Sigma_T(\w_{1:T})+ \underbrace{\sumtT \inner{\ell_t-\elltilde_t, \what_t-\cmp_t}}_{\hat R_T^{\hat\cA}(\cmp_{1:T})}+\Sigma_T(\what_{1:T}) + 2G\omega\sqrt{2\Log{\tfrac{16}{\delta}}}
    \end{align*}
    where $\Sigma_T:(\R^d)^T\to \R_{\ge0}$ is defined as in \Cref{prop:high-prob-redux}. 
\end{restatable}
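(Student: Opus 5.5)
I will reuse the decomposition from the proof of \Cref{prop:high-prob-redux}, but treat the comparator term with the ghost-iterate trick of \Cref{prop:expected-reduction} instead of shifting indices. Starting from $\wtilde_t=\wt+H_t^{-\half}s_t$, write
\begin{align*}
R_T(\cmp_{1:T})
&=\sumtT\inner{\elltilde_t,\wt-\cmp_t}+\sumtT\inner{\ell_t-\elltilde_t,\wt}+\sumtT\inner{\ell_t-\elltilde_t,-\cmp_t}+\sumtT\inner{\ell_t,H_t^{-\half}s_t}.
\end{align*}
The first sum is exactly $\tilde R_T^\cA(\cmp_{1:T})$. The second (term $\encircle{A}$) and fourth (term $\encircle{C}$) sums do not involve $\cmp_{1:T}$. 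I will therefore import the bounds from the proof of \Cref{prop:high-prob-redux} verbatim. Each holds with probability $1-\delta$, giving $\Sigma_T(\w_{1:T})$ and $\Sigma_T(\w_{1:T})+2G\omega\sqrt{2\Log{16/\delta}}$ respectively. Together they account for the $2\Sigma_T(\w_{1:T})$ and the $\omega$ term.

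For the comparator sum, which is now arbitrary and possibly non-adapted, I add and subtract the ghost iterate $\what_t$. Here $\what_t$ is the output of the virtual instance $\hat\cA$ run on the losses $w\mapsto\inner{\ell_t-\elltilde_t,w}$:
\[
\sumtT\inner{\ell_t-\elltilde_t,-\cmp_t}=\sumtT\inner{\ell_t-\elltilde_t,\what_t-\cmp_t}-\sumtT\inner{\ell_t-\elltilde_t,\what_t}.
\]
The first piece is by definition $\hat R_T^{\hat\cA}(\cmp_{1:T})$ and is simply left in the bound. The second piece is a martingale difference sum. The key point is that $\what_t$ depends only on $\ell_{1:t-1}$ and $\elltilde_{1:t-1}$, so it is $\cF_\tmm$-measurable. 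Then $X_t=\inner{\elltilde_t-\ell_t,\what_t}$ has $\EE{X_t|\cF_\tmm}=0$ by \Cref{prop:coordinate-sampling}. \Cref{cor:simple-coordinate-sampling} gives $\EE{X_t^2|\cF_\tmm}\le 2dG^2\norm{\what_t}^2$ and $\abs{X_t}\le 3dG\norm{\what_t}$.

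Applying \Cref{thm:wt-concentration-scalar} with $\sigma_t^2=2dG^2\norm{\what_t}^2$, $b_t=3dG\norm{\what_t}$ and $\nu=3dG\omega$ gives, with probability $1-\delta$, the bound $\Sigma_T(\what_{1:T})$. The computation is line-for-line identical to term $\encircle{A}$, with $\wt$ replaced by $\what_t$, and the sign is handled the same way since the inequality is two-sided. A union bound over the three events then yields probability $1-3\delta$.

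The only step needing care is checking that $\what_t$ is predictable. This requires $\hat\cA$ to be deterministic given its past inputs, or to use internal randomness independent of $s_t$ included in $\cF_\tmm$. Once that holds, the concentration step is routine. In this version the $3dGP_T$ term disappears, because no index shift is needed.
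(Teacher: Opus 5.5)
Your proposal is correct and follows the paper's proof essentially step for step. It uses the same decomposition, imports the bounds on $\encircle{A}$ and $\encircle{C}$ from \Cref{prop:high-prob-redux}, splits the comparator term using the $\cF_{t-1}$-measurable ghost iterate $\what_t$ and bounds the new martingale sum exactly as $\encircle{A}$ (with $\what_t$ in place of $w_t$), and finishes with a union bound over three events.

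Your sign bookkeeping, with ghost-regret term $\sum_t\inner{\ell_t-\elltilde_t,\what_t-\cmp_t}$ plus martingale term $\sum_t\inner{\elltilde_t-\ell_t,\what_t}$, is the one consistent with the statement; the paper's displayed decomposition negates both pieces. Either way this is harmless: \Cref{thm:wt-concentration-scalar} is stated one-sidedly but applies verbatim to $-X_t$, which is the precise form of your ``two-sided'' remark.
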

\begin{proof}
    Following a similar argument to \Cref{prop:high-prob-redux}, we can decompose
    \begin{align}
        R_T(\cmp_{1:T})
        &=
        \sumtT \inner{\ell_t, \wtilde_t-\cmp_t}\nonumber\\
        &=
        \sumtT \inner{\ell_t, \wt-\cmp_t} + \sumtT \inner{\ell_t, H_t^\half s_t}\nonumber\\
        &=
        \underbrace{\sumtT \inner{\elltilde_t, \wt-\cmp_t}}_{\tilde R_T^\cA(\cmp_{1:T})}
        +\sumtT \inner{\ell_t-\elltilde_t,\wt}
        +\sumtT\inner{\elltilde_t-\ell_t,\cmp_t}
        + \sumtT \inner{\ell_t, H_t^{-\half} s_t}\nonumber\\
        &=
        \tilde R_T^\cA(\cmp_{1:T})
        +\underbrace{\sumtT \inner{\ell_t-\elltilde_t,\wt}}_{\encircle{A}}
        +\underbrace{\sumtT \inner{\ell_t-\elltilde_t,\what_t}}_{\encircle{B}}
        +\underbrace{\sumtT\inner{\elltilde_t-\ell_t,\what_t-\cmp_t}}_{\hat R_T^{\hat \cA}(\cmp_{1:T})}
        +\underbrace{\sumtT \inner{\ell_t, H_t^{-\half} s_t}}_{\encircle{C}}
        \label{eq:arbitrary-high-prob-redux:initial}.
    \end{align}
    We proceed by bounding each of the 
    noise terms $\encircle{A}$, $\encircle{B}$, and $\encircle{C}$ with high probability. 
    The terms $\encircle{A}$ and $\encircle{C}$ can be bound exactly the same as in 
    \Cref{prop:high-prob-redux}. The new term, $\encircle{B}$, can be bound using the 
    exact same argument as $\encircle{A}$ but replacing $\wt$ with $\what_{t}$.
    Overall, we get that with probability at least $1-3\delta$
    \begin{align*}
        \encircle{A}
        &\le 
            \Sigma(\w_{1:T}),\quad\encircle{B}\le \Sigma(\what_{1:T}),\quad \encircle{C}\le 2G\omega\sqrt{2\log(\tfrac{16}{\delta})} + \Sigma_{T}(\w_{1:T}),
    \end{align*}
    for an overall bound of
    \begin{align*}
        R_T(\cmp_{1:T})\le R_T^\cA(\cmp_{1:T})+2\Sigma_T(\w_{1:T}) + \hat R_T^{\hat\cA}(\cmp_{1:T}) + \Sigma_T(\what_{1:T}) + 2G\omega\sqrt{2\log(\tfrac{16}{\delta})} 
    \end{align*}
    with probability at least $1-3\delta$.
\end{proof}
Now with the above reduction in hand, proof of the main result for arbitrary comparator sequences 
follows using a similar strategy to 
\Cref{thm:high-prob-dynamic}: we choose $\cA$ which provides an additional composite penalty 
$\sumtT \varphi_t(\cmp_t)-\varphi_t(\wt)$, such that $\sumtT \varphi_t(\wt) \ge \Sigma_T(\w_{1:T})$ while 
also ensuring that $\sumtT \varphi_t(\cmp_t)$ is controlled. 
Importantly, the virtual version of $\cA$ will likewise also generate such a term capable of 
cancelling $\Sigma_T(\what_{1:T})$. The main difference is that we should increase the 
Lipschitz constant to $L=3dG$, so that we have both $\norm{\elltilde_t}\le 2dG\le L$ and 
$\norm{\ell_t-\elltilde_t}\le \norm{\ell_t}+\norm{\elltilde_t}\le G+2dG\le 3dG\le L$ (via \Cref{cor:simple-coordinate-sampling}), in which 
case we can apply the same arguments as in \Cref{thm:high-prob-dynamic} to bound each of the terms. 
The same arguments then lead to a bound that matches \Cref{thm:high-prob-dynamic}
up to constant factors. An analogous argument can be made for our high-probability static regret bound to generalize the result to 
an arbitrary fixed comparator norm $\norm{\cmp}$ but using the algorithm of \citet{zhang2022parameter} as the base algorithm $\cA$.

\section{Results for Online Convex Optimization}%
\label{sec:oco}

In this section we present auxiliary results for OCO which drive our dynamic regret guarantees
for uBLO in \Cref{sec:expected-dynamic} and \Cref{sec:high-prob}. We present these results separately
here since these results are of independent interest in OCO.

In particular, 
in \Cref{app:optimistic-composite} we show how a combination of composite regularization
and implicit optimistic updates can introduce additional terms into the regret 
$\sumtT \varphi_t(\cmp_t)-\varphi_t(\wt)$ which can be used as a means to 
control the stability of the iterates $\norm{\wt}$. This is crucial in 
our high-probability results in \Cref{sec:high-prob}, where 
controlling the bias of our loss estimates leads to 
penalties scaling with $\tilde\cO(\sqrt{\sum_t \norm{\wt}^2})$.
Our approach is inspired by the strategy used by \citet{zhang2022parameter}, 
though our results require additional care to extend the approach to dynamic regret while also preserving the 
adaptivity to the individual comparator norms, which 
the optimistic reductions of \citet{zhang2022parameter,cutkosky2019combining}
cannot account for.
In \Cref{app:dynamic-base-olo} we provide a refinement of the dynamic base algorithm 
of \citet{jacobsen2022parameter} which simplifies the analysis and exposes 
a more fine-grained measure of comparator variability than 
the $P_T\Log{MT/\epsilon+1}$ presented in the original work, while also avoiding the 
worst-case factors of $M=\max_t\norm{\cmp_t}$ from appearing in the main terms of the bound.

Throughout this 
section, we focus on a general OCO setting in which the losses $(\ft)_{t\in[T]}$ are
arbitrary $L$-Lipschitz convex functions. We denote the regret as 
\begin{align*}
    R_T(\cmp_{1:T})=\sumtT \ft(\wt)-\ft(\cmp_t)
\end{align*}
and, when relevant, we denote the regret of a given algorithm $\cA$ by $R_T^\cA(\cmp_{1:T})$.

\subsection{Online Learning with Optimistic Composite-penalty Cancellation}%
\label{app:optimistic-composite}

\begin{algorithm}[h]
  \caption{Online Learning with Optimistic Composite-Penalty Cancellation}
  \label{alg:optimistic-composite}
  \begin{algorithmic}
  \STATE \textbf{Require: } $\ft:\R^d\to \R$ is convex and $L$-Lipschitz and $\varphi_t:\R^d\to\R$ is convex and $H$-Lipschitz for all $t$
  \STATE \textbf{Input: }step-size $0\le \eta \le \tfrac{1}{L+H}$, algorithms $\cA_x$ and $\cA_y$ defined on $\R^d$ and $\R_{\ge0}$ respectively
  \For{$t=1:T$}{
    \STATE Get $\xt\in\R^d$ from $\cA_x$ and $\yt\in R_{\ge 0}$ from $\cA_y$ 
    \STATE Solve for $\wt = \xt - \yt \eta \grad\varphi_{t}(\wt)$ 
    \STATE Play $\wt$ and receive $\gt\in\partial\ft(\wt)$
    \STATE Send $\gt +\grad\varphi_{t}(\wt)$ to $\cA_{x}$ as the $t^\text{th}$ subgradient
    \STATE Send $-\eta\inner{\gt+\grad\varphi_{t}(\wt),\grad\varphi_{t}(\wt)}$ to $\cA_{y}$ as the $t^\text{th}$ subgradient
  }
  \end{algorithmic}
\end{algorithm}

In this section we develop an algorithm for dynamic regret which introduces additional penalties to the regret of 
$\sumtT \varphi_t(\cmp_t)-\varphi_t(\wt)$, which will let us cancel out the $\norm{\wt}$-dependent terms 
from \Cref{prop:high-prob-redux}. 
The key difficulty is that this changes the OLO learner's feedback to be 
$\gt + \grad\varphi_t(\wt)$ which is problematic in our application of interest because 
the gradients of the composite penalty in \Cref{eq:huber} would again depend on $\norm{\wt}$.
Fortunately, we can remove this factor from the feedback by using 
an optimistic update by setting the optimistic hint to be $h_t=\grad\varphi_t(\wt)$,
so that $\gt+\grad\varphi_t(\wt)-h_t=\gt$. Note that the optimistic reduction in \Cref{alg:optimistic-composite} incorporates the hints $h_t=\eta\grad\varphi_t(\wt)$ by playing
$\wt = \xt-\yt\eta h_t$, so when $h_t=\grad\varphi_t(\wt)$ 
this generally requires solving a fixed-point equation. We assume that $\varphi_t$ is chosen in such a way that 
the solution to this fixed-point equation exists, and we note that this is indeed the case when $\varphi_t$ is set 
according to the Huber-like penalty in \Cref{app:huber}, which is used in our high-probability results for uBLO (see \citet[Lemma 6]{zhang2022parameter}).

The following theorem shows that we can 
extend the per-comparator  dynamic regret guarantee of \citet{jacobsen2022parameter} 
to include additional terms 
$\sumtT \varphi_t(\cmp_t)-\varphi_t(\wt)$
in the bound
without otherwise changing the original regret guarantee significantly. We will instantiate the result more concretely in \Cref{thm:optimistic-dynamic-base}.
\begin{restatable}{theorem}{DynamicOptimism}\label{thm:dynamic-optimism}
  Let $\cA_x$ and $\cA_y$ be online learning algorithms defined on convex domains 
  $\ww_x=\R^d$ and $\ww_y=\R_{\ge0}$ respectively and let $\wt^{\cA_x}\in\ww_x$ and  $\wt^{\cA_y}\in\ww_y$ denote their respective outputs
  on round $t$. Suppose that for each $z\in\Set{x,y}$,
  $\cA_z$ guarantees that for any sequence of $L$-Lipschitz convex loss functions $\tilde{f}_1,\ldots,\tilde{f}_T$ on $\ww_z$
  and any sequence $\cmp_{1:T}$ in $\ww_z$ that 
  \begin{align*}
  R_T^{\cA_z}(\cmp_{1:T})
    &\le
      A_{T}^{\cA_z}(\cmp_{1:T})+\frac{P_T^{\cA_z}(\cmp_{1:T})}{2\eta}+\frac{\eta}{2}\sumtT \norm{\tgt}^2\norm{\cmp_t},
  \end{align*}
  where $\tgt\in\partial\tilde{f}_t(\wt^{\cA_z})$, for some non-negative functions $A_T^{\cA_z}:(\ww_z)^T\to\R_{\ge0}$ and $P_T^{\cA_z}:(\ww_z)^T\to\R_{\ge0}$.
  Then for any sequence of convex $L$-Lipschitz losses $f_1,\ldots,f_T$ on $\R^d$ and any comparator sequence $\cmp_{1:T}=(\cmp_1,\ldots,\cmp_T)$ in $\R^d$, \Cref{alg:optimistic-composite} guarantees
  \begin{align*}
    R_T(\cmp_{1:T})
    &\le A_T^{\cA_x}(\cmp_{1:T}) +A_T^{\cA_y}(\norm{\cmp}_{1:T})+ \frac{P_T^{\cA_x}(\cmp_{1:T})+P_T^{\cA_y}(\norm{\cmp}_{1:T})}{2\eta} +\frac{\eta}{2}\sumtT \norm{\gt}^2\norm{\cmp_t} \\
    &\qquad
    +\sumtT \varphi_t(\cmp_t) - \varphi_t(\wt)
  \end{align*}
  where $\gt\in\partial\ft(\wt)$.
\end{restatable}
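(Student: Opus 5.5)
We will follow the usual analysis of implicit optimistic reductions (as in \citet{zhang2022parameter,cutkosky2019combining}), keeping track of how the per-comparator variance terms behave. Write $\tilde g_t=\gt+\grad\varphi_t(\wt)$ and $h_t=\grad\varphi_t(\wt)$, so that $\wt=\xt-\eta\yt h_t$. The first step is to bring in the composite penalties using convexity of $\ft$ and $\varphi_t$:
\begin{align*}
\ft(\wt)-\ft(\cmp_t)\le \inner{\tilde g_t,\wt-\cmp_t}+\varphi_t(\cmp_t)-\varphi_t(\wt).
\end{align*}
It is therefore enough to bound $\sumtT\inner{\tilde g_t,\wt-\cmp_t}$. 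Substituting $\wt=\xt-\eta\yt h_t$ gives
\begin{align*}
\inner{\tilde g_t,\wt-\cmp_t}=\inner{\tilde g_t,\xt-\cmp_t}-\eta\yt\inner{\tilde g_t,h_t}.
\end{align*}

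The next step adds and subtracts a comparator term for $\cA_y$, namely $\norm{\cmp_t}$:
\begin{align*}
\inner{\tilde g_t,\wt-\cmp_t}=\inner{\tilde g_t,\xt-\cmp_t}+\big(\yt-\norm{\cmp_t}\big)\big(-\eta\inner{\tilde g_t,h_t}\big)-\eta\norm{\cmp_t}\inner{\tilde g_t,h_t}.
\end{align*}
Summing over $t$, the first two pieces are exactly the regret of $\cA_x$ on losses $\tilde g_t$ against $\cmp_{1:T}$ and the regret of $\cA_y$ on scalar losses $-\eta\inner{\tilde g_t,h_t}$ against $\norm{\cmp}_{1:T}$. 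Both sequences are Lipschitz with constant at most $(L+H)(1+\eta H)$, which is of the right order because $\eta\le 1/(L+H)$. If the hypothesis is stated for exactly $L$-Lipschitz losses, we will read $L$ as this enlarged constant.

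Applying the assumed guarantees produces the $A$ and $P/(2\eta)$ terms, together with the variance terms
\begin{align*}
\frac{\eta}{2}\sumtT\norm{\tilde g_t}^2\norm{\cmp_t}+\frac{\eta}{2}\sumtT\eta^2\inner{\tilde g_t,h_t}^2\norm{\cmp_t}-\eta\sumtT\norm{\cmp_t}\inner{\tilde g_t,h_t}.
\end{align*}

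The main obstacle is showing that these combine to at most $\frac{\eta}{2}\sumtT\norm{\gt}^2\norm{\cmp_t}$ plus lower-order terms, working per round with weight $\norm{\cmp_t}\ge0$. The plan is to expand $\norm{\tilde g_t}^2=\norm{\gt}^2+2\inner{\gt,h_t}+\norm{h_t}^2$ and $\inner{\tilde g_t,h_t}=\inner{\gt,h_t}+\norm{h_t}^2$. The cross term $\eta\inner{\gt,h_t}$ cancels exactly, leaving
\begin{align*}
\tfrac{\eta}{2}\norm{\gt}^2-\tfrac{\eta}{2}\norm{h_t}^2+\tfrac{\eta^3}{2}\inner{\tilde g_t,h_t}^2.
\end{align*}
Using $\eta\norm{\tilde g_t}\le \eta(L+H)\le1$, we get $\eta^2\inner{\tilde g_t,h_t}^2\le \norm{h_t}^2$, so the last two terms sum to something nonpositive. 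This leaves $\frac{\eta}{2}\norm{\gt}^2\norm{\cmp_t}$ per round.

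We expect the delicate points to be the constants in the $\cA_y$ variance term and the Lipschitz bookkeeping. If the constants turn out not to match, the fallback is to rescale the $\cA_y$ losses by a factor $1/2$ (equivalently, adjust $\eta$ there), which preserves the bound up to constants. Finally, collecting the $\varphi_t$ terms from the first step gives the stated inequality.
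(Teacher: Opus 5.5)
Your proposal is correct and follows essentially the same route as the paper. Like the paper, you use convexity of $f_t$ and $\varphi_t$ to bring in the composite terms, split $\sum_t \inner{\tilde g_t, w_t-u_t}$ into the regret of $\cA_x$ and the regret of $\cA_y$ against $\norm{u_t}$, and cancel the variance terms via $\eta\norm{\tilde g_t}\le 1$. Your explicit expansion is the same polarization identity the paper invokes, namely $\norm{\tilde g_t}^2-2\inner{\tilde g_t,h_t}=\norm{g_t}^2-\norm{h_t}^2$.

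The constants already match exactly, so the $1/2$-rescaling fallback is unnecessary. Your remark that $\cA_x$ actually receives $(L+H)$-bounded gradients, so the hypothesis must be read with $L+H$ in place of $L$, is a point the paper's proof also relies on implicitly when it uses $\eta\le 1/(L+H)\le 1/\norm{\tilde g_t}$.
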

\begin{proof}
  We have by convexity of $\ft$ that
  \begin{align}
  R_T(\cmp_{1:T})
  &\le
    \sumtT \inner{\gt,\wt-\cmp_{t}}\\
    &=
      \sumtT \inner{\gt,\wt-\cmp_{t}}\pm\sbrac{\varphi_{t}(\wt)-\varphi_{t}(\cmp_{t})}\nonumber\\
    &\le
      \underbrace{\sumtT \inner{\gt+\grad\varphi_{t}(\wt),\wt-\cmp_{t}}}_{=:\tilde R_{T}(\cmp_{1:T})} + \sumtT \varphi_{t}(\cmp_{t})-\varphi_{t}(\wt)\label{eq:dynamic-optimism:initial}
  \end{align}
  where $\gt\in\partial\ft(\wt)$ and the last line uses convexity of $\varphi_{t}$.
  Focusing on the first term, denote $\tgt = \gt+\grad\varphi_{t}(\wt)$
  and observe that for
  $\wt=\xt -\eta\yt\grad\varphi_{t}(\wt)$ and any arbitrary sequence $\cmpy_{1:T}$ in $\R_{\ge0}$ we have
  \begin{align*}
    \tilde R_{T}(\cmp_{1:T})
    &=
      \sumtT \inner{\tgt,\wt-\cmp_{t}}
      =
      \sumtT \inner{\tgt, \xt-\cmp_{t}}+\sumtT \inner{-\eta\gt, \grad\varphi_{t}(\wt)}\yt\\
    &=
      R_{T}^{\cA_{x}}(\cmp_{1:T})+\sumtT \brac{\inner{-\eta\tgt, \grad\varphi_{t}(\wt)}\yt-\inner{-\eta\tgt,\grad\varphi_{t}(\wt)}\cmpy_{t}}-\sumtT \inner{\eta\tgt,\grad\varphi_{t}}\cmpy_{t}\\
    &=
      R_{T}^{\cA_{x}}(\cmp_{1:T})+R_T^{\cA_y}(\cmpy_{1:T})-\eta\sumtT \inner{\tgt,\grad\varphi_{t}(\wt)}\cmpy_{t}\\
    &\overset{(a)}{=}
      R_{T}^{\cA_{x}}(\cmp_{1:T})+R_T^{\cA_y}(\cmpy_{1:T})+\frac{\eta}{2} \sumtT\sbrac{\norm{\tgt-\grad\varphi_t(\wt)}^2-\norm{\tgt}^2-\norm{\grad\varphi_t(\wt)}^2}\cmpy_t\\
    &\overset{(b)}{=}
      R_{T}^{\cA_{x}}(\cmp_{1:T})+R_T^{\cA_y}(\cmpy_{1:T})+\frac{\eta}{2} \sumtT\sbrac{\norm{\gt}^2-\norm{\tgt}^2-\norm{\grad\varphi_t(\wt)}^2}\abs{\cmpy_t}
  \end{align*}
  where $(a)$ uses the 
  elementary identity $-2\inner{x,y}=\norm{x-y}^2-\norm{x}^2-\norm{y}^2$
  and $(b)$ recalls that $\tgt=\gt+\grad\varphi_t(\wt)$ so that $\tgt -\grad\varphi_t(\wt) = \gt$, and writes $\cmpy_t=\abs{\cmpy_t}$ for $\cmpy\ge 0$. Now from the regret guarantee of $\cA_x$ applied to the linear losses $\w\mapsto\inner{\tgt,\w}$, we have
  \begin{align*}
  R_T^{\cA_x}(\cmp_{1:T})\le A_T^{\cA_x}(\cmp_{1:T}) + \frac{P_T^{\cA_x}(\cmp_{1:T})}{2\eta} +\frac{\eta}{2}\sumtT \norm{\tgt}^2\norm{\cmp_t},
  \end{align*}
  for some $A_T^{\cA_x}(\cmp_{1:T}), P_T^{\cA_x}(\cmp_{1:T})\ge0$ and $0<\eta\le \frac{1}{L+H}\le \frac{1}{\norm{\tgt}}$.
  Then choosing $\cmpy_t = \norm{\cmp_t}$ for all $t$, we have
  \begin{align*}
  \tilde R_T(\cmp_{1:T})
  &\le 
   A_T^{\cA_x}(\cmp_{1:T}) + \frac{P_T^{\cA_x}(\cmp_{1:T})}{2\eta} +\frac{\eta}{2}\sumtT \norm{\tgt}^2\norm{\cmp_t}
      +R_T^{\cA_y}(\cmpy_{1:T})\\
      &\qquad
      +\frac{\eta}{2} \sumtT\sbrac{\norm{\gt}^2-\norm{\tgt}^2-\norm{\grad\varphi_t(\wt)}^2}\norm{\cmp_t}\\
  &\le 
   A_T^{\cA_x}(\cmp_{1:T}) + \frac{P_T^{\cA_x}(\cmp_{1:T})}{2\eta} +\frac{\eta}{2}\sumtT \norm{\gt}^2\norm{\cmp_t}
      +R_T^{\cA_y}(\cmpy_{1:T})
      -\frac{\eta}{2} \sumtT\norm{\grad\varphi_t(\wt)}^2\norm{\cmp_t}.
  \end{align*}
  Likewise, from the regret guarantee of $\cA_y$ applied to the linear losses $y\mapsto \inner{-\eta\tgt,\grad\varphi_t(\wt)}y$, we get
  \begin{align*}
      R_T^{\cA_y}(\cmpy_{1:T}) 
      &\le 
      A_T^{\cA_y}(\cmp_{1:T}) + \frac{P_T^{\cA_y}(\cmpy_{1:T})}{2\eta} + \frac{\eta}{2}\sumtT \inner{\eta\tgt,\grad\varphi_t(\wt)}^2 \abs{\cmpy_t}\\
      &\le
      A_T^{\cA_y}(\cmp_{1:T}) + \frac{P_T^{\cA_y}(\cmpy_{1:T})}{2\eta} + \frac{\eta}{2}\sumtT \norm{\grad\varphi_t(\wt)}^2 \norm{\cmp_t},
  \end{align*}
  where we've used $\eta\norm{\tgt}\le 1$,
  so plugging this back in above we have
  \begin{align*}
      \tilde R_T(\cmp_{1:T})
      &\le 
    A_T^{\cA_x}(\cmp_{1:T}) +A_T^{\cA_y}(\norm{\cmp}_{1:T})+ \frac{P_T^{\cA_x}(\cmp_{1:T})+P_T^{\cA_y}(\norm{\cmp}_{1:T})}{2\eta} +\frac{\eta}{2}\sumtT \norm{\gt}^2\norm{\cmp_t}\qedhere
  \end{align*}
\end{proof}

For concreteness, we instantiate
this result with the algorithm characterized in \Cref{thm:dynamic-base-simple} for both 
$\cA_x$ on $\R^d$ and $\cA_y$ on  $\R_{\ge0}$
to immediately get the following result.
\begin{restatable}{theorem}{DynamicBase}\label{thm:optimistic-dynamic-base}
    Under the same assumptions as \Cref{thm:dynamic-optimism}, let both $\cA_x$ and $\cA_y$ be instances of \Cref{alg:dynamic-base-olo} with $\eta\le 1/L$ and $k\ge 4$, applied on $\R^d$ and $\R_{\ge0}$ respectively.
    Then for any 
    sequence of $L$-Lipschitz convex functions $f_1,\ldots,f_T$ and any sequence $\cmp_{1:T}=(\cmp_1,\ldots,\cmp_T)$ in $\R^d$, \Cref{alg:optimistic-composite} guarantees
    \begin{align*}
        R_T(\cmp_{1:T}) &\le
    2(L+H)(\epsilon+\max_t\norm{\cmp_t}) + \frac{16 \sbrac{\Phi\brac{\norm{\cmp_T},\tfrac{T}{\epsilon}}+P_T^\Phi\brac{\tfrac{4T^3}{\epsilon}}}}{2\eta}+\frac{\eta}{2}\sumtT \norm{\gt}^2\norm{\cmp_t}\\
    &\qquad
    +\sumtT \varphi_t(\cmp_t)-\varphi_t(\wt),
    \end{align*}
    where $\gt\in\partial\ft(\wt)$ and we define $\Phi(x,\lambda)=x\Log{\lambda x +1}$ and $P_T^\Phi(\lambda)=\sum_{t=2}^T\Phi(\norm{\cmp_t-\cmp_\tmm},\lambda)$.
\end{restatable}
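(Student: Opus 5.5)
The plan is to obtain the statement by plugging a concrete regret guarantee for \Cref{alg:dynamic-base-olo} into the abstract reduction of \Cref{thm:dynamic-optimism}. That theorem only needs each of $\cA_x$ and $\cA_y$ to satisfy a bound of the form $A_T^{\cA_z}+P_T^{\cA_z}/(2\eta)+\frac{\eta}{2}\sumtT\norm{\tgt}^2\norm{\cmp_t}$ against $L'$-Lipschitz losses. I will take this per-instance bound from \Cref{thm:dynamic-base-simple}, applied with $\eta\le 1/L'$ and $k\ge4$, in the form
\begin{align*}
R_T^{\cA_z}(\cmp_{1:T})\le L'(\epsilon+\max_t\norm{\cmp_t})+\frac{16[\Phi(\norm{\cmp_T},T/\epsilon)+P_T^\Phi(4T^3/\epsilon)]}{2\eta}+\frac{\eta}{2}\sumtT\norm{\tgt}^2\norm{\cmp_t}.
\end{align*}
Accordingly I identify $A_T^{\cA_z}=L'(\epsilon+\max_t\norm{\cmp_t})$ and $P_T^{\cA_z}=16[\Phi_T+P_T^\Phi]$.

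The first step is to check which Lipschitz constant governs the feedback each sub-learner sees. The $x$-learner receives $\tgt=\gt+\grad\varphi_t(\wt)$, whose norm is at most $L+H$. The $y$-learner receives the scalar $-\eta\inner{\tgt,\grad\varphi_t(\wt)}$, whose magnitude is at most $\eta(L+H)H\le H\le L+H$ because $\eta\le 1/(L+H)$. So both base instances may be run with Lipschitz constant $L'=L+H$, and the step-size condition $\eta\le 1/L'$ needed by \Cref{thm:dynamic-base-simple} is exactly the input requirement of \Cref{alg:optimistic-composite}.

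The second step is to evaluate the $y$-learner's bound at the comparator $\cmpy_t=\norm{\cmp_t}\in\R_{\ge0}$, which is the choice made inside the proof of \Cref{thm:dynamic-optimism}. Here $\abs{\cmpy_T}=\norm{\cmp_T}$. The reverse triangle inequality gives $\abs{\norm{\cmp_t}-\norm{\cmp_{t-1}}}\le\norm{\cmp_t-\cmp_{t-1}}$, and since $x\mapsto\Phi(x,\lambda)$ is nondecreasing, the scalar log-linear path length is at most $P_T^\Phi(4T^3/\epsilon)$. The $y$-learner's $A_T$ and $P_T$ terms are therefore bounded by the same quantities as the $x$-learner's.

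Adding the two instances gives $A$-terms of at most $2(L+H)(\epsilon+\max_t\norm{\cmp_t})$ and $P$-terms of at most $2\cdot 16[\Phi_T+P_T^\Phi]/(2\eta)$. The variance term $\frac{\eta}{2}\sumtT\norm{\gt}^2\norm{\cmp_t}$ and the composite term $\sumtT\varphi_t(\cmp_t)-\varphi_t(\wt)$ come directly from \Cref{thm:dynamic-optimism}.

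The main obstacle is the constant in front of the $P$-term. Summing the two instances naively gives $32$, while the statement has $16$. I expect to resolve this by using the exact constants in \Cref{thm:dynamic-base-simple}, which may carry $8$ rather than $16$ per instance, or by absorbing the factor into the stated constant. If neither works, the claimed constant should be doubled; the order of the bound is unaffected either way.
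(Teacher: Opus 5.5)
Your plan is the paper's argument: instantiate \Cref{thm:dynamic-optimism} with the tuned bound of \Cref{thm:dynamic-base-simple} for both sub-learners, and use the reverse triangle inequality (together with monotonicity of $\Phi$ in its first argument) to control the scalar learner's path-length at $\cmpy_t=\norm{\cmp_t}$. Your constant worry resolves the way you guessed: the tuned bound in \Cref{thm:dynamic-base-simple} carries $8[\Phi_T+P_T^\Phi]/(2\eta)$ per instance, so $P_T^{\cA_z}=8[\Phi_T+P_T^\Phi]$ rather than $16[\Phi_T+P_T^\Phi]$, the two instances sum to exactly the stated $16$, and running both with Lipschitz constant $L+H$ gives the $2(L+H)(\epsilon+\max_t\norm{\cmp_t})$ term.
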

\begin{proof}
    The algorithm characterized by \Cref{thm:dynamic-base-simple} satisfies the condition of \Cref{thm:dynamic-optimism} with $A_T^{\cA_x}(\cmp_{1:T}) \le  L(\epsilon+\max_t\norm{\cmp_t})$ and $P_T^{\cA_x}(\cmp_{1:T})=8\Phi\brac{\norm{\cmp_T},\tfrac{T}{\epsilon}}+8P_T^\Phi\brac{\tfrac{4T^3}{\epsilon}}$, where
    \begin{align*}
        \Phi(x,\lambda)=x\Log{\lambda x+1},\text{ and }P_T^\Phi(\lambda)=\sum_{t=2}^T\Phi(\norm{\cmp_t-\cmp_\tmm},\lambda)
    \end{align*} 
    Likewise, $A_T^{\cA_y}(\norm{\cmp}_{1:T})=A_T^{\cA_x}(\cmp_{1:T})$ and 
    using reverse triangle inequality we have for any $t$, $\abs{\norm{\cmp_t}-\norm{\cmp_\tmm}}\le \norm{\cmp_t-\cmp_\tmm}$,
    so
    $P_T^{\cA_y}(\norm{\cmp}_{1:T})\le P_T^{\cA_x}(\cmp_{1:T})$. Thus,
    applying \Cref{thm:optimistic-dynamic-base}, we have
    \begin{align*}
    R_T(\cmp_{1:T})
    &\le A_T^{\cA_x}(\cmp_{1:T}) +A_T^{\cA_y}(\norm{\cmp}_{1:T})+ \frac{P_T^{\cA_x}(\cmp_{1:T})+P_T^{\cA_y}(\norm{\cmp}_{1:T})}{2\eta} +\frac{\eta}{2}\sumtT \norm{\gt}^2\norm{\cmp_t} \\
    &\qquad
    +\sumtT \varphi_t(\cmp_t) - \varphi_t(\wt)\\
    &\le
    2(L+H)\epsilon + \frac{16\sbrac{\Phi\brac{\norm{\cmp_T},\tfrac{T}{\epsilon}}+P_T^\Phi\brac{\frac{4T^3}{\epsilon}}}}{2\eta}+\frac{\eta}{2}\sumtT \norm{\gt}^2\norm{\cmp_t}
    +\sumtT \varphi_t(\cmp_t)-\varphi_t(\wt).\qedhere
    \end{align*}
\end{proof}

Finally, we have the following simple hyperparameter tuning argument. The result simply shows that if we add the iterates of many instances of the above algorithm together with different step-sizes, we can get a regret guarantee which balances the trade-off in $\eta$. 
For ease of exposition, in what follows we denote this trade-off as 
\begin{align}
    \cT(\eta)\defeq  \frac{16\sbrac{\Phi\brac{\norm{\cmp_T},\tfrac{T}{\epsilon}}+P_T^\Phi\brac{\frac{4T^3}{\epsilon}}}}{2\eta}
    +\frac{\eta}{2}\sumtT \norm{\gt}^2\norm{\cmp_t}.\label{eq:dynamic-trade-off}
\end{align}
The terms are balanced, up to lower-order terms, by selecting $\eta^*=\argmin_{\eta\in\cS} \cT(\eta)$, as shown by the following 
theorem.

\begin{minipage}{\columnwidth}
\begin{restatable}{theorem}{TunedDynamicHP}\label{thm:optimistic-dynamic-tuned}
    Let $\cS=\Set{\eta_i: \eta_i=\frac{2^i}{(L+H)T}\minOp \frac{1}{L+H}, i=0,1,\ldots}$
    and for each $\eta\in \cS$, let $\cA_\eta$ denote 
    an instance of the algorithm characterized in \Cref{thm:optimistic-dynamic-base}, and let $\wt^{\eta}$ and $\varphi_t^\eta:\R^d\to\R$ denote its output and 
    $H$-Lipschitz composite penalty on round $t$ respectively.
    Suppose that on each round we play $\wt = \sum_{\eta\in\cS}\wt^\eta$. 
    Then for any sequence of $L$-Lipschitz convex functions $f_1,\ldots,f_T$ and any sequence $\cmp_{1:T}=(\cmp_1,\ldots,\cmp_T)$ in
    $\R^d$,
    \begin{align*}
        R_T(\cmp_{1:T})
        &\le 
        16(L+H)\brac{\epsilon\abs{\cS}+M+\Phi(\norm{\cmp_T},\tfrac{T}{\epsilon})+P_T^\Phi\brac{\tfrac{4T^3}{\epsilon}}}+4\sqrt{\sbrac{\Phi\brac{\norm{\cmp_T},\tfrac{T}{\epsilon}}+P_T^\Phi\brac{\tfrac{4T^3}{\epsilon}}}\sumtT \norm{\gt}^2\norm{\cmp_t}}\\
    &\qquad
     +\sum_{\eta\in\cS}\sumtT\varphi_t^\eta\Big(\cmp_t\mathbb{I}(\eta=\eta^*)\Big)-\varphi_t^{\eta}(\wt^\eta),
    \end{align*}
    where $\eta^*=\argmin_{\eta\in\cS}\cT(\eta)$ with $\cT$ defined in \Cref{eq:dynamic-trade-off}, $\gt\in\partial\ft(\wt)$, $M=\max_t\norm{\cmp_t}$, $\Phi(x,\lambda)=x\Log{\lambda x+1}$, and $P_T^\Phi(\lambda)=\sum_{t=2}^T\Phi(\norm{\cmp_t-\cmp_\tmm},\lambda)$.
\end{restatable}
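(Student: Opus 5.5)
The plan is the standard ``sum of iterates'' combination argument for comparator-adaptive learners. Since $\wt=\sum_{\eta\in\cS}\wt^\eta$ and $\ft$ is convex, we first linearize: $R_T(\cmp_{1:T})\le\sumtT\inner{\gt,\wt-\cmp_t}$ with $\gt\in\partial\ft(\wt)$. Next, fix $\eta^*=\argmin_{\eta\in\cS}\cT(\eta)$ and assign the comparator sequence $\cmp_t^\eta=\cmp_t\mathbb{I}(\eta=\eta^*)$ to each instance. Because $\sum_\eta\cmp_t^\eta=\cmp_t$, we get the exact decomposition
\begin{align*}
\sumtT\inner{\gt,\wt-\cmp_t}=\sum_{\eta\in\cS}\sumtT\inner{\gt,\wt^\eta-\cmp_t^\eta}.
\end{align*}
Each instance $\cA_\eta$ receives the same linear losses $w\mapsto\inner{\gt,w}$. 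These are $L$-Lipschitz, and $\eta\le 1/(L+H)$ holds by construction of $\cS$, so \Cref{thm:optimistic-dynamic-base} applies to each instance. The subgradient appearing in its bound is then exactly $\gt$, which is crucial: the variance term for the global iterate is expressed through the same $\gt$ for every instance.

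\textbf{Inactive instances.} For $\eta\ne\eta^*$ the comparator is identically zero. Then $\Phi(0,\cdot)=0$, $P_T^\Phi=0$, and the variance term vanishes. So each such instance contributes only $2(L+H)\epsilon+\sumtT\varphi_t^\eta(0)-\varphi_t^\eta(\wt^\eta)$. Summing these over $\cS$ gives the $(L+H)\epsilon\abs{\cS}$ term together with the corresponding composite penalties, which are written in the statement as $\varphi_t^\eta(\cmp_t\mathbb{I}(\eta=\eta^*))$.

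\textbf{The active instance.} The instance $\eta^*$ contributes $2(L+H)(\epsilon+M)+\cT(\eta^*)+\sumtT\varphi_t^{\eta^*}(\cmp_t)-\varphi_t^{\eta^*}(\wt^{\eta^*})$. It remains to bound $\cT(\eta^*)=\min_{\eta\in\cS}\cT(\eta)$. Write $X=\Phi(\norm{\cmp_T},T/\epsilon)+P_T^\Phi(4T^3/\epsilon)$ and $V=\sumtT\norm{\gt}^2\norm{\cmp_t}$, so that $\cT(\eta)=8X/\eta+\eta V/2$. The unconstrained minimizer is $\bar\eta=4\sqrt{X/V}$, and we split into three cases.
\begin{itemize}
\item If $\bar\eta\in[\eta_0,\eta_{\max}]$, the geometric grid contains some $\eta\in[\bar\eta/2,\bar\eta]$. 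This gives $\cT(\eta)\le 16X/\bar\eta+\bar\eta V/2=\cO(\sqrt{XV})$, and tracking constants yields the $4\sqrt{XV}$ term (up to a constant factor).
\item If $\bar\eta>\eta_{\max}=1/(L+H)$, take $\eta=1/(L+H)$. Then $\eta V\le\bar\eta V$ is at most order $\sqrt{XV}$, and $8X/\eta=8(L+H)X$, which is covered by $16(L+H)(\Phi+P_T^\Phi)$.
\item If $\bar\eta<\eta_0=1/((L+H)T)$, take $\eta_0$. The $X$ term is at most $2\sqrt{XV}$, since $\eta_0\ge\bar\eta/2$ fails to be useful only in the other direction; the $V$ term satisfies $\eta_0V/2\le V/(2(L+H)T)\le \tfrac{L^2}{2(L+H)}M\le (L+H)M/2$, using $V\le L^2TM$.
\end{itemize}
Adding up the inactive and active contributions gives the claimed bound.

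\textbf{Main obstacle.} The delicate step is this case analysis of the grid minimization, in particular making the boundary cases land in the $16(L+H)(\dots+M+\dots)$ lower-order term with the stated constants. Everything else is bookkeeping with \Cref{thm:optimistic-dynamic-base}.
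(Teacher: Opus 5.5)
Your proposal is correct and follows essentially the same route as the paper. Assigning the whole comparator sequence to $\eta^*$ and $\zeros$ to every other instance is exactly the paper's split $R_T^{\cA_{\eta}}(\cmp_{1:T})+\sum_{\tilde\eta\ne\eta}R_T^{\cA_{\tilde\eta}}(\zeros)$, and your three-case grid analysis is the case work behind \Cref{lemma:tuning-lemma}, which the paper simply invokes.

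One remark on constants: your interior case gives $6\sqrt{XV}$ rather than $4\sqrt{XV}$. No point of a doubling grid guarantees better than $3\sqrt{2}\sqrt{XV}$, and \Cref{lemma:tuning-lemma} with $b=2$ also yields $6\sqrt{XV}$, so the leading constant $4$ in the statement is a slip in the paper's own argument as well, not a gap specific to yours.
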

\end{minipage}

\begin{proof}
    Observe that for any $\eta\in \cS$, we have via convexity of $\ft$ that
    \begin{align*}
    R_T(\cmp_{1:T})
    &\le
    \sumtT \inner{\gt, \wt-\cmp_t}
    = 
    \sumtT \inner{\gt, \wt^\eta-\cmp_t} + \sum_{\tilde\eta\ne \eta}\sumtT \inner{\gt, \wt^{\tilde\eta}}\\
   &= 
   R_T^{\cA_\eta}(\cmp_{1:T}) + \sum_{\tilde\eta\ne \eta}R_T^{\cA_{\tilde\eta}}(\zeros)\\
   &\overset{(a)}{\le} 
   R_T^{\cA_\eta}(\cmp_{1:T}) + 2(L+H)\epsilon(\abs{\cS}-1) + \sum_{\tilde\eta\ne \eta}\sumtT \varphi_t^{\tilde\eta}(0)-\varphi_t^{\tilde\eta}(\wt^{\tilde\eta})\\
   &\overset{(b)}{\le}
   2(L+H)(\abs{\cS}\epsilon+M)+\underbrace{\frac{16\sbrac{\Phi\brac{\norm{\cmp_T},\tfrac{T}{\epsilon}}+P_T^\Phi\brac{\tfrac{4T^3}{\epsilon}}}}{2\eta}+\frac{\eta}{2}\sumtT \norm{\gt}^2\norm{\cmp_t}}_{=:\cT(\eta)}\\
   &\qquad+ \sumtT \varphi_t^{\eta}(\cmp_t)-\sum_{\tilde\eta\in\cS}\sumtT \varphi_t^{\tilde\eta}(\wt^{\tilde\eta})
    \end{align*}
    where $\gt\in\partial\ft(\wt)$, $M=\max_t\norm{\cmp_t}$, $(a)$ applies 
    \Cref{thm:optimistic-dynamic-base}
    to bound $R_T^{\cA_{\tilde\eta}}(0)$ for each of the $\tilde\eta\ne \eta$, and
    $(b)$ applies \Cref{thm:optimistic-dynamic-base} for $\cA_\eta$
    against the comparator sequence $\cmp_{1:T}$. 
    Moreover, notice that the previous display holds for any arbitrary $\eta\in\cS$. Therefore, applying 
    \Cref{lemma:tuning-lemma} we have
    \begin{align*}
        R_T(\cmp_{1:T})
        &\le 
        16(L+H)\brac{\epsilon\abs{\cS}+M+\Phi(\norm{\cmp_T},\tfrac{T}{\epsilon})+P_T^\Phi\brac{\tfrac{4T^3}{\epsilon}}}+4\sqrt{\sbrac{\Phi\brac{\norm{\cmp_T},\tfrac{T}{\epsilon}}+P_T^\Phi\brac{\tfrac{4T^3}{\epsilon}}}\sumtT \norm{\gt}^2\norm{\cmp_t}}\\
    &\qquad
     +\sum_{\eta\in\cS}\sumtT\varphi_t^\eta\Big(\cmp_t\mathbb{I}(\eta=\eta^*)\Big)-\varphi_t^{\eta}(\wt^\eta),
    \end{align*}
    where $\eta^* = \argmin_{\eta\in\cS}\cT(\eta)$. \qedhere
\end{proof}

\subsubsection{Useful Lemmas for Huber-like Penalties}%
\label{app:huber}

Our main application of interest for \Cref{alg:optimistic-composite}
in this paper is to leverage the terms $\sumtT \varphi_t(\cmp_t)-\varphi_t(\wt)$
to control certain $\norm{\wt}$-dependent penalties that result from 
applying concentration arguments in our high-probability bounds.
To this end, in this section
we collect some useful auxiliary results for 
the Huber-like penalty originally proposed by \citet{zhang2022parameter}, 
which is key to achieving our our high-probability bounds 
in \Cref{sec:high-prob}. The crucial property
of these penalties is that they let us cancel out 
certain \emph{algorithm-dependent} penalties $\tilde\cO\Big(\big(\sumtT \norm{\wt}^p\big)^{1/p}\Big)$
and replace them with 
\emph{problem-dependent} penalties on the order of $\tilde \cO\Big(\big(\sumtT \norm{\cmp_t}^p\big)^{1/p}\Big)$.
The following lemma defines the composite penalty and provides upper bounds for 
the terms that the comparator and the learner will incur when 
adding these additional penalties to the objective.
It is a mild generalization of \citet[Theorem 13]{zhang2022parameter}
to a dynamic comparator sequence.
\begin{restatable}{lemma}{DynamicHuber}\label{lemma:dynamic-huber}
  Let $c, \alpha>0$, $p\ge 1$, and for all $t$ let
  \begin{align}
    r_{t}(\w;c,\alpha,p)
    &=
    \begin{cases}
        c\brac{p\norm{\w}-(p-1)\norm{\wt}}\frac{\norm{\wt}^{p-1}}{\brac{\alpha^{p}+\sum_{s=1}^{t}\norm{\ws}^{p}}^{1-1/p}}
            &\text{if }\norm{\w}>\norm{\wt}\\
        c\frac{\norm{\w}^{p}}{\brac{\alpha^{p}+\sum_{s=1}^{t}\norm{\wt}^{p}}^{1-1/p}}
            &\text{if }\norm{\w}\le \norm{\wt}.
    \end{cases}\label{eq:huber}
  \end{align}
  Then
  \begin{align*}
    \sumtT r_{t}(\wt) &\ge c \brac{\sumtT \norm{\wt}^{p}+\alpha^{p}}^{1/p} - c\alpha\\
    \sumtT r_{t}(\cmp_{t})&\le cp\brac{\alpha^p+\sumtT \norm{\cmp_{t}}^{p}}^{1/p}\sbrac{\Log{1+\frac{\sumtT \norm{\cmp_{t}}^{p}}{\alpha^{p}}}^{\frac{p-1}{p}}+1}
  \end{align*}
\end{restatable}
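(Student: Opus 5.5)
Write $S_t=\alpha^p+\sum_{s\le t}\norm{\ws}^p$ (so $S_0=\alpha^p$). The two inequalities are handled separately, each by comparing a sum to an integral of a power function, in the spirit of \citet[Theorem 13]{zhang2022parameter}. The only new ingredient relative to that result is that the comparator is allowed to vary with $t$.

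\textbf{Lower bound on $\sumtT r_t(\wt)$.} Evaluating at $\w=\wt$ uses the second branch and gives
\[
r_t(\wt)=c\,\norm{\wt}^p/S_t^{1-1/p}=c\,(S_t-S_\tmm)/S_t^{1-1/p}.
\]
The map $x\mapsto x^{-(1-1/p)}$ is nonincreasing, so
\[
(S_t-S_\tmm)\,S_t^{-(1-1/p)}\ge \int_{S_\tmm}^{S_t}x^{-(1-1/p)}\,dx\cdot\tfrac1p\cdot p .
\]
Hence each term is at least $c\,(S_t^{1/p}-S_\tmm^{1/p})$. Summing telescopes to $c\,(S_T^{1/p}-\alpha)$, which is the first claim.

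\textbf{Upper bound on $\sumtT r_t(\cmp_t)$.} First I bound each term pointwise by a quantity that no longer depends on which branch is active. In the first branch, $\norm{\wt}<\norm{\cmp_t}$, so
\[
p\norm{\cmp_t}-(p-1)\norm{\wt}\le p\norm{\cmp_t}\quad\text{and}\quad \norm{\wt}^{p-1}\le\norm{\cmp_t}^{p-1}.
\]
In the second branch the term is already $c\norm{\cmp_t}^p/S_t^{1-1/p}$. Both cases therefore give
\[
r_t(\cmp_t)\le cp\,\frac{\norm{\cmp_t}^p}{S_t^{1-1/p}}\le cp\,\frac{\norm{\cmp_t}^p}{(\alpha^p+\sum_{s\le t}\norm{\ws}^p\minOp\norm{\cmp_s}^p)^{1-1/p}},
\]
and the main difficulty is the denominator. $S_t$ involves the learner's iterates, not the comparator, so a direct telescoping in $\norm{\cmp_t}^p$ is not available.

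I would try to decouple by splitting rounds into two types.
\begin{itemize}
\item \emph{Rounds with $\norm{\cmp_t}\le\norm{\wt}$.} Here the second branch holds and the contribution is at most $c\norm{\wt}^p/S_t^{1-1/p}$ times the ratio $(\norm{\cmp_t}/\norm{\wt})^p\le1$.
\item \emph{Rounds with $\norm{\cmp_t}>\norm{\wt}$.} Here the contribution is at most $cp\,\norm{\cmp_t}\,\norm{\wt}^{p-1}/S_t^{1-1/p}$.
\end{itemize}
Applying H\"older with exponents $p$ and $p/(p-1)$ gives
\[
\sumtT \norm{\cmp_t}\,\frac{\norm{\wt}^{p-1}}{S_t^{1-1/p}}\le\Big(\sumtT\norm{\cmp_t}^p\Big)^{1/p}\Big(\sumtT\frac{\norm{\wt}^p}{S_t}\Big)^{(p-1)/p}.
\]
The second factor is a standard sum of the form $\sum (S_t-S_\tmm)/S_t\le\log(S_T/\alpha^p)$, which produces a $\log^{(p-1)/p}$ factor. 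The first-type rounds are bounded the same way, since $\norm{\cmp_t}^p\le\norm{\cmp_t}\norm{\wt}^{p-1}$ on them.

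What remains is to express the logarithm in terms of the comparators rather than $S_T$. This is the step I expect to be the main obstacle. My plan is to use the structure of the bound itself: for any threshold, iterates whose norm far exceeds all $\norm{\cmp_t}$ contribute nothing, because only rounds with $\norm{\wt}<\norm{\cmp_t}$ pay the linear rate in $\norm{\cmp_t}$. Concretely, I would clip $\norm{\wt}$ at $\norm{\cmp_t}$ inside the log-sum, replacing $\norm{\wt}$ by $\norm{\wt}\minOp\norm{\cmp_t}$ in the numerator while keeping $S_t$ in the denominator. This bounds the second factor by
\[
\log\!\Big(1+\sum_t\norm{\cmp_t}^p/\alpha^p\Big).
\]
A final $+1$ term absorbs the boundary and first-round effects, giving
\[
cp\,(\alpha^p+\textstyle\sum_t\norm{\cmp_t}^p)^{1/p}\big[\log(\cdot)^{(p-1)/p}+1\big],
\]
as claimed. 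If the clipping argument does not go through cleanly, the fallback is to mirror \citet{zhang2022parameter}: bound the sum by $\max_x$ of a one-dimensional potential and verify it by the same integral comparison.
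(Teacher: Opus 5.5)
Your plan is correct in substance, and your upper-bound route is organized differently from the paper's. Two steps need repair.

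\textbf{Lower bound.} Your displayed integral comparison is backwards. Since $x\mapsto x^{-(1-1/p)}$ is nonincreasing, it gives $(S_t-S_{t-1})S_t^{-(1-1/p)}\le p\,(S_t^{1/p}-S_{t-1}^{1/p})$, which is the paper's Lemma~\ref{lemma:p-bound}, not the reverse. The per-term inequality you actually use is still true, for an elementary reason: $(S_t-S_{t-1})/S_t^{1-1/p}=S_t^{1/p}-S_{t-1}/S_t^{1-1/p}\ge S_t^{1/p}-S_{t-1}^{1/p}$, because $S_{t-1}\le S_t$. The paper avoids per-term bounds entirely. It replaces every denominator by $S_T$ and uses $\alpha^p/S_T^{1-1/p}\le\alpha$.

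\textbf{Upper bound.} The clipping cannot be done ``inside the log-sum'' after H\"older. Shrinking the numerators $\|w_t\|^p$ to $(\|w_t\|\wedge\|u_t\|)^p$ there decreases an upper bound, which is not allowed. It must be done per round, before H\"older. There it just means you should not replace $\|u_t\|^{p-1}$ by $\|w_t\|^{p-1}$ on the rounds with $\|u_t\|\le\|w_t\|$.

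Concretely, with $m_t=\|w_t\|\wedge\|u_t\|$:
\begin{itemize}
\item Both branches give $r_t(u_t)\le cp\,\|u_t\|\,m_t^{p-1}/S_t^{1-1/p}$.
\item H\"older then produces the factor $(\sum_t m_t^p/S_t)^{(p-1)/p}$.
\item Lowering the denominator, $S_t\ge\alpha^p+\sum_{s\le t}m_s^p$, is what actually decouples learner from comparator. Lemma~\ref{lemma:log-bound} then gives $\log(1+\sum_t m_t^p/\alpha^p)\le\log(1+\sum_t\|u_t\|^p/\alpha^p)$.
\end{itemize}
This yields $\sum_t r_t(u_t)\le cp(\sum_t\|u_t\|^p)^{1/p}\log(1+\sum_t\|u_t\|^p/\alpha^p)^{(p-1)/p}$. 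No ``$+1$'' for boundary effects is needed.

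\textbf{Comparison with the paper.} The paper treats the two sets of rounds separately, shrinking each denominator to the partial sum over that same set.
\begin{itemize}
\item On $\{\|u_t\|\le\|w_t\|\}$ it replaces $\|w_s\|^p$ by $\|u_s\|^p$ and applies Lemma~\ref{lemma:p-bound}. This is the source of its additive $p(\alpha^p+\sum_t\|u_t\|^p)^{1/p}$ term.
\item On the complement it uses H\"older and Lemma~\ref{lemma:log-bound}, then $\|w_t\|<\|u_t\|$ inside the logarithm.
\end{itemize}
Both proofs rest on the same idea: lower-bound the iterate-dependent denominator by a sum dominated by comparator norms. Your single H\"older step via $m_t$ is a bit more economical and gives a slightly sharper bound, without the additive term.
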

\begin{proof}
  Observe that
  \begin{align*}
    \sumtT r_{t}(\wt) = \sumtT c\frac{\norm{\wt}^{p}}{\brac{\alpha^{p} + \sum_{s=1}^{t}\norm{\ws}^{p}}^{1-1/p}}\ge\sumtT c \frac{\norm{\wt}^{p}}{\brac{\alpha^{p}+\sumtT \norm{\wt}^{p}}^{1-1/p}} \ge
    c\brac{\alpha^{p}+\sumtT \norm{\wt}^{p}}^{1/p}-c\alpha,
  \end{align*}
  where the last line uses the fact that $\alpha^p / (\alpha^p+\sumtT \norm{\wt}^p)^{1-1/p}\le \alpha$.
  Moreover, for any sequence $\cmp_{1:T}$ we can upper bound
  \begin{align*}
    \sumtT r_{t}(\cmp_{t})
    &=
      \sum_{t:\norm{\cmp_{t}}\le\norm{\wt}}r_{t}(\cmp_{t})+\sum_{t:\norm{\cmp_{t}}>\norm{\wt}}r_{t}(\cmp_{t})\\
      &\le
  c\Bigg[\sum_{t:\norm{\cmp_{t}}\le\norm{\wt}}\frac{\norm{\cmp_{t}}^{p}}{\brac{\alpha^{p}+\sum_{s=1}^{t}\norm{\ws}^{p}}^{1-1/p}}+\sum_{t:\norm{\cmp_{t}}>\norm{\wt}}\frac{\norm{\cmp_{t}}\norm{\wt}^{p-1}}{\brac{\alpha^{p}+c\sum_{s=1}^{t}\norm{\ws}^{p}}^{1-1/p}}\Bigg]\\
      &\le
  c\Bigg[\underbrace{\sum_{t:\norm{\cmp_{t}}\le\norm{\wt}}\frac{\norm{\cmp_{t}}^{p}}{\brac{\alpha^{p}+\sum_{s\le t: \norm{\cmp_s}\le\norm{\ws}}\norm{\ws}^{p}}^{1-1/p}}}_{\encircle{A}}+\underbrace{\sum_{t:\norm{\cmp_{t}}>\norm{\wt}}\frac{\norm{\cmp_{t}}\norm{\wt}^{p-1}}{\brac{\alpha^{p}+c\sum_{s\le t: \norm{\cmp_s}>\norm{\ws}}^{t}\norm{\ws}^{p}}^{1-1/p}}}_{\encircle{B}}\Bigg]
  \end{align*}
  The first term can be bounded as
  \begin{align*}
  \encircle{A}
  &=
  \sum_{t:\norm{\cmp_{t}}\le\norm{\wt}}\frac{\norm{\cmp_{t}}^{p}}{\brac{\alpha^{p}+\sum_{s\le t: \norm{\cmp_s}\le\norm{\ws}}\norm{\ws}^{p}}^{1-1/p}}\\
  &\le
  \sum_{t:\norm{\cmp_{t}}\le\norm{\wt}}\frac{\norm{\cmp_{t}}^{p}}{\brac{\alpha^{p}+\sum_{s\le t: \norm{\cmp_s}\le\norm{\ws}}\norm{\cmp_s}^{p}}^{1-1/p}}\\
  &\le
  p\brac{\alpha^p + \sum_{t:\norm{\cmp_t}\le \norm{\wt}}\norm{\cmp_t}^p}^{1/p}
  \le
  p\brac{\alpha^p + \sumtT\norm{\cmp_t}^p}^{1/p}
  \end{align*}
  where the last line uses \Cref{lemma:p-bound}.
  The other term can be bound using H\"{o}lder inequality with 
  $q=p$ and $q'=p/(p-1)$ so that $1/q+1/q' = 1$ and
  \begin{align*}
    \encircle{B}
    &=
    \sum_{t:\norm{\cmp_{t}}>\norm{\wt}}\frac{\norm{\cmp_t}\norm{\wt}^{p-1}}{\brac{\alpha^{p}+\sum_{s\le t:\norm{\cmp_s}>\norm{\ws}}^{t}\norm{\ws}^{p}}^{1-1/p}}\\
    &\le 
    \brac{\sum_{t:\norm{\cmp_t}>\norm{\wt}}\norm{\cmp_t}^p}^{1/p}\brac{\sum_{t:\norm{\cmp_t}>\norm{\wt}}\frac{\norm{\wt}^{(p-1)\tfrac{ p}{p-1}}}{\brac{\alpha^p+\sum_{s\le t:\norm{\cmp_s}>\norm{\ws}}\norm{\ws}^p}^{\tfrac{p-1}{p}\tfrac{p}{p-1}}}}^{(p-1)/p}\\
    &=
    \brac{\sum_{t:\norm{\cmp_t}>\norm{\wt}}\norm{\cmp_t}^p}^{1/p}\brac{\sum_{t:\norm{\cmp_t}>\norm{\wt}}\frac{\norm{\wt}^{p}}{\brac{\alpha^p+\sum_{s\le t:\norm{\cmp_s}>\norm{\ws}}\norm{\ws}^p}}}^{(p-1)/p}\\
    &\overset{(*)}{\le}
    \brac{\sum_{t:\norm{\cmp_t}>\norm{\wt}}\norm{\cmp_t}^p}^{1/p}\Log{1+\frac{\sum_{t:\norm{\cmp_t}>\norm{\wt}}\norm{\wt}^p}{\alpha^p}}^{(p-1)/p}\\
    &\le
    \brac{\sum_{t:\norm{\cmp_t}>\norm{\wt}}\norm{\cmp_t}^p}^{1/p}\Log{1+\frac{\sum_{t:\norm{\cmp_t}>\norm{\wt}}\norm{\cmp_t}^p}{\alpha^p}}^{(p-1)/p}\\
    &\le
    \brac{\sumtT\norm{\cmp_t}^p}^{1/p}\Log{1+\frac{\sumtT\norm{\cmp_t}^p}{\alpha^p}}^{(p-1)/p}
  \end{align*}
  where $(*)$ uses \Cref{lemma:log-bound}.
  Combining these bounds and over-approximating
  yields
  \begin{align*}
    \sumtT r_{t}(\cmp_{t})
    &\le
      cp\brac{\alpha^p + \sumtT\norm{\cmp_t}^p}^{1/p}\sbrac{\Log{1+\frac{\sumtT \norm{\cmp_t}^p}{\alpha^p}}^{(p-1)/p}+1}
  \end{align*}
\end{proof}

Now using \Cref{lemma:dynamic-huber}, the following lemma shows how to 
bound the cumulative 
penalty of the comparator sequence for the 
composite penalty used in \Cref{alg:sub-exponential-dynamic}, which sets $\varphi_t$ as the sum of two Huber-like penalties.
\begin{restatable}{lemma}{ComparatorVarphi}\label{lemma:comparator-varphi}
For all $t$ let $r_t(w;c,\alpha,p)$ be defined as 
in \Cref{lemma:dynamic-huber} wrt some 
sequence $\w_1,\ldots,\w_t$, 
and suppose we set 
$\varphi_t(w) = r_t(\w;c_1,\alpha_1,p_1)+r_t(\w; c_2,\alpha_2, p_2)$ with $p_1=2$ and $p_2=\Log{T+1}$.
Then for any sequence $\cmp_{1:T} = (\cmp_1,\ldots,\cmp_T)$ in $\R^d$,  
\begin{align*}
    \sumtT \varphi_t(\cmp_t)
    &=
    r_t(w;c_1,\alpha_1,p_1) + r_t(w;c_2,\alpha_2,p_2)\\
    &\le
    4c_1 \sqrt{\alpha_1^2+\sumtT \norm{\cmp_t}^2\Log{e+\frac{e\sumtT \norm{\cmp_t}^2}{\alpha_1^2}}}+
    3 c_2 \log^{2}(T+1)\Max{\alpha_2, M}\sbrac{\log_+\brac{3 M/\alpha_2}+3}.
\end{align*}
    where $M=\max_t\norm{\cmp_t}$.
\end{restatable}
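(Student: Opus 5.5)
The natural route is to apply \Cref{lemma:dynamic-huber} separately to the two Huber-like components of $\varphi_t$ and add the resulting bounds. By linearity, $\sumtT \varphi_t(\cmp_t)=\sumtT r_t(\cmp_t;c_1,\alpha_1,2)+\sumtT r_t(\cmp_t;c_2,\alpha_2,p_2)$. The only remaining work is to simplify each bound to the displayed form.

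For the $p_1=2$ component, \Cref{lemma:dynamic-huber} gives
\[
\sumtT r_t(\cmp_t;c_1,\alpha_1,2)\le 2c_1\sqrt{\alpha_1^2+S}\,\Big[\sqrt{\Log{1+S/\alpha_1^2}}+1\Big],
\qquad S=\sumtT\norm{\cmp_t}^2 .
\]
Since $\Log{e+eS/\alpha_1^2}=1+\Log{1+S/\alpha_1^2}\ge 1$, we have $\sqrt{\log(1+x)}+1\le 2\sqrt{1+\log(1+x)}$. This yields
\[
4c_1\sqrt{(\alpha_1^2+S)\Log{e+eS/\alpha_1^2}},
\]
which is the stated first term, read with the square root covering the whole product, as in \Cref{thm:high-prob-dynamic}.

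For the $p_2=\log(T+1)$ component, write $p=p_2$. The lemma gives
\[
c_2\,p\,\brac{\alpha_2^p+\sumtT\norm{\cmp_t}^p}^{1/p}\Big[\Log{1+\sumtT\norm{\cmp_t}^p/\alpha_2^p}^{(p-1)/p}+1\Big].
\]
\begin{itemize}
\item The norm factor satisfies $(\alpha_2^p+TM^p)^{1/p}\le (T+1)^{1/p}\Max{\alpha_2,M}=e\Max{\alpha_2,M}$, using $p=\log(T+1)$.
\item In the logarithm, $1+\sumtT\norm{\cmp_t}^p/\alpha_2^p\le (T+1)\Max{1,M/\alpha_2}^p$. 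Hence $\Log{\cdot}\le p+p\log_+(M/\alpha_2)=p\,[1+\log_+(M/\alpha_2)]$.
\item Raising this to the power $(p-1)/p\le1$: if the base is at least $1$, bound it by the base itself; otherwise bound it by $1$. Either way the bracket is at most $p\,[1+\log_+(M/\alpha_2)]+1$.
\end{itemize}
Multiplying out gives $c_2\,e\,p\,(p[1+\log_+(M/\alpha_2)]+1)\Max{\alpha_2,M}$. To absorb this into $3c_2\log^2(T+1)\Max{\alpha_2,M}[\log_+(3M/\alpha_2)+3]$, use $e\le3$ and $\log_+(M/\alpha_2)\le\log_+(3M/\alpha_2)$, so that $p[1+\log_+]+1\le p[\log_+(3M/\alpha_2)+3]$ whenever $p\ge1$.

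The main obstacle is the small-$T$ regime where $p=\log(T+1)<1$. There $p\ge1$ fails, \Cref{lemma:dynamic-huber} is stated only for $p\ge1$, and the factor $p$ no longer dominates $1$. I would handle it in one of two ways:
\begin{itemize}
\item treat $T\le 1$ separately as a trivial case; or
\item interpret $p_2$ as $\Max{1,\log(T+1)}$ and check that each inequality above still holds with constant $3$, tracking constants carefully in $e\cdot(\cdot)$.
\end{itemize}
Adding the two component bounds then completes the proof.
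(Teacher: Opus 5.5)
For $T\ge 2$, where $p_2=\log(T+1)\ge 1$, your argument is correct and is essentially the paper's proof. You apply \Cref{lemma:dynamic-huber} to each component, use $(T+1)^{1/p_2}=e$ for the norm factor, and bound the logarithm elementarily. The small differences are improvements:
\begin{itemize}
\item Your estimate $1+\sum_t\|u_t\|^{p_2}/\alpha_2^{p_2}\le (T+1)\max\{1,M/\alpha_2\}^{p_2}$ is a slightly cleaner substitute for the paper's case split on $T(M/\alpha_2)^{p_2}\le e-1$.
\item You handle the exponent $(p_2-1)/p_2$ more carefully than the paper, which simply drops it in an intermediate display.
\item Your reading of the first term is the one the paper's proof actually derives. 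The literal reading, with only $\sum_t\|u_t\|^2$ multiplying the logarithm, also holds. This follows from $(1+x)(1+\log(1+x))\le 2(1+x+x\log(1+x))$ for $x\ge 0$.
\end{itemize}

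Your worry about $p_2<1$ is justified. It happens exactly at $T=1$, and the paper's proof applies \Cref{lemma:dynamic-huber} there without comment. Its closing remark that $2/\log(T+1)\le 3$ for $T\ge 1$ shows it intends to cover this case.

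Your first remedy does not work, because at $T=1$ the claim is false rather than trivial. If $\|u_1\|\le\|w_1\|$, then $r_1(u_1;c_2,\alpha_2,p_2)=c_2\|u_1\|^{p_2}(\alpha_2^{p_2}+\|w_1\|^{p_2})^{1/p_2-1}$ with $1/p_2-1>0$. This is unbounded as $\|w_1\|\to\infty$, while the right-hand side does not depend on $w_1$. So the lemma must be read either with $T\ge 2$, or with $p_2=\max\{1,\log(T+1)\}$ as in your second remedy.

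If you take the second remedy, your chain as written does not close at $T=1$. The target keeps the factor $\log^2(T+1)=\log^2 2$ rather than $p_2^2=1$, so the absorption step fails; for $M\le\alpha_2/3$ it would need $2e\le 9\log^2 2$, which is false. Instead, note that at $p=1$ the penalty is simply $r_t(w;c_2,\alpha_2,1)=c_2\|w\|$. The sum is then $c_2 M\le 9\log^2(2)\,c_2\max\{\alpha_2,M\}$, which suffices.
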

\begin{proof}
with $p_1=2$, we have via 
\Cref{lemma:dynamic-huber} that
\begin{align*}
    \sumtT r_t(\cmp_t;c_1,\alpha_1, p_1)
    &\le
    c_1 2\sqrt{\alpha_1^2+\sumtT \norm{\cmp_t}^2}\sbrac{\sqrt{\Log{1+\frac{\sumtT \norm{\cmp_t}^2}{\alpha_1^2}}}+1}\\
    &\le 
    4c_1 \sqrt{\brac{\alpha_1^2+\sumtT \norm{\cmp_t}^2}\Log{e+\frac{e\sumtT \norm{\cmp_t}^2}{\alpha_1^2}}}.
\end{align*}
Likewise, for $p_2=\Log{T+1}$ we have
via \Cref{lemma:dynamic-huber} that
\begin{align*}
    \sumtT r_t(\cmp_t;c_2,\alpha_2,p_2)
    &\le 
    p_2c_2\brac{\alpha_2^{p_2} + \sumtT \norm{\cmp_t}^{p_2}}^{1/p_2}\sbrac{\Log{1+\frac{\sumtT \norm{\cmp_t}^{p_2}}{\alpha_2^{p_2}}}^{(p_2-1)/p_2}+1}\\
    &\le 
    p_2c_2 \Max{\alpha_2, \max_t\norm{\cmp_t}}(T+1)^{\tfrac{1}{\Log{T+1}}}\sbrac{\Log{1+\frac{\sumtT \norm{\cmp_t}^{p_2}}{\alpha_2^{p_2}}}^{(p_2-1)/p_2}+1}\\
    &\le
    p_2e c_2 \Max{\alpha_2, M}\sbrac{\Log{1+T\brac{\frac{M}{\alpha_2}}^{p_2}}+1},
\end{align*}
where we've abbreviated $M=\max_t\norm{\cmp_t}$. 
Now suppose that $T(M/\alpha_2)^{p_2}\le e-1$, then 
the logarithm term is bounded by 
$\Log{1+T(M/\alpha_2)^{p_2}}\le \Log{e}=1$.
Otherwise, if $T(M/\alpha_2)^{p_2}\ge e-1$, then using the elementary identity
$\Log{1+x}=\Log{x}+\Log{1+1/x}$, we have
\begin{align*}
    \Log{1+T(M/\alpha_2)^{p_2}}
    &=
    \Log{\brac{T^{1/p_2} M/\alpha_2}^{p_2}} + \Log{1+\frac{1}{T(M/\alpha_2)^{p_2}}}\\
    &\le
    p_2\Log{e M/\alpha_2}+\Log{e}
    =
    1+\Log{T+1}\Log{eM/\alpha_2},
\end{align*}
so we may bound the previous display as
\begin{align*}
\sumtT r_t(\cmp_t;c_2,\alpha_2,p_2)
&\le
    3 \Log{T+1}c_2 \Max{\alpha_2, M}\sbrac{\Log{T+1}\log_+\brac{3 M/\alpha_2}+2}\\
    &\le
    3 \log^2(T+1)c_2 \Max{\alpha_2, M}\sbrac{\log_+\brac{3 M/\alpha_2}+3},
\end{align*}
where we've used that $2/\Log{T+1}\le 3$ for $T\ge 1$.
Hence, we have the stated bound:
\begin{align*}
    \sumtT \varphi_t(\cmp_t)
    &=
    r_t(w;c_1,\alpha_1,p_1) + r_t(w;c_2,\alpha_2,p_2)\\
    &\le
    4c_1 \sqrt{\alpha_1^2+\sumtT \norm{\cmp_t}^2\Log{e+\frac{e\sumtT \norm{\cmp_t}^2}{\alpha_1^2}}}+
    3 c_2 \log^2(T+1)\Max{\alpha_2, M}\sbrac{\log_+\brac{3 M/\alpha_2}+3}.
\end{align*}

    \end{proof}

\subsection{Refined Dynamic Regret Algorithm for OCO}%
\label{app:dynamic-base-olo}

In this section we provide a dynamic regret algorithm for OLO which satisfies the 
conditions of \Cref{thm:dynamic-optimism}. The result is modest adaptation of the 
dynamic base algorithm first proposed by \citet{jacobsen2022parameter}, with minor 
adjustments in the constants to allow the desired cancellations 
required in \Cref{thm:dynamic-optimism} to occur. 
We present the base algorithm for the general constrained case for generality, 
though in our applications we will simply instantiate the algorithm unconstrained 
($\ww=\R^d)$ and with $\w_1=\zeros$, in which case the update in \Cref{alg:dynamic-base-olo}
reduces to  $\wtpp = \wtildetpp = \frac{\theta_t}{\norm{\theta_t}}\alpha \sbrac{\Exp{\frac{k}{\eta}\sbrac{\norm{\theta_t}-\tfrac{\eta}{2}\norm{\gt}^2-\gamma}}-1}_+$ and the guarantees in \Cref{thm:dynamic-base-simple} scales with $M=\max_t\norm{\cmp_t}$ and $\sumtT \norm{\gt}^2\norm{\cmp_t}$,
as in \citet{jacobsen2022parameter}.

Our result has a few other qualities which might be of independent interest.
We provide a mild refinement of their result which avoids most factors of $M=\max_t\norm{\cmp_t}$
and replaces the global $\cO((M+P_T)\Log{MT})$ penalties reported in \citet{jacobsen2022parameter}
with a refined \emph{log-linear} penalties $\Phi_T(\lambda)+P_T^\Phi(\lambda)$, where
\begin{align*}
    \Phi_T(\lambda)=\norm{\cmp_T}\Log{\lambda\norm{\cmp_T}+1},\qquad P_T^\Phi(\lambda)&:=\sum_{t=2}^T \norm{\cmp_t-\cmp_\tmm}\Log{\lambda\norm{\cmp_t-\cmp_\tmm}+1}.
\end{align*}
Our analysis also streamlines the analysis of \citet{jacobsen2022parameter} 
by using simple fixed hyperparameter settings for $\alpha$ and $\gamma$,
contrasting the time-varying choices used in the original work.
The main appeal of their time-varying hyperparameter choices is that they result in 
a horizon-independent base algorithm; however, this benefit is 
limited in application since full algorithm has to maintain 
a collection of learners $\Set{\cA_{\eta_i}}$ for $\eta_i=2^i/L\sqrt{T}$,
which makes the full algorithm horizon-dependent either way.
Hence, here we focus on a simple fixed hyperparameter setting of $\alpha$ and $\gamma$. 

\begin{algorithm}[t!]
  \caption{Refined Dynamic Base Algorithm for OLO}
  \label{alg:dynamic-base-olo}
  \begin{algorithmic}
    \STATE {\bfseries Input:} Non-empty convex domain $\cW\subseteq\R^d$,~initial point $\w_1\in\ww$,~parameters $\alpha, \eta,\gamma >0$ and $k\ge 1$
    \STATE {\bfseries Define:} Regularizer $\psi(w)=\frac{k}{\eta}\int_0^{\norm{\w-\w_1}}\Log{x/\alpha+1}dx$ and threshold operation $[x]_+=\max\{x,0\}$
    \FOR{$t=1$ {\bfseries to} $T$}
       \STATE Play $\wt$, observe $\gt\in\partial\ft(\wt)$
       \STATE Set $\varphi_t(w)=(\frac{\eta}{2}\norm{\gt}^2 + \gamma)\norm{w-\w_1}$
       \STATE Set $\theta_t = \frac{k}{\eta}\Log{\norm{\wt-\w_1}/\alpha+1}\frac{\wt-\w_1}{\norm{\wt-\w_1}} - \gt$
       \STATE Update
       \begin{align*}
           \wtildetpp &= \argmin_{\w\in\R^d}\inner{\gt,\w}+\varphi_t(\w)+D_\psi(\w|\wt)\\
           &=\w_1+\frac{\theta_t}{\norm{\theta_t}}\alpha \sbrac{\Exp{\tfrac{\eta}{k}
           \Big[\norm{\theta_t}-\tfrac{\eta}{2}\norm{\gt}^2-\gamma\Big]
           }-1}_+\\
           \wtpp &= \argmin_{\w\in\ww}D_\psi(\w|\wtilde_\tpp)
        \end{align*}
    \ENDFOR
  \end{algorithmic}
\end{algorithm}

\begin{restatable}{theorem}{DynamicBaseSimple}\label{thm:dynamic-base-simple}
    For any sequence $f_1,\ldots,f_T$ of $L$-Lipschitz convex loss functions and 
    any sequence $\cmp_{1:T}=(\cmp_1,\ldots,\cmp_T)$ in $\cW$, \Cref{alg:dynamic-base-olo}
    with $\eta\le 1/L$ and $k\ge 4$
    guarantees
    \begin{align*}
        R_T(\cmp_{1:T})
        &\le 
        \frac{2k\sbrac{\Phi\brac{\norm{\cmp_T-\w_1},\tfrac{1}{\alpha}}+P_T^\Phi\brac{\tfrac{k}{\eta\alpha\gamma}}}}{2\eta}
        +
        \frac{\eta}{2}\sumtT \norm{\gt}^2\norm{\cmp_t-\w_1} +\gamma\sumtT \norm{\cmp_t-\w_1}
        + \eta\alpha\sumtT \norm{\gt}^2,
    \end{align*}
    where $\gt\in\partial\ft(\wt)$ for all $t$ and we define $\Phi(x,\lambda) = x\Log{\lambda x+1}$ and the  $\Phi$-path-length $P_T^\Phi(\lambda)=\sum_{t=2}^T\Phi(\norm{\cmp_t-\cmp_\tmm},\lambda)$. %
    Moreover, for any $\epsilon>0$, setting $\alpha=\tfrac{\epsilon}{T}$, $\gamma=\tfrac{L}{T}$, $k=4$, and $\tfrac{1}{LT}\le \eta\le \tfrac{1}{L}$
    ensures that
    \begin{align*}
    R_T(\cmp_{1:T})
        &\le
        L(M+\epsilon)+\frac{8\sbrac{\Phi\brac{\norm{\cmp_T-\w_1}, \tfrac{T}{\epsilon}}+ P_T^\Phi\brac{\tfrac{4T^3}{\epsilon}}}}{2\eta}
        +
        \frac{\eta}{2}\sumtT \norm{\gt}^2\norm{\cmp_t-\w_1},
    \end{align*}
    where $M=\max_t\norm{\cmp_t-\w_1}$.
\end{restatable}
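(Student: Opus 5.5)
The plan is to run the standard per-round mirror-descent analysis with the centered regularizer $\psi(w)=\frac{k}{\eta}\int_0^{\norm{w-\w_1}}\Log{x/\alpha+1}dx$, adding the composite term $\varphi_t$, and then to telescope Bregman divergences against a moving comparator. We take $\w_1=\zeros$ without loss of generality, by translation.

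\textbf{Step 1: one-step inequality.} Since $\wtildetpp$ minimizes $\inner{\gt,w}+\varphi_t(w)+D_\psi(w|\wt)$ and $\wtpp$ is the Bregman projection onto $\cW$, the first-order optimality and generalized Pythagorean inequality give, for any $\cmp_t\in\cW$,
\[
\inner{\gt,\wt-\cmp_t}\le D_\psi(\cmp_t|\wt)-D_\psi(\cmp_t|\wtpp)+\varphi_t(\cmp_t)-\varphi_t(\wtildetpp)+\inner{\gt,\wt-\wtildetpp}-D_\psi(\wtildetpp|\wt).
\]
\textbf{Step 2: stability.} We bound $\inner{\gt,\wt-\wtildetpp}-D_\psi(\wtildetpp|\wt)-\frac{\eta}{2}\norm{\gt}^2\norm{\wtildetpp}$. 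Along the segment, $\psi''(x)=\frac{k}{\eta(x+\alpha)}$, so $D_\psi(\wtildetpp|\wt)\gtrsim \frac{k}{2\eta}\frac{\norm{\wtildetpp-\wt}^2}{\max(\norm{\wt},\norm{\wtildetpp})+\alpha}$. Using $\eta\norm{\gt}\le 1$ and $k\ge4$, the closed form shows that $\norm{\wtildetpp}$ and $\norm{\wt}$ are within a constant factor (plus $\alpha$). Fenchel--Young then yields a leftover of at most $\frac{\eta}{2}\norm{\gt}^2(\norm{\wtildetpp}+2\alpha)$. The $\norm{\wtildetpp}$ part is absorbed by $-\varphi_t(\wtildetpp)$, leaving the $\eta\alpha\norm{\gt}^2$ term and the comparator-side term $\varphi_t(\cmp_t)=\frac{\eta}{2}\norm{\gt}^2\norm{\cmp_t}+\gamma\norm{\cmp_t}$. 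The $\gamma$ part of $-\varphi_t(\wtildetpp)$ is kept as a negative term $-\gamma\norm{\wtildetpp}$.

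\textbf{Step 3: telescoping with a moving comparator.} We write $D_\psi(\cmp_t|\wt)-D_\psi(\cmp_t|\wtpp)$ and rearrange into the telescoping sum $D_\psi(\cmp_t|\wt)-D_\psi(\cmp_{t+1}|\wtpp)$ plus the drift terms $D_\psi(\cmp_{t+1}|\wtpp)-D_\psi(\cmp_t|\wtpp)=\psi(\cmp_{t+1})-\psi(\cmp_t)-\inner{\grad\psi(\wtpp),\cmp_{t+1}-\cmp_t}$. The $\psi$ differences telescope to at most $\psi(\cmp_T)\le\frac{k}{\eta}\Phi(\norm{\cmp_T},1/\alpha)$, since $D_\psi(\cmp_1|\w_1)=\psi(\cmp_1)$. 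The remaining drift is $\norm{\grad\psi(\wtpp)}\norm{\cmp_{t+1}-\cmp_t}=\frac{k}{\eta}\Log{\norm{\wtpp}/\alpha+1}\norm{\cmp_{t+1}-\cmp_t}$, and this is the main obstacle: it depends on the iterate norm. To handle it, split on whether $\norm{\wtpp}\le \frac{k}{\eta\gamma}\norm{\cmp_{t+1}-\cmp_t}$.
\begin{itemize}
\item If so, the drift is at most $\frac{k}{\eta}\Phi(\norm{\cmp_{t+1}-\cmp_t},\frac{k}{\eta\alpha\gamma})$, which yields $P_T^\Phi$.
\item If not, use the elementary bound $\Log{x/\alpha+1}\,\|\Delta\|\le \|\Delta\|\Log{\|\Delta\|\lambda+1}+x/\lambda'$-type, that is, Fenchel conjugacy of $x\log x$ versus $\exp$. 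This gives a term $\gamma\norm{\wtpp}$, canceled by the reserved $-\gamma\norm{\wtildetpp}$ (using $\norm{\wtpp}\le\norm{\wtildetpp}$ after projection onto a convex set containing $\w_1$), plus a term $\frac{k}{\eta}\Phi(\norm{\cmp_{t+1}-\cmp_t},\frac{k}{\eta\alpha\gamma})$.
\end{itemize}
The constant $2k$ in the bound absorbs both cases. This case split is where the constants $k\ge4$ and $\gamma$ must be checked carefully, and it is the step where I expect the analysis to need the most care.

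\textbf{Step 4: tuning.} With $\alpha=\epsilon/T$, $\gamma=L/T$, $k=4$ and $\eta\ge 1/(LT)$, we get $\frac{k}{\eta\alpha\gamma}\le \frac{4LT\cdot T^2}{\epsilon L}=4T^3/\epsilon$. Further, $\gamma\sum_t\norm{\cmp_t}\le LM$ and $\eta\alpha\sum_t\norm{\gt}^2\le \frac1L\frac{\epsilon}{T}TL^2=L\epsilon$. Substituting these into the first bound, and using monotonicity of $\Phi$ in $\lambda$, gives the stated corollary.
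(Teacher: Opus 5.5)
Your proposal is correct and follows essentially the same route as the paper. The paper obtains the decomposition you derive in Steps 1 and 3 directly from the mirror-descent dynamic-regret lemma of \citet{jacobsen2022parameter}. It then bounds each drift term $\langle\nabla\psi(w_t),u_t-u_{t-1}\rangle-\gamma\|w_t-w_1\|$ with a single Fenchel--Young inequality using $f(x)=\tfrac{k}{\eta}\int_0^x\log\bigl(\tfrac{kv}{\alpha\eta\gamma}+1\bigr)\,dv$, whose conjugate evaluated at $\|\nabla\psi(w_t)\|$ is cancelled by the $-\gamma\|w_t-w_1\|$ term, so your case split is unnecessary. Finally, it gets the stability bound from their third-derivative lemma with $k\ge 4$, where you use an explicit local-curvature argument instead.

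One point in Step 2 needs tightening. The closed form only gives
$\|w_t-w_1\|+\alpha\le \exp\bigl(\tfrac{\eta}{k}(\|g_t\|+\tfrac{\eta}{2}\|g_t\|^2+\gamma)\bigr)\,(\|\tilde w_{t+1}-w_1\|+\alpha)$,
so $\eta\|g_t\|\le 1$ and $k\ge 4$ alone do not give a constant ratio for arbitrary $\gamma$. This is easy to fix: bound the stability expression at $\tilde w_{t+1}$ by its supremum over $w\in\mathbb{R}^d$, which is attained at the $\gamma=0$ update, where the ratio is at most $e^{3/(2k)}$. Alternatively, note that for $\gamma\ge L$ the iterates never leave $w_1$, while for $\gamma<L$ one has $\eta\gamma<1$.
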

\begin{proof}
    We have via the dynamic regret guarantee of mirror descent (see, e.g., \citet[Lemma 1]{jacobsen2022parameter}) that
    \begin{align*}
        R_T(\cmp_{1:T})
        &\le 
        \sumtT \inner{\gt, \wt-\cmp_t}\\
        &\le 
        \psi(\cmp_T) + \sumtT \varphi_t(\cmp_t) + \sum_{t=2}^T\inner{\grad\psi(\wt)-\grad\psi(\w_1),\cmp_t-\cmp_\tmm} \\
        &\qquad
        +\sumtT \inner{\gt, \wt-\wtpp}+D_\psi(\wtpp|\wt)-\varphi_t(\wtpp)\\
        &=
        \psi(\cmp_T) + \sumtT \varphi_t(\cmp_t) + \sum_{t=2}^T\underbrace{\inner{\grad\psi(\wt),\cmp_t-\cmp_\tmm}-\gamma\norm{\wt-\w_1}}_{=:\rho_t}\\
        &\qquad
        +\sumtT \underbrace{\inner{\gt, \wt-\wtpp}+D_\psi(\wtpp|\wt)-\eta\norm{\gt}^2\norm{\wtpp-\w_1}}_{=:\delta_t}
    \end{align*}
    where $\gt\in\partial\ell_t(\wt)$ for each $t$. 
    The terms $\rho_t$ can be bound using Fenchel-Young inequality. Let $f(x) = \frac{k}{\eta}\int_0^x\Log{\tfrac{k v}{\alpha\eta\gamma} + 1}dv$;
    by direct calculation we have $f^*(\theta)=\alpha\gamma\brac{\Exp{\tfrac{\eta}{k}\theta}-\tfrac{\eta}{k}\theta}$. Hence, by Fenchel-Young inequality we have 
    \begin{align*}
        \sum_{t=2}^T\rho_t 
        &\le
        \sum_{t=2}^T \norm{\grad\psi(\wt)}\norm{\cmp_t-\cmp_\tmm} -\gamma\norm{\wt-\w_1}\le
        \sum_{t=2}^Tf(\norm{\cmp_t-\cmp_\tmm})+f^*(\norm{\grad\psi(\wt)})-\gamma\norm{\wt-\w_1}\\
        &\le
        \sum_{t=2}^Tf(\norm{\cmp_t-\cmp_\tmm}) + \alpha\gamma \brac{\Exp{\tfrac{\eta}{k}\tfrac{k}{\eta}\Log{\norm{\wt-\w_1}/\alpha+1}}-\tfrac{\eta}{k}\norm{\grad\psi(\wt)}}-\gamma\norm{\wt-\w_1}\\
        &\le
        \sum_{t=2}^Tf(\norm{\cmp_t-\cmp_\tmm}) + \alpha\gamma\brac{\frac{\norm{\wt-\w_1}}{\alpha}+1}-\gamma\norm{\wt-\w_1}
        =
        \sum_{t=2}^T\brac{f(\norm{\cmp_t-\cmp_\tmm}) +\alpha\gamma}\\
        &\le
        (T-1)\alpha\gamma + \sum_{t=2}^T\frac{k\norm{\cmp_t-\cmp_\tmm}\Log{\tfrac{k\norm{\cmp_t-\cmp_\tmm}}{\alpha\eta\gamma}+1}}{\eta}
    \end{align*}
    where the last inequality uses $\int_0^x F(v)dv\le x F(x)$ for non-decreasing $F(x)$.

    For the stability terms $\sumtT \delta_t$, first observe that the regularizer is
    $\psi(w)=\Psi(\norm{\w-\w_1})=\frac{k}{\eta}\int_0^{\norm{\w-\w_1}}\log(x/\alpha+1)dx$,
    where $\Psi$
    satisfies
    \begin{align*}
        \Psi'(x)&=\frac{k}{\eta}\Log{x/\alpha+1}\\
        \Psi''(x)&=\frac{k}{\eta(x+\alpha)}\\
        \Psi'''(x)&=\frac{-k}{\eta(x+\alpha)^2},
    \end{align*}
    hence, for $k\ge 4$ we have $\abs{\Psi'''(x)} \le \frac{\eta/2}{2}\Psi''(x)^2$ for all $x\ge0$, 
    so by \citet[Lemma 2]{jacobsen2022parameter} with $\eta_t(\norm{\w-\w_1}) := \eta/2$ we have that 
    \begin{align*}
        \sumtT \delta_t=\sumtT\inner{\gt,\wt-\wtpp}-D_\psi(\wtpp|\wt)-\eta\norm{\gt}^2\norm{\wtpp-\w_1}\le \frac{2\norm{\gt}^2}{\Psi''(0)}=\sumtT \frac{2\eta\alpha}{k}\norm{\gt}^2\le \frac{\eta\alpha}{2} \sumtT \norm{\gt}^2.
    \end{align*}
    Plugging the bounds for $\sumtT \rho_t$ and $\sumtT \delta_t$ back into the regret bound 
    and expanding the definition of $\varphi_t(\cmp_t)$ yields
    \begin{align*}
        R_T(\cmp_{1:T})
        &\le 
        \psi(\cmp_T) + \sumtT \varphi_t(\cmp_t) + \sum_{t=2}^T\rho_t + \sumtT \delta_t\\
        &\le
        \frac{k\norm{\cmp_T-\w_1}\Log{\tfrac{\norm{\cmp_T-\w_1}}{\alpha}+1}+k\sum_{t=2}^T\norm{\cmp_t-\cmp_\tmm}\Log{\tfrac{k\norm{\cmp_t-\cmp_\tmm}}{\alpha\eta\gamma}+1}}{\eta}\\
        &\qquad+
        \frac{\eta}{2}\sumtT \norm{\gt}^2\norm{\cmp_t-\w_1}+\gamma\sumtT \norm{\cmp_t-\w_1} + \frac{\eta\alpha}{2}\sumtT \norm{\gt}^2+(T-1)\alpha\gamma\\
        &\le
        \frac{k\sbrac{\Phi\brac{\norm{\cmp_T-\w_1},\tfrac{1}{\alpha}}+P_T^\Phi\brac{\tfrac{k}{\alpha\eta\gamma}}}}{\eta}\\
        &\qquad
        +
        \frac{\eta}{2}\sumtT \norm{\gt}^2\norm{\cmp_t-\w_1} +\gamma\sumtT \norm{\cmp_t-\w_1}+ \frac{\eta\alpha}{2}\sumtT \norm{\gt}^2+(T-1)\alpha\gamma,
    \end{align*}
    where we've defined $\Phi(x,\lambda) = x\Log{\lambda x+1}$ and the  $\Phi$-path-length
    $P_T^\Phi(\lambda)=\sum_{t=2}^T\Phi(\norm{\cmp_t-\cmp_\tmm},\lambda)=\sum_{t=2}^T\norm{\cmp_t-\cmp_\tpp}\Log{\lambda\norm{\cmp_t-\cmp_\tpp}+1}$.
    Moreover, for any $\epsilon>0$, setting $\alpha=\epsilon/T$, $\gamma=L/T$, and $\frac{1}{LT}\le \eta\le \tfrac{1}{L}$, we have
    $k/\alpha\eta\gamma\le 4 T^3/\epsilon$ and 
    \begin{align*}
       R_T(\cmp_{1:T}) 
       &\le
        \frac{k\sbrac{\Phi(\norm{\cmp_T}, \tfrac{\epsilon}{T})+P_T^\Phi\brac{\tfrac{kT^3}{\epsilon}}}}{\eta}
        +
        \frac{\eta}{2}\sumtT \norm{\gt}^2\norm{\cmp_t-\w_1} +\frac{L}{T}\sumtT \norm{\cmp_t-\w_1}+ \frac{\epsilon}{2LT}\sumtT \norm{\gt}^2+\frac{\epsilon}{T}\frac{L}{T}(T-1)\\
        &\le
        L(M+\epsilon)\\
        &\qquad
        +\frac{8\sbrac{\norm{\cmp_T}\Log{\tfrac{\norm{\cmp_T}T}{\epsilon}+1}+\sum_{t=1}^{T-1} \norm{\cmp_t-\cmp_\tpp}\Log{\tfrac{4\norm{\cmp_t-\cmp_\tpp}T^3}{\epsilon}+1}}}{2\eta}
        +
        \frac{\eta}{2}\sumtT \norm{\gt}^2\norm{\cmp_t-\w_1}.\qedhere
    \end{align*}
\end{proof}

\begin{algorithm}[t!]
  \caption{Dynamic Algorithm for Unconstrained OLO}
  \label{alg:dynamic-meta-olo}
  \begin{algorithmic}
    \STATE {\bfseries Input:} $\epsilon>0$, $\cS= \Set{\eta_i=\frac{2^i}{TL}\minOp \frac{1}{L}: i=0,1,\ldots}$ 
    \STATE {\bfseries Initialize:} $\cA_\eta$ implementing \Cref{alg:dynamic-base-olo} on $\ww=\R^d$ with $\alpha=\epsilon/T$ and $\gamma=L/T$ for each $\eta\in\cS$
    \FOR{$t=1$ {\bfseries to} $T$}
       \STATE Get output $\wt^\eta\in\R^d$ from $\cA_\eta$ for all $\eta\in\cS$ 
       \STATE Play $\wt=\sum_{\eta\in\cS}\wt^\eta$, observe $\gt\in\partial\ft(\wt)$
       \STATE Pass $\gt$ to $\cA_\eta$ for all $\eta\in\cS$
    \ENDFOR
  \end{algorithmic}
\end{algorithm}

For completeness we also provide the tuned guarantee for the unconstrained setting, obtained by running \Cref{alg:dynamic-base-olo} with step-size $\eta$ for each $\eta\in\Set{2^i/LT\minOp 1/L, i=0,1,\ldots}$
and adding the resulting iterates together.
\begin{restatable}{theorem}{DynamicBaseOLOTuned}\label{thm:dynamic-base-olo-tuned}
    For any sequence of $L$-Lipschitz convex functions $f_1,\ldots,f_T$ and any sequence $\cmp_{1:T}=(\cmp_1,\ldots,\cmp_T)$ in $\R^d$,
    \Cref{alg:dynamic-meta-olo} guarantees
    \begin{align*}
        R_T(\cmp_{1:T})\le
        4L\brac{\abs{\cS}\epsilon  +M+\Phi\brac{\norm{\cmp_T},\tfrac{T}{\epsilon}}+P_T^\Phi\brac{\tfrac{4T^3}{\epsilon}}}+ 2\sqrt{2\brac{\Phi\brac{\norm{\cmp_T},\tfrac{T}{\epsilon}}+P_T^\Phi\brac{\tfrac{4T^3}{\epsilon}}}\sumtT \norm{\gt}^2\norm{\cmp_t}} 
    \end{align*}
    where $\Phi(x,\lambda)= x\Log{\lambda x+1}$ and $P_T^\Phi(\lambda)= \sum_{t=2}^T\Phi(\norm{\cmp_t-\cmp_\tmm},\lambda)$.
\end{restatable}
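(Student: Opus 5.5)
The plan is to reproduce the tuning argument already used in the proof of \Cref{thm:optimistic-dynamic-tuned}, specialized to the case $\varphi_t\equiv 0$ and $H=0$. Fix an arbitrary $\eta\in\cS$. By convexity of $f_t$ and the fact that $\wt=\sum_{\tilde\eta\in\cS}\wt^{\tilde\eta}$, we get
\begin{align*}
R_T(\cmp_{1:T})\le \sumtT\inner{\gt,\wt^\eta-\cmp_t}+\sum_{\tilde\eta\ne\eta}\sumtT\inner{\gt,\wt^{\tilde\eta}}
= R_T^{\cA_\eta}(\cmp_{1:T})+\sum_{\tilde\eta\ne\eta}R_T^{\cA_{\tilde\eta}}(\zeros),
\end{align*}
where each $\cA_{\tilde\eta}$ is run on the linear losses $w\mapsto\inner{\gt,w}$ with $\norm{\gt}\le L$.

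Next I apply the tuned bound of \Cref{thm:dynamic-base-simple} with $\w_1=\zeros$, $\alpha=\epsilon/T$, $\gamma=L/T$ and $k=4$. Every $\eta\in\cS$ satisfies $\tfrac{1}{LT}\le\eta\le\tfrac1L$, so the theorem applies to each learner.

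For the zero comparator, the terms with $M$, $\Phi$, $P_T^\Phi$ and the variance all vanish, leaving $R_T^{\cA_{\tilde\eta}}(\zeros)\le L\epsilon$. Summed over $\tilde\eta\ne\eta$, this contributes at most $L\epsilon(\abs{\cS}-1)$.

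For the chosen $\eta$, the bound gives
\begin{align*}
R_T(\cmp_{1:T})\le L(\abs{\cS}\epsilon+M)+\frac{8B}{2\eta}+\frac{\eta}{2}V,
\end{align*}
with $B=\Phi(\norm{\cmp_T},\tfrac T\epsilon)+P_T^\Phi(\tfrac{4T^3}{\epsilon})$ and $V=\sumtT\norm{\gt}^2\norm{\cmp_t}$. This holds simultaneously for every $\eta\in\cS$, so it remains to minimize $\tfrac{4B}{\eta}+\tfrac{\eta}{2}V$ over the grid. This trade-off step is the only real work, and I expect it to be the main obstacle only in tracking constants.

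The unconstrained minimizer is $\eta^\star=\sqrt{8B/V}$, which yields $2\sqrt{2BV}$. I split into three cases.
\begin{enumerate}
\item If $\eta^\star\ge 1/L$, take $\eta=1/L\in\cS$. Then $V\le 8BL^2$, so $\tfrac{\eta}{2}V\le 4BL$, and the total is at most $4LB+4LB=8LB$.
\item If $\eta^\star<\tfrac{1}{LT}$, take $\eta=\tfrac1{LT}$. Then $\tfrac{\eta}{2}V\le\tfrac{\eta^\star}{2}V=\sqrt{2BV}$, and $\tfrac{4B}{\eta}=4BLT$. This case needs care: $V<8BL^2T^2$, but also $V\le L^2\,TM$. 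I would avoid it by noting that $\cS$ can be taken to start low enough, or else bound $4BLT$ directly. Since $V\le L^2TM$ gives $\eta^\star\ge\sqrt{8B/(L^2TM)}$, and $B\ge\Phi(\norm{\cmp_T},\cdot)$ is typically comparable to $M$, I expect this case either to be absent or to be absorbed as in \Cref{lemma:tuning-lemma}. I would simply invoke \Cref{lemma:tuning-lemma}, exactly as in \Cref{thm:optimistic-dynamic-tuned}, whose conclusion yields the form $4L(\cdot)+2\sqrt{2BV}$.
\item Otherwise there is a grid point with $\eta^\star/2\le\eta\le\eta^\star$, since consecutive grid values differ by a factor of 2. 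This gives $\tfrac{4B}{\eta}+\tfrac\eta2V\le\tfrac{8B}{\eta^\star}+\tfrac{\eta^\star}{2}V=3\sqrt{2BV}$. Matching the constant $2\sqrt2$ stated in the theorem then requires a slightly sharper choice, namely the grid point minimizing $\cT(\eta)$ rather than the nearest one, so again I would defer to \Cref{lemma:tuning-lemma}.
\end{enumerate}

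Combining the three cases with the lower-order terms $L(\abs{\cS}\epsilon+M)$ and $4LB$ gives the stated bound.
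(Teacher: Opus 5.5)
Your overall route is the paper's: the same split $R_T(\cmp_{1:T})\le R_T^{\cA_\eta}(\cmp_{1:T})+\sum_{\tilde\eta\ne\eta}R_T^{\cA_{\tilde\eta}}(\zeros)$, the same use of \Cref{thm:dynamic-base-simple} (including $R_T^{\cA_{\tilde\eta}}(\zeros)\le L\epsilon$), and the same final appeal to \Cref{lemma:tuning-lemma}. What is wrong is the handling of the edge cases, and in particular of the residual term the lemma leaves behind.

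\textbf{Case 2.} Your inequality is backwards. Here $\eta=\tfrac{1}{LT}>\eta^\star$, so $\tfrac{\eta}{2}V\ge\tfrac{\eta^\star}{2}V$. What does hold is $\tfrac{4B}{\eta}\le\tfrac{4B}{\eta^\star}=\sqrt{2BV}$. The term that actually needs controlling is
\[
\tfrac{\eta}{2}V=\tfrac{1}{2LT}\sumtT\norm{\gt}^2\norm{\cmp_t}\le\tfrac{LM}{2},
\]
using $\norm{\gt}\le L$ and $\norm{\cmp_t}\le M$. So this case is not absent: it occurs, e.g., for a static comparator with $\norm{\cmp}<\epsilon/(8T^2)$ and $\norm{\gt}=L$, where $B\ll M$, so your heuristic that $B$ is comparable to $M$ also fails. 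But the case is harmless, and $4BLT$ never has to be bounded.

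This one-line estimate is exactly how the paper closes the argument. \Cref{lemma:tuning-lemma} leaves the residuals $\tfrac{4B}{\eta_{\max}}=4LB$ and $\tfrac{\eta_{\min}}{2}V\le\tfrac{LM}{2}$. Together with $L(\abs{\cS}\epsilon+M)$, these fit inside $4L(\abs{\cS}\epsilon+M+B)$. Your proposal asserts that the lemma ``yields the form'' of the statement but never bounds the $\eta_{\min}V$ residual.

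\textbf{Case 1.} Bound $\tfrac{\eta_{\max}}{2}V\le\tfrac{\eta^\star}{2}V=\sqrt{2BV}$ rather than $\le 4LB$. Otherwise you end up with $8LB$. The extra $4LB$ cannot in general be absorbed elsewhere, since $V$ may be tiny and $B$ may greatly exceed $M$.

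\textbf{The leading constant.} Your worry is justified, but your proposed remedy does not work. Write $\tfrac{4B}{\eta}+\tfrac{\eta}{2}V=\sqrt{2BV}\,(r+1/r)$ with $r=\eta^\star/\eta$. With a factor-$2$ grid and interior $\eta^\star$, the best grid point only guarantees $\tfrac{3}{\sqrt2}\sqrt{2BV}=3\sqrt{BV}$ in the worst case ($r=\sqrt2$ for both neighbours), and this exceeds $2\sqrt{2BV}$. \Cref{lemma:tuning-lemma} with $b=2$ gives $(b+1)\sqrt{4B\cdot V/2}=3\sqrt{2BV}$, not $2\sqrt{2BV}$, so deferring to it does not sharpen the constant either. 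The paper's proof invokes the same lemma and writes $2\sqrt{2(\Phi_T+P_T^\Phi)\sumtT\norm{\gt}^2\norm{\cmp_t}}$, so the stated $2\sqrt2$ is a constant slip there as well. It is immaterial for the $\cO(\cdot)$ results built on this theorem.

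With the residual bound above, your argument proves the theorem with $3\sqrt{2BV}$ in place of $2\sqrt{2BV}$. That is precisely what the paper's argument actually establishes.
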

\begin{proof}
    Observe that for any $\eta_i\in\cS$, we have
    \begin{align*}
        R_T(\cmp_{1:T})
        &\le 
        \sumtT \inner{\gt,\wt-\cmp_t}
        =
        \sumtT \inner{\gt, \wt^{\eta_i}-\cmp_t} + \sum_{\eta_j\ne \eta_i}\sumtT \inner{\gt, \wt^{\eta_j}}\\
        &=
        R_T^{\cA_{\eta_i}}(\cmp_{1:T}) + \sum_{\eta_j\ne \eta_i}R_T^{\cA_j}(\zeros),
    \end{align*}
    where $\gt\in\partial\ell_t(\wt)$ for all $t$ and $R_T^{\cA_j}(\cmp_{1:T})=\sumtT \inner{\gt,\wt^{\eta_j}-\cmp_t}$ denotes the dynamic regret of $\cA_j$.
    Hence, applying \Cref{thm:dynamic-base-simple} and observing that $R_T^{\cA_j}(\zeros)\le L\epsilon$ for any $\eta_j$, we have
    \begin{align*}
        R_T(\cmp_{1:T})
        &\le 
        R_T^{\cA_i}(\cmp_{1:T}) + (\abs{\cS}-1)L\epsilon\\
        &\le 
        L(\abs{\cS}\epsilon  +M)+ \frac{8(\Phi_T+P_T^\Phi)}{2\eta_i} + \frac{\eta_i}{2}\sumtT \norm{\gt}^2\norm{\cmp_t},
    \end{align*}
    where we denote $\Phi_T=\Phi(\norm{\cmp_T},T/\epsilon)$ and $P_T^\Phi = P_T^\Phi\brac{\tfrac{4T^3}{\epsilon}}=\sum_{t=2}^T\Phi(\norm{\cmp_t-\cmp_\tmm},4T^3/\epsilon)$.
    Now applying \Cref{lemma:tuning-lemma} we have
    \begin{align*}
        R_T(\cmp_{1:T})
        &\le 
        L(\abs{\cS}\epsilon  +M)+ 2\sqrt{2(\Phi_T+P_T^\Phi)\sumtT \norm{\gt}^2\norm{\cmp_t}} + \frac{8(\Phi_T+P_T^\Phi)}{2\eta_{\max}}+\frac{\eta_{\min}}{2}\sumtT \norm{\gt}^2\norm{\cmp_t}\\
        &\le 
        L(\abs{\cS}\epsilon  +M)+ 2\sqrt{2(\Phi_T+P_T^\Phi)\sumtT \norm{\gt}^2\norm{\cmp_t}} + 4L(\Phi_T+P_T^\Phi)+LM\\
        &\le
        4L\brac{\abs{\cS}\epsilon  +M+\Phi_T+P_T^\Phi}+ 2\sqrt{2(\Phi_T+P_T^\Phi)\sumtT \norm{\gt}^2\norm{\cmp_t}} \qedhere
    \end{align*}
\end{proof}

\section{Proof of Theorem~\ref{th::lb_unit_ball}}\label{app::lb_unit_ball}

We recall the theorem before detailing its proof. 

\LBball*

\begin{proof} The proof follows the general outline of Theorem 24.2 of \cite{lattimore2020bandit}, which proves a $\Omega(d\sqrt{T})$ lower bound on the regret for a slightly different feedback model. In their setting, at time $t\geq 1$, after playing $x_t\in \bbB_d$ the learner receives \[\ell_t(x_t) = \inner{\theta, x_t}+\epsilon_t, \quad \text{where} \; \epsilon_t \sim \cN(0,1),\] 
and $\theta \in \Theta$ is a fixed parameter from some class of parameters $\Theta$. The proof uses that, up to horizon $T$, it is difficult for the learner to distinguish parameters $\theta$ from $\theta'$, if  $\Theta$ is a small hypercube centered in the origin. In the following, we keep a similar class of parameters $\Theta$, but introduce key changes in the arguments to tackle different feedback models and constraints on the losses.

\paragraph{Stochastic model} We fix a constant $\Delta= \frac{1}{8\sqrt{T}}$, and consider the class of parameters $\Theta = \{\pm \Delta\}^d$. By assumption, $T\geq 4d$, so $\norm{\theta}^2\leq \frac{1}{2}$ for any $\theta\in \Theta$. To satisfy the stochastic assumptions of the theorem, we assume that losses are generated as follows: 
\begin{enumerate}
    \item Before the interaction, the adversary chooses at random a parameter $\theta \in \Theta$.
    \item for each time step $t\geq 1$, the adversary samples $\ell_t=\theta+ \epsilon_t$, with $\epsilon_t\sim \cN\left(0, \frac{1}{2d}I_d\right)$, and the learner observes feedback $\inner{\ell_t, x_t}$.
\end{enumerate}
By construction, losses are sub-Gaussian and satisfy $\EE{\norm{\ell_t}^2}\leq  \norm{\theta}^2 + \EE{\norm{\epsilon_t}^2}\leq  1$ for all $t\geq 1$. 

Then, similarly to \cite{lattimore2020bandit}, for any $i\in[d]$ we define the stopping time
\[
\tau_i \coloneqq T \wedge \min\Bigl\{ t\ge 1 : \sum_{s=1}^t x_{si}^2 \ge \frac{T}{d}-1\Bigr\}.
\]
In their proof, the threshold $\frac{T}{d}$ intuitively represents the quantity of information necessary to confidently identify the sign of $\theta_i$ when $|\theta_i|\propto \sqrt{\frac{d}{T}}$. In our case, the same intuition holds with a gap proportional to $1/\sqrt{T}$ because the variance is smaller by a factor $d$, so we don't have to modify the definition of $\tau_i$ (we just added $-1$ to simplify computations). 

For any algorithm $\cA$ and $\theta \in \Theta$, we denote by $R_T(\cA, \theta)$ the regret of $\cA$ against the comparator $u_\theta=-\frac{\text{sgn}(\theta)}{\sqrt{d}}$. It holds that 
\begin{align}
R_T(\mathcal A,\theta)
&= \Delta \,\E_\theta\Biggl[\sum_{t=1}^T\sum_{i=1}^d
\Bigl(\frac{1}{\sqrt d}+x_{ti}\,\sgn(\theta_i)\Bigr)\Biggr] \nonumber \\
&\ge \frac{\Delta\sqrt d}{2}\,
\E_\theta\Biggl[\sum_{t=1}^T\sum_{i=1}^d
\Bigl(\frac{1}{\sqrt d}+x_{ti}\,\sgn(\theta_i)\Bigr)^2\Biggr] \nonumber \\
&\ge \frac{\Delta\sqrt d}{2}\sum_{i=1}^d
\E_\theta\Biggl[\sum_{t=1}^{\tau_i}
\Bigl(\frac{1}{\sqrt d}+x_{ti}\,\sgn(\theta_i)\Bigr)^2\Biggr], \label{eq::reg_lower_bound}
\end{align}
where the first inequality comes from the fact that for all steps $t\geq 1$,
\begin{align*}
    \sum_{i=1}^d \left(\frac{1}{\sqrt{d}}+ x_{ti}\,\sgn(\theta_i)\right)^2 &= 1+ \sum_{i=1}^d\frac{2}{\sqrt{d}} x_{ti}\,\sgn(\theta_i)+\norm{x_t}^2 \\
    & \leq 2+ \sum_{i=1}^d\frac{2}{\sqrt{d}} x_{ti}\,\sgn(\theta_i) \\
    & = \frac{2}{\sqrt{d}}\sum_{i=1}^d \left(\frac{1}{\sqrt{d}}+x_{t,i}\;\sgn(\theta_i)\right)\;. 
\end{align*}
Note that this inequality is an equality if $\norm{x_t}=1$.

Then, for any $i\in[d]$ and $\sigma\in\{\pm 1\}$ we define
\[
U_i(\sigma) \coloneqq \sum_{t=1}^{\tau_i}\Bigl(\frac{1}{\sqrt d}+\sigma \cdot x_{ti}\Bigr)^2,
\]
and we verify that
\begin{align*}
   U_i(\sigma) &\leq 2\sum_{t=1}^{\tau_i} \frac{1}{d} + 2 \sum_{t=1}^{\tau_i} x_{ti}^2 \\ 
& \leq 2\left(\frac{\tau_i}{d}+ \frac{T}{d}\right) \leq 4\frac{T}{d} \;.
\end{align*}

Let $\theta'\in \Theta$ be such that $\theta_j=\theta'_j$ for $j\neq i$
and $\theta'_i=-\theta_i$. Assume without loss of generality that $\theta_i>0$.
Let $\bbP$ and $\bbP'$ be the laws of $U_i(1)$ under the bandit/learner interaction
induced by $\theta$ and $\theta'$, respectively. Then, using Pinsker inequality we obtain that 
\begin{align}
\E_\theta[U_i(1)] & \ge \E_{\theta'}[U_i(1)] -\text{essup} \; U_i(1) \cdot \text{TV}(\bbP,\bbP') \label{eq::bound_tv} \\
&\ge \E_{\theta'}[U_i(1)] - \frac{4T}{d}\sqrt{\frac12\,\KL(\bbP,\bbP')} \label{eq::bound_KL} 
\end{align}
Then, we use that at each step $t$ the distribution of the observation under $\bbP$ follows a Gaussian distribution $\cN\left(\inner{\theta, x_t}, \frac{\norm{x_t}^2}{2d} \right)$, while it follows a Gaussian distribution $\cN\left(\inner{\theta', x_t}, \frac{\norm{x_t}^2}{2d} \right)$ under model $\bbP'$. Thus, using the chain rule for the relative entropy up to a stopping time, we have
\[\KL(\bbP, \bbP')= \E_\theta\left[\sum_{t=1}^{\tau_i}\frac{d}{\norm{x_t}^2} \inner{\theta-\theta', x_t}^2\right] = 4\Delta^2 d \cdot \E_{\theta}\left[\sum_{t=1}^{\tau_i}\frac{x_{ti}^2}{\norm{x_t}^2}\right]\;.  \]
We can identify two differences compared to the analogous proof step for Theorem 24.2 of \cite{lattimore2020bandit} (Eq. 24.4). First, we can observe a supplementary $d$ factor due to the $d^{-1}$ term in the noise variance, that will cause the scaling $\sqrt{dT}$ instead of $d\sqrt{T}$ in the final result. Secondly, the norm $\norm{x_t}^2$ prevents us for showing that the expectation term scales in $T/d$ by using the definition of $\tau_i$ directly. However, it is clear that in this bounded setting playing an action with a small norm is sub-optimal. %

We thus introduce $S_{\tau_i}=\sum_{i=1}^{\tau_i} \ind{\norm{x_t}^2 \leq 2/3}$. On each round where $\norm{x_t}^2 \leq \frac{2}{3}$, the instantaneous regret against the comparator $u_\theta$ must be at least $1/6$: the comparator gets a reward of $1$, while the learner gets a reward upper bounded by $\sqrt{2/3}\leq 0.82 \leq 1-1/6$.
So, it must hold that $ R_T(\cA,\theta) \geq \frac{1}{6}\bbE_{\theta}[S_{\tau_i}]$. Meanwhile, using the definition of $S_{\tau_i}$ we can obtain that 
\[\KL(\bbP, \bbP')\leq  4\Delta^2 d \cdot \left(\frac{3}{2}\EE{\sum_{t=1}^{\tau_i}x_{ti}^2} + \EE{S_{\tau_i}} \right) \leq 4\Delta^2 d \cdot \left(\frac{3}{2}\frac{T}{d} + \E_\theta[S_{\tau_i}] \right)\;. \]
Combining these two results, we can use that either $\E_\theta[S_{\tau_i}] \geq \frac{T}{2d}$, in which case it holds that $R(T, \theta)\geq \frac{T}{12d}$, or it must hold that 
\[\KL(\bbP, \bbP') \leq  8\Delta^2 d \frac{T}{d} \;.\]
The first case yields the term $\frac{T}{12d}$ in the theorem, corresponding to an algorithm that would achieve linear regret because it can consistently play actions with too small of a norm under some instances. For the remainder of the proof, we focus on the second case.
Plugging the above result in Eq.~\eqref{eq::bound_KL}, we obtain that 
\begin{align*}
&\E_\theta[U_i(1)] \ge \E_{\theta'}[U_i(1)] - 8\Delta\frac{T}{d}\sqrt{T}\\
\end{align*}
It follows that
\begin{align}
\E_\theta[U_i(1)] + \E_{\theta'}[U_i(-1)] 
&\ge \E_{\theta'}[U_i(1)+U_i(-1)] -  8\Delta\frac{T}{d}\sqrt{T} \nonumber\\
&= 2\,\E_{\theta'}\Biggl[\frac{\tau_i}{d} + \sum_{t=1}^{\tau_i}x_{ti}^2\Biggr]
 - 8\Delta\frac{T}{d}\sqrt{T} \nonumber \\
&\ge 2\left(\frac{T}{d}-1\right) -  8\Delta\frac{T}{d}\sqrt{T} = \frac{T}{d}-2 , \label{eq::lb_Ui}
\end{align}
since $\Delta=\frac{1}{8\sqrt{T}}$ and 
\[
U_i(1)+U_i(-1)
= \sum_{t=1}^{\tau_i}\Bigl[\Bigl(\frac1{\sqrt d}+x_{ti}\Bigr)^2
+ \Bigl(\frac1{\sqrt d}+x_{ti}\Bigr)^2\Bigr]
= 2\sum_{t=1}^{\tau_i}\Bigl(\frac1d + x_{ti}^2\Bigr),
\]
and that $\frac{\tau_i}{d}+\sum_{t=1}^{\tau_i}x_{ti}^2\ge \frac{T}{d}$ by the definition of $\tau_i$. The proof is completed using the randomisation hammer,
\begin{align*}
\sum_{\theta\in\{\pm\Delta\}^d} R_T(\mathcal A,\theta)
&\ge \frac{\Delta\sqrt d}{2}\sum_{i=1}^d\sum_{\theta\in\{\pm\Delta\}^d}
\E_\theta\!\left[U_i(\sgn(\theta_i))\right] \\
&= \frac{\Delta\sqrt d}{2}\sum_{i=1}^d
\sum_{\theta_{-i}\in\{\pm\Delta\}^{d-1}}
\sum_{\theta_i\in\{\pm\Delta\}}
\E_\theta\!\left[U_i(\sgn(\theta_i))\right] \\
&\ge \frac{\Delta\sqrt d}{2}\sum_{i=1}^d
\sum_{\theta_{-i}\in\{\pm\Delta\}^{d-1}} \left(\frac{T}{d}-2\right)
= 2^{d-2}\,(T-2d)\Delta\sqrt d \;.
\end{align*}
Hence, assuming that $T\geq 4d$, there exists $\theta\in\{\pm\Delta\}^d$ such that
\[
R_T(\mathcal A,\theta) \ge \frac{T}{2}\cdot \frac{\Delta\sqrt d}{4}
= \frac{\sqrt{dT}}{64}.
\]
This gives the second lower bound on the constant $C_{d, T}$ in the statement of the theorem. 

\paragraph{Adversarial environment with bounded losses} The proof for this case is largely adapted from the previous proof, that we refer to as the ``stochastic case'' in the following for simplicity, although the proof still builds on stochastically generated losses. We still assume that 
\begin{enumerate}
    \item The adversary selects a parameter $\theta \in \Theta$ before the interaction.
    \item At step $t$, it draws a loss $\widetilde \ell_t = \theta + \epsilon_t$, where $\epsilon_t \sim \cN(0, \sigma_d^2 I_d)$, for some $\sigma_d>0$.
\end{enumerate}
but we make two changes. First we change the noise level $\sigma_d^2$ from $\frac{1}{2d}$ to something smaller. Secondly, we make the adversary select a \textit{clipped} version of this random loss $\ell_t=\widetilde \ell_t \ind{\norm{\widetilde \ell_t} \leq 1}$. The intuition is that clipping will enforce a bounded norm almost surely. 

However, a core ingredient of the proof we will be to calibrate the noise level $\sigma_d^2$ in order to make clipping very unlikely, so that the statistical properties of this model will be very close to the Gaussian stochastic model that we already studied. Furthermore, we will use that any rescaling of the variance propagates easily in the previous proof, as it only appears in the KL term induced after using Pinsker inequality, and thus propagates naturally to the choice of the gap $\Delta$. 

To start the proof, we first show that we can express the regret $\tilde R_T(\cA, \theta)$ on the clipped environment, against the comparator $u_\theta$, as a function of the regret $R_T(\cA,\theta)$ as defined in the unclipped environment. %
Assume this time that $\norm{\theta}^2\leq \frac{1}{4}$. Under this condition, we can write that 
\begin{align*}
    \tilde R_T(\cA, \theta) &= \bbE_\theta\left[\sum_{t=1}^T \inner{x_t-u_\theta, \tilde \ell_t \ind{\norm{\tilde \ell_t}\leq 1}} \right] \\ 
    & = \bbE_\theta\left[\sum_{t=1}^T \inner{x_t-u_\theta, \tilde \ell_t} \right] +  \bbE_\theta\left[\sum_{t=1}^T \inner{x_t-u_\theta, \tilde \ell_t \ind{\norm{\tilde \ell_t}\geq 1}} \right] \\
    & = R_T(\cA,\theta) +  \bbE_\theta\left[\sum_{t=1}^T \inner{x_t-u_\theta, \tilde \ell_t \ind{\norm{\tilde \ell_t}\geq 1}} \right] \\
    & \geq R_T(\cA,\theta) -  4\bbE_\theta\left[\sum_{t=1}^T \norm{\tilde \ell_t}^2\ind{\norm{\tilde \ell_t}\geq 1} \right] \\
    & \geq R_T(\cA,\theta) -  8\bbE_\theta\left[\sum_{t=1}^T (\norm{\theta}^2 + \norm{\epsilon_t}^2)\ind{\norm{\tilde \ell_t}\geq 1} \right] \\
    & \geq R_T(\cA,\theta) -  16T\cdot \bbE_\theta\left[\norm{\epsilon_1}^2\ind{\norm{\epsilon_1}^2\geq \frac{1}{4}} \right],
\end{align*}
where in the last line we used that all terms of the sum have the same expectation, and that $\ind{\norm{\tilde \ell_1}\geq 1}\leq \ind{\norm{\epsilon_1}^2\geq \frac{1}{4}}$ and that under this event $\norm{\theta}\leq \norm{\epsilon_1}$. We leave this term for now, and focus on lower bounding $R_T(\cA, \theta)$ by using the proof outline introduced for the first lower bound we proved (in the stochastic model). 

Then, we can again lower bound the term $R_T(\cA, \theta)$ with Eq.~\eqref{eq::reg_lower_bound} and use the same terms $U_i(\sigma)$. However, some care is needed to adapt Equation~\eqref{eq::bound_tv}. 

We introduce the notation $\E_{\tilde \theta}[U_i(\sigma)]$ to denote the expectation of $U_i(\sigma)$ if the learner was provided the \textit{untruncated} losses $(\tilde \ell_t)_{t\geq 1}$ at each time step under the environment defined by $\theta$, and similarly for $\E_{\tilde  \theta'}[U_i(\sigma)]$.

Using these definitions, our goal is to obtain an inequality involving $\bbE_\theta[U_i(1)]$ and $\bbE_{\theta'}[U_i(1)]$ is a similar way as Equation~\eqref{eq::bound_KL}. However, a subtlety is that algorithm $\cA$ may not be able to handle unbounded values for $x_t^\top \tilde \ell_t$. Thus, we need to further define an extension of algorithm $\cA$, that we denote by $\overline \cA$. 

We define $\overline \cA$ as follows: whenever $x_t^\top\tilde \ell_t>1$, the algorithm skips its update and defines 
 $x_{t+1}=x_t$, and otherwise uses the same update rule as $\cA$. We then denote by $\cG$ the event that no loss is clipped during the interaction: $\cG= \{\forall t\in [T]: \ell_t = \tilde \ell_t\}$. Under $\cG$, it further holds that the outputs of algorithms $\cA$ and $\overline \cA$ match, so $\E_{\theta, \cA}[U_i(\sigma)\ind{\cG}]= \E_{\theta, \overline \cA}[U_i(\sigma)\ind{\cG}]$, and thus 
\[\E_{\theta, \cA}[U_i(\sigma)]\geq  \E_{\theta, \overline \cA}[U_i(\sigma)\ind{\cG}] \;. \]
Then, using that $\E_{\theta, \overline \cA}[U_i(\sigma)\ind{\cG}]= \E_{\tilde\theta, \overline \cA}[U_i(\sigma)\ind{\cG}]$ we can further obtain that 
\begin{align*}
   \E_{\theta, \cA}[U_i(\sigma)] &\geq \E_{\tilde\theta, \overline \cA}[U_i(\sigma)\ind{\cG}] \\
   &=\E_{\tilde \theta', \overline \cA}[U_i(\sigma)\ind{\cG}] + \E_{\tilde \theta, \overline \cA}[U_i(\sigma)\ind{\cG}] - \E_{\tilde \theta', \overline \cA}[U_i(\sigma)\ind{\cG}] \\ 
& \geq \E_{\tilde \theta', \overline \cA}[U_i(\sigma)\ind{\cG}]  + \underbrace{\E_{\tilde \theta, \overline \cA}[U_i(\sigma)] - \E_{\tilde \theta', \overline \cA}[U_i(\sigma)]}_{-V} - \E_{\tilde \theta', \overline \cA}[U_i(\sigma)\ind{\overline \cG}]\\
   & = \E_{\theta', \cA}[U_i(\sigma)\ind{\cG}] -V - \frac{2T}{d} \bbP(\cG^c) \\
   & \geq  \E_{\theta', \cA}[U_i(\sigma)] -V - \frac{4T}{d} \bbP(\cG^c)\;,
\end{align*}
where we used that $U_i(\sigma)$ is non-negative and bounded by $\frac{T}{d}$. We now remark that the term $V$ can be upper bounded by following the exact same steps as in the unclipped Gaussian environment, since algorithm $\overline \cA$ can process unbounded feedback and receives losses of the form $\inner{x_t, \theta +\epsilon_t}$, with $\epsilon_t\sim\cN(0, \sigma_d^2 I_d)$. The only difference is that the scaling factor $\sigma_d^2$ will replace $\frac{1}{2d}$. 

Furthermore, while in previous proof $\Delta$ was tuned to make this term smaller than $\frac{T}{d}$, here we can choose it to ensure that $V\leq \frac{T}{2d}$, and also choose $\sigma_d^2$ so that $\frac{4T}{d}\bbP(\cG^c) \leq \frac{T}{2d}$ too, so we can exactly recover Equation~\eqref{eq::lb_Ui}. It is clear that, assuming that the later bound holds, the desired result can be obtained by simply multiplying $\Delta\approx 1/\sqrt{T}$ by a factor of order $\sqrt{2d\sigma_d^2}$.

It remains to calibrate the noise level. By independence between time steps and Gaussianity of the noise, we have that 
\begin{align*}
   \bbP(\cG^c) & \leq T \bbP(\norm{\tilde \ell_1} \geq 1)  \\
   &=  T \bbP(\norm{\theta+\epsilon_1}^2 \geq 1) \\
    &\leq   T \bbP(2\norm{\theta}^2 + 2\norm{\epsilon_1}^2 \geq 1) \\
    &\leq   T \bbP\left(\norm{\epsilon_1}^2 \geq \frac{1}{8}\right), \\
\end{align*} 
if we assume that $\norm{\theta}^2\leq \frac{1}{4}$. The last arguments of the proof rely on the Laurent-Massart inequality for chi-squared random variables \citet[Lemma 1]{LaurentMassart2000}. For all $t\ge 0$, it holds that
\begin{equation}\label{eq:chisquare}
\bbP\!\left(\|\eps_1\|^2 \ge \sigma_d^2\big(d + 2\sqrt{dt} + 2t\big)\right) \le e^{-t}.
\end{equation}
To convert this bound for a fixed threshold $x>0$, we define $a:=x/\sigma_d^2$, and remark that if $a\ge d$ we can set
\[
t_x \;:=\;\left(\frac{\sqrt{2a-d}-\sqrt d}{2}\right)^2
\quad, \quad\text{so that }a=d+2\sqrt{d t_x}+2t_x \;.
\]
Then, by \eqref{eq:chisquare},
\begin{equation}\label{eq:tail-x}
\bbP(\|\eps\|^2 \ge x)\;\le\;\exp(-t_x)
\;=\;\exp\!\left(-\left(\frac{\sqrt{2\frac{x}{\sigma_d^2}-d}-\sqrt d}{2}\right)^2\right),
\qquad (\text{for } x\ge \sigma^2 d).
\end{equation}
We use this bound to first identify a variance level $\sigma_d^2$ guaranteeing $\frac{4T}{d}\bbP(\cG^c)\leq \frac{T}{2d}$, so that this term fits easily in the proof framework of the stochastic case. To ensure this condition it suffices that $\bbP\left(\|\eps\|^2 \ge \frac{1}{8}\right) \leq \frac{1}{8T}$, which by Eq.~\eqref{eq:tail-x} can be achieved by choosing $\sigma_d^2$ as follows, 
\begin{equation}\label{eq::tuning_sigma}\exp\!\left(-\left(\frac{\sqrt{\frac{1}{4\sigma_d^2}-d}-\sqrt d}{2}\right)^2\right)=  \frac{1}{8T} \quad \Longleftrightarrow \quad  \sigma_d^2 = \frac{1}{4}\cdot \frac{1}{d+(\sqrt{d}+2\sqrt{\log(8T)})^2} \;, \end{equation}
which is of order $\frac{1}{d\vee \log(T)}$, yielding the rescaling factor introduced in the theorem.

Thus, to lower bound $\widetilde R_T(\cA, \theta)$ it only remains to upper bound the term 
\[E \coloneqq 16T\cdot \bbE_\theta\left[\norm{\epsilon_1}^2\ind{\norm{\epsilon_1}^2\geq \frac{1}{4}} \right] \;. \]
We first rewrite the expectation as 
\begin{align*}
\EE{\norm{\epsilon_1}^2\ind{\norm{\epsilon_1}^2\geq \frac{1}{4}}}
&= \frac{1}{4}\,\bbP\left(\norm{\epsilon_1}^2\ge \frac{1}{4}\right) + \int_{\frac{1}{4}}^\infty \bbP(\norm{\epsilon_1}^2\ge u)\,\mathrm{d}u  \\
& \leq  \frac{1}{32T} + \int_{\frac{1}{4}}^\infty \bbP(\norm{\epsilon_1}^2\ge u)\,\mathrm{d}u,
\end{align*}
where we used that $\bbP\left(\norm{\epsilon_1}^2\ge \frac{1}{4}\right)\leq \bbP\left(\norm{\epsilon_1}^2\ge \frac{1}{8}\right) \leq \frac{1}{8T}$, by our design. Furthermore, the fact that we could already apply Eq.~\eqref{eq:chisquare} with threshold $1/8$ guarantees that we can also apply it for any threshold $u$ larger than $1/4$, and the resulting concentration bound will be smaller than $\frac{1}{8T}$. Hence, for any threshold $u_0\geq \frac{1}{4}$, we can upper bound the remaining integral as follows, 
\begin{align*}
\int_{\frac{1}{4}}^\infty \bbP(\norm{\eps_1}^2\ge u)\,\mathrm{d}u &\leq \frac{u_0}{8T} + \int_{u_0}^\infty e^{-t_u}\,\mathrm{d}u  \\ 
&\le  \frac{u_0}{8T} + \int_{u_0}^\infty e^{- \frac{1}{4} \cdot \left(\sqrt{\frac{2u}{\sigma_d^2}-d}-\sqrt d\right)^2}\,\mathrm{d}u.
\end{align*}
We then choose $u_0$ in order to simplify the integral computation. More explicitly, we choose $u_0$ to satisfy 
\[\sqrt{\frac{2u}{\sigma_d^2}-d}-\sqrt d \geq \sqrt{\frac{u}{\sigma_d^2}}, \; \text{for } u \geq u_0 \;. \]
We then solve, for $d>0$ and $y\ge d/2$,
\[
\sqrt{2y-d}-\sqrt d \ge \sqrt y
\quad\Longleftrightarrow\quad
\sqrt{2y-d}\ge \sqrt y+\sqrt d.
\]
Squaring (both sides are nonnegative on the domain) gives
\[
2y-d \ge y+d+2\sqrt{yd}
\quad\Longleftrightarrow\quad
y-2d \ge 2\sqrt{yd}.
\]
In particular this forces $y\ge 2d$. Squaring again yields
\[
(y-2d)^2 \ge 4yd
\quad\Longleftrightarrow\quad
y^2-8dy+4d^2 \ge 0.
\]
Solving the quadratic equation $y^2-8dy+4d^2=0$ gives the roots
\[
y = \frac{8d\pm \sqrt{64d^2-16d^2}}{2}
= d\,(4\pm 2\sqrt3).
\]
Keeping the positive solution, we get
\[y \ge d(4+2\sqrt3) \Longleftrightarrow u \geq (4+2\sqrt{3})\cdot d\sigma_d^2   .
\]
Using these results, we can choose $u_0=8d\sigma_d^2 \vee 4\sigma_d^2 \log(64T)$ and obtain that 
\begin{align*}
    E & \leq \frac{1}{2} + 2u_0 + 16T\cdot\int_{u_0}^{+\infty} e^{-\frac{u}{4\sigma_d^2}} \mathrm{du} \\
    & = \frac{1}{2} + 2u_0 + 64 \sigma_d^2 T e^{-\frac{u_0}{4\sigma_d^2}} \\
    & \leq \frac{1}{2} + 16\sigma_d^2 \left\{d \vee \frac{1}{2}\log(64T)\right\}  + \sigma_d^2  \\ 
    &    \leq \frac{1}{2} + 4\frac{ \left\{d \vee \log(8T)\right\} }{d+4\log(8T)} + \sigma_d^2 \\
    & \leq 5 \;,
\end{align*}
where we used in the final step that $\sigma_d^2\leq \frac{1}{2}$. Hence, we proved that the tuning of $\sigma_d^2$ from Equation~\eqref{eq::tuning_sigma} is sufficient to ensure that the bias term $E$ is upper bounded by a constant. This concludes the proof. 
\end{proof}

\begin{remark}
One might think that it could be possible to build hard instances based on simpler distributions, e.g. using Rademacher variables. However, the problem there is that simple constructions do not obtain the right properties. Everything is essentially in the balance between the maximum per-round regret/gain of the adversary and the difficulty to distinguish the instances (the KL term above). 

For instance, if the adversary (1) sample a coordinate $I_t$ uniformly at random, and (2) returns a loss $\sigma e_{I_t}$ where $\sigma$ is a Rademacher variable with mean $\theta_i$ then: 
\begin{itemize}
    \item The KL term becomes $\cO(\Delta^2\tau_i/d)=\cO(\Delta^2T/d)$, while we had ($\Delta^2 d/T \sum_{t=1}^{\tau_i}x_{ti}^2\leq \Delta^2T$ above).
    \item But the gain of the adversary is defined by $d$ (one coordinate showed at a time, but the optimal comparator still plays $1/\sqrt{d}$ weight on each!).
\end{itemize}
So, overall balancing the two makes the $d$ cancel and we even just get $\sqrt{T}$. The same holds if instead of selecting a coordinate the adversary would just rescale the Rademacher variables by $1/\sqrt{d}$ because now the expected regret becomes multiplied by $1/\sqrt{d}$ (Eq.~\eqref{eq::reg_lower_bound} has no more $\sqrt{d}$). Then, the KL becomes $Td\Delta^2$ essentially, so it's clear we get an even worse tradeoff. 

And finally, Gaussian noise is the simplest distribution that allows us to use that $\sum_{t=1}^{\tau_i}x_{ti}^2\leq \frac{T}{d}$
\end{remark}

\section{Regret of OSMD against a norm-adaptive adversary}\label{app::posthoc_osmd}

In this section we develop the computations leading to our claim from Section~\ref{sec:static-pfmd} that OSMD only yields an $\cO((dT)^{2/3})$ direction regret when used as a direction learner, under a norm-adaptive adversary. We can start the analysis from the first bound of their Theorem 6, which states that the regret of OSMD is upper bounded by 
\[R_T \leq \gamma T + \frac{\log(\gamma^{-1})}{\eta}+ \eta \sum_{t=1}^T \EE{(1-\norm{z_t}) \norm{\widetilde z_t}^2 },  \]
where $\gamma$ and $\eta$ are parameters chosen by the learner. When optimized, they yield $R_T\leq 3\sqrt{dT\log(T)}$ in the $\cF_0$-measurable regime. 

However, in the norm-adaptive case the norm $\norm{u}$ must go inside the last expectation, giving a term $\eta \sum_{t=1}^T \EE{\norm{u}(1-\norm{z_t}) \norm{\widetilde z_t}^2 }$. This breaks the upper bound presented in the paper, because of the potential correlation between $\norm{u}$ and each of the realizations $\widetilde z_t$. Because of this, we can only use the crude bound
\[\sum_{t=1}^T \EE{\norm{u}(1-\norm{z_t}) \norm{\widetilde z_t}^2 } \leq  \EE{\norm{u}\sum_{t=1}^T\frac{d^2 \norm{\ell_t}^2}{1-\norm{x_t}}} \leq \frac{d^2T}{\gamma} \EE{\norm{\cmp}} \;,  \]
while the same term is upper bounded by $dT\norm{u}$ in the $\cF_0$-measurable case. In this case, choosing $\eta=\left(\frac{\log(T)}{dT}\right)^{\frac{2}{3}}$ and $\gamma=d\sqrt{\eta}$ give a regret bound of order $(dT)^{\frac{2}{3}}(\log(T))^{1/3}\EE{\norm{u}}$, which shows the degradation of the guarantees of OSMD as a direction learner, in the norm-adaptive setting. As a final remark, we highlight that this claim is based on plugging a conservative bound in the proof of \cite{bubeckCK12osmd}, which doesn't prove that this result can't be improved with a more elaborate decomposition.


\section{Fenchel Conjugate Characterization of Comparator-Adaptive Bounds}\label{app:fenchel}

The connection between Fenchel conjugates and regret bounds in online learning is well-established; see, e.g., \citet{orabona2019modern} for a textbook treatment and \citet{cutkosky2018black,jacobsen2022parameter,zhang2022pde} for applications to parameter-free and comparator-adaptive algorithms.

Recall that the Fenchel conjugate of a function $f: \R^d \to \R$ is defined as
\[
f^*(y) = \sup_{x \in \R^d} \left\{ \inner{x, y} - f(x) \right\}.
\]

Consider the regret defined as
\[
R_T(u) = \sum_{t=1}^T \inner{\ell_t,w_t-u }= \sum_{t=1}^T \inner{\ell_t^\top, w_t} - \inner{L_T,u},
\]
where $L_T = \sum_{t=1}^T \ell_t$ denotes the cumulative loss vector.

Suppose we want to establish a comparator-adaptive bound of the form $R_T(u) \leq B_T(u)$ for all $u$, where $B_T: \R^d \to \R_+$ is some bound function (e.g., $B_T(u) = \cO(\norm{u}\sqrt{T \log(\norm{u} T)}$)).

The condition ``$R_T(u) \leq B_T(u)$ for all $u$'' can be rewritten as:
\begin{align*}
&\forall u: \quad \sum_{t=1}^T \inner{\ell_t,w_t} - \inner{L_T,u} \leq B_T(u) \\
\iff \quad &\sum_{t=1}^T \inner{\ell_t,w_t} \leq \inf_u \left\{ \inner{L_T,u} + B_T(u) \right\} \\
\iff \quad &\sum_{t=1}^T \inner{\ell_t,w_t} \leq -\sup_u \left\{ \inner{u, -L_T} - B_T(u) \right\} \\
\iff \quad &\sum_{t=1}^T \inner{\ell_t,w_t} \leq -B_T^*(-L_T).
\end{align*}
Thus, the comparator-adaptive regret bound is equivalent to
\[
\boxed{\sum_{t=1}^T \inner{\ell_t,w_t} \leq -B_T^*\left( -\sum_{t=1}^T \ell_t \right).}
\]

Hence, the natural worst-case comparator in the unconstrained setting is $\cmp\in\partial B_T^*(-\sum_t\ell_t)$, where $B_T^*$ is the Fenchel conjugate of $B_T$. If we for instance assume that the regret bound $B_T$ admits the form  $B_T(u)=G\norm{\cmp}\sqrt{T\Log{\norm{\cmp}\sqrt{T}/\epsilon+1}}$ as in the unconstrained OLO setting, this translates into a worst-case comparator having norm
\[\norm{\cmp}\propto \epsilon\Exp{\frac{\norm{\sumtT \ell_t}^2}{G^2T}},\]
see for instance \citet[Section 6.1]{mcmahan2014unconstrained}.

\section{Supporting Lemmas}

For completeness, this section collects various well-known lemmas, borrowed results, or otherwise tedius calculations we do not wish to repeat.

The following lemma is standard and included for completeness.
\begin{restatable}{lemma}{PBound}\label{lemma:p-bound}
  Let $(\alpha_{t})_{t}$ be an arbitrary sequence of non-negative numbers and let $p\ge 1$.
  Then
  \begin{align*}
    \sumtT \frac{\alpha_{t}}{\brac{\sum_{s=1}^{t}\alpha_{s}}^{1-1/p}}\le p\brac{\sumtT \alpha_{t}}^{1/p}
  \end{align*}
\end{restatable}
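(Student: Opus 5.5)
The plan is to compare each summand with an integral, using that the partial sums are monotone. Write $S_t=\sum_{s=1}^t\alpha_s$ with $S_0=0$, so that $\alpha_t=S_t-S_{t-1}$ and $(S_t)_t$ is non-decreasing. Terms with $\alpha_t=0$ contribute nothing; we adopt the convention $0/0=0$, so we may restrict attention to indices with $\alpha_t>0$, for which $S_t>0$. When $p=1$ the exponent $1-1/p$ vanishes and the claim is the identity $\sum_t\alpha_t=S_T$. We therefore assume $p>1$.

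The key step is the pointwise integral comparison
\begin{align*}
\frac{S_t-S_{t-1}}{S_t^{1-1/p}}\le\int_{S_{t-1}}^{S_t}x^{-(1-1/p)}\,dx .
\end{align*}
This holds because $x\mapsto x^{-(1-1/p)}$ is non-increasing on $(0,\infty)$, so on the interval $[S_{t-1},S_t]$ the integrand is everywhere at least $S_t^{-(1-1/p)}$. The integrand is integrable at $0$ since $1-1/p<1$, which covers the first index with $\alpha_t>0$, where $S_{t-1}=0$.

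Summing over $t$, the integrals telescope:
\begin{align*}
\sum_{t=1}^T\frac{\alpha_t}{S_t^{1-1/p}}\le\int_0^{S_T}x^{1/p-1}\,dx=p\,S_T^{1/p}.
\end{align*}
This is exactly the claimed bound. There is no real obstacle here; the only care needed is the handling of zero terms and the $p=1$ edge case, both dealt with above.
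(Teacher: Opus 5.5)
Your proof is correct and is essentially the paper's argument: the paper bounds each term using concavity of $x\mapsto x^{1/p}$, namely $S_t^{1/p}-S_{t-1}^{1/p}\ge \alpha_t/(pS_t^{1-1/p})$, and then telescopes. This is your integral comparison, since $\int_{S_{t-1}}^{S_t}x^{1/p-1}\,dx=p\,(S_t^{1/p}-S_{t-1}^{1/p})$. Your explicit treatment of the $0/0$ terms and the $p=1$ case is a small addition that the paper leaves implicit.
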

\begin{proof}
  Let $S_{t}=\sum_{s=1}^{t}\alpha_{s}$,
  and observe that by 
  concavity of $x\mapsto x^{1/p}$ for any $p\ge 1$, we have
  \begin{align*}
    S_{t}^{1/p}-S_{\tmm}^{1/p}\ge \frac{S_{t}-S_{\tmm}}{pS_{t}^{1-1/p}} = \frac{\alpha_{t}}{pS_{t}^{1-1/p}}=\frac{\alpha_t}{p\brac{\sum_{s=1}^t\alpha_s}^{1-1/p}}
  \end{align*}
  Hence summing over $t$ yields
  \begin{align*}
    \sumtT \frac{\alpha_{t}}{\brac{\sum_{s=1}^{t}\alpha_{s}}^{1-1/p}}\le p\brac{\sumtT S_{t}^{1/p}-S_{\tmm}^{1/p}}=pS_{T}^{1/p} = p\brac{\sumtT \alpha_{t}}^{1/p}
  \end{align*}
\end{proof}
We also use the following standard integral bound
\begin{restatable}{lemma}{LogBound}\label{lemma:log-bound}
  Let $(\alpha_{t})_{t}$ be an arbitrary sequence of non-negative
  numbers. Then
  \begin{align*}
    \sumtT \frac{\alpha_{t}}{\alpha_0+\sum_{s=1}^{t}\alpha_{s}}\le \Log{1+\frac{\sumtT \alpha_{t}}{\alpha_{0}}}.
  \end{align*}
\end{restatable}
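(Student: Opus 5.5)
We compare the sum to an integral, the standard way to prove bounds of this kind. Write $S_t=\alpha_0+\sum_{s=1}^t\alpha_s$, so that $S_0=\alpha_0$ and $S_t-S_{t-1}=\alpha_t$. We assume $\alpha_0>0$, which is implicit in the statement since $\alpha_0$ appears in a denominator. Each summand is then $\frac{S_t-S_{t-1}}{S_t}$.

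The key step is the elementary inequality
\[
\frac{S_t-S_{t-1}}{S_t}\le \int_{S_{t-1}}^{S_t}\frac{dx}{x}=\log\frac{S_t}{S_{t-1}}.
\]
It holds because $x\mapsto 1/x$ is decreasing on $[S_{t-1},S_t]$, so $1/x\ge 1/S_t$ there. Equivalently, it is the inequality $1-1/y\le\log y$ with $y=S_t/S_{t-1}\ge1$. This mirrors the concavity argument used in \Cref{lemma:p-bound}, now applied with $\log$ in place of $x^{1/p}$.

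Summing over $t=1,\ldots,T$ telescopes the logarithms:
\[
\sum_{t=1}^T\log\frac{S_t}{S_{t-1}}=\log\frac{S_T}{S_0}=\log\Bigl(1+\frac{\sum_{t=1}^T\alpha_t}{\alpha_0}\Bigr),
\]
which is exactly the claimed bound.

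There is no real obstacle. The only care needed is to note that every $S_t\ge\alpha_0>0$, so the ratios and logarithms are well defined. Terms with $\alpha_t=0$ contribute zero to both sides, so they need no separate treatment.
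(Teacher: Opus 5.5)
Your proof is correct and follows the same route as the paper, which bounds the sum by $\int_{\alpha_0}^{\alpha_0+\sum_{t=1}^T\alpha_t}\frac{dx}{x}$ via a cited standard integral comparison (Orabona, Lemma 4.13). You make that comparison explicit through the termwise bound $\frac{S_t-S_{t-1}}{S_t}\le\log\frac{S_t}{S_{t-1}}$ and the telescoping, and you correctly state the implicit requirement $\alpha_0>0$.
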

\begin{proof}
  We have via a standard integral bound
  (see, e.g., \citet[Lemma 4.13]{orabona2019modern})
  \begin{align*}
    \sumtT \frac{\alpha_{t}}{\alpha_0+\sum_{s=1}^{t}\alpha_{s}}&\le \int_{\alpha_{0}}^{\alpha_0+\sumtT \alpha_{t}}\frac{1}{t}dt = \Log{x}\Big|_{x=\alpha_{0}}^{\alpha_0+\sumtT \alpha_{t}} \\
    &= \Log{\alpha_0+\sumtT \alpha_{t}}-\Log{\alpha_{0}}=\Log{1+\frac{\sumtT \alpha_{t}}{\alpha_{0}}}.
  \end{align*}
\end{proof}

  The following tuning lemma follows by observing that 
  expressions of the form $P/\eta + \eta V$ are minimized at $\eta^*=\sqrt{P/V}$, and then applying simple case work to cover the edge cases $\eta^*$
  is outside of the range of candidate step-sizes.
\begin{restatable}{lemma}{TuningLemma}\label{lemma:tuning-lemma}
  Let $b>1$, $0<\eta_{\min}\le \eta_{\max}$ and let
  $\cS = \Set{\eta_{i}=\eta_{\min}b^{i}\minOp \eta_{\max} : i=0,1,\ldots}$.
  Then for any $P, V\in\R_{\ge 0}$,
  there is an $\eta\in\cS$ such that
  \begin{align*}
    R(\eta):=\frac{P}{\eta}+ \eta V
    \le (b+1)\sqrt{PV} + \frac{P}{\eta_{\max}}+\eta_{\min}V
  \end{align*}
\end{restatable}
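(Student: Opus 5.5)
The plan is a case analysis on where the unconstrained minimizer $\eta^*=\sqrt{P/V}$ of $R(\eta)=P/\eta+\eta V$ falls relative to the grid. We treat degenerate cases first. If $V=0$, take $\eta=\eta_{\max}$, which belongs to $\cS$ once $i$ is large enough that $\eta_{\min}b^i\ge\eta_{\max}$; then $R(\eta)=P/\eta_{\max}$. If $P=0$, take $\eta=\eta_{\min}\in\cS$ (the $i=0$ element), giving $R=\eta_{\min}V$. From now on assume $P,V>0$, so $\eta^*$ is well defined, and use that $R(\eta^*)=2\sqrt{PV}$.

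\textbf{Case 1: $\eta^*\le\eta_{\min}$.} Take $\eta=\eta_{\min}$. The bound $P/\eta_{\min}\le P/\eta^*=\sqrt{PV}$ gives $R(\eta_{\min})\le\sqrt{PV}+\eta_{\min}V$, which is within the claim.

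\textbf{Case 2: $\eta^*\ge\eta_{\max}$.} Take $\eta=\eta_{\max}\in\cS$. The bound $\eta_{\max}V\le\eta^*V=\sqrt{PV}$ gives $R\le P/\eta_{\max}+\sqrt{PV}$.

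\textbf{Case 3: $\eta_{\min}<\eta^*<\eta_{\max}$.} Let $i$ be the largest index with $\eta_{\min}b^i\le\eta^*$, and set $\eta=\eta_{\min}b^i$. This $\eta$ lies in $\cS$ because $\eta\le\eta^*<\eta_{\max}$, so the minimum with $\eta_{\max}$ is inactive. Maximality of $i$ gives $\eta^*<b\eta$. Hence
\begin{align*}
P/\eta&<bP/\eta^*=b\sqrt{PV},\\
\eta V&\le\eta^*V=\sqrt{PV},
\end{align*}
and adding these yields $R(\eta)\le(b+1)\sqrt{PV}$.

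In every case the bound is at most $(b+1)\sqrt{PV}+P/\eta_{\max}+\eta_{\min}V$, since $b>1$ and all terms are nonnegative. The only delicate point is confirming that the chosen grid point is actually in $\cS$ despite the truncation at $\eta_{\max}$, which the case split handles directly.
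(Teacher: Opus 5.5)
Your proof is correct and follows the route the paper indicates. The paper only sketches this: minimize at $\eta^*=\sqrt{P/V}$, then use case work for $\eta^*$ outside the grid range. You carry it out in full: the endpoints handle $\eta^*\le\eta_{\min}$ and $\eta^*\ge\eta_{\max}$, otherwise you round $\eta^*$ down to an untruncated grid point, and you also treat the degenerate cases $P=0$ or $V=0$ and check that each chosen point lies in $\cS$.
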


We borrow the following concentration result 
from \cite{zhang2022parameter}.
\begin{restatable}{theorem}{wtConcentrationScalar}\label{thm:wt-concentration-scalar}
(\citet{zhang2022parameter})
Suppose $\{X_t,\mathcal{F}_t\}$ is a $(\sigma_t,b_t)$ sub-exponential martingale difference sequence.
Let $\nu$ be an arbitrary constant. Then with probability at least $1-\delta$, for all $t$ it holds that:
\begin{align*}
\sum_{i=1}^t X_i &\le
2\sqrt{
\sum_{i=1}^t \sigma_i^2
\log\!\left(
\frac{4}{\delta}
\left[
\log\!\left(
\left[\sqrt{\sum_{i=1}^t \sigma_i^2/(2\nu^2)}\right]_1
\right)+2
\right]^2
\right)}\\
&\qquad+
8\,\max\!\left(\nu,\max_{i\le t} b_i\right)
\log\!\left(
\frac{28}{\delta}
\left[
\log\!\left(
\frac{\max(\nu,\max_{i\le t} b_i)}{\nu}
\right)+2
\right]^2
\right)~.
\end{align*}
where $[x]_1=\max(1,x)$.

\end{restatable}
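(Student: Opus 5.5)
The plan is a time-uniform Chernoff bound via Ville's inequality, applied on a two-dimensional geometric grid of "guesses" for the unknown variance scale $V_t=\sum_{i\le t}\sigma_i^2$ and for the unknown range scale $B_t=\max(\nu,\max_{i\le t}b_i)$, followed by a weighted union bound (peeling) over the grid. The $\log\log$-type factors $[\log(\cdot)+2]^2$ inside the logarithms are exactly the price of these union-bound weights.

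\textbf{Fixed-$\lambda$ supermartingale.} I take the definition of a $(\sigma_t,b_t)$ sub-exponential martingale difference sequence to be: $\sigma_t,b_t$ are $\cF_{t-1}$-measurable, $\EE{X_t|\cF_{t-1}}=0$, and $\EE{e^{\lambda X_t}|\cF_{t-1}}\le e^{\lambda^2\sigma_t^2/2}$ for all $0\le\lambda\le 1/b_t$. For a fixed $\lambda>0$, define the stopping time $\tau_\lambda=\min\{t: b_t>1/\lambda\}$ and the process
\begin{align*}
M_t^\lambda=\exp\Big(\lambda\sum_{i\le t\wedge(\tau_\lambda-1)}X_i-\tfrac{\lambda^2}{2}\sum_{i\le t\wedge(\tau_\lambda-1)}\sigma_i^2\Big).
\end{align*}
Because $b_t$ is predictable, the event $\{t<\tau_\lambda\}$ is $\cF_{t-1}$-measurable. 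The sub-exponential condition therefore applies on every step that is actually included, so $M^\lambda$ is a nonnegative supermartingale with $M_0^\lambda=1$. Ville's inequality then gives: with probability at least $1-\delta'$, for all $t<\tau_\lambda$,
\begin{align*}
\sum_{i\le t}X_i\le \frac{\lambda}{2}V_t+\frac{\log(1/\delta')}{\lambda}.
\end{align*}

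\textbf{Grid and union bound.} I index guesses by $k,j\ge0$, with variance guess $v_k=2\nu^2 4^{k}$ and range guess $\beta_j=\nu 2^{j}$. For each pair I set
\begin{align*}
\lambda_{k,j}=\min\Big(\sqrt{2\log(1/\delta_{k,j})/v_k},\ 1/\beta_j\Big),\qquad \delta_{k,j}\propto \frac{\delta}{(k+2)^2(j+2)^2}.
\end{align*}
Since $\sum_{k,j}(k+2)^{-2}(j+2)^{-2}$ is a constant, the total failure probability is at most $\delta$. On the good event, fix any $t$. Pick the smallest $k$ with $V_t\le v_k$; this gives $k\approx\log\big([\sqrt{V_t/(2\nu^2)}]_1\big)$. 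Pick the smallest $j$ with $B_t\le\beta_j$; this gives $j\approx\log(B_t/\nu)$ and guarantees $t<\tau_{\lambda_{k,j}}$, because $\lambda_{k,j}\le 1/\beta_j\le 1/b_i$ for all $i\le t$.

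\textbf{Case split.} Plugging $\lambda_{k,j}$ into the Ville bound gives two cases.
\begin{itemize}
\item If the square-root branch attains the minimum, the bound is at most $\sqrt{2v_k\log(1/\delta_{k,j})}$. Since $v_k\le 4\max(V_t,2\nu^2)$, this is of order $2\sqrt{V_t\log(\cdot)}$, plus a $\nu$-scale term that I absorb into the second summand.
\item If the $1/\beta_j$ branch attains the minimum, then $\tfrac{\lambda}{2}V_t\le\tfrac{1}{2\lambda}\log(1/\delta_{k,j})$, so the bound is at most $\tfrac{3}{2}\beta_j\log(1/\delta_{k,j})\le 3B_t\log(\cdot)$.
\end{itemize}
The two logarithms do not separate cleanly, because $\delta_{k,j}$ depends on both $k$ and $j$. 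I expect the fix is to run two separate union bounds, each with failure probability $\delta/2$, so that one logarithm carries only $(k+2)^2$ and the other only $(j+2)^2$. Tracking these constants is what should produce the $4/\delta$ and $28/\delta$ and the factors $2$ and $8$.

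\textbf{Main obstacle.} The delicate step is making the grid argument valid simultaneously for all $t$ when both $b_t$ and $\sigma_t$ are random and only predictable. The stopped-supermartingale construction handles the range constraint $\lambda\le 1/b_i$ for all $i\le t$. The variance guess, however, must be selected after the fact, which works only because Ville's inequality is uniform over $t$ for each fixed grid point. I would therefore take care that no step uses the selected $(k,j)$ as if it were fixed in advance.
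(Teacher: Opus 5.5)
The paper gives no proof of this statement; it quotes it from \citet{zhang2022parameter}, so your argument has to stand on its own. Its skeleton is sound. Stopping at the first $t$ with $b_t>1/\lambda$ is legitimate because $b_t$ is predictable. Ville's inequality gives time-uniformity for each fixed $\lambda$. A countable weighted union bound then lets you choose the grid point after seeing $(V_t,B_t)$, where $B_t=\max(\nu,\max_{i\le t}b_i)$.

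\textbf{The gap.} The gap is exactly the step you defer: separating the two $\log\log$ factors. With $\delta_{k,j}\propto\delta/((k+2)^2(j+2)^2)$, your square-root branch yields $2\sqrt{V_t\log(1/\delta_{k,j})}$. This contains a piece of order $\sqrt{V_t\log(j+2)}$, which cannot be moved into $B_t\log(\cdot)$ because $V_t$ may be far larger than $B_t^2$. Nothing in your proposal bounds $j$ in terms of $k$. Symmetrically, your $1/\beta_j$ branch carries $B_t\log(k+2)$.

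Splitting into two one-dimensional families does not remove this by itself. Take a variance-tuned family $\lambda_k=\sqrt{2L_k/v_k}$ with $L_k=\log(\tfrac{c_1}{\delta}(k+2)^2)$, and a range-tuned family $\lambda_j'=1/\beta_j$ with $L_j'=\log(\tfrac{c_2}{\delta}(j+2)^2)$. The first family is usable only when $\lambda_k\le1/B_t$. Whenever you fall back on $\lambda_j'$, you must still control $V_t/(2\beta_j)$. That requires comparing $V_t$ with $\beta_j^2L_k$, which brings $k$ back into the range term.

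\textbf{The missing idea.} What you need is a domination argument: where the range constraint binds, the variance index is controlled by the range index. Split on whether $\lambda_k$ is admissible.
\begin{itemize}
\item If $\lambda_k\le1/B_t$, the first family gives, for $k\ge1$ and $V_t=v_k/s$ with $s\in[1,4)$, the bound $S_t\le\sqrt{L_kV_t/2}\,(\sqrt s+1/\sqrt s)\le2\sqrt{L_kV_t}$. Your cruder bound $\sqrt{2v_kL}$ with $v_k\le4V_t$ only gives $2\sqrt2$. The $k=0$ remainder $2\nu\sqrt{L_0}$ is absorbed into the range term.
\item If $\lambda_k>1/B_t$, then $v_k<2L_kB_t^2\le2L_k\beta_j^2$, i.e.\ $4^{k-j}<L_k$. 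Hence $k+2\le(j+2)(1+\log_4L_k)$. With $c_1\le c_2$ and $L_k\ge1$, this gives $L_k\le L_j'+2\log(1+\log_4L_k)\le L_j'+L_k/2$, so $L_k\le2L_j'$. The second family then gives $S_t\le\tfrac{v_k}{2\beta_j}+\beta_jL_j'<L_kB_t+2B_tL_j'\le4B_tL_j'$. This depends on $j$ only and fits inside the constant $8$.
\end{itemize}
The same domination also rescues your two-dimensional grid: $j\le k$ when the square-root branch wins, and $4^{k-j}<L$ otherwise. The price there is a leading constant larger than $2$.

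\textbf{Smaller points.} In your $1/\beta_j$ branch, $\lambda^2v_k\le2L$ gives $\tfrac{\lambda}{2}V_t\le L/\lambda$, not $L/(2\lambda)$, so that bound is $2\beta_jL$. With dyadic grids you must also convert, for example, $j+2\le\log_2(B_t/\nu)+3\le\tfrac32(\log(B_t/\nu)+2)$, and similarly for $k$, into the natural-log form of the statement. That conversion is where part of the constants $4$ and $28$ goes.
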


We also use a mild modification of \citet[Lemma 8]{zhang2022parameter}
which corrects a minor discrepancy in the ``units'' of the quantities involved.
\begin{lemma}\label{lemma:exponential-growth}
(Adapted from \citet[Lemma 8]{zhang2022parameter})
Suppose $\mathcal{A}$ is an arbitrary OLO algorithm that guarantees regret
\[
R_T^{\mathcal{A}}(0)=\sum_{t=1}^T \langle g_t, w_t\rangle \le \epsilon G
\]
for $T\ge 1$ and all sequences $(g_t)_{t\in[T]}$ with $\|g_t\|\le G$. Then it must hold that
\(
\|w_t\| \le \epsilon2^{t-1} 
\)
for all $t$.
\end{lemma}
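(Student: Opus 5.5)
We prove \Cref{lemma:exponential-growth} by induction on $t$, using the adversary's freedom to pick gradients that "cash in" the current iterate. The base case needs care: for $t=1$, feed $g_1=-G\,w_1/\norm{w_1}$ (any unit direction if $w_1=\zeros$) and stop at $T=1$. The guarantee must hold for the horizon $T=1$, since it is assumed for every $T\ge 1$ and the iterates up to time $t$ depend only on $g_1,\ldots,g_{t-1}$. This gives $-G\norm{w_1}\le \epsilon G$, which is vacuous. So a pure cash-in argument does not directly bound $\norm{w_1}$, and I would instead establish the claim in the form $\norm{w_t}\le \epsilon 2^{t-1}$ via a potential argument on the cumulative reward.

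\textbf{Invariant.} Define the wealth $W_{t}=\epsilon G-\sum_{s\le t}\inner{g_s,w_s}\ge 0$, which holds by the hypothesis applied with horizon $t$. The key step is to show that $G\norm{w_{t}}\le W_{t-1}$, i.e.\ the learner cannot bet more than its current wealth. The argument is as follows. Suppose instead that $G\norm{w_t}>W_{t-1}$. The adversary then plays $g_t=G\,w_t/\norm{w_t}$, so the loss on round $t$ is $G\norm{w_t}$. This gives $\sum_{s\le t}\inner{g_s,w_s}=\epsilon G-W_{t-1}+G\norm{w_t}>\epsilon G$, which contradicts the regret bound at horizon $t$. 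The prefix $g_1,\ldots,g_{t-1}$ is arbitrary, and $w_t$ is determined by it, so the bound $G\norm{w_t}\le W_{t-1}$ holds for every history.

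\textbf{Growth of wealth.} Next, bound the wealth using $\norm{g_s}\le G$. This yields $W_t\le W_{t-1}+G\norm{w_t}\le 2W_{t-1}$. Together with $W_0=\epsilon G$, we get $W_{t-1}\le \epsilon G\,2^{t-1}$, and hence $\norm{w_t}\le W_{t-1}/G\le \epsilon 2^{t-1}$, which is the claim.

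The main obstacle is justifying that the regret hypothesis can be applied at every prefix horizon $t$ with an adversarially chosen continuation. This requires the statement to quantify over all $T\ge1$ and all sequences, and the algorithm to be horizon-independent or at least to produce iterates $w_1,\ldots,w_t$ that do not depend on $T$. If the algorithm instead knows $T$, one can pad the remaining rounds $s>t$ with $g_s=\zeros$, which leaves the cumulative loss unchanged and gives the same contradiction. I would make this padding step explicit. This "units" convention, with $\epsilon G$ on the right-hand side and $\epsilon$ in the bound on the iterates, is exactly the correction relative to \citet[Lemma 8]{zhang2022parameter}.
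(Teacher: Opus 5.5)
Your proof is correct and is essentially the paper's argument: your wealth $W_t$ is exactly $G\epsilon - R_t^{\mathcal{A}}(0)$, your cash-in step with $g_t = G w_t/\norm{w_t}$ is the paper's inequality $G\norm{w_t}\le G\epsilon - R_{t-1}^{\mathcal{A}}(0)$, and $W_t\le 2W_{t-1}$ is the same doubling recursion (your zero-padding remark usefully makes explicit the prefix-horizon step the paper uses implicitly). The only slip is your opening paragraph, whose vacuous bound comes from the wrong sign $g_1=-Gw_1/\norm{w_1}$; with the correct sign the cash-in gives $\norm{w_1}\le\epsilon$ directly, which is exactly what your invariant yields at $t=1$, so no separate base case or workaround is needed.
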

\begin{proof}
    We first show that 
    \begin{align}
        G\norm{\wt}\le G\epsilon - R_{\tmm}^\cA(0).\label{eq:iterate-bound}
    \end{align}
    Indeed, suppose not; then 
    on an arbitrary sequence of losses $\g_1,\ldots,\g_\tmm, \frac{\wt}{\norm{\wt}}G$, 
    we would have
    \begin{align*}
        R_t^\cA(0) = R_{t-1}^\cA(0) + G\norm{\wt}
        > G\epsilon,
    \end{align*}
    contradicting the assumption $\cA$ guarantees that $R_t^\cA(0)\le G\epsilon$. 
    Now with \Cref{eq:iterate-bound} established, 
    observe that
    \begin{align*}
        G\norm{\wt}&\le G\epsilon - R^\cA_{\tmm}(0) =
        G\epsilon - R^\cA_{t-2}(0) - \inner{g_{t-1},w_{\tmm}}
        \le 
        G\epsilon - R^\cA_{t-2}(0) + G\norm{\w_{\tmm}}\\
        &\le
        2(G\epsilon - R^\cA_{t-2}(0))
        \le 
        2^2(G\epsilon - R^\cA_{t-3}(0))
        \le\ldots\le 2^{t-1}G\epsilon
    \end{align*}
    hence dividing both sides by G 
    we have $\norm{\wt}\le 2^{t-1}\epsilon$
\end{proof}

\end{document}